\newtheorem{theorem}{Theorem}
\newtheorem{proposition}[theorem]{Proposition}
\newtheorem{lemma}[theorem]{Lemma}
\newcommand{\indicator}[1]{ \mathds{1} [ #1 ] }
\newcommand{\process}[2]{ \{ #1 \}_{ #2 } }
\newcommand{\pnorm}[2]{ \| #1 \|{}_{#2} }
\newcommand{\probability}[1]{ \mathbb{P} [ #1 ] }
\newcommand{\naturalNumbersPlus}{ \mathbb{N}_{+} }
\newcommand{\refProposition}[1]{{\textrm{Proposition~\ref{#1}}}}
\newcommand{\numClusters}{K}
\newacronym{AIC}{AIC}{Akaike Information Criterion}
\newacronym{BIC}{BIC}{Bayesian Information Criterion}
\newacronym{BMC}{BMC}{Block Markov Chain}
\newacronym{CAIC}{CAIC}{Consistent Akaike Information Criterion}
\newacronym{CPU}{CPU}{Central Processing Unit}
\newacronym{cf-idf}{cf-idf}{Cluster Frequency--Inverse Document Frequency}
\newacronym{DLL}{DLL}{Dynamic-link library}
\newacronym{DNA}{DNA}{Deoxyribonucleic Acid}
\newacronym{DC-BMC}{DC-BMC}{Degree-corrected Block Markov Chain}
\newacronym{GPS}{GPS}{Global Positioning System}
\newacronym{HMM}{HMM}{hidden Markov model}
\newacronym{KL}{KL}{Kullback--Leibler}
\newacronym{RL}{RL}{Reinforcement Learning}
\newacronym{MAB}{MAB}{Multi-Armed Bandit}
\newacronym{MC}{MC}{Markov Chain}
\newacronym[longplural={Markov Decision Processes}]{MDP}{MDP}{Markov Decision Process}
\newacronym{MLE}{MLE}{Maximum-Likelihood Estimator}
\newacronym{MSVC}{MSVC}{Microsoft Visual \texttt{C++}}
\newacronym{NLTK}{NLTK}{Natural Languages Toolkit}
\newacronym{SBM}{SBM}{Stochastic Block Model}
\newacronym{SP500}{S\&P500}{Standard and Poor's 500}
\newacronym{SPECTRA}{SPECTRA}{Sparse Eigenvalue Computation Toolkit as a Redesigned ARPACK}
\newacronym{SVD}{SVD}{Singular Value Decomposition}
\begin{document}

\title[Article Title]{Detection and Evaluation of Clusters within Sequential Data}

\author*[1]{\fnm{Alexander} \sur{Van Werde}}

\author[1,2]{\fnm{Albert} \sur{Senen--Cerda}}

\author[1,3]{\fnm{Gianluca} \sur{Kosmella}}

\author[1]{\fnm{Jaron} \sur{Sanders}}

\affil*[1]{\orgdiv{Dept.~of Mathematics \& Computer Science}, \orgname{TU/e}, \orgaddress{\country{The Netherlands}}}

\affil[2]{\orgname{LAAS--CNRS, IRIT--CNRS, and Universit\'e de Toulouse}, \orgaddress{\country{France}}}

\affil[3]{\orgdiv{Dept.~of Electrical Engineering}, \orgname{TU/e}, \orgaddress{\country{The Netherlands}}}

\abstract{%
	Sequential data is ubiquitous---it is routinely gathered to gain insights into complex processes such as behavioral, biological, or physical processes.
	Challengingly, such data not only has dependencies within the observed sequences, but the observations are also often high-dimensional, sparse, and noisy.
	These are all difficulties that obscure the inner workings of the complex process under study.
    \\ 
    \\ 
	One solution is to calculate a low-dimensional representation that describes (characteristics of) the complex process.
	This representation can then serve as a proxy to gain insight into the original process.
	However, uncovering such low-dimensional representation within sequential data is nontrivial due to the dependencies, and an algorithm specifically made for sequences is needed to guarantee estimator consistency.
	Fortunately, recent theoretical advancements on \glsentrylongpl{BMC} have resulted in new clustering algorithms that can provably do just this in synthetic sequential data.
    \\ 
    \\ 
	This paper presents a first field study of these new algorithms in real-world sequential data; a wide empirical study of clustering within a range of data sequences.
	We investigate broadly whether, when given sparse high-dimensional sequential data of real-life complex processes, useful low-dimensional representations can in fact be extracted using these algorithms.
	Concretely, we examine data sequences containing
	\glsentryshort{GPS} coordinates describing animal movement,
	strands of human \glsentryshort{DNA},
	texts from English writing,
	and daily yields in a financial market.
	The low-dimensional representations we uncover are shown to not only successfully encode the sequential structure of the data, but also to enable gaining new insights into the underlying complex processes.
}

\maketitle

\glsreset{BMC}

\section{Introduction}
Modern data often consists of observations that were obtained from some complex process, and that became available sequentially.
The specific order in which the observations occurred then often matters: future observations frequently correlate with past observations.
By identifying a relation between subsequent observations within the sequential data one may hope to gain insight into the underlying complex process.
The high-dimensional nature of modern data however can make understanding the sequential structure difficult.
For example, on high-dimensional data, many algorithms slow down to an infeasible degree, overfitting may occur, and human interpretation becomes problematic.

In view of the challenges associated with the high dimensionality of processes and/or data, it is desirable to identify a latent structure which respects the sequential structure but has reduced dimensions.
We therefore now focus on a popular class of methods for discovering latent structure in datasets: \emph{clustering algorithms}.
Clustering algorithms work by clustering together data points from a dataset that are ``similar'' in some sense.
Let us illustrate by considering clustering in \emph{nonsequential} data (i.e., data in which the order of the observations does not matter).
If such data has a geometric structure for which a notion of distance is applicable, then one may call two points similar if their distance is small.
This distance-based notion of similarity can then be leveraged with the well-known $K$-means algorithm for clustering point clouds \cite{macqueen1967some}.
Or, if the data instead has a graph structure, then it is natural to call two vertices of the graph similar if they connect to other vertices in similar ways.
This second connection-based notion is then made rigorous in e.g.\ the \gls{SBM} \cite{holland1983stochastic}.

A natural notion of similarity between \emph{sequential} observations---when the exact order of observations really does matter---may similarly be given.
Consider the following informal criterion: ``two observations are similar if and only if they follow after earlier observations in similar ways.''
A recent model which makes this transition-based notion formal are \glspl{BMC} \cite{sanders2020clustering}.
Specifically, the \gls{BMC} model assumes that the observations are the states of a \gls{MC} in which the state space can be partitioned in such a manner that the transition rate between two states only depends on the parts of the partition in which these two states lie.
Each part of the partition is also referred to as a \emph{cluster}.

To give an example, consider the sequence of songs which a user of a music platform listens to.
If they start with a song from the ``Metal" genre, then the next song is likely to be from the same genre.
Once they decide to switch genres, however, the user may be more likely to select the ``Rock" genre than the ``Disco" genre.
The \gls{BMC} model captures such information by allowing the transition probabilities to depend on the clusters---the music genres here---but not to depend on states within a cluster ---the songs of a genre---so that the sequential dependence is entirely captured by the clusters.
Actionable insight based on user data may then be derived from the \gls{BMC} model, for example, by attaching user-specific clusters to recommendation systems, by using the clusters to determine the favorite genre of the user or by categorizing new songs given a small amount of user data.
In a more general application area, algorithms for training agents with reinforcement learning have also recently appeared that use data to cluster the state space to improve the training sample efficiency \cite{zhu2021learning}; see also Section~\ref{sec:Literature__State_space_reduction_in_decision_theoretical_problems}.

The problem of clustering the observations in a single (possibly short) sequence of observations of a \gls{BMC} was recently investigated theoretically \cite{zhang2019spectral,sanders2020clustering}.
For example, given a sample path generated by a \gls{BMC}, an information-theoretic threshold below which exact clustering is impossible because insufficient data is available has been established in \cite[Theorem 1]{sanders2020clustering}.
Further, in \cite[Theorem 3]{sanders2020clustering}, a clustering algorithm for \glspl{BMC} was provided and shown to recover the underlying clusters whenever the implied conditions for recoverability are satisfied; so even when the sequence is short relative to the size of the state space.
The fact that this algorithm is explicitly designed to manage in sparse regimes where the amount of data is small is a favorable property for applications where gathering large volumes of data may be expensive and laborious.
Until now, however, a broad study on the performance of this clustering algorithm when applied to sequential data obtained from actual real-life processes was not provided.
The purpose of the current paper is to address this important gap in the literature. 

Let us remark that our goal is not to compare the performance of the \gls{BMC} clustering algorithm relative to other algorithms.
Indeed, the \gls{BMC} algorithm is explicitly designed to manage in sparse regimes where the amount of data is small.
Most model-free algorithms on the other hand, such as those based on deep learning, excel when one has access to large amounts of training data.
The outcome of a direct comparison would consequently be predetermined by the choice of the amount of training data.
Our goal is rather to study this new clustering algorithm's capabilities to provide meaningful insights into real-life complex processes, and to supplement the theoretical understanding of the \gls{BMC}-based algorithm with a practical viewpoint.
To achieve this goal, we focus on questions such as:
\begin{enumerate}[noitemsep]
	\item How can the \gls{BMC} model practically aid in data exploration of sequential data obtained from real-life data?
	\item How can one statistically decide whether the \gls{BMC} model is an appropriate model for the sequence of observations? How can it be detected that either a simpler model than a \gls{BMC} would suffice, or a richer model is required?
	\item Can the algorithm be expected to give meaningful results despite the sparsity and complexity of real-life data? Is the clustering algorithm robust to model violations?
\end{enumerate}
 
\subsection{Contributions}\label{sec: OurCont}

We investigate the performance of the \gls{BMC}-based algorithm using a diverse collection of datasets that come from the fields of ethology, microbiology, natural language processing, and finance.
Specifically, we investigate sequences of:\nopagebreak
\begin{enumerate}[noitemsep]
	\item[a.]  \gls{GPS} coordinates from animal movements.
	\item[b.] Codons in human \gls{DNA}.
	\item[c.] Words in Wikipedia articles.
	\item[d.] Companies in the \gls{SP500} with the highest daily returns.
\end{enumerate}
\nopagebreak
To each dataset we apply the \gls{BMC}-based clustering algorithm to uncover underlying clusters. 
Our findings are summarized in Section \ref{sec: SummaryDetected} and confirm that the algorithm can uncover relevant latent structure in practice.  

Evaluating the performance of a clustering algorithm and the appropriateness of the model in a real-life scenario can be nontrivial. 
For instance, unlike scenarios with synthetic data, one can not compare with a ground-truth cluster structure.
To answer the second and third research questions raised above, Section \ref{section:summary_methods_for_eval} hence explores a set of experimental tools that incorporating insights from statistics \cite{bozdogan1987model,kullback1951information}, machine learning \cite{lewis2004rcv1}, and random matrix theory \cite{sanders2021spectral,sanders2022singular}.
These tools are applied to the aforementioned real-life datasets in Section \ref{sec:Detected_clusters_within_the_data} and give us insights on the suitability of the model (both positive and negative, depending on the dataset). 

Finally, we programmed a \gls{DLL} in \texttt{C++} that allows efficient simulation and analysis of trajectories of a \gls{BMC}.  
Our source code can be found at \href{https://gitlab.tue.nl/acss/public/detection-and-evaluation-of-clusters-within-sequential-data}{https://gitlab.tue.nl/acss/public/detection-and-evaluation-of-clusters-within-sequential-data}.
We distributed this \gls{DLL} with an easy-to-use Python module called \emph{BMCToolkit} at \url{https://pypi.org/project/BMCToolkit/}.
This approach of interfacing with a \gls{DLL} written in \texttt{C++}, and careful parallelization and compilation, outperformed earlier versions of the module written entirely in Python considerably.
This enabled us to tackle larger sequences with more distinct observations. 

So, to summarize, we evaluated the BMC-based algorithm across diverse real-life datasets and demonstrate its practical applicability, filling a gap in the literature.
Moreover, along the way, we developed experimental evaluation tools and efficient implementation that are expected to be crucial for future practical applications.

\subsection{Summary of the detected clusters}\label{sec: SummaryDetected}
Our findings in the animal movement data are particularly striking.
There, a scatter plot of the data yields a picture which is difficult to interpret (Fig. \ref{fig:movebank_screenshot}).
After clustering, a picture can be displayed which provides significantly more insight (Fig. \ref{fig:bison_clustering}).

Specifically, the graph displayed by the white arrows in Fig. \ref{fig:bison_clustering} gives insight into the global topological structure of the latent dynamics of the animal movements.
Comparing to a satellite image of the area reveals that the boundaries between clusters often correspond to barriers, here rivers, which hinder animal movements.
We emphasize that the algorithm does not access the satellite image: the aforementioned features are found using solely the sequential structure of the data.
In other datasets, it could therefore also be possible to detect structures of different varieties such as breeding sites, human presence, territorial boundaries, roads, or pesticide-caused chemical barriers which may be relevant for animal behavioral studies \cite{belisle2005measuring,urban2001landscape,vuilleumier2006animal,keeley2021connectivity} or wild-life conservation \cite{robert2004effects,taylor2010roads,ruby1994behavioral}.
Let us finally note that this paper's model evaluation tools are found to be informative for this dataset, suggesting room for future methodological expansion.\footnote{Inspired by the current paper's findings, the first and final author have recently considered a different model for a stochastic process with a natural notion of barriers and studied the theory of the associated recovery problem as well as applications to animal movement data \cite{van2024recovering}. 
Notably, however, the barrier recovery algorithms from that paper would not as easily give insight into the global topological structure of the dynamics. 
For such global structure, a clustering-based approach is more insightful.
}

\begin{figure}[!h]
	\centering
	\includegraphics[width= 0.99\linewidth]{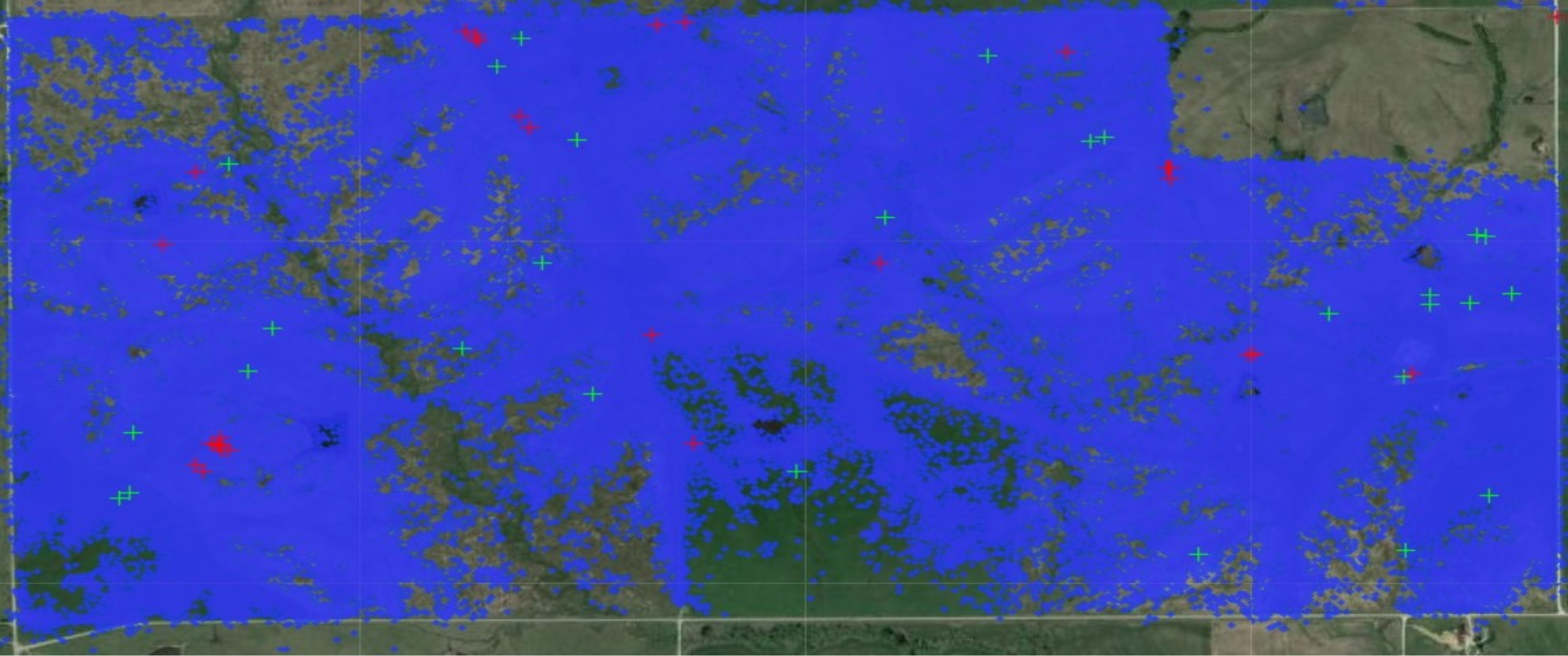}
	\caption{The raw \gls{GPS} data from the ``Dunn Ranch Bison Tracking Project'' (see \cite[\#{8019591}]{movebank}) projected onto a satellite image. Each blue point depicts a single recorded datapoint.
	Note that it is not easy to extract insight from this scatter plot, and one should really aggregate the data in some useful manner.
	The clustering techniques that we implement do this by taking sequential information into account, resulting in the much more insightful Fig.~\ref{fig:bison_clustering} below.}
	\label{fig:movebank_screenshot}
\end{figure}

\begin{figure*}[!h]
	\centering
	\includegraphics[width= 0.99\linewidth]{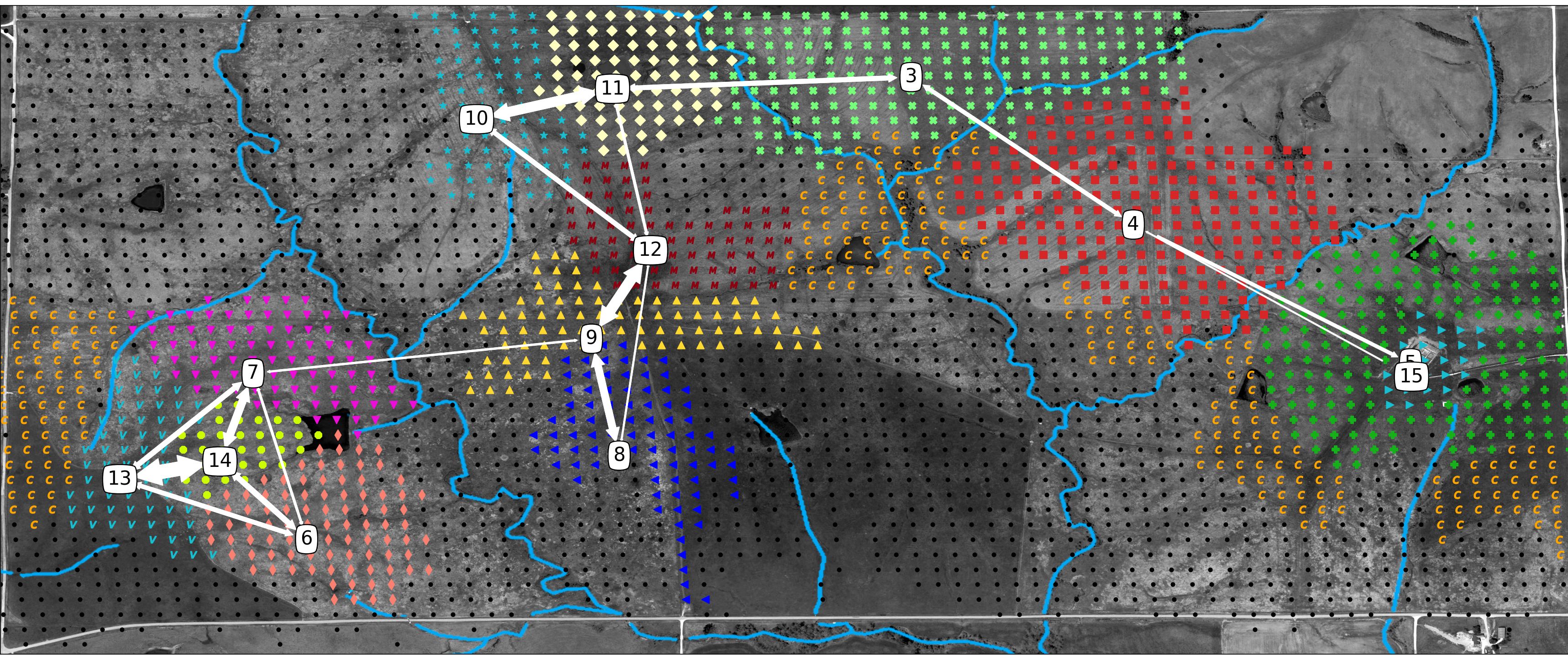}
	\caption{ In the background: a satellite image of Dunn Ranch with rivers highlighted in blue for visualization purposes. In the foreground: the detected clusters as colored bullets, cluster centers indicated by boxes containing the cluster number, and edges between the boxes indicating the transitions between clusters with probability of at least $1\%$. Thicker arrows correspond to higher transition probabilities. Self-transitions and the clusters $1$ and $2$ are omitted, because they are noninformative.}
	\label{fig:bison_clustering}
\end{figure*}

\newpage 
In \gls{DNA}, the algorithm leads us to rediscover phenomena that are known in the genomics community as \emph{codon--pair bias} and \emph{dinucleotide bias} \cite{gutman1989nonrandom,coleman2008virus, kunec2016codon}.
More precisely, in Table \ref{tab: DNA} it may be observed that cluster $k = 2$ mainly contains codons ending with the nucleotide $C$ whereas cluster $k = 3$ mainly contains codons starting with nucleotide $G$.
Closer inspection of the transition rates between these clusters reveals that we only rarely observe transitions from cluster $k = 2$ to cluster $k = 3$: see Fig. \ref{fig:DNA}(a).
In other words, there is a bias against a $C$--to--$G$ transition on the junction between two codons.
It is further interesting to note that our model evaluation tools suggest that, while not perfect, the \gls{BMC} assumption seems reasonable for this dataset; see Section \ref{sec:detected_clusters__seq_of_codons}.

In the text data we consider a document classification task and find that a \gls{BMC}-based cluster improvement algorithm performs better than plain spectral clustering; see Table \ref{tab: perf} for the results and Section \ref{sec:clustering_procedure} for the algorithms.
Recall that high performance here is not our main objective.
Rather, it serves as an evaluation tool (see Section \ref{section:summary_methods_for_eval}).
If one simply desires optimal performance, not an interpretable model, then neural machine learning methods \cite{minaee2021deep} will outperform \gls{BMC}-based methods on this task because large volumes of data are available in natural language processing.
Our point is that because the improvement algorithm exploits the model assumptions more aggressively than the spectral algorithm, the findings suggest that the model itself brings merit.
In Section \ref{sec: NLP_Histograms}, we again find that the evaluation tools are informative, uncovering model violations whose nature we can clarify.

Finally, the \gls{SP500} dataset is distinct as it gives the least clear conclusions.
The difficulty of this dataset is due to the combination of sparsity and a nuisance factor.
We discuss this dataset extensively in Sections \ref{sec:Companies_with_the_highest_daily_returns} and \ref{sec:Detected_orders_within_the_data} as an illustrative dataset for our evaluation tools in a difficult setting.
To summarize: we find that a simpler model called a $0$th-order \gls{BMC} (see Section \ref{sec: Models}) can describe its statistical aspects, while simultaneously that there are indications that a $1$st-order \gls{BMC} is also suitable.

\subsection{Related literature}
\label{sec:Related_literature}

\paragraph{Clustering in \texorpdfstring{\glspl{MC}}{MCs} and random graphs}

Algorithms for detection in \glspl{BMC} have been studied in \cite{zhang2019spectral,sanders2020clustering} including information-theoretic limits stating when it is impossible to recover clusters in \cite{sanders2020clustering}, and estimation of the number of clusters was recently studied in \cite{van2024estimating}.
Other clustering algorithms and models that use spectral decompositions to uncover clusters or low-rank structures based on trajectories of \glspl{MC} are studied in \cite{duan2019state,bi2022low,du2019mode,zhu2021learning}.

The clustering algorithm involves a spectral step that relies on random matrices constructed from sample paths of \glspl{BMC}.
This motivated further theoretical studies of random matrices constructed from Markovian data in \cite{sanders2021spectral,sanders2022singular,van2023matrix}.
In \cite{sanders2022singular}, convergence of singular value distributions in the \gls{BMC} model is established in the dense regime $\ell = \Theta(n^2)$.
We use and refine this result in our experiments.

Community detection in random graphs, such as those produced by the \gls{SBM}, is a closely related area of research.
The distinction with clustering in \glspl{BMC} is that the vertices within a single observation of a random graph are clustered, instead of the observations within sequential data.
We refer the reader to \cite{gao2017achieving} for an extensive overview on cluster recovery within the context of the \gls{SBM}, and to \cite{fortunato2010community} for an overview on community detection in graphs.

\paragraph{Different types of clustering for sequential data}

In the reviews \cite{zolhavarieh2014review, aghabozorgi2015time}, some further lines of research that relate to both clustering and sequential data are divided into three categories.
First, \emph{whole-time-series clustering} groups the trajectories of different time-series \cite{aghabozorgi2015time, liao2005clustering,driemel2016clustering}.
Second, \emph{clustering of subsequences of a time-series} where individual time-series are extracted via a sliding window \cite{lin2003clustering,rakthanmanon2011time, rodpongpun2012selective}.
Finally, there is \emph{time-point clustering} which includes problems like segmenting an $n$-element sequence into $k$ segments, that can come from $h$ different sources; see e.g. \cite{gionis2003finding,morchen2005extracting}.
These three categories are all distinct from the notion
which we employ, but the final category is closest. 

\paragraph{State space reduction in decision theoretical problems}
\label{sec:Literature__State_space_reduction_in_decision_theoretical_problems}

Studying clustering in \glspl{MC} may also be motivated by the necessity for effective state space reduction techniques in decision theoretical problems.
For example, in \gls{RL}, \glspl{MDP}, and \gls{MAB} problems it is known that learning a latent space reduces regret in \gls{MAB} problems \cite{maillard2014latent, lazaric2013sequential}.
State aggregation and low-rank approximation methods have been studied for \glspl{MDP} as well as \gls{RL}, see \cite{li2006towards} and \cite{ong2015value, azizzadenesheli2016reinforcement, yang2019harnessing}, respectively.
The idea to cluster states in \gls{RL} based on the process' trajectory was first explored in \cite{singh1994reinforcement,ortner2013adaptive}.

\paragraph{Some related experiments in microbiology, natural language processing, ethology, and finance}
\label{sec:Literature__A_few_related_experimental_approaches}

Using similar means as in the animal movement data in this paper, \gls{GPS} coordinate sequences for New York City taxi trips are investigated in \cite{zhang2019spectral,bi2022low,sanders2022singular}.
The found low-dimensional representation of the taxi data also gives insight into taxi customer behavior, just as it does in this paper for the animal movement behavior.
The taxi data is however quite different from the animal movement data: taxi transitions tend to be between far away entrance and drop-off locations.

\gls{MC} models for the sequence of nucleotides or codons in \gls{DNA} are considered in \cite{almagor1983markov,jorre1976model,robin2005dna}.
The current paper is the first time that a \gls{BMC} was used for this task.
\glspl{MC} and \glspl{HMM} are often used in natural language processing; see \cite{manning1999foundations}.
In \cite{GIALAMPOUKIDIS2014141} the transition between the Dow Jones closing prices are described as a \gls{MC} close to equilibrium.
Other references for \gls{MC} models in finance include \cite{zhang2009study, van2012asset, mamon2007hidden}.

\subsection*{Structure of this paper}
We introduce the problem of clustering in sequential data in Section \ref{sec:Problem_formulation}.
We describe the \gls{BMC} as well as other models that appear in our experiments in Section \ref{sec: Models}, and briefly discuss the advantages of a model-based approach.
Next, we give an overview of related literature in Section \ref{sec:Related_literature}, and we introduce the clustering algorithm in Section \ref{sec:clustering_procedure}.
We describe there also our \texttt{C++} implementation of this clustering algorithm, which we have made publicly available as a Python library.
Section \ref{section:summary_methods_for_eval} describes practical tools to evaluate clusters found in datasets in the absence of knowledge on the underlying ground truth.
Section \ref{sec:Data_sets_and_preprocessing} introduces the datasets and explains our preprocessing procedures;
Sections \ref{sec:Detected_clusters_within_the_data}, \ref{sec:Detected_orders_within_the_data} then extensively evaluate the clusters detected within these datasets.
Finally, Section \ref{sec:Conclusions} concludes with a brief summary of our findings.

\newpage 
\section{Problem formulation}
\label{sec:Problem_formulation}

We suppose that we have obtained an ordered sequence of $\ell \in \naturalNumbersPlus$ discrete observations
\begin{equation}
	X_{1:\ell}
	:=
	X_1
	\to X_2
	\to \cdots
	\to X_\ell
	\label{eqn:Arbitrary_ordered_sequence_of_observations}
\end{equation}
from some complex process.
The observations can be real numbers or abstract system states; as long as the observations come from a finite set.
We assume specifically that there exists a number $n \in \naturalNumbersPlus$ such that $X_t \in [n] := \{ 1, \ldots, n \}$ for all $t \in [\ell]$.
Here, $n$ can be interpreted as the number of distinct, discrete observations that are possible.

Given such ordered sequence of observations,
we wonder whether there exists a map $\sigma_n : [n] \to [\numClusters]$ with $1 \leq \numClusters \leq n$ an integer,
such that the ordered sequence
\begin{equation}
	\sigma_n(X_{1:\ell})
	:=
	\sigma_n(X_1)
	\to \sigma_n(X_2)
	\to \cdots
	\to \sigma_n(X_\ell)
	\label{eqn:Arbitrary_ordered_sequence_of_clusters_of_observations}
\end{equation}
captures dynamics of the underlying complex process.
Observe that $\sigma_n$ defines \emph{clusters}:
\begin{equation}
	\mathcal{V}_k
	:=
	\bigl\{
	i \in [n]
	\mid
	\sigma_n(i) = k
	\bigr\}
	\label{eqn:Clusters_as_a_function_of_the_map_sigma}
\end{equation}
for $k \in [\numClusters]$.
Furthermore, $\mathcal{V}_k \cap \mathcal{V}_l = \emptyset$ whenever $k \neq l$ and $\cup_{k=1}^\numClusters \mathcal{V}_k = [n]$.

The clusters $\mathcal{V}_1, \ldots, \mathcal{V}_\numClusters$ are particularly interesting when $\numClusters \ll n$.
In such a case the \emph{clustered} process $\process{\sigma_n(X_t)}{t}$ lives in a much smaller observation space than the original process $\process{X_t}{t}$.
The reduction may then prove to be beneficial for computational tasks since the time complexity of some algorithms depends on the size of the observation space.
If \eqref{eqn:Arbitrary_ordered_sequence_of_clusters_of_observations} furthermore indeed captures the dynamics of the complex process, then it is not unreasonable to expect that the clusters $\mathcal{V}_k$ could themselves be meaningful thus allowing for human interpretation of the data.

\section{Preliminaries}

\subsection{Models}\label{sec: Models}

\subsubsection{Main model: \texorpdfstring{\gls{BMC}}{BMC}}
\label{sec:Benchmark_model:BMCs}

Formally, a \emph{$1$st-order \gls{BMC}} is a discrete-time stochastic process $\process{X_t}{t \geq 0}$ on a state space $\mathcal{V}:= [n]$ that satisfies not only the \gls{MC} property
\begin{align*}
	\probability{
		X_{t+1} = j
		\mid
		X_{t} = i,
		\ldots ,
		X_0 = i_0
	}
	=
	\probability{
		X_{t+1} = j
		\mid X_t = i
	}
	\ \forall j,i,i_{t-1},\ldots,i_0 \in [n];
\end{align*}
but also that there exists a cluster assignment map $\sigma_n: [n] \to [\numClusters]$ and a stochastic matrix $p\in \mathbb{R}^{\numClusters\times \numClusters}$ with
\begin{equation}
	P_{i,j}
	:=
	\probability{
		X_{t+1} = j
		\mid
		X_t = i
	}
	=
	\frac{p_{\sigma_n(i), \sigma_n(j)}}{ \# \mathcal{V}_{\sigma_n(j)} }
\end{equation}
with $\mathcal{V}_k$ defined as in
\eqref{eqn:Clusters_as_a_function_of_the_map_sigma}.
Fig. \ref{fig:Schematic_depiction_of_a_BMC} depicts a \gls{BMC} on $\numClusters = 3$ clusters.

The \gls{BMC} model can be viewed as an ideal case for the setup of \eqref{eqn:Arbitrary_ordered_sequence_of_clusters_of_observations}.
The reduced process $\process{\sigma_n(X_t)}{t}$ not only captures \emph{some} part of the dynamics of the true process but rather \emph{all} the order-dependent dynamics.
Indeed, for any $t>1$ it holds that conditional on $\sigma_n(X_t) = k$ the observation $X_t$ is chosen uniformly at random in the cluster $\mathcal{V}_k$.
The previous state $X_{t-1}$ hence influences the next cluster $\sigma_n(X_t)$ but does not provide any further information about the precise element in $\mathcal{V}_{\sigma_n(X_t)}$.

If $p$ defines an ergodic \gls{MC}, then the \gls{BMC} has a unique \emph{state equilibrium distribution} $\Pi\in [0,1]^n$. 
This distribution has the symmetry property that $\Pi_j$ only depends on the cluster assignment $\sigma_n(j)$: 
\begin{align}
	\Pi_j
	 & :=
	\lim_{t \to \infty}
	\probability{ X_t = j \mid X_0 = i_0 }                    \\
	 & = \frac{1}{\# \mathcal{V}_{\sigma_n(j)}} \lim_{t\to \infty} \probability{\sigma_n(X_t) = \sigma_n(j) \mid \sigma_n(X_0) = \sigma_n(i_0)}\nonumber  =: \frac{ \pi_{\sigma_n(j)} }{ \# \mathcal{V}_{\sigma_n(j)} }.\nonumber
\end{align}
Here, $\pi\in [0,1]^\numClusters$ is the equilibrium distribution of the \gls{MC} with transition matrix $p$.

\begin{figure}[hbtp]
	\centering
	\includegraphics[width=0.7\linewidth]{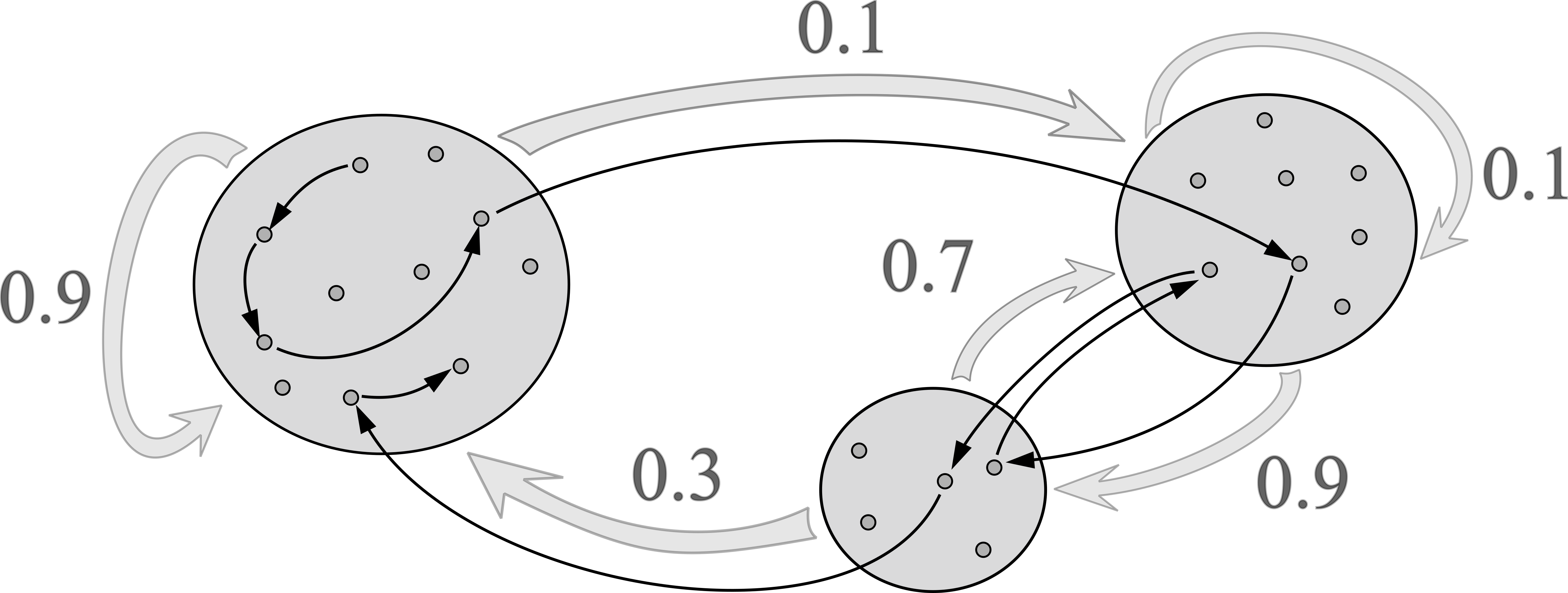}
	\caption{
	A visualization of a \gls{BMC} with $\numClusters=3$ clusters and $p = [[0.9,0.1,0],[0,0.1,0.9],[0.3,0.7,0]]$.
	The thick arrows visualize to the cluster transition probabilities $p_{k,l}$, while the thin arrows visualize the transitions of a sample path $\process{X_t}{t}$.
	Figure courtesy of \cite{sanders2022singular}.
	}
	\label{fig:Schematic_depiction_of_a_BMC}
\end{figure}

\subsubsection{Other models for experimentation}
\label{sec:Other_models_for_experimentation}

Recall that one of our goals is to develop tools for evaluating whether the \gls{BMC} model is appropriate.
In this setting it is often useful to compare with alternative models.
The models that we have used are collected here for easy reference.

\paragraph{\texorpdfstring{$0$}{0}th-order \texorpdfstring{\glspl{BMC}}{BMCs}}

Let $\numClusters \in [n]$ and consider an arbitrary probability distribution $\eta : [\numClusters] \to [0,1]$. A \emph{$0$th-order \gls{BMC}} is then a \gls{BMC} with cluster transition matrix $p_{k,l}:= \eta_l$ for all $k,l \in [\numClusters]$.
The $0$th-order \gls{BMC} will serve as a benchmark to assert whether the structures we find are due to the sequential nature of the process and do not admit a simpler explanation.

Namely, observe that in a $0$th-order \gls{BMC} each next sample $X_{t+1}$ is independent of the previous sample $X_t$.
A $0$th-order \gls{BMC} therefore generates sequences of independent and identically distributed random variables.
This is contrary to a $1$st-order \gls{BMC}, which generates a sequence of dependent random variables.
The probability of a specific observation does depend on the cluster of the observation, and specifically is identical for every observation within that cluster.

\paragraph{\texorpdfstring{$r$}{r}th-order \texorpdfstring{\glspl{MC}}{MC}}

Conversely, it could occur that sequential dependencies are not limited to the single previous observation.
We hence also consider models with higher-order dependencies.

Consider a discrete-time stochastic process $\{ Y_t \}_{t=1}^\ell$ (not necessarily a \gls{MC}) that satisfies $Y_t \in [n]$ for some $n \in \naturalNumbersPlus$.
We say that $\{ Y_t \}_{t \geq 1}$ is an \emph{$r$th-order \gls{MC}} if and only if for all $t \in [\ell - r]$, all $i^r = (i_1, \ldots, i_r) \in [n]^r$ and $j \in [n]$,
\begin{align}
	\probability{
		Y_{t+1}
		=
		j
	 & \mid
		Y_t = i_r
		,
		Y_{t-1} = i_{r-1}
		,
		\ldots,
		Y_{t-r+1} = i_1,
		Y_{t-r} = s_{t-r},
		,
		\ldots,
		Y_{1} = s_{1}
	}
	\nonumber \\
	=
	\probability{
		Y_{t+1}
		=
	 & j
		\mid
		Y_t = i_r
		,
		Y_{t-1} = i_{r-1}
		,
		\ldots,
		Y_{t-r+1} = i_1
	}
	=:
	P^r_{i^r,j}
	\label{eqn:Definition_of_transition_matrix_Pr}
\end{align}
for some transition matrix $P^r \in [0,1]^{n^r \times n}$.
By imposing that the entry $P^r_{i^r,j}$ may only depend on the cluster assignments $\sigma_n(j),\sigma_n(i_1),\ldots,\sigma_n(i_r)$ one gets a model with longer dependencies which still has a ground-truth notion of clusters, called an \emph{$r$th order \gls{BMC}}.

Given such cluster assignments, Section \ref{section:summary_methods_for_eval} provides methods to evaluate what order is the best fit for provided sequential data.
So, in practice, these methods do require the identification of such cluster assignments first.
If one would simply apply the clustering algorithm for $1$st-order \glspl{BMC} to a \gls{BMC} of much higher order (a task for which the algorithm was not explicitly designed), then one must be aware of a few limitations.
Specifically, if $n$ is large and $r > 1$, then the spectral step can become computationally infeasible in practice as the empirical frequency matrix has size $n^r \times n$.
Further, even after clustering, the number of parameter grows exponentially with $r$, so choosing a model with large time dependence risks overfitting the data if its amount does not scale accordingly.
Nonetheless, if one is mainly concerned with goodness--of--fit and not necessarily with interpretability, then a moderately higher order $r$ can be suitable: see Section \ref{sec:Detected_orders_within_the_data} for our findings with real-world data.

\paragraph{Perturbed \texorpdfstring{\glspl{BMC}}{BMCs}}
Finally, we consider an alternative model which concerns the scenario where a \gls{BMC} captures the dynamics only partially.
Specifically, a \emph{perturbed \gls{BMC}} mixes a $1$st-order \gls{BMC} on $[n]$ that has transition matrix $P_{\text{BMC}}$ with a generic $1$st-order \gls{MC} on $[n]$ that has transition matrix $\Delta$ by consideration of the \gls{MC} with transition matrix
\begin{equation}
	P_{\text{Perturbed}}
	:=
	(1-\varepsilon)P_{\text{BMC}}
	+
	\varepsilon \Delta
	.
\end{equation}
The parameter $\varepsilon \in [0,1]$ measures how much the dynamics are affected by the non-\gls{BMC} part $\Delta$.
Whenever we use a perturbed \gls{BMC}, we specify $\Delta$ on the spot.

\newpage 
\subsubsection{Concerning model misspecification}

In practice, it is unlikely that the complex process $\{ X_t \}_t$ is exactly a \gls{BMC}.
One may hence wonder about the dangers of model misspecification:
\begin{enumerate}[label=(\alph{enumi}),ref=\alph{enumi}]
	\item Is the clustering algorithm robust to violations of the model assumption? \label{q: ClusteringRobust?}
	\item When concerned with a downstream task, does the \gls{BMC} model provide any benefit when compared to models with fewer assumptions? \label{q: RelyOnFewerAssumptions?}
\end{enumerate}
In this regard we would like to point out that the data which we consider is not only complex but oftentimes also sparse.
Let us illustrate the principle by a numerical experiment on synthetically generated datasets.

To model a violation of the model assumptions while retaining a sensible notion of ground-truth communities we considered the perturbed \gls{BMC} model as defined in Section \ref{sec:Other_models_for_experimentation}.
The precise setup can be found in Appendix \ref{sec:Robustness_of_the_clustering_procedure}.

Concerning (\ref{q: ClusteringRobust?}), we find that for small perturbation levels $\varepsilon$ it is still possible to exactly recover the underlying clusters; see Fig. \ref{fig:misclassified}(a).

Concerning (\ref{q: RelyOnFewerAssumptions?}), we consider the scenario where the goal is to estimate the transition kernel $P$ of the Markov chain given a sample path of length $\ell$; see Fig. \ref{fig:misclassified}(b).
We find that clustering worsens performance when $\ell$ is large because a lack of expressivity: the true kernel $P$ is not exactly a \gls{BMC}-kernel.
On the other hand, when $\ell$ is small, clustering improves performance because the simplified model makes the estimator less prone to overfitting.
The answer to (\ref{q: RelyOnFewerAssumptions?}) is thus that it can be advantageous to rely on the \gls{BMC} model assumption when data is sparse.

\begin{figure*}[thbp]
	\centering
	\includegraphics[width=0.49\linewidth]{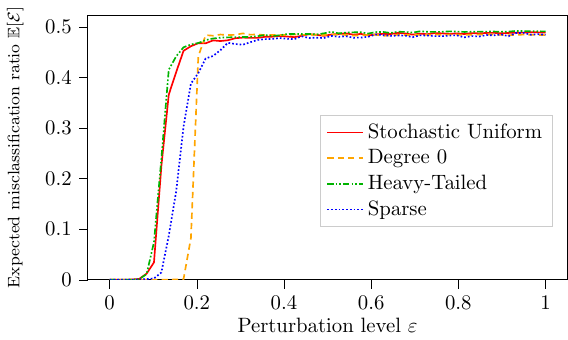}
	\includegraphics[width=0.49\linewidth]{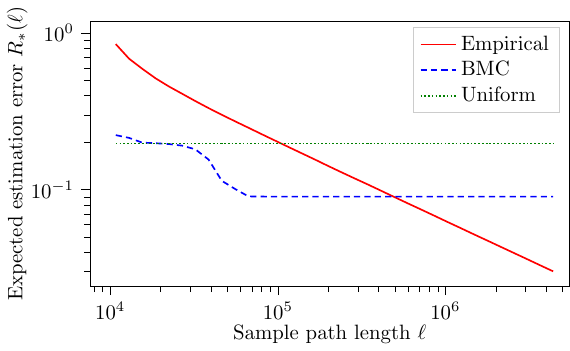}
	\caption{
	(a) The fraction of misclassified states in terms of $\varepsilon$ for various perturbation models $\Delta$. Here,  $\ell_n = \lfloor 30 n\ln(n)\rfloor$ and $n=500$. (b) Estimation error $ R_{*}(\ell):= \mathbb{E}[\pnorm{P - \hat{P}_{*}(\ell)}{}]$ in terms of $\ell$ for three different estimators and data from a perturbed \gls{BMC} with $\varepsilon = 0.05$ and $n=1000$.
    In red: the empirical estimator $\hat{P}_{\text{Empirical}}$ which is the maximum likelihood estimator for a Markov chain with no additional assumptions.
    In blue: the \gls{BMC} estimator $\hat{P}_{\text{BMC}}$.
    In green: the trivial estimator $\hat{P}_{\text{Uniform},ij}:=1/n$  which does not even use the data.
	}
	\label{fig:misclassified}
\end{figure*}

\newpage 
\subsection{Clustering algorithm}
\label{sec:clustering_procedure}

In this section we describe the clustering algorithm from \cite{sanders2020clustering} which was designed to infer the map $\sigma_n$ from the sample path of a \gls{BMC}.
The reason we use this particular clustering algorithm is that it has a mathematical guarantee that it can recover the clusters of \glspl{BMC} accurately \emph{even if} the number of observations $\ell$ is small compared to the number of possible transitions $n^2$.
This is useful for our purposes because observations are generally noisy and few in practice.

The clustering algorithm in \cite{sanders2020clustering} first constructs an \emph{empirical frequency matrix} $\hat{N}$ element-wise from the sequence of observations $X_{1:\ell}$:
for $i,j \in [n]$,
\begin{equation}
	\hat{N}_{ij}
	:=
	\sum_{t=1}^{\ell-1}
	\indicator{ X_t = i, X_{t+1} = j }
	.
	\label{eqn:Definition_of_the_empirical_frequency_matrix}
\end{equation}
Depending on the \emph{sparsity} of the frequency matrix characterized by the ratio $\ell/n^2$, regularization is applied by \emph{trimming}: all entries of rows and columns of $\hat{N}$ corresponding to a desired number of states with the largest degrees, which we denote by $\Gamma$, are set to zero.
The clustering algorithm then executes two steps on the resulting \emph{trimmed frequency matrix} $\hat{N}_\Gamma$:
\begin{itemize}[leftmargin=0.5em]
	\item[Step 1.] Use a spectral algorithm to find an initial approximate cluster assignment.
	\item[Step 2.] Iteratively improve the assignment with a cluster improvement algorithm.
\end{itemize}
We provide pseudocode for these algorithms in Appendix \ref{sec:Pseudo_code_describing_the_clustering_procedure}.

Given some initial guess, here provided by a spectral algorithm, the cluster improvement algorithm consists of local optimization of a log-likelihood function by a hill climbing procedure.
The state space $[n]$ and the number of clusters $\numClusters$ are kept fixed which means that the free parameters are
the cluster transition matrix $p \in \{ q \in [0,1]^{\numClusters \times \numClusters} : \forall k, \sum_l q_{k,l} = 1 \}$ and the cluster assignment map
$\sigma_n : [n] \to [\numClusters]$.
Given an observation sequence $X_{1:\ell}$,
the log-likelihood of the \gls{BMC} model is given by
\begin{equation}
	\hat{\mathcal{L}}
	(
	X_{1:\ell}
	\mid
	p, \sigma_n
	)
	:=
	\sum_{t=1}^{\ell-1}
	\ln{
	\frac{ p_{X_t,X_{t+1}} }{ \# \mathcal{V}_{\sigma_n(X_{t+1})} }
	}.
	\label{eqn:Log_likelihood_of_the_cluster_improvement_algorithm}
\end{equation}

The reason to use this two-step procedure instead of direct likelihood maximization is that finding the global maximizer of \eqref{eqn:Log_likelihood_of_the_cluster_improvement_algorithm} is numerically infeasible.

That hill climbing, which is computationally tractable, succeeds at exactly (resp.\ accurately) recovering the true parameters when initialized with a spectral clustering is formally established in \cite{sanders2020clustering} in the asymptotic regime where $\ell = \omega(n\log n)$ (resp.\ $\ell = \omega(n)$).

\subsection{Methods for evaluating clusters and models}
\label{section:summary_methods_for_eval}

To interpret clustering results and assess model adequacy in the absence of a known ground truth clustering, we require principled evaluation methods tailored to sequential data. 
We use multiple methods and here provide short summaries; the details are given in Appendix \ref{sec:Evaluating_clusters_and_models}.

\paragraph{Performance on a downstream task.}
Clustering can serve as a means of dimensionality reduction when applying computational methods to sequences of observations $X_{1:\ell}$ with a large number of distinct states $n$. 
A clustering $\sigma_n : [n] \to [\numClusters]$ reduces the effective size of the state space to $\numClusters \ll n$, enabling more efficient or more robust downstream computations. 
To evaluate whether the clustering preserves relevant information, we consider a downstream task $T(X_{1:\ell})$ with an associated quality measure $Q$, such as prediction accuracy. 
Let $Q_{\textnormal{pre-reduction}} := Q(T(X_{1:\ell}))$ and $T(\sigma_n(X_{1:\ell}))$ denote the task output after clustering, with quality $Q_{\textnormal{reduced}} := Q(T(\sigma_n(X_{1:\ell})))$. 
The comparison between $Q_{\textnormal{pre-reduction}}$ and $Q_{\textnormal{reduced}}$ provides a concrete proxy for how much useful information is retained through clustering. 
In some cases, $Q_{\textnormal{reduced}}$ may even exceed $Q_{\textnormal{pre-reduction}}$ due to noise reduction in the clustered sequence. 
This method enables the empirical comparison of different clusterings and motivates clustering when the downstream task is numerically intensive or sensitive to overfitting.

\paragraph{Model selection with validation data.}
To compare two candidate models $\mathbb{P}$ and $\mathbb{Q}$ for an observed sequence $x_{1:\ell}$,
we consider a rescaled log-likelihood ratio
\begin{equation}
	\hat{D}(x_{1:\ell}; \mathbb{P}, \mathbb{Q})
	:=
	\frac{1}{\ell} \ln\frac{\mathbb{P}[X_{1:\ell}= x_{1:\ell}] }{\mathbb{Q}[X_{1:\ell} = x_{1:\ell}]}
    .
\end{equation}
This ratio estimates the \gls{KL} divergence rate difference and quantifies how much more likely an observed path is under model $\mathbb{P}$ than model $\mathbb{Q}$.
To reduce the bias, we use a holdout method.
Specifically, we will split the trajectory into two parts: the first half
$
	x_{1:\lfloor \ell/2 \rfloor}
$
will be used for training,
and the second half
$
	x_{\lfloor \ell/2 \rfloor+1:\ell}
$
for validation.
We then use the holdout-based estimate
\begin{equation}
	\hat{D}(
	x_{\lfloor \ell/2 \rfloor+1:\ell}
	;
	\hat{\mathbb{P}}^{X_{1:\lfloor \ell/2 \rfloor}}
	,
	\hat{\mathbb{Q}}^{X_{1:\lfloor \ell/2 \rfloor}}
	),\label{eqn:Less_biased_estimator_of_the_KL_divergence_rate_difference1}
\end{equation}
which reduces the amount of bias when compared to the estimator a standard \gls{KL} divergence estimator. 

\paragraph{Model selection with only training data.}
When validation data is unavailable or data is sparse, we assess model complexity using information criteria rather than held-out performance. 
Specifically, we estimate the order $r$ of a $\numClusters$-state \gls{BMC} (recall Section~\ref{sec:Other_models_for_experimentation}) from the clustered sequence $Y_{1:\ell} = \sigma_n(X_{1:\ell})$, and compare $r$th-order models via the \gls{CAIC} \cite{bozdogan1987model}:
for model $\hat{\mathbb{Q}}^{r,\mathrm{MLE}}$,
\begin{align}
	\mathrm{CAIC}(\hat{Q}^{r,\mathrm{MLE}})
	:=
	-2 \ln{
		\bigl(
		\mathcal{L}( Y_{1:\ell} \mid \hat{Q}^{r,\mathrm{MLE}} )
		\bigr)
	}
	+
	2 \mathrm{DF}(\numClusters,r)
	\bigl(
	1
	+
	\ln{
			(
			\ell - r
			)
		}
	\bigr)
	;
	\label{eqn:CAIC}
\end{align}
see Appendix \ref{sec: OrderSelection} for the details.
Here, $\mathrm{DF}(\numClusters,r)$ denotes the degrees of freedom in an $r$th-order \gls{MC} constrained to have fixed parameters $\numClusters$ and $r$.
Each candidate model $\hat{\mathbb{Q}}^{r,\mathrm{MLE}}$ is fit by maximum likelihood to obtain a transition matrix $\hat{Q}^{r,\mathrm{MLE}}$, and evaluated using a penalized log-likelihood that accounts for model complexity via $\mathrm{DF}(\numClusters, r) = \numClusters^r(\numClusters - 1)$. 
The selected order $r^{\mathrm{CAIC}}$ minimizes the CAIC and balances goodness--of--fit with parsimony. 
This approach allows us to detect under- or overfitting while avoiding bias due to overparameterization in the absence of explicit data splitting.

\paragraph{The shape of spectral noise for identification of alternative models.}
Theory in the \gls{BMC} model predicts that the leading $\numClusters$ singular values of the empirical frequency matrix $\hat{N}$ reflect the signal, while the remaining $n-K$ singular values can be interpreted as noise \cite{sanders2021spectral, sanders2022singular}.
The dependence of this noise profile on the structure of the \gls{BMC} is characterized in \cite{sanders2022singular}. 
We can use this as a model evaluation tool: we can visualize the empirical spectral noise as a histogram and compare with theory.

However, we found that the spectrum of $\hat{N}$ can be misleading as it tends to be dominated by the effect of an inhomogeneous equilibrium distribution which is common in real-world data. 
To address this, we instead examine the \emph{empirical normalized Laplacian} $\hat{L}$, defined element-wise by
\begin{align}
	\hat{L}_{ij}
	:=
	\begin{cases}
		\frac{\hat{N}_{ij}}{\sqrt{\sum_{k=1}^n \hat{N}_{ik}}\sqrt{\sum_{k=1}^n \hat{N}_{kj}}} & \textnormal{if } \hat{N}_{ij} \neq 0, \\
		0                                                                                     & \textnormal{otherwise.}               \\
	\end{cases}
	\label{eqn:Definition_of_the_empirical_Laplacian}
\end{align}
We characterize the spectral noise profile for this matrix in Proposition \ref{prop:Limiting_singular_value_distribution_of_the_Laplacian} and expect it to be more robust to equilibrium imbalances. 
This provides a complementary, unsupervised tool for diagnosing model mismatch and identifying that richer structures may be present without an explicit alternative model.

\section{Experimental setup}

\subsection{Data sets and preprocessing}
\label{sec:Data_sets_and_preprocessing}

We here introduce the data sets and our preprocessing; see Table \ref{tab: Data} for a summary.
The empirical frequency matrices resulting from this preprocessing, and examples of preprocessed trajectories are made available in the supplementary materials.

\paragraph{Sequence of animal positional data}

We use data from the ``Dunn Ranch Bison Tracking Project'' \cite[\#{8019591}]{movebank} that provides \gls{GPS} animal movement data as a sequences of latitude-longitude coordinates; recall Fig. \ref{fig:movebank_screenshot}.
For example, the data of one animal starts as follows:
\begin{equation*}
	(  40.4749 , -94.1129) \to 
	( 40.4748 , -94.1130) \to
	(  40.4749, -94.1129) \to \ 
	\emph{ et cetera}
	.
\end{equation*}
The study provides data from $24$ animals which we concatenated to a single observation sequence.
As preprocessing, we also excluded some outlier \gls{GPS} coordinates outside rectangular $\SI{3.2}{\kilo\meter} \times \SI{1.7}{\kilo\meter}$ caused by malfunctions of the tracking device.

If we assume that every \gls{GPS} coordinate yields a distinct state of a \gls{BMC}, then clustering would be infeasible because there would be as many states as observations.
We therefore combine \gls{GPS} coordinates by binning over a grid of squares with width $\SI{0.04}{\kilo\meter}$, chosen by ad-hoc parameter tuning; see Appendix \ref{sec:appendix_GPS_to_states} for details.
After preprocessing and binning, the sequence becomes
\begin{equation*}
	X_1 = \textnormal{Bin 0}
	\to
	X_2 = \textnormal{Bin 1}
	\to
	X_3 = \textnormal{Bin 0}
	\to
	X_4 = \textnormal{Bin 0}
	\to
	\emph{et cetera}.
	\nonumber
\end{equation*}
We finally eliminated self-jumps such that resting animals do not disturb the findings.
We end up with $n=3155$ states and a sequence of length $\ell=193134$.

\paragraph{Sequence of codons in \texorpdfstring{\gls{DNA}}{DNA}}

A string of DNA can be viewed as a sequence composed of four possible nucleotides, denoted A, T, C, and G. 
These are processed in protein synthesis in three-letter words called \emph{codons}. 
For instance, the codon ACG corresponds to addition of the amino acid threonine as the next building block of a protein. 
Given a sequence of nucleotides like
\begin{equation}
	\textnormal{TTTGTAGTTAGATCTCCTCTATCC}
	\emph{ et cetera}, \nonumber
\end{equation}
it is hence natural to focus on the associated sequence of codons:
\begin{equation}
	X_1 = \textnormal{TTT}
	\to X_2 = \textnormal{GTA} \to
	\cdots
	\to
	X_8 = \textnormal{TCC}
	\to
	\emph{et cetera}.
	\nonumber
\end{equation}
We consider data from the OCA2 gene in human \gls{DNA} \cite{dna}.
The specific gene is merely illustrative: the clustering algorithms can be applied to any gene, and we expect similar results.
We find $\ell = 16 \times 10^4$ transitions and a state space of size $n = 64$.

\paragraph{Sequence of words in texts}
A cleaned corpus based on the Wikipedia datadump of October 2013 was downloaded from \cite{wiki}.
Further preprocessing was standard: we removed all punctuation and numbers, reduced to a root word with the \gls{NLTK}'s \texttt{PorterStemmer.stem()} \cite[Section 3.6]{bird2009natural}, and pruned the $100$ most used words and words with fewer than $1000$ occurrences.
For example, a paragraph such as
\begin{equation}
	\text{\emph{Clustering observations can be very useful!}}\nonumber 
\end{equation}
is converted into the sequence
\begin{equation}
	X_1 = \textnormal{cluster}
	\to
	X_2 = \textnormal{observ}
	\to
	\cdots
	\to
	X_{6} = \textnormal{use}
	.
	\nonumber
\end{equation}

Each $s$th Wikipedia page results in a sequence that is relatively short. 
The corresponding frequency matrix $\hat{N}^{s}$, recall \eqref{eqn:Definition_of_the_empirical_frequency_matrix}, is hence excessively sparse.
We therefore compute and work instead with
$
	\hat{N}
	:=
	\sum_s \hat{N}^{s}
	.
	\label{eqn:Words__Summed_empirical_frequency_matrix}
$
The diagonal of the matrix is further set to zero because self-transitions are common and not particularly informative for the purpose of clustering.
Pruning these removes a potential bias towards homophilic clusters.
We end up with a vocabulary of $n=16994$ words and $\ell \approx 2 \cdot 10^8$ transitions.

\paragraph{Sequence of companies with the highest daily return}

Daily pricing data for every company in the S\&P500 index was downloaded from \cite{alphavantage}.
The data did not span the same time range, so we only retained the $300$ companies with the most complete data.
We determined the times $t_-^{i}$ and $t_i^+$ of the first and final data entry of each constituent consider the time range from
$
	t_0 := \max_{i \leq 300}t_-^{i}$
to
$t_0 +\ell := \min_{i \leq 300} t_+^{i}$.
It turned out that $t_0 = \textnormal{2001--07--26}$ and $t_0 + \ell = \textnormal{2021--10--22}$.
Days without data, such as weekends when the market is closed, were ignored.

Let $O_t^{i}$ and $C_t^i$ denote the opening price and closing price of company $i$'s stock on day $t$, respectively.
We considered the company with the highest daily return:
\begin{equation}
	X_t
	\in
	\operatorname{argmax}\limits_{i \leq 300} C_t^i / O_t^i
	\label{eqn:Definition_of_Xt_for_the_stock_market}
\end{equation}
The resulting sequence of company tickers starts with
\begin{gather}
	X_{t_0} = \textnormal{ADI} \to
	X_{t_0+1} = \textnormal{AES} \to X_{t_0 + 2} = \textnormal{PVH} \to
	\cdots
	.\nonumber
\end{gather}
We again eliminate self-jumps and end up with $\ell\approx 24\times 10^2$ transitions on a state space of size $n=300$.
\begin{table}[ht]
    \caption{Summary of the used datasets. The final two columns are only approximations showing the order of magnitude.}
    \label{tab: Data}
    \centering 
    \footnotesize
    \renewcommand{\arraystretch}{1.1}
    \begin{tabular}{ccccc} 
        \toprule 
        Dataset&\#States $n$&\#Transitions $\ell$&Visits per state $\ell/n$&Sparsity $\ell/n^2$ 
        \\ 
        \midrule
        Codons in DNA&$64$&$16\times 10^4$&$2500$&$40$
        \\   
        Animal movements&$3155$&$19\times 10^4$&$60$&$0.02$\\ 
        Words in text&$16994$&$2\times10^8$&$10000$&$0.7$\\
        Companies S\&P500&$300$&$2\times 10^3$&$8$&$0.3$\\ 
        \bottomrule
    \end{tabular}
\end{table}

\subsection{Implementation description for BMCToolkit}

To tackle large sequences of observations, we programmed a \gls{DLL} in \texttt{C++} that can simulate and analyze trajectories of \glspl{BMC}.
Among other functionalities, the \gls{DLL} is able
to calculate both cluster and state variants of the equilibrium distribution, frequency matrix, and transition matrix of a \gls{BMC};
to compute the difference between two clusters and the spectral norm;
to estimate the parameters of a \gls{BMC} from a sample path;
to execute the spectral clustering algorithm and the cluster improvement algorithm;
to generate sample paths and trimmed frequency matrices;
and to relabel clusters according to the size or the equilibrium probability of a cluster.

The \gls{DLL} utilizes \emph{Eigen}, a high-level \gls{DLL} for linear algebra, matrix, and vector operations; and the \emph{\gls{SPECTRA}}, a \gls{DLL} for large-scale eigenvalue problems built on top of Eigen.
The mathematical components of BMCToolkit were validated through functional testing using Microsoft's Native Unit Test Framework.
The performance of the numerical components of BMCToolkit were finally benchmarked using \emph{Benchmark}, Google's microbenchmark support library.
Our source code can be found at \url{https://gitlab.tue.nl/acss/public/detection-and-evaluation-of-clusters-within-sequential-data}.

We also created a Python module called \emph{BMCToolkit}, and made it available at \url{https://pypi.org/project/BMCToolkit/}.
This Python module distributes the \gls{DLL} mentioned above and includes an easy-to-use Python interface.
When compiling BMCToolkit, we made sure to instruct the \gls{MSVC} compiler to activate the \emph{OpenMP} extension to parallelize the simulation across \glspl{CPU} and so that Eigen could parallelize matrix multiplications (/openmp); to apply maximum optimization (/O2); to enable enhanced \gls{CPU} instruction sets (/arch:AVX2); and to explicitly target 64-bit x64 hardware.

\section{Results}
We now evaluate how well the \gls{BMC} model can capture the structure of the sequential data introduced in Section \ref{sec:Data_sets_and_preprocessing} and if it can yield useful insights.
Specifically, the detected clusters and our findings are discussed in Section \ref{sec:Detected_clusters_within_the_data}, and we study what order of the \gls{MC} best fits the data in Section \ref{sec:Detected_orders_within_the_data}.  

\subsection{Detected clusters within the data}
\label{sec:Detected_clusters_within_the_data}

\subsubsection{Animal movement data}
\label{sec:AnimalMovementData}

We here investigate the \gls{GPS} animal movement data from the \emph{Dunn Ranch Bison Tracking Project}; recall Section \ref{sec:Data_sets_and_preprocessing}.

\paragraph{Subjective evaluation}

The results of the clustering algorithm are depicted in Fig. \ref{fig:bison_clustering}.
It is subjectively evident that the clusters give more insight than the scatter plot in Fig. \ref{fig:movebank_screenshot}.

Observe that the clustering algorithm picks up on geographical features: all clusters are connected regions, except for the largest two clusters $1$ (black dots) and cluster $2$ (orange $c$'s).
Clusters $1$ and $2$ contain the low degree states which explains their geographical spread.
For the other clusters geographical boundaries are visible.
For example, cluster $3$ is bounded from below by creeks and cluster $4$ lies between two creeks.
On satellite imagery one can see a fence north of $7$ and the part of $2$ that is bordering $7$ and in fact, the northern border of these two clusters follows that line.

Let us emphasize that the fact that the clusters respect the underlying geography and barriers is a nontrivial observation: the clustering algorithm identifies states by numbers and does \emph{not} use geographical information on the state labeling.
The labels of the states are in fact arbitrary to the algorithm, states labeled e.g.\ $10$ and $11$ need not be close to each other geographically.
Hence, geographically mixed clusters would also have been a valid outcome of the algorithm.

Let us note that the average rate of transitions within each cluster is $0.79$.
The transitions shown on the map thus do not represent the majority of transitions, but only the transitions between different clusters that occur with probability of at least $0.01$.
The cluster transitions matrix is given in Appendix \ref{sec:appendix_bison_cluster_transition}.

\paragraph{Comparing the histogram of singular values to the limiting distribution of singular values of the inferred \texorpdfstring{\gls{BMC}}{BMC}}
Fig.~\ref{fig:L_animal} next compares the spectral noise of \eqref{eqn:Definition_of_the_empirical_frequency_matrix} and \eqref{eqn:Definition_of_the_empirical_Laplacian} to the theoretical predictions for \glspl{BMC} (see Proposition \ref{prop:Limiting_singular_value_distribution_of_the_Laplacian}).
Observe that with $\numClusters = 15$ clusters, the theoretical prediction captures the general shape of the distribution, but is inaccurate for the smallest and largest singular values especially.
With more clusters, $\numClusters=100$, the theoretical prediction for the distribution of singular values is found to predict the distribution of singular values better across the entire range.
The prediction however remains imperfect.
The peak at zero is probably linked to the fact that there are many states with a low degree.

\begin{figure}[htbp]
	\centering
	\includegraphics[width= 0.99\linewidth]{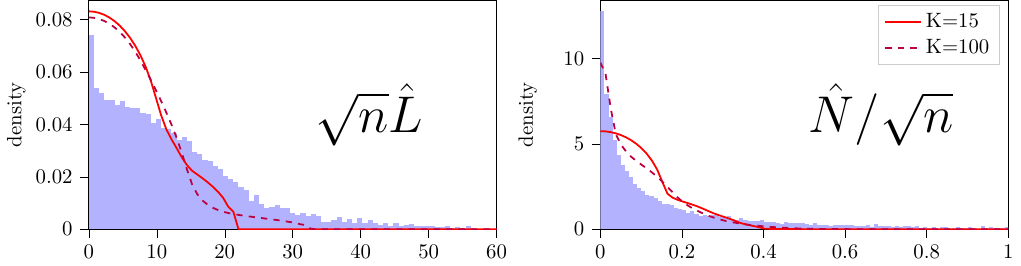}
	\caption{Density-based histogram of singular values for $\sqrt{n}\hat{L}$ and $\hat{N}/\sqrt{n}$ for the animal movement data in blue bars and the theoretical predictions associated with the improvement clustering with $\numClusters = 10$ as the red line and with $\numClusters=100$ as the purple dashed line.}
	\label{fig:L_animal}
\end{figure}

\paragraph{Conclusion}
We conclude that a \gls{BMC} is a useful model for describing animal movement data.
In fact, surprisingly, the clustering algorithm manages to deduce underlying geographical information (such as regions, barriers, and movement patterns) from the mere time dependency within the observation sequence.
Because of this visuo-spatial ability, the algorithm may have a potential use as a tool for spatial recognition.

At the same time however, we also conclude that a \gls{BMC} does not describe the underlying complex process in its entirety.
For example, the distribution of singular values depicted in Fig. \ref{fig:L_animal} is not predicted perfectly.
This is likely caused by the symmetry assumption between states within a \gls{BMC}, which is at odds with the geographical structure of the data.
Indeed, if we cut the region into more but smaller clusters and thus reduce the amount of symmetry within the \gls{BMC} modeling the observation sequence, the \gls{BMC}'s prediction of the distribution of singular values improves.

\subsubsection{Sequence of codons in \texorpdfstring{\gls{DNA}}{DNA}}
\label{sec:detected_clusters__seq_of_codons}

We consider the sequence of codons occurring in the gene OCA2 in human \gls{DNA}.
The detected clusters are displayed in Table \ref{tab: DNA}
\begin{table}[htbp]
	\caption{ The detected clusters of codons in a short sequence of human \gls{DNA}. Observe that many codons in $k = 2$ end with $C$, and that all codons in $k = 3$ start with $G$.}
	\label{tab: DNA}
	\setlength{\tabcolsep}{3pt}
	{
		\scriptsize
		\def\arraystretch{1.5}
		\begin{tabular}{cp{35em}}
			\toprule
			$k$ & Codons within detected cluster $k$                                                                                                     \\
			\midrule
			1   & AAA, AAG, TGT, AGT, CCT, TCT, ACT, CAG, ATT, ATG, CAT, TAT, AAT, TTG, CTT, TGA, CTG, CAA, TGG, ATA, TTA,  AGG, TAA, ACA, TCA, CCA, AGA
			\\
			2   & CAC, GCC, CCC, TCC, ACC, GTC, CTC, TTC, ATC, TGC, AGC, TAC, AAC, GGC, TAG, CTA, GAC                                                    \\
			3   & GTG, GAG, GGT, GCA, GAA, GTA, GGA, GAT, GGG, GTT, GCT                                                                                  \\
			4   & CGA, CGC, ACG, TCG, CCG, GCG, CGT, CGG                                                                                                 \\
			5   & TTT                                                                                                                                    \\
			\bottomrule
			\vspace{0pt}
		\end{tabular}}
\end{table}
\paragraph{Possible detection of codon--pair bias}
The frequency matrix, displayed after clustering, reveals an interesting pattern; see Fig. \ref{fig:DNA}(a).
We observe that all rows and columns associated with the second-to-last cluster $\mathcal{V}_4$ have low density.
This means that the states in $\mathcal{V}_4$ have small equilibrium distribution.
More interesting is the low-density block in the rows and columns corresponding to the transitions from $\mathcal{V}_2$ to $\mathcal{V}_3$.
It appears we have rediscovered a phenomenon known as \emph{codon--pair bias} in biology \cite{gutman1989nonrandom,coleman2008virus, kunec2016codon}. 

There is some evidence that codon--pair bias is nothing more than a consequence of \emph{dinucleotide bias} \cite{kunec2016codon}.
Here, the term dinucleotide bias refers to the fact that the two-letter pair $\textnormal{CG}$ is used infrequently regardless of its position.
This dinucleotide bias can also explain the clusters observed in Fig. \ref{fig:DNA}(a).
Indeed, inspection of the clusters $\mathcal{V}_1, \ldots, \mathcal{V}_5$ reveals that nearly all codons in $\mathcal{V}_2$ end with the nucleotide $\textnormal{C}$ whereas all codons in community $\mathcal{V}_3$ begin with nucleotide $\textnormal{G}$.
There are a few exceptions, the codons TAG and CTA in $\mathcal{V}_2$, but visual inspection of $\hat{N}$ suggests that these may have been misclassified.
Thus, transitions from $\mathcal{V}_2$ to $\mathcal{V}_3$ would give rise to the two nucleotides $\textnormal{CG}$ on the interface.
Also remark that the two leftmost vertical low-density streaks in the block associated with $\mathcal{V}_2$ correspond to codons GCC and GTC which simultaneously begin with a $\textnormal{G}$ and end with a $\textnormal{C}$.
Finally, all codons in $\mathcal{V}_4$ contain the two nucleotides $\textnormal{CG}$.
It thus appears that all low-density regions in the figure could be explained through dinucleotide bias.
We refer to \cite{alexaki2019codon} and the references therein for further discussions of codon--pair bias, dinucleotide bias and their applications.

\paragraph{Comparing the histogram of singular values to the limiting distribution of singular values of the inferred \texorpdfstring{\gls{BMC}}{BMC}}

\begin{figure*}[t]
	\centering
	\subfloat[]{\includegraphics[height=0.35\linewidth]{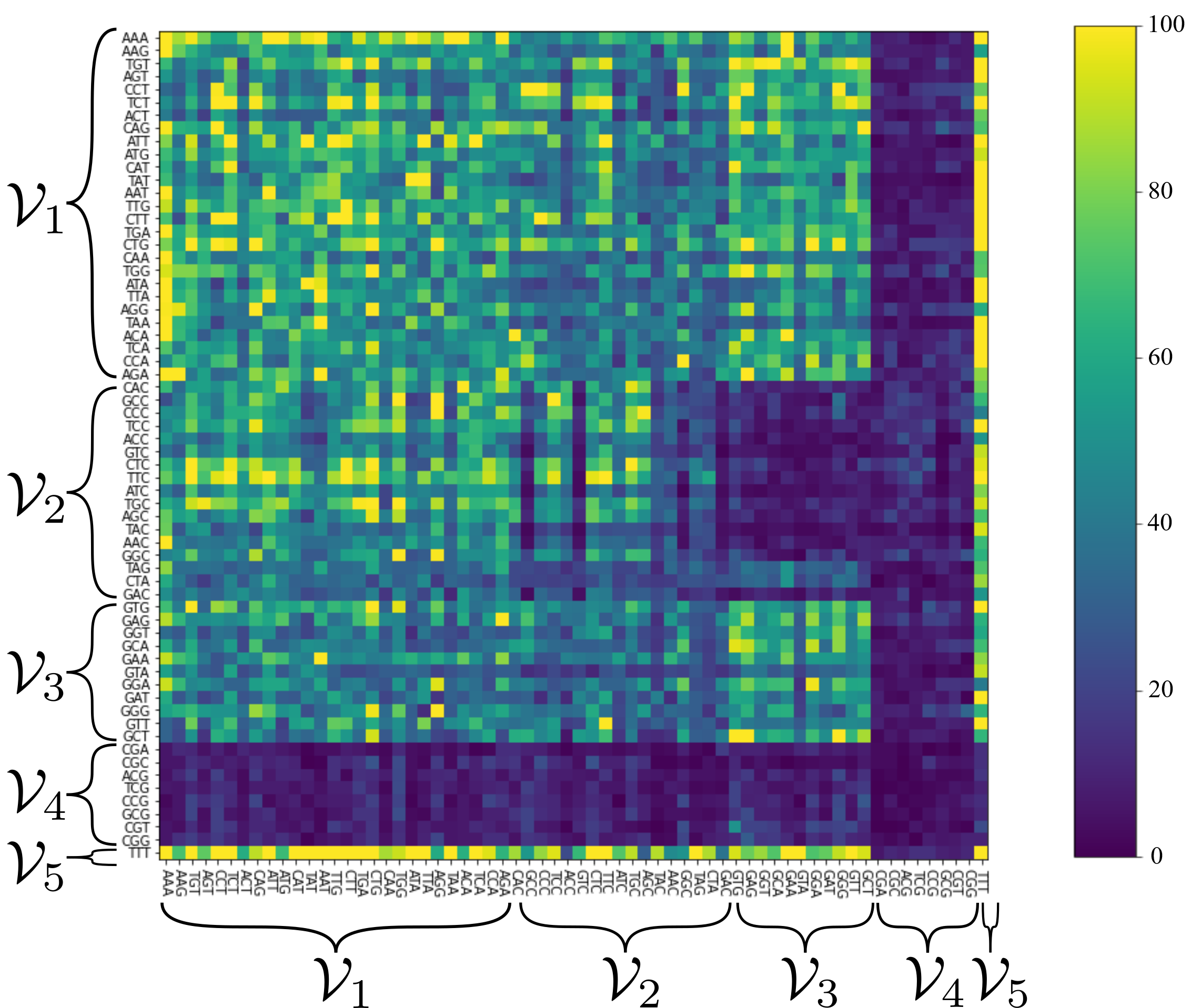}}
	\hspace{20pt}
	\subfloat[]{\includegraphics[height=0.33\linewidth]{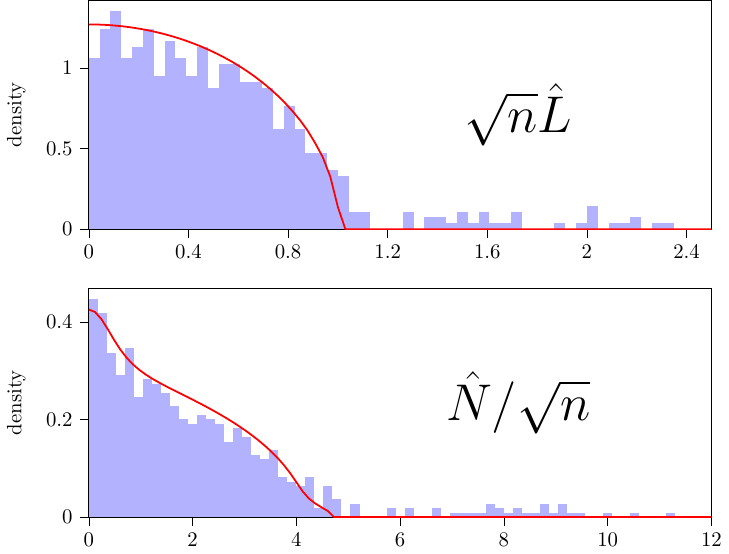}}
	\caption{
		(a)
		The frequency matrix $\hat{N}$ when the codons are sorted by the five detected clusters.
		(b)
		Average density-based histogram of singular values for $\sqrt{n}\hat{L}$ and $\hat{N}/\sqrt{n}$ for the \gls{DNA} sequential data in blue bars and the theoretical predictions associated with the improved clustering as the red line.
		Not displayed is that each observation of $\hat{N}/\sqrt{n}$ also has a single singular value near $40$ and each observation of $\sqrt{n}\hat{L}$ has a single singular value near $8$.
		These extremal singular values are considered to be part of the signal, and consequently not relevant for measuring the spectral noise.
	}
	\label{fig:DNA}
\end{figure*}

It appears from  the reasonable clusters in Fig. \ref{fig:DNA}(a) that a \gls{BMC} could be an appropriate model for this dataset.
Let us now additionally verify whether the shape of the spectral noise is consistent with a \gls{BMC}.
Note that the matrices $\hat{N}$ and $\hat{L}$ are only $64 \times 64$.
Consequently, they only have $64$ singular values.
To get a clearer picture we split the observation sequence into ten equally sized pieces and for each subpath we compute the singular values.
The averaged histogram over these ten observations is compared to the theoretical \gls{BMC}-prediction associated to the clusters in Fig. \ref{fig:DNA}(b).

We observe a good match to the theory for both $\hat{N}$ and $\hat{L}$.
Particularly interesting is the peak near zero and the triangular tail in the interval $[4,5]$.
The theoretical there matches the observed distribution for $\hat{N}/\sqrt{n}$.
Such features would not be predicted in a simpler model without communities such as a matrix with i.i.d.\ entries.
One would then instead expect a quarter-circular law with density proportional to $\indicator{x\in (0,c)}\sqrt{c^2 - x^2}$ for some $c>0$.
This quarter-circular law is observed in the empirical Laplacian $\hat{L}$ suggesting that the main feature in the spectral noise of $\hat{N}$ is due to the equilibrium distribution.
There are also some singular values which escape the support of the limiting singular value distribution.
These are most-likely associated to the signal $\mathbb{E}[\hat{N}]$ and should consequently not be viewed as a part of the spectral noise.

\paragraph{Conclusion}

It appears that the clustering algorithm was able to detect the phenomena of dinucleotide bias in \gls{DNA}.
The spectral noise is consistent with a \gls{BMC} and a simpler model generating a random matrix with independent and identically distributed entries would not have sufficed to predict $\hat{N}$'s singular values.

\subsubsection{Sequence of words on Wikipedia}
\label{sec: Wikipedia}

The clustering algorithm discussed in Section \ref{sec:clustering_procedure} was executed for $\numClusters = 50, 100, 200, 400$, both with and without the improvement algorithm.
Ten improvement iterations were done whenever we used the latter algorithm.
A complete list of the clusters for $\numClusters=200$ with improvement is given in Appendix \ref{sec: DetectedClusters}.

\paragraph{Subjective evaluation}
At a first glance, the found clusters appear meaningful.
For instance, a small cluster with six elements has a distinctly football-related theme: $\mathcal{V}_{125}$ contains the words \emph{champion}, \emph{cup}, \emph{premier}, \emph{coach}, \emph{footbal} and \emph{championship}.
The medium-sized clusters $\mathcal{V}_{50}$, $\mathcal{V}_{51}$, and $\mathcal{V}_{52}$ respectively contain words related to public professions, units, and warfare.
That is, $\mathcal{V}_{50}$ includes stemmed words such as \emph{founder}, \emph{deputi}, \emph{formeli}, \emph{mayor}, \emph{bishop}, \emph{meanwhil}, \emph{successor}, $\mathcal{V}_{51}$ includes \emph{tonn}, \emph{usd}, \emph{capita}, \emph{lb}, and $\mathcal{V}_{52}$ includes \emph{cavalri}, \emph{jet}, \emph{helicoptr}, \emph{rifl}, \emph{warfar}, \emph{battalion}, and \emph{raid}.
The second-largest cluster $\mathcal{V}_2$ predominantly contains names, including \emph{alexandr}, \emph{albrecht}, \emph{gideon}, and \emph{jarrett}.

We further observe that the improvement algorithm yields more balanced clusters: before the improvement algorithm the largest three clusters have sizes $9192$, $1279$ and $1126$, respectively, while after improvement the sizes are $2848$, $1943$ and $1600$.

\paragraph{Performance on a downstream task}
To evaluate the quality of the clusters more objectively, we investigate the performance achieved on a downstream task as discussed in Section \ref{section:summary_methods_for_eval}.

We specifically consider a document classification task where the goal is to predict the label $l(d)$ of a document $d$ given some training dataset.
The considered datasets are described in Appendix \ref{sec: DocClass}.
For instance, the AG News dataset contains news articles with four possible labels: \emph{World}, \emph{Sports}, \emph{Business}, and \emph{Sci/Tech}.

Given a clustering, one can translate each document into a $\numClusters$-dimensional vector by counting the number of occurrences of each cluster in the document; see Appendix \ref{sec:Appendix__CDIDF}.
Thereafter, a logistic regression model is trained to learn a mapping from the $\numClusters$-dimensional vectors to the labels.
Aside from spectral and improvement clusters we also consider a random clustering in which every word is assigned a cluster uniformly at random.
There were some datasets in which neither spectral nor improvement clustering significantly outperformed the random clustering.
We consider these tests inconclusive, but report on them in Appendix \ref{sec: DocClass} for completeness.
The performance on the remaining datasets is displayed in Table \ref{tab: perf}.

Observe that improvement clustering typically outperforms plain spectral clustering.
Further, in the \emph{AG News}, \emph{Yahoo!} and \emph{Wiki} datasets the performance increases with the dimensionality.
The gain in performance from spectral and improvement clustering as opposed to random clustering is there comparable with an increase of dimensionality by a factor 4.
On the other hand, for \emph{Books} and \emph{CMU} it appears that the performance decreases with the dimensionality, although this pattern is less clear.
A possible explanation is that \emph{Books} and \emph{CMU} have less training data so that overfitting may occur when the dimensionality is large.

\begin{table}[htbp]
	\caption{ Performance of clustering before and after improvement as measured by accuracy in the downstream task of document classification as compared to a random clustering. Bold added for the best-performing method.}
	\label{tab: perf}
	\centering
	\setlength{\tabcolsep}{10pt}
	{
		\scriptsize
		\begin{tabular}{cccccccc}
			\toprule
			$\numClusters$ & {Algorithm} & {AG News}    & {Yahoo!}     & {Wiki}       & {Book}       & {CMU}        \\
			\midrule
			$50$           & Random      & 48.3\%       & 27.4\%       & 56.9\%       & 31.0\%       & 67.4\%       \\
			$50$           & Spectral    & 66.0\%       & 39.8\%       & 71.1\%       & 44.4\%       & 69.5\%       \\
			$50$           & Improved    & {\bf 68.5\%} & {\bf 40.1\%} & {\bf 71.5\%} & {\bf 44.7\%} & {\bf 71.8\%} \\[0.5em]
			$100$          & Random      & 55.5\%       & 33.3\%       & 68.4\%       & 30.0\%       & 67.4\%       \\
			$100$          & Spectral    & 72.7\%       & 47.2\%       & {\bf 81.6\%} & 45.2\%       & 70.0\%       \\
			$100$          & Improved    & {\bf 76.8\%} & {\bf 49.0\%} & 80.1\%       & {\bf 46.3\%} & {\bf 70.7\%} \\[0.5em]
			$200$          & Random      & 64.0\%       & 41.7\%       & 80.8\%       & 28.2\%       & 66.8\%       \\
			$200$          & Spectral    & 78.2\%       & 51.7\%       & 85.6\%       & {\bf 44.4\%} & 68.7\%       \\
			$200$          & Improved    & {\bf 80.7\%} & {\bf 54.7\%} & {\bf 86.5\%} & 43.4\%       & {\bf 69.0\%}
			\\[0.5em]
			$400$          & Random      & 72.8\%       & 49.4\%       & 87.8\%       & 28.9\%       & 66.8\%       \\
			$400$          & Spectral    & 81.5\%       & 56.3\%       & 88.0\%       & 42.1\%       & 67.9\%       \\
			$400$          & Improved    & {\bf 83.1\%} & {\bf 58.6\%} & {\bf 89.0\%} & {\bf 44.4\%} & {\bf 68.4\%} \\
			\bottomrule
			\vspace{0pt}
		\end{tabular}
	}
\end{table}

\paragraph{Comparing the histogram of singular values to the limiting distribution of singular values of the inferred \texorpdfstring{\gls{BMC}}{BMC}}\label{sec: NLP_Histograms}

One may be tempted to deduce from the reasonable clusters and the performance in Table \ref{tab: perf} that the \gls{BMC} model is appropriate for this dataset.
The structure in the spectral noise is however not as one would expect.
Consider Fig. \ref{fig:WordsLhat} for a comparison of the empirical singular value distribution with the theoretical predictions.
Observe that there is a good match for $\hat{N}$ but a discrepancy for $\hat{L}$.

The fact that $\hat{N}$ yields a good match can be explained as being due to a strongly inhomogeneous equilibrium distribution from Zipf's law.
The empirical Laplacian $\hat{L}$ removes this dominant effect after which it may be observed that the empirical distribution has a heavy tail which is not present in the \gls{BMC}-based prediction.
In Appendix \ref{sec: Simulation} we demonstrate by a numerical example that the discrepancy which is observed in Fig. \ref{fig:WordsLhat} agrees precisely with the type of discrepancy which is observed for a heavy-tailed perturbation of the \gls{BMC}.
The fact that the entries of the matrices $\hat{N}$ and $\hat{L}$ are heavy-tailed may also be verified by direct inspection.

\begin{figure}[t]
	\centering
	\includegraphics[ width= 0.99\linewidth]{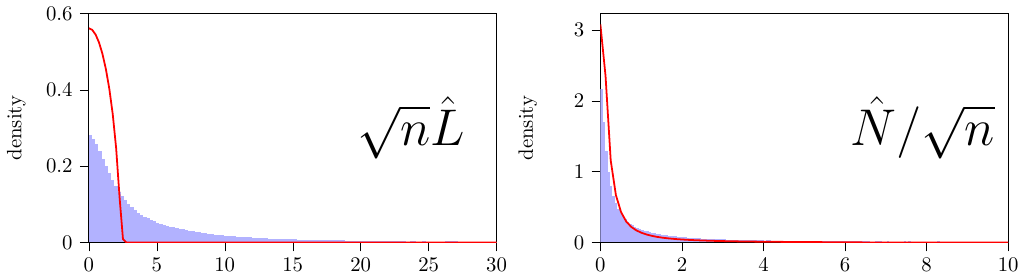}
	\caption{Density-based histogram of singular values for $\sqrt{n}\hat{L}$ for the words sequential data in blue bars and the theoretical predictions associated with the improvement clustering with $\numClusters = 200$ as the red line. Not visible in this figure is that both empirical distributions have long tails. Still $9\%$ of the singular values of $\hat{N}/\sqrt{n}$ exceed 10 and $1\%$ of the singular values of $\sqrt{n}\hat{L}$ exceed $30$.}
	\label{fig:WordsLhat}
\end{figure}

\paragraph{Conclusion}
The clustering algorithm found clusters that we judge to be meaningful. 
The performance on a downstream document classification task further indicated that the improvement algorithm based on the \gls{BMC}-assumption improved the quality of the clusters.
The spectral noise indicated that there is some heavy-tailed component in which can not be accounted for by \glspl{BMC}.
It is hence conceivable that a different model could incorporate the heavy-tailedness and extract even better clusters.

\subsubsection{Companies with the highest daily returns}
\label{sec:Companies_with_the_highest_daily_returns}

We finally turn to the sequence of companies with the highest daily returns.
This analysis was particularly delicate to conduct and we ultimately arrive at the conclusion that a $0$th-order \gls{BMC} could already be sufficient to explain the found clusters.

This conclusion may appear disappointing: it means that the clusters may not encode order-dependent dynamics.
It is however important for a practitioner to be able to arrive at this conclusion when appropriate.
The fact that the evaluation methods from Section \ref{section:summary_methods_for_eval} are able to suggest a $0$th-order \gls{BMC} is correspondingly a good feature: the method would not be informative in the alternative scenario where one always concludes in favor of the $1$st-order \gls{BMC}.
The main goal of this section is hence to demonstrate how the methods can be used in a difficult, sparse, regime.

There are two main reasons why this dataset is difficult to analyze. 
First, the data is sparse; recall from Table \ref{tab: Data} that $\ell / n^2 \approx 0.03$ and $\ell/n \approx 8$.
This sparsity makes recovery of the clusters a hard problem, even if the data-generating-process is truly a \gls{BMC}, and moreover makes evaluation of the found clusters more difficult since the associated confidence bounds are large.
Second, it turns out that the data contains a strong $0$th-order component which could potentially serve as a nuisance factor, concealing a $1$st-order \gls{BMC} component even if it exists.

\paragraph{Subjective evaluation of the clusters}

After some \emph{ad hoc} experimentation, we fix $\numClusters = 3$.
The \gls{SP500}'s factsheet labels every constituent with a sector; see Appendix \ref{sec:Appendix__raw_data__Companies_with_the_highest_daily_returns}.
We can use this labeling to obtain ``fingerprints'' of clusters.

The black bars in Fig. \ref{fig:Stock_market__Aligned_timespans__Sectors_after_the_cluster_improvement_algorithm_of_the_three_largest_clusters}  show the relative percentages of constituents in each sector for the clusters found after the improvement algorithm.
Observe the absence of most utilities constituents within the $2$nd and $3$rd cluster; more than twice as many are assigned to the $1$st cluster than may be expected in a random assignment.
Industrial and health care constituents are also mostly absent within the $3$rd cluster.
Similarly, note the negligible number of consumer discretionary constituents within the $1$st cluster; most are assigned to the $2$nd and $3$rd cluster.
Finally, consider that the $3$rd cluster consists for $29\%$ out of information technology constituents.
These contents suggest that the clusters are not entirely random.
The subsequent experimentation aims to determine what type of information has been encoded in the clusters.

\begin{figure*}
	\centering
	\includegraphics[width=0.99\linewidth]{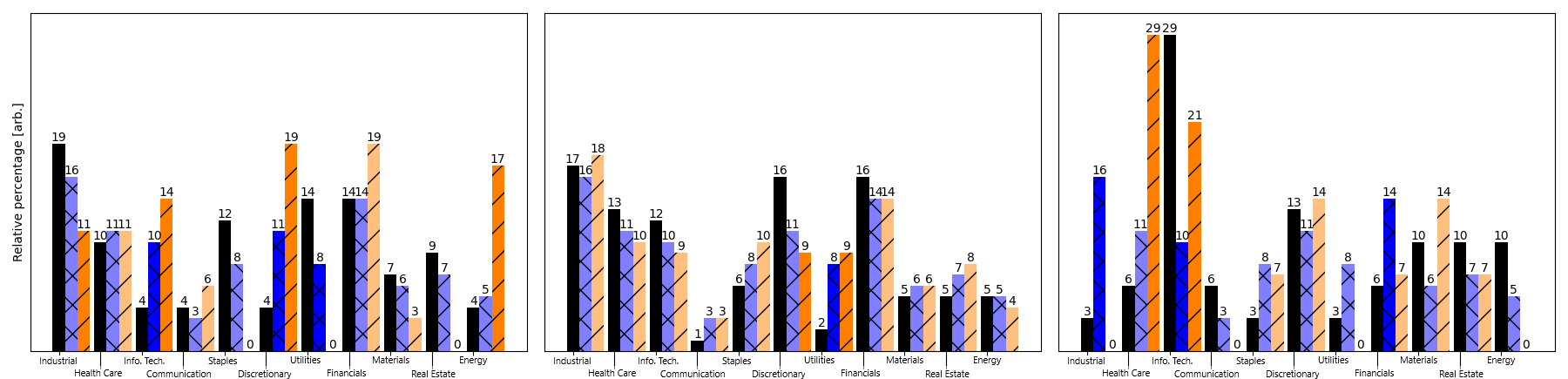}
	\caption{
		The fraction of constituents in each sector for models $\hat{\mathbb{P}}, \hat{\mathbb{Q}}_1$ and $\hat{\mathbb{Q}}_2$ as the black, blue and orange bars respectively.
		The left, middle, and right plots correspond to the $1$st largest, $2$nd largest, and $3$rd largest detected cluster, respectively.
		A bar's color is saturated when the difference in relative percentage exceeds $5\%$ when compared to the black bars.
	}
	\label{fig:Stock_market__Aligned_timespans__Sectors_after_the_cluster_improvement_algorithm_of_the_three_largest_clusters}
\end{figure*}

As a subjective way to evaluate the meaning of the clusters, let us inspect the relative cluster sizes $\hat{\alpha}_k :=
	\# \hat{\mathcal{V}}_k/n$, cluster equilibrium distribution $\hat{\pi}$, and cluster transition matrix $\hat{p}$ of the associated \gls{BMC}:
\begin{equation}
	\hat{\alpha}^{\mathrm{T}}
	\approx
	\begin{pmatrix}
		0.45 \\
		0.45 \\
		0.10 \\
	\end{pmatrix}, \;
	\hat{\pi}^{\mathrm{T}}
	\approx
	\begin{pmatrix}
		0.49 \\
		0.10 \\
		0.41 \\
	\end{pmatrix}, \;
	\hat{p}
	\approx
	\begin{pmatrix}
		0.50 & 0.10 & 0.40 \\
		0.54 & 0.11 & 0.35 \\
		0.46 & 0.10 & 0.44 \\
	\end{pmatrix}\nonumber
	.
	\label{eqn:Estimated_sizes_equilibrium_transition_matrix_for_stock_market}
\end{equation}
Note that the rows of $\hat{p}$ are close to but not quite equal; it namely holds that $\hat{p}_{kl} \approx \hat{\pi}_l$ for every $k, l$.
This observation may suggest a strong $0$th-order \gls{BMC} component.
One can however not immediately conclude that all the deviations from constant columns are due to noise: the data is sparse relative to $n^2$ but not when compared to $\numClusters^2 = 9$.

\paragraph{Comparing against alternative models}

\begin{figure*}
	\centering
	\subfloat[]{\includegraphics[height=0.25\linewidth]{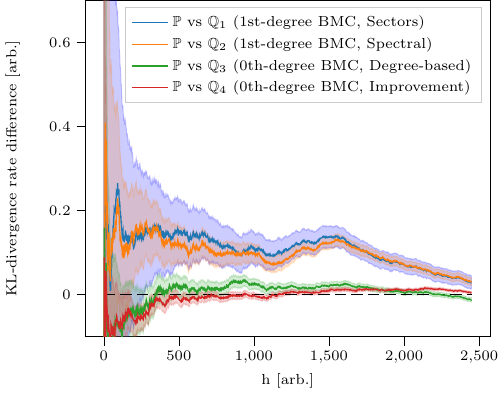}}
	\subfloat[]{\includegraphics[height=0.25\linewidth]{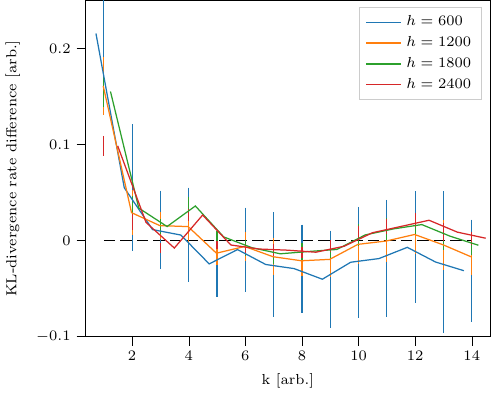}}
	\subfloat[]{\includegraphics[height=0.25\linewidth]{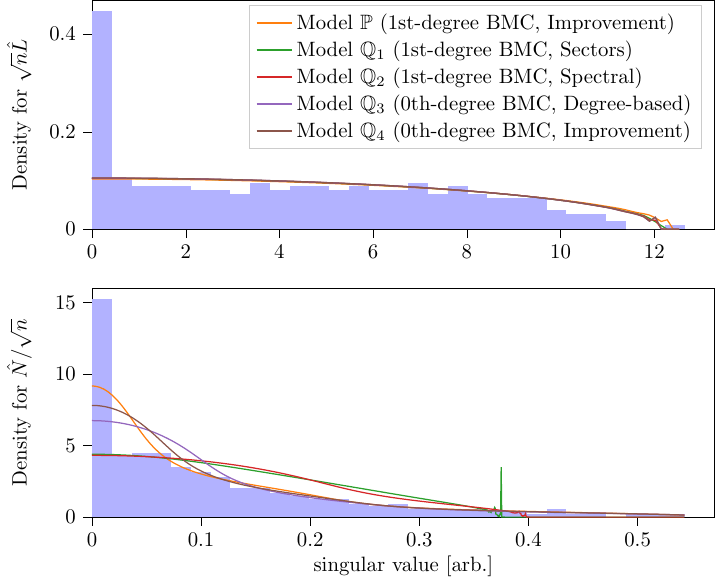}}
	\caption{
	(a)
	The \gls{KL} divergence rate difference estimator
	$
		D(X_{ \lfloor \ell / 2 \rfloor + 1 : \lfloor \ell / 2 \rfloor + h }; \hat{\mathbb{P}}^{X_{1:\lfloor \ell / 2 \rfloor}}, \hat{\mathbb{Q}}_i^{X_{1:\lfloor \ell / 2 \rfloor}})
	$
	on the validation data with 95\% confidence bounds estimated using \eqref{eq: ConfidenceInterval} from Appendix \ref{sec:Confidence_bounds_when_estimating_DRPQ} with mixing time (arbitrarily) guessed to be $20$ days.
	(b) The \gls{KL} divergence rate difference estimator
	$\hat{D}(X_{\lfloor \ell / 2 \rfloor + 1:\lfloor \ell / 2 \rfloor + h}, \hat{\mathbb{P}}^{X_{1:\lfloor \ell / 2 \rfloor}}, \hat{\mathbb{Q}}_{3,k}^{X_{1:\lfloor \ell / 2 \rfloor}})$
	for different sample path lengths $h \in \naturalNumbersPlus$,
	and as a function of $k$ with 95\% confidence bounds using \eqref{eq: ConfidenceInterval}.
	(c)
	The top and bottom figures display the singular values of $\sqrt{n}\hat{L}$ and $\hat{N}/\sqrt{n}$ respectively.
	Both figures exclude the $\numClusters=3$ leading singular values.
	}
	\label{fig:Stock_market}
\end{figure*}

Recall that, using validation data, we can compare the performance of different models by the \gls{KL} divergence rate difference estimator \eqref{eqn:Less_biased_estimator_of_the_KL_divergence_rate_difference1}.
Consider the following models:
\begin{itemize}[noitemsep]
	\item[$\hat{\mathbb{P}}$:] A $1$st-order \gls{BMC} with $K = 3$ clusters found by the spectral algorithm followed by the improvement algorithm.
	\item[$\hat{\mathbb{Q}}_1$:] A $1$st-order \gls{BMC} with $K = 11$ clusters given by the sector labels.
	\item[$\hat{\mathbb{Q}}_2$:] A $1$st-order \gls{BMC} with $K = 3$ clusters, found by the spectral algorithm.
	\item[$\hat{\mathbb{Q}}_3$:] A $0$th-order \gls{BMC} with $K = 3$ clusters, found by sorting along the state's sample equilibrium distribution and determining clusters of equal probability mass.
	\item[$\hat{\mathbb{Q}}_4$:] A $0$th-order \gls{BMC} with $K = 3$ clusters, found by the spectral algorithm followed by an improvement algorithm (modified for a $0$th-order \gls{BMC}).
\end{itemize}

One may also wonder about the effect of the number of parameters.
By keeping the number of clusters fixed, it namely follows that the $0$th-degree models $\hat{\mathbb{Q}}_2,\hat{\mathbb{Q}}_3$ have fewer parameters than the $1$st-degree models $\hat{\mathbb{P}},\hat{\mathbb{Q}}_4$.
Hence, consider the following model for any $k\geq 1$:
\begin{itemize}[noitemsep]
	\item[$\hat{\mathbb{Q}}_{3,k}$:] A $0$th-order \gls{BMC} with $k$ clusters, found by sorting according the state's sample equilibrium distribution and determining $k$ clusters of equal probability mass.
\end{itemize}
Note that the degrees of freedom $\mathrm{DF}_1(n, \numClusters)$ within a $1$st-order \gls{BMC} with fixed parameters $(n, \numClusters)$ equals $\mathrm{DF}_1(n,\numClusters) = n + \numClusters(\numClusters-1)$, whereas the degrees of freedom $\mathrm{DF}_0(n, \numClusters)$ within a $0$th-order \gls{BMC} constrained with fixed parameters $(n,\numClusters)$ equals $\mathrm{DF}_0(n,\numClusters) = n + \numClusters - 1$.
The model $\hat{\mathbb{P}}$ therefore has $n+6$ degrees of freedom whereas $\hat{\mathbb{Q}}_{3,k}$ has $n + k-1$ degrees of freedom.
In particular, the degrees of freedom for $\hat{\mathbb{P}}$ and $\hat{\mathbb{Q}}_{3,7}$ are comparable.
The remaining difference is that $\hat{\mathbb{P}}$ allows for more inhomogeneity within the columns of the transition matrix and less in the rows, whereas $\hat{\mathbb{Q}}_{3,7}$ allows no inhomogeneity within the columns but more in the rows.

Observe in Fig. \ref{fig:Stock_market}(a) that the difference in \gls{KL} divergence rate on the validation data is positive when comparing $\hat{\mathbb{P}}$ against $\hat{\mathbb{Q}}_1$, $\hat{\mathbb{Q}}_2$, barely positive when comparing against $\hat{\mathbb{Q}}_3$, and near-zero when comparing against $\hat{\mathbb{Q}}_4$.
The $0$th-degree models $\hat{\mathbb{Q}}_3$, $\hat{\mathbb{Q}}_4$ perform comparable to the $1$st-degree model $\mathbb{P}$.

Regarding the comparison with $\hat{\mathbb{Q}}_{3,k}$ we may observe in Fig. \ref{fig:Stock_market}(b) that the sign of the \gls{KL} divergence rate difference is probably positive for $k = 1, 2, 4$, possibly positive for $k = 3, 11, 12$ but not much, possibly negative for $k = 6, 7, 8$ but not much, and inconclusive for $k = 5, 9, 10$.
The downward trend for small $k$ suggests that a strictly positive number of free parameters are necessary to accurately represent the data.
Judging from the case $k \approx 7$, in which case the number of degrees of freedom in both models are equal, it appears that the specific freedoms allowed in $\hat{\mathbb{P}}$ give a performance comparable to that attained by the freedoms allowed in $\hat{\mathbb{Q}}_{3,7}$.

\paragraph{Comparing the histogram of singular values to the limiting distribution of singular values of the inferred \texorpdfstring{\gls{BMC}}{BMC}}

Fig. \ref{fig:Stock_market}(c) depicts histograms of singular values and theoretical predictions for the models $\hat{\mathbb{P}}$, $\hat{\mathbb{Q}}_1$, $\hat{\mathbb{Q}}_2$, $\hat{\mathbb{Q}}_3$, $\hat{\mathbb{Q}}_4$'s.
All theoretical predictions were calculated from training data, while the histograms were calculated from validation data.

All theoretical predictions give a fair description of the laws.
Models $\hat{\mathbb{P}}$, $\hat{\mathbb{Q}}_4$ outperform models $\hat{\mathbb{Q}}_1$, $\hat{\mathbb{Q}}_2$, $\hat{\mathbb{Q}}_3$ when it comes to describing the distribution of singular values of $\hat{N}_{\textnormal{validation}} / \sqrt{n}$.
Observe that the empirical observations for $\sqrt{n}\hat{L}_{\textnormal{validation}}$ as well as the predictions associated to $\hat{\mathbb{P}}$, $\hat{\mathbb{Q}}_1$, $\hat{\mathbb{Q}}_2$, $\hat{\mathbb{Q}}_3$, $\hat{\mathbb{Q}}_4$ all appear to be quarter-circular.
This quarter-circular law is consistent with our suspicion of a strong $0$th-degree model component: in a $0th$-degree \gls{BMC}, the limiting law of $\sqrt{n}\hat{L}$ is known to be quarter-circular.
The peak at zero in the empirical observations is likely due to the sparsity.

\paragraph{Conclusion}

In all considered performance measures we saw that the $1$st-degree \gls{BMC} model $\mathbb{P}$ performed approximately equally well as the $0$th-degree models $\hat{\mathbb{Q}}_3$, $\hat{\mathbb{Q}}_4$.
The consideration of the models $\hat{\mathbb{Q}}_{3,k}$ suggested that one further requires a certain number of parameters to achieve sufficient model expressivity.

The sparsity of the data makes it difficult to come to a definitive conclusion.
Still, one generally prefers models with fewer parameters.
Hence, in our opinion, a 0th-order \gls{BMC} would be a suitable model for this dataset.

\subsection{Detected orders within the data}
\label{sec:Detected_orders_within_the_data}
We investigate what order of \gls{BMC} best fits the clustered data $Y_t = \sigma_n(X_t)$ using the information criteria described briefly in Section \ref{section:summary_methods_for_eval} (and in detail in Appendix \ref{sec:Evaluating_clusters_and_models}).
We focus on the \gls{DNA}, \gls{GPS}, and the \gls{SP500} dataset.
The Wikipedia data is omitted due to its impractical size, and because it does not consist of a single sample path but rather a number of small sample paths.

\subsubsection{Results}
We compute  \eqref{eqn:CAIC} for $r = 0,1,2,3,4$ of the following models:

\begin{itemize}[noitemsep]
	\item[$\hat{\mathbb{Q}}^{r, \mathrm{MLE}}$]:
	      The \gls{MLE} of an $r$th-order \gls{MC} estimated from the observation sequence $Y_{1:\ell}$.
\end{itemize}

The result are in Table \ref{tab:CAIC}. We see  that the magnitude of the \gls{CAIC} in Table \ref{tab:CAIC} depends strongly on the observation sequence and the number of clusters.
For the \gls{GPS} coordinates, the differences are notable for most orders due to the large number of clusters $\numClusters=15$, where higher orders become highly penalized.
For \gls{DNA}, the criterion suggests that orders $r \in \{ 1, 2\}$ are optimal.
For the \gls{SP500}, on the other hand, orders $r \in \{0,1\}$ appear to be the best.
We expect a large variance in Table \ref{tab:CAIC} and some over or underfitting the order is possible.
The criterion indicates nonetheless that the transitions of the found clusters, except maybe for the \gls{SP500} dataset, can be better approximated by a nonzero order Markovian process. We will now support this conclusion empirically with the error models for the \gls{DNA} and \gls{SP500} datasets.

\begin{table}[hbtp]
	\caption{
		The \gls{CAIC} in \eqref{eqn:CAIC} for the different datasets.
		Note that the relative difference between the values pertaining to different orders is often small.
		For example, the differences are less than $0.1\%$ between orders $1$, $2$ for the \gls{DNA} data, and between orders $0$, $1$ for the stock market data. This is not the case, however, with the animal data.
	}
	\label{tab:CAIC}
	\centering
	{
		\begingroup
		\setlength{\tabcolsep}{10pt}
		\footnotesize
		\begin{tabular}{ccccccc}
			\toprule
			$r$ & \gls{DNA}       & incr.\ ($\%$)  & \gls{GPS}($\times 10^3$) & incr.\ ($\%$)  & \gls{SP500}   & incr.\ ($\%$) \\
			\midrule
			0   & 432650          & n.a.           & 960.63                   & n.a.           & \textbf{9853} & \textbf{n.a.} \\
			1   & 431502          & -0.27          & 626.54                   & -34.8          & 9860          & +0.07         \\
			2   & \textbf{431263} & \textbf{-0.06} & \textbf{571.49}          & \textbf{-40.5} & 9940          & +0.81         \\
			3   & 435228          & +0.69          & 1121.90                  & +16.8          & 10253         & +3.1          \\
			4   & 458512          & +5.3           & 9789.27                  & +1019          & 11162         & +8.9          \\
			\bottomrule
			\vspace{0pt}
		\end{tabular}
		\endgroup
	}
\end{table}

To further support the accuracy of the \gls{CAIC}, an accuracy study of the criterion is conducted in Appendix \ref{sec:CaicEval} under a generative model based on the datasets. This study suggests that the criterion is less prone to overfit or choose a model with more parameters. Hence, the nonzero orders estimated in Table~\ref{tab:CAIC} hint that a high-order Markov structure in the data exists that a model such as the \gls{BMC} can approximate.

We finally remark that using information criteria for the unclustered observation sequences $X_{1:\ell}$ provides no useful insights due to the large dimensionality of the models.
In particular, the \gls{CAIC} criteria for the unclustered observation sequences for order $r \in \{0, 1\}$ can be seen in Table \ref{tab:CAIC_all}.
As the data shows, the \gls{CAIC} criteria just picks the model with smallest number of parameters.
This is even more extreme in the \gls{GPS} and \gls{SP500} datasets, where on top of large model dimension we have sparse data.

\begin{table}[hbtp]
	\caption{The \gls{CAIC} in \eqref{eqn:CAIC} for the sequence $X_1$, $\ldots$, $X_{\ell}$ for different datasets.}
	\label{tab:CAIC_all}
	\centering
	{
		\footnotesize
        \setlength{\tabcolsep}{20pt}
		\begin{tabular}{cccc}
			\toprule
			$r$ & \gls{DNA}                     & \gls{GPS}                   & \gls{SP500}                  \\
			\midrule
			0   & \textbf{1339.5} $\times 10^3$ & \textbf{2943} $\times 10^3$ & \textbf{54.27} $\times 10^3$ \\
			1   & 1361.9 $\times 10^3$          & $\approx$ 1 $\times 10^8$   & 882 $\times 10^3$            \\
			\bottomrule
			\vspace{0pt}
		\end{tabular}
	}
\end{table}

\subsubsection{Conclusion}
We found that model selection is feasible if we use the clustered sequence $Y_{1:\ell} = \sigma_n(X_{1:\ell})$ obtained after the clustering algorithm, because this reduces the amount of free parameters of the models considerably.

For the \gls{DNA} and \gls{GPS} datasets, the \gls{CAIC} selects a nonzero order \glspl{MC}.
For the \gls{SP500}, the data was too sparse for selecting a specific order with certainty.
However, there are indications that the values obtained in the \gls{CAIC} for the \gls{SP500} dataset are consistent with a $1$st-order \gls{BMC} model with a strong $0$th-order \gls{MC} baseline.

\section{Conclusions}
\label{sec:Conclusions}

We have found that using a \gls{BMC} model for exploratory data analysis in unlabeled observation sequences does in fact produce useful insights.
Although there is no guarantee that there are clusters or that a cluster structure is actually revealing of a ground truth model we can still evaluate the clusters and associated models.
The animal movement example uncovered features which could not have been extracted from only the \gls{GPS} coordinates.
The \gls{DNA} example uncovered known, nontrivial and biologically relevant structure.
In the text-based example, the improvement algorithm enhanced performance on down-stream tasks and the spectral noise identified the heavy-tailed nature of some model violations.
For the daily best performing stocks in the \gls{SP500}, we saw that a $0$th-order \gls{BMC} can describe its statistical aspects, but there are indications that a $1$st-order \gls{BMC} is also a suitable model.

\section*{Acknowledgments}
This publication is part of the project \emph{Clustering and Spectral Concentration in Markov Chains} (with project number OCENW.KLEIN.324) of the research programme \emph{Open Competition Domain Science -- M} which is (partly) financed by the Dutch Research Council (NWO).

The authors also acknowledge support by the \emph{European Union’s Horizon 2020 research and innovation programme} under the Marie Sk\l{}odowska--Curie grant agreement no.\ 945045, and by the \emph{NWO Gravitation project} NETWORKS under grant no.\ 024.002.003.

We thank the Nature Conservancy and Dr. Stephen Blake of Saint Louis University for permission to use the bison movement data. The Max Planck Institute for Animal Behavior and the National Geographic Society Committee for Research and Exploration (Grant:
9385-13) funded the bison GPS collars.

We finally thank Mike van Santvoort for useful discussions while writing this paper.

\bibliography{biblio}

\newpage 
\appendix
\section*{Overview of the appendices}
The appendices below provides supporting information, such as algorithmic descriptions, experimental details, and certain proofs.

Appendix \ref{sec:Pseudo_code_describing_the_clustering_procedure} provides pseudocode for the clustering algorithm of \cite{sanders2020clustering} that we implemented.
Appendix \ref{sec:Robustness_of_the_clustering_procedure} describes perturbed \gls{BMC} models and provides details regarding the simulation experiment which was described in Section \ref{sec:Other_models_for_experimentation}.
The concentration inequality which was used to construct the confidence intervals in Fig. \ref{fig:Stock_market}, is described in Appendix \ref{sec:Confidence_bounds_when_estimating_DRPQ}.
Appendix \ref{sec:Shape_of_the_spectral_noise} a limiting law for the singular value distribution of the Laplacian.
Tools which were used in the experiments and preprocessing are described in Appendix \ref{sec:Tools}.
Finally, some raw data and extra material describing our findings is provided in Appendix \ref{sec:Appendix__Raw_data}.

\section{Pseudo-code describing the clustering procedure}
\label{sec:Pseudo_code_describing_the_clustering_procedure}

The following pseudocode summarizes the clustering procedure that we have implemented.
This pseudocode appeared first in \cite{sanders2020clustering}, and we repeat it here for clarity and your convenience.

\begin{algorithm}
	\caption{Spectral clustering algorithm, courtesy of \cite{sanders2020clustering}.}
	\label{alg:spectral_clustering}
	\hspace*{\algorithmicindent} \textbf{Input:} $n,  \numClusters$ and $\hat{N}$\\
	\hspace*{\algorithmicindent} \textbf{Output:} New cluster assignment $\hat{\mathcal{V}}^{\prime}_1, \ldots,\hat{\mathcal{V}}^{\prime}_{\numClusters}$
	\begin{algorithmic}[1]
	\State $\hat{N}_{\Gamma} \gets \mathrm{Trim}(\hat{N})$\;
	\State $\hat{R} \gets$ $\numClusters$-rank approximation of $\hat{N}_{\Gamma}$\;
	\State $\hat{\mathcal{V}}_1, \ldots,\hat{\mathcal{V}}_{\numClusters} \gets \numClusters\text{-means}([\hat{R}, \hat{R}^{T}])$\;
\end{algorithmic}
\end{algorithm}

The spectral clustering in Algorithm~\ref{alg:spectral_clustering} is used to obtain a good initial estimate for the clusters.
A k-means algorithm is used along a $\numClusters$-rank approximation of $\hat{N}$ (or $\hat{N}_{\Gamma}$ for the trimmed version of $\hat{N}$) to yield an initial guess for the clusters.
It can be proved, however, that this step yields a number of misclassified states that is sublinear in $n$ but not of constant order \cite{sanders2020clustering_sm}.
A second step is then required to attain exact recovery.
In Algorithm~\ref{alg:Cluster_improvement_algorithm}, we see that a procedure similar to a likelihood ratio maximization is used to improve the cluster assignment.
With this extra step it can be proven that the misclassified states will be order constant in expectation.

\begin{algorithm}
	\caption{Cluster improvement algorithm (for $1$st-order \glspl{BMC}), courtesy of \cite{sanders2020clustering}}
	\label{alg:Cluster_improvement_algorithm}
	\hspace*{\algorithmicindent} \textbf{Input:} $n$, $\numClusters$, $\ell$, $\hat{N}$ and initial cluster assignment guess $\hat{\mathcal{V}}_1, \ldots,\hat{\mathcal{V}}_{\numClusters}$.\\
	\hspace*{\algorithmicindent} \textbf{Output:} New cluster assignment $\hat{\mathcal{V}}^{\prime}_1, \ldots,\hat{\mathcal{V}}^{\prime}_{\numClusters}$
	\begin{algorithmic}[1]
	\For{$a \gets 1$ to $\numClusters$}
		\State $\hat{\pi}_{a} \gets \hat{N}_{\hat{\mathcal{V}}_a, [n]}/\ell$, $\hat{\alpha}_{a} \gets \# \hat{\mathcal{V}}_a/n$\;
		\State $\hat{\mathcal{V}}^{\prime}_a \gets \emptyset$\;
		\For{$b \gets 1$ to $\numClusters$}
 			\State $\hat{p}_{a, b} \gets \hat{N}_{\hat{\mathcal{V}}_a, \hat{\mathcal{V}}_b}/ \hat{N}_{\hat{\mathcal{V}}_a, [n]}$\;
		\EndFor
	\EndFor
	\For{$x \gets 1$ to $n$}
		\State	$c \gets \mathrm{argmax}_{l \in [\numClusters]} \sum_{k=1}^\numClusters \bigl( \hat{N}_{x, \hat{\mathcal{V}}_k}\ln(\hat{p}_{l, k}) + \hat{N}_{\hat{\mathcal{V}}_k, x}\ln(\hat{p}_{k, l}/\hat{\alpha}_{l}) \bigr) - \frac{\ell}{n}\frac{\hat{\pi}_l}{\hat{\alpha}_{l}}$\;
		\State $\hat{\mathcal{V}}^{\prime}_c \gets \hat{\mathcal{V}}^{\prime}_c \cup \{x\}$\;
	\EndFor
\end{algorithmic}
\end{algorithm}

\section{Robustness of the clustering procedure to model violations}
\label{sec:Robustness_of_the_clustering_procedure}

Recall that the asymptotic consistency of the clustering procedure has been theoretically studied in \cite{sanders2020clustering_sm} under the assumption that the data-generating process is a \gls{BMC}.
In this section we aim to study the robustness of the clustering procedure to violations of this model assumption.
That is, we investigate the performance of the clustering procedure when the data-generating process is not actually a \gls{BMC}.
We study two main measures of performance.
First, in Appendix \ref{sec: MisclassificationRatio}, we consider the number of misclassified states.
Second, in Appendix \ref{sec:The_time_dependent_misspecification_error_when_approximating_a_perturbed_BMC_by_a_BMC}, we consider the approximation error in a parameter estimation problem where the objective is to estimate the true transition matrix $P$ of a Markovian data-generating process which need-not be a \gls{BMC}.

The first measure of performance requires that the notion of misclassification is sensible even though the data-generating process is not a \gls{BMC}.
To this end we restrict ourselves to models where communities are still well-defined.
More precisely, we consider the perturbed \gls{BMC} model which was defined in Section \ref{sec:Other_models_for_experimentation} and assign as ground-truth communities those of the \gls{BMC}-kernel which was used to construct the perturbed model.
Recall that the definition of a perturbed \gls{BMC} requires to specify the nature of the perturbation kernel $\Delta$.
The following kernels are used for this purpose to model different types of model violations:
\begin{itemize}

	\item[(i)]
	\emph{Uniform Stochastic}: The matrix $\Delta$ is sampled uniformly at random in the set of stochastic matrices. This is accomplished by sampling each row independently from a\\
	$\mathrm{Dirichlet}(1/n, \ldots, 1/n)$ distribution.

	\item[(ii)]
	\emph{Degree $0$}: Fix some $\pi_1,\ldots,\pi_n>0$ with $\sum_{i=1}^n \pi_i = 1$ and let $\Delta_{ij} = \pi_j$ for all $i,j \in [n]$. We construct the $\pi_i$ by sampling independent exponential random variables $e_1,\ldots,e_n\sim \operatorname{Exponential}(1)$ and normalizing $\pi_i = e_i/(\sum_{j=1}^n e_j)$.

	\item[(iii)]
	\emph{Heavy Tailed}: Let $X$ be a random matrix whose entries $X_{ij}$ are i.i.d.\ positive random variables with a heavy-tailed distribution.
	The kernel $\Delta$ is then found by normalizing the rows in order to achieve a stochastic matrix $\Delta := \operatorname{diag}\bigl((\sum_{j}X_{ij})^{-1}\bigr)_{i=1}^n X$.
	We sample the heavy-tailed entries $X_{ij}$ from a Zipf distribution with exponent $s=3/2$.

	\item[(iv)]
	\emph{Sparse}: Consider constants $d > 0$ and $c > 0$ and construct a random matrix $X = A + cJ$ where $A$ is the adjacency matrix from a directed Erd\"os--R\'{e}nyi random graph with average outgoing degree $d$ and $J$ is a constant matrix $J_{ij} = 1/n$.
	The kernel $\Delta$ is then found by rescaling the rows in order to achieve a stochastic matrix $\Delta = \operatorname{diag}\bigl((\sum_{j}X_{ij})^{-1}\bigr)_{i=1}^n X$.
	We take $d = 5$ and $c = 0.1$.
\end{itemize}
In our subsequent experimentation we take $n=2m$ to be an even integer. The \gls{BMC} which is perturbed is chosen to have two equally-sized clusters ($\numClusters = 2$) and cluster transition matrix given by
\begin{align*}
	p = \begin{pmatrix}
		0.6 & 0.4 \\
		0.4 & 0.6
	\end{pmatrix}.
\end{align*}
\subsection{Misclassification ratio for perturbed \texorpdfstring{\glspl{BMC}}{BMCs}}\label{sec: MisclassificationRatio}
This section concerns the number of misclassified states when clustering on a perturbed \gls{BMC} model.
Recall that we chose the \gls{BMC} model to have two equally-sized clusters which means that we may pick the cluster assignment map to be given by $\sigma_n(i) =1 + \indicator{i > n/2}$.
Let $\hat{\sigma_n}:[n]\to \{1,2 \}$ be an estimated cluster assignment which is output by the clustering procedure.
Then, the misclassification ratio $\mathcal{E}$ is defined as
\begin{equation}
	\mathcal{E}
	:=
	\frac{1}{n}\min_{\rho \in S_2} \# \{v\in [n]: \sigma_n(v) \neq (\rho \circ \hat{\sigma_n})(v) \} .
	\label{eqn:def_ratio_misclassified_states}
\end{equation}
Here $S_2$ denotes the set of permutations of $\{1,2 \}$.

Recall from Section \ref{sec:Other_models_for_experimentation} that the parameter $\varepsilon$ of the perturbed \gls{BMC} measures the fraction of transitions which are affected by the perturbation.
In other words, $\varepsilon$ measures the strength of the perturbation.
The estimated expected misclassification ratio $\mathbb{E}[\mathcal{E}]$ is displayed as a function of the perturbation level $\varepsilon$ for a numerical experiment in Fig. \ref{fig:misclassified}.
Up to $\varepsilon \approx 0.1$ the algorithm succeeds in recovering the exact cluster assignment for all four models.
The exact number will naturally depend on the parameters of the \gls{BMC} which was perturbed and will consequently be different in different contexts.
At any rate, we conclude from this experiment that the algorithm appears to be robust with regards to small to medium-sized model violations.

The observation that some model violations can be tolerated may be understood theoretically in terms of the construction of the algorithm.
This robustness is namely natural at the level of the spectral step of the algorithm.
Consider that in a perturbed \gls{BMC} one has the following decomposition:
\begin{align*}
	\hat{N}_{\text{Perturbed}} & = \mathbb{E}[\hat{N}_{\text{BMC}}] + (\mathbb{E}[\hat{N}_{\text{Perturbed}}] - \mathbb{E}[\hat{N}_{\text{BMC}}]) + (\hat{N}_{\text{Perturbed}} - \mathbb{E}[\hat{N}_{\text{Perturbed}}]) \\
	& =: \mathbb{E}[\hat{N}_{\text{BMC}}] + E_{\text{Perturbation}} + E_{\text{Noise}}.
\end{align*}
The sampling noise $E_{\text{Noise}}$ is small in operator norm relative to $\mathbb{E}[\hat{N}_{\text{BMC}}]$ when the sample path is sufficiently long.
It may further be expected that $E_{\text{Perturbation}}$ is small in operator norm whenever the perturbation level $\varepsilon$ is small.
Now recall that spectral step in the algorithm relies on singular value decomposition to compute a rank-$\numClusters$ approximation.
The purpose of this rank-$\numClusters$ approximation, when the process is truly a \gls{BMC}, is to separate the sampling noise $E_{\text{Noise}}$ from the low-rank signal $\mathbb{E}[\hat{N}_{\text{BMC}}]$.
In a small perturbation of a \gls{BMC} the singular value decomposition will however also regard $E_{\text{Perturbation}}$ as an error term.
Consequently, for small perturbations, the spectral step has the beneficial effect that it separates the perturbative error $E_{\text{Perturbation}}$ from the low-rank signal $\mathbb{E}[\hat{N}_{\text{BMC}}]$.

\subsection{Bias--variance tradeoff for parameter estimation in a perturbed \texorpdfstring{\gls{BMC}}{BMC}}
\label{sec:The_time_dependent_misspecification_error_when_approximating_a_perturbed_BMC_by_a_BMC}
It may occur in some cases that one is not interested in the clusterings themselves but rather views them as a means to an end.
Consider the scenario where one desires to estimate the transition kernel of a Markovian process which need not be a \gls{BMC}.
Assume that one has prior reason to suspect that there could be some underlying clusters in the data but also that there could be parts of the dynamics which do not respect the clusters.
In such a case a perturbed \gls{BMC} would be a suitable model for the data.
Let us emphasize that one is here not intrinsically interested in the \gls{BMC}-component $P_{\text{BMC}}$ but rather desires to estimate the ground-truth $P_{\text{True}} := (1-\varepsilon)P_{\text{BMC}} + \varepsilon \Delta$.
It could however be the case that one can exploit the underlying clusters to improve the performance of estimation.

Assume that one knows the number of underlying clusters $\numClusters$ and has access to a sample path $X_0^\varepsilon,\ldots,X_\ell^\varepsilon$ of length $\ell$ of a perturbed \gls{BMC}. Let $\hat{N}$ also denote the associated empirical frequency matrix.
A natural general-purpose estimator for the transition matrix, which does not rely on the existence of clusters, is given by the empirical transition matrix $\hat{P}(\ell)$.
The entries of the empirical transition matrix are given by
\begin{equation}
	\hat{P}_{\text{Empirical}}(\ell)_{ij} := \begin{cases}
		\frac{\hat{N}_{ij}}{\sum_{k=1}^n \hat{N}_{ik}}, \quad & \text{ if }\hat{N}_{ij}\neq 0 \\
		0, \quad                                                      & \text{ if }\hat{N}_{ij}=0.
	\end{cases}
\end{equation}
Another estimator may be found by first computing a clustering $\hat{\mathcal{V}}_1,\ldots,\hat{\mathcal{V}}_\numClusters$.
One can then hope that, since $P_{\mathrm{True}} \approx P_{\text{BMC}}$ for $\varepsilon \approx 0$, it would be sufficient to consider an estimator $\hat{P}_{\text{BMC}}$ for $P_{\text{BMC}}$ whose entries are given by
\begin{equation}
	\hat{P}_{\text{BMC}}(\ell)_{ij} := \begin{cases}
		\frac{1}{\#\hat{\mathcal{V}}_{\hat{\sigma_n}(j)}}\frac{\sum_{x\in \hat{\mathcal{V}}_{\hat{\sigma_n}(i)} , y\in \hat{\mathcal{V}}_{\hat{\sigma_n}(j)}} \hat{N}_{x,y}}{\sum_{m =1}^\numClusters\sum_{x\in \hat{\mathcal{V}}_{\hat{\sigma_n}(i)} , y\in \hat{\mathcal{V}}_{m}} \hat{N}_{x,y} },\quad & \text{ if }\sum_{x\in \hat{\mathcal{V}}_{\hat{\sigma_n}(i)} , y\in \hat{\mathcal{V}}_{\hat{\sigma_n}(j)}} \hat{N}_{x,y}\neq 0 \\
		0,\quad                                                                                                                                                                                                                                                                                            & \text{ if } \sum_{x\in \hat{\mathcal{V}}_{\hat{\sigma_n}(i)} , y\in \hat{\mathcal{V}}_{\hat{\sigma_n}(j)}}\hat{N}_{x,y} = 0.
	\end{cases}
\end{equation}
Finally, for comparison we also consider the following trivial estimator which does not even use the data
$$\hat{P}_{\text{Uniform}}(\ell)_{ij} = \frac{1}{n}.$$
We measure the performance of these estimators as a function of the length of the sample path using the expected estimation error:
\begin{equation}
	R_{*}(\ell)
	% &
	:= \mathbb{E}[\pnorm{P_{\text{True}} - \hat{P}_{*}(\ell)}{}] \quad \text{where } * \in \{\text{Empirical},\text{BMC},\text{Uniform}\}.
	\label{eqn:def_spectral_norm}
\end{equation}
Here, $\pnorm{\cdot}{}$ denotes the operator norm $\Vert M \Vert = \sup_{\Vert v \Vert_2 = 1} \Vert M v \Vert_2$.

We conduct a numerical experiment with a state space of size $n =1000$ and a heavy-tailed perturbation model of perturbation strength $\varepsilon = 0.05$.
Estimated values of the expected estimation error $R_{*}(\cdot)$ as a function of the length $\ell$ of the sample path are displayed in Fig. \ref{fig:misclassified}.
A number of different regimes may be identified.
First, the regime where the sample path is very short meaning that $\ell \approx 10^4$.
Here the empirical estimator $\hat{P}_{\text{Empirical}}$ and the \gls{BMC} estimator $\hat{P}_{\text{BMC}}$ are both unable to outperform the trivial estimator $\hat{P}_{\text{Uniform}}$.
The empirical estimator even performs significantly worse than the trivial estimator in this regime.
Second, the regime where sample path is medium-sized  meaning that $\ell \approx 10^5$.
Here the clustering procedure succeeds and $\hat{P}_{\text{BMC}}$ becomes the best-performing estimator.
Finally, the regime where the sample path grows long meaning that $\ell > 10^6$.
Here the empirical estimator becomes the best-performing estimator.
These different regimes can be understood in terms of a bias--variance tradeoff.
Namely, consider that for short to medium-sized sample paths the \gls{BMC} estimator $\hat{P}_{\text{BMC}}$ has significantly less variance than the empirical estimator $\hat{P}_{\text{Empirical}}$ due to depending on fewer parameters.
This decreased variance is the dominant consideration for the approximation error in this regime.
On the other hand, for long sample paths both estimators $\hat{P}_{\text{BCM}}$ and $\hat{P}$ have low variance and the bias incurred by the approximation $P_{\text{True}} \approx P_{\text{BMC}}$ becomes dominant.

\section{Methods for evaluating clusters and models}
\label{sec:Evaluating_clusters_and_models}

We next discuss methods which can aid in evaluating clusters and models for sequential data obtained from real-world processes.
These methods have to account for the fact that, since we are dealing with real-world nonsynthetic data, we do not know the true process which generated the data.
In particular, we do not have access to a ground-truth clustering.

\subsection{Performance on a downstream task}
\label{sec:Performance_on_a_downstream_task}

One reason to cluster observations of sequential data, is that the clusters provide a tool for dimensionality reduction in subsequent statistical analyzes or optimization procedures.
For instance, the running time of a numerical method which aims to execute some computational task on a sequence of observations may grow considerably with the number of distinct observations $n$.
In such a case it is clear that one has to reduce $n$ or otherwise use a different algorithm.
Reducing $n$ can also help to reduce overfitting, and aid in interpretability.

On the other hand, clustering naturally removes some information from the dataset.
Thus, in a good clustering, the data should retain as much useful information as is possible.
The meaning of ``amount of useful information'' is here ambiguous and depends on the context.
There are cases, however, where the notion can be made concrete.
For instance, suppose that one has a measure of quality $Q_{\textnormal{pre-reduction}} := Q(T)$, evaluating performance of a downstream task $T := T(X_{1:\ell})$ applied to the sequence of observations.
For example, if the algorithm is estimating parameters of some parametric model, then $Q_{\textnormal{pre-reduction}}$ may be the accuracy of prediction on a validation dataset.
One can now use this measure of quality $Q_{\textnormal{pre-reduction}}$ as a proxy for the notion of useful information in a clustering.
Given a clustering $\sigma_n : [n] \to [\numClusters]$ that reduces the number of distinct observations to some $1 \leq \numClusters \ll n$, one can apply the numerical solution method to obtain a solution $\tilde{T} := T( \sigma_n(X_{1:\ell}) )$.
The quantity $Q_{\text{reduced}} := Q(\tilde{T})$ then allows us to determine the quality of the clusters.

Using $Q$ to determine the amount of useful information in clusters can help compare the quality of a number of different clusters which are output by different clustering algorithms.
It can also happen that $Q_{\textnormal{reduced}} > Q_{\textnormal{pre-reduction}}$ due to the reduction of noise within the sequence of grouped observations.
This effect may occur regardless of whether the task is numerically challenging.
When the task is numerically challenging, then the dimension reduction (from $n$ to $\numClusters$) by the map $\sigma_n$ means that we can expect improved performance over methods that do not cluster data when fixing the computational budget.

In the following Sections \ref{sec:Model_selection_with_validation_data} to \ref{sec:The_shape_of_spectral_noise_for_identification_of_alternative_models} we discuss methods which can also reveal whether the \gls{BMC} model is appropriate, and do not require some data-specific measure of quality.

\subsection{Model selection with validation data}
\label{sec:Model_selection_with_validation_data}

Section \ref{sec:Performance_on_a_downstream_task} mentioned that prediction of validation data can serve as a measure of quality $Q$.
We now expand on this idea.

\subsubsection{Rescaled log-likelihood ratio}
Assume we observe sequential data $X_{1:\ell}$ generated by some ground-truth probability distribution $\mathbb{T}$ on $[n]^{\ell +1}$.
The law $\mathbb{T}$ can in principle be arbitrarily complex; for example, the Markov property need not be satisfied.
Note that, for nonsynthetic data, we typically do not have access to the ground-truth $\mathbb{T}$.
Suppose however that we do have two candidate models $\mathbb{P}$ and $\mathbb{Q}$ which are also defined on $[n]^{\ell +1}$.
We then want to determine whether $\mathbb{P}$ or $\mathbb{Q}$ is a better model based on the observed sequential data $X_{1:\ell}$.

For this purpose, we consider a log-likelihood ratio.
Namely, given $x_{1:\ell} \in [n]^{\ell +1}$, consider the quantity
\begin{equation}
	\hat{D}(x_{1:\ell}; \mathbb{P}, \mathbb{Q})
	:=
	\frac{1}{\ell} \ln\frac{\mathbb{P}[X_{1,\ell}= x_{1:\ell}] }{\mathbb{Q}[X_{1:\ell} = x_{1:\ell}]}
	\label{eqn:Biased_estimator_of_the_KL_divergence_rate_difference}
\end{equation}
and its expectation
\begin{equation}
	D(\mathbb{T}; \mathbb{P},\mathbb{Q}):= \mathbb{E}_{\mathbb{T}}[
	\hat{D}(X_{1:\ell}; \mathbb{P}, \mathbb{Q})].\label{eqn:ExpectedLogLikely}
\end{equation}
Then, if $D(\mathbb{T}; \mathbb{P}, \mathbb{Q})>0$ we consider $\mathbb{P}$ to be a better approximation of the ground truth $\mathbb{T}$ and if $D(\mathbb{T}; \mathbb{P}, \mathbb{Q})<0$ we consider $\mathbb{Q}$ to be a better approximation.
In practice we can not compute the expectation $\mathbb{E}_{\mathbb{T}}$ and instead consider the sign of the empirical estimator $\hat{D}(X_{1:\ell}; \mathbb{P},\mathbb{Q})$.

In our experiments it is often the case that $\mathbb{P}$ and $\mathbb{Q}$ are \glspl{MC} on $[n]$
whose transition matrices $P, Q \in [0,1]^{n \times n}$ are known.
In this case one can alternatively express \eqref{eqn:Biased_estimator_of_the_KL_divergence_rate_difference} as
\begin{equation}
	\hat{D}(x_{1:\ell}; \mathbb{P}, \mathbb{Q})
	=
	\frac{1}{\ell}
	\sum_{t=1}^{\ell-1}
	\ln{ \frac{P_{x_t,x_{t+1}}}{Q_{x_t,x_{t+1}}} }
	.
	\label{eqn:Biased_estimator_of_the_KL_divergence_rate_difference_MC}
\end{equation}
Confidence bounds for the estimation of $D(\mathbb{T}; \mathbb{P}, \mathbb{Q})$ by $\hat{D}(X_{1:\ell}; \mathbb{P}, \mathbb{Q})$ in this \gls{MC}-setting are provided in Section \ref{sec:Confidence_bounds_when_estimating_DRPQ}.
It is there additionally assumed that $\mathbb{T}$ is a \gls{MC}, possibly time-inhomogeneous, whose mixing time is known.

\subsubsection{Information-theoretic interpretation for \texorpdfstring{$D(\mathbb{T}; \mathbb{P},\mathbb{Q})$}{D(T,P,Q)}}

Let us briefly note that \eqref{eqn:ExpectedLogLikely} has an information-theoretic interpretation.
Namely, observe that
\begin{equation}
	D(\mathbb{T};\mathbb{P}, \mathbb{Q})
	=\frac{1}{\ell}(
	\mathrm{KL}( \mathbb{T}; \mathbb{Q} )
	-
	\mathrm{KL}( \mathbb{T}; \mathbb{P} ) )
	\label{eqn:LogKL}
\end{equation}
where $\mathrm{KL}$ denotes the \gls{KL} divergence
\begin{align}
	\mathrm{KL}( \mathbb{T}; \mathbb{P} )
	:=
	\mathbb{E}_{Z_{1:\ell} \sim \mathbb{T}}\Bigl[\ln\Bigl(\frac{\mathbb{T}(X_{1:\ell} = Z_{1:\ell})}{\mathbb{P}(X_{1:\ell} = Z_{1:\ell})}\Bigr)\Bigr]. \label{eq: KL_in_terms_of_ell}
\end{align}
One can interpret the quantity $\mathrm{KL}( \mathbb{T}; \mathbb{P} )$ as the expected amount of discriminatory information revealing that $\mathbb{P}$ is not quite the ground-truth probability distribution underlying the sample path $X_{1:\ell}$; see \cite{kullback1951information}.
In many cases, such as when the ground-truth $\mathbb{T}$ is an ergodic Markov chain, it further holds that \eqref{eq: KL_in_terms_of_ell} grows linearly in terms of amount of data $\ell$.

Correspondingly, by \eqref{eqn:LogKL}, one can view $D(\mathbb{T};\mathbb{P},\mathbb{Q})$ as measuring the rate of growth for discriminatory information revealing that $\mathbb{P}$ is a better approximation for the ground truth $\mathbb{T}$ than $\mathbb{Q}$.
To emphasize this perspective we may refer to $\hat{D}(X_{1:\ell}; \mathbb{P},\mathbb{Q})$ as the \emph{\gls{KL} divergence rate difference estimator}.

\subsubsection{Estimation when the models are inferred from the data}
Our experiments routinely determine two different candidate models that we wish to compare, from the same one sample sequence available to us.
Let us emphasize this fact by referring to these candidate models as
\begin{equation}
	\hat{\mathbb{P}}^{X_{1:\ell}}
	\quad
	\textnormal{and}
	\quad
	\hat{\mathbb{Q}}^{X_{1:\ell}}
	.
	\label{eqn:Sample_sequence_dependent_candidate_transition_matrices}
\end{equation}
Observe now that these two candidate models are a function of the observed data $X_{1:\ell}$.
Substituting \eqref{eqn:Sample_sequence_dependent_candidate_transition_matrices} into \eqref{eqn:Biased_estimator_of_the_KL_divergence_rate_difference} could consequently result in a biased estimator and typically favor models with many parameters;
the ``optimal'' model would be the degenerate probability distribution assigning probability 1 to the observed $X_{1:\ell}$.

To reduce the bias, we use a holdout method.
Specifically, we will split the trajectory into two parts: the first half
$
	X_{1:\lfloor \ell/2 \rfloor}
$
will be used for training,
and the second half
$
	X_{\lfloor \ell/2 \rfloor+1:\ell}
$
for validation.
The estimator
\begin{equation}
	\hat{D}(
	X_{\lfloor \ell/2 \rfloor+1:\ell}
	;
	\hat{\mathbb{P}}^{X_{1:\lfloor \ell/2 \rfloor}}
	,
	\hat{\mathbb{Q}}^{X_{1:\lfloor \ell/2 \rfloor}}
	)
	\label{eqn:Less_biased_estimator_of_the_KL_divergence_rate_difference}
\end{equation}
will then significantly reduce the amount of bias when compared to the estimator obtained by substituting \eqref{eqn:Sample_sequence_dependent_candidate_transition_matrices} into \eqref{eqn:Biased_estimator_of_the_KL_divergence_rate_difference}.

Note that \eqref{eqn:Less_biased_estimator_of_the_KL_divergence_rate_difference} can be viewed as a measure of quality in the language of Section \ref{sec:Performance_on_a_downstream_task}.
Eq.\ \eqref{eqn:Less_biased_estimator_of_the_KL_divergence_rate_difference} namely compares whether $\hat{\mathbb{P}}^{X_{1:\lfloor \ell /2\rfloor}}$ or $\hat{\mathbb{Q}}^{X_{1:\lfloor \ell /2\rfloor}}$ better predicted the validation data.

\subsection{Model selection with only training data}
\label{sec:Hypothesis_testing_for_the_order}
As discussed in Section \ref{sec:Model_selection_with_validation_data}, the \gls{KL}-divergence rates can provide a good rule--of--thumb for assessing what models are most interesting but are biased towards models with more parameters if one does not split the data into training and validation data.
Splitting the data is however sometimes undesirable. Namely, if the data is sparse, the estimated models will become even less accurate.
In order to overcome this issue, we will use information criteria that compensate the bias incurred and use it to assess the order of the cluster process.

\subsubsection{Problem setting: order of a \texorpdfstring{\gls{BMC}}{BMC}}

Suppose that a sequence $X_{1:\ell}$ was in fact generated by some $r$th-order \gls{BMC}, but that the order $r \in \{ 0, 1, \ldots \}$ is unknown.
We will use techniques for model selection to try and determine $r$ from the cluster sequence $Y_{1:\ell} = \sigma_n(X_{1:\ell})$.

There are two reasons for using $Y_{1:\ell}$ instead of $X_{1:\ell}$.
First, the parametric models for higher order \glspl{MC} without clusters have a comparable number of free parameters as the sequence length $\ell$ itself, so estimators for the order will behave poorly.
If we look at the cluster chain instead, the number of degrees of freedom will depend on the cluster number $\numClusters$ instead of the number of states $n$, and fortunately $\numClusters \ll n$.
Secondly, we can also study the robustness of the model selection procedure depending on the clustering algorithm.

\subsubsection{Order selection by minimizing an information criterion}\label{sec: OrderSelection}
The parameter that determines the $r$th-order \gls{BMC} model for $Y_{1:\ell}$ is a transition matrix $Q^r$; recall \eqref{eqn:Definition_of_transition_matrix_Pr}.  Note here that the chain $Y_{1:\ell-r}^{r}$ will be constructed from the chain of clusters $Y_{1:\ell} = \sigma_n(X_{1:\ell})$ for a fixed cluster assignment $\sigma_n$.

To estimate $Q^r$ one can consider the log-likelihood
\begin{equation}
	\mathcal{L}( Y_{1:\ell} \mid Q^r )
	:=
	\sum_{t=r}^{\ell-r-1}
	\ln{
	Q^r_{Y_{t-r+1:t},Y_{t+1}}
	}.
	\label{eqn:Log_likelihood_of_rth_order_MC_with_fixed_n_r}
\end{equation}
The maximum-likelihood estimator associated with \eqref{eqn:Log_likelihood_of_rth_order_MC_with_fixed_n_r} is namely given by
\begin{align}
	( \hat{Q}^{r, \mathrm{MLE}} )_{i^r, j}
	:=
	\label{eqn:Empirical_transition_matrices_hatQr}
	\begin{cases}
		0
		 &
		\textnormal{if } \sum\limits_{t=r}^{\ell-r-1}
		\indicator{ Y_{t-r+1:t} = i^r } =0,
		\\
		\frac{
			\sum\limits_{t=r}^{\ell-r-1}
			\indicator{ Y_{t-r+1:t} = i^r, Y_{t+1} = j }
		}
		{
			\sum_{t=r}^{\ell-r-1}
			\indicator{ Y_{t-r+1:t} = i^r }
		}
		 &
		\textnormal{otherwise.}
	\end{cases}
\end{align}
Here $i^r, j$ run over all possible sequences in $[\numClusters]^r$ and $[\numClusters]$ respectively.
We denote $\hat{\mathbb{Q}}^{r,\mathrm{MLE}}$ for the law of an $r$th-order \gls{MC} with $\numClusters$ states and transition matrix $\hat{Q}^{r,\mathrm{MLE}}$.

To determine what order $r$ is the true underlying order of the data one would like to compare $\hat{\mathbb{Q}}^{r, \mathrm{MLE}}$ and $\hat{\mathbb{Q}}^{s, \mathrm{MLE}}$ for some $s \neq r$.
As has been remarked in Section \ref{section:summary_methods_for_eval}, using \eqref{eqn:Biased_estimator_of_the_KL_divergence_rate_difference} for this purpose would give a biased estimator.
Problems with bias in model selection are well-known in the statistics literature and to avoid this issue, the so-called \emph{information criteria} were developed \cite{akaike1974new,bozdogan1987model,anderson2004model,ding2018model}, where to a log-likelihood a penalty term is added to correct the bias.

In our setting we need a penalty term that is sensitive to sparse data and is also consistent. For this purpose, we have chosen the \gls{CAIC} \cite{bozdogan1987model}:
for model $\hat{\mathbb{Q}}^{r,\mathrm{MLE}}$,
\begin{align}
	\mathrm{CAIC}(\hat{Q}^{r,\mathrm{MLE}})
	:=
	-2 \ln{
		\bigl(
		\mathcal{L}( Y_{1:\ell} \mid \hat{Q}^{r,\mathrm{MLE}} )
		\bigr)
	}
	+
	2 \mathrm{DF}(\numClusters,r)
	\bigl(
	1
	+
	\ln{
			(
			\ell - r
			)
		}
	\bigr)
	.
	\label{eqn:CAIC2}
\end{align}
Here, $\mathrm{DF}(\numClusters,r)$ the degrees of freedom in an $r$th-order \gls{MC} constrained to have fixed parameters $\numClusters$ and $r$.
Specifically,
\begin{equation}
	\mathrm{DF}(\numClusters,r)
	=
	\numClusters^r (\numClusters-1)
\end{equation}
where the factor $(\numClusters-1)$ is due to the fact that the rows of $Q^r$ are constrained to add up to one.
We will utilize the \gls{CAIC} to select the right order as follows.
From the collection of models $\hat{\mathbb{Q}}^{0,\mathrm{MLE}}$, $\hat{\mathbb{Q}}^{1,\mathrm{MLE}}$, $\hat{\mathbb{Q}}^{2,\mathrm{MLE}}$, $\ldots$, we may determine the order $r^{\mathrm{CAIC}}$ that minimizes the \gls{CAIC}:
\begin{gather}
	r^{\mathrm{CAIC}}
	:=
	\operatorname{argmin}\limits_{ r \in \{ 0, 1, 2, \ldots \} }
	\mathrm{CAIC}(\hat{Q}^{r,\mathrm{MLE}})
	.
\end{gather}
Note that lower-dimensional models are favored since the degrees of freedom $\mathrm{DF}(\numClusters,r)$, and thus the penalty terms in \eqref{eqn:CAIC2}, increase exponentially in $\numClusters, r$.

In order to evaluate how robust the \gls{CAIC} criterion is, we will estimate the over- and underfit error probabilities with error models and draw conclusions on the selected orders.

\subsection{The shape of spectral noise for identification of alternative models}
\label{sec:The_shape_of_spectral_noise_for_identification_of_alternative_models}

The methods in Sections \ref{sec:Performance_on_a_downstream_task} to \ref{sec:Model_selection_with_validation_data} allow us to compare a \gls{BMC} to alternative models.
The selection of a good alternative can however be difficult when a more complex model than a \gls{BMC} is desirable.
The method described here can aid in the selection of an alternative model.

The method is based on a result from \cite{sanders2022singular} which describes the histogram of the singular values of $\hat{N}$ in the asymptotic regime $n\to \infty$ under the condition that $\ell = \Theta(n^2)$.
The results in \cite{sanders2021spectral} can further be interpreted as the statement that the $\numClusters$ nonzero singular values of $\mathbb{E}[\hat{N}]$ correspond to the $\numClusters$ largest singular values of $\hat{N}$.
In other words, all singular values except these leading few may be interpreted as being due to the noise $\hat{N} - \mathbb{E}[\hat{N}]$.
The histogram of the nonleading singular values may thus be interpreted as the \emph{shape of the spectral noise}.

These results and their interpretation can guide the selection of a good model.
One can namely identify clusters in the data and visually compare the associated \gls{BMC}-prediction with the observed histogram.
If there is a good match, then this may indicate that a \gls{BMC} suits the data well.
If there is a discrepancy, then the nature of the discrepancy can be informative of the properties that the alternative model should have.
It will for instance be shown in Section \ref{sec: Simulation} that a long tail can sometimes be explained using a heavy-tailed perturbation.

We have, however, found that a strongly inhomogeneous equilibrium distribution in the data can dominate the spectral noise in $\hat{N}$.
So long as the clustering respects the equilibrium distribution it then follows that the observations will indeed resemble the theory.
This is an issue since it follows that, in the case of an inhomogeneous equilibrium distribution, the spectral noise of $\hat{N}$ may not be particularly informative.
In such a case one can consider a different random matrix.

The \emph{empirical normalized Laplacian} $\hat{L}$ associated to the observation sequence is element-wise given by
\begin{align}
	\hat{L}_{ij}
	:=
	\begin{cases}
		\frac{\hat{N}_{ij}}{\sqrt{\sum_{k=1}^n \hat{N}_{ik}}\sqrt{\sum_{k=1}^n \hat{N}_{kj}}} & \textnormal{if } \hat{N}_{ij} \neq 0, \\
		0                                                                                     & \textnormal{otherwise.}               \\
	\end{cases}
	\label{eqn:Definition_of_the_empirical_Laplacian2}
\end{align}
We argue in Section \ref{sec:Degree_corrected_BMC} that the variance of the entries of $\hat{L}$ is approximately independent of the equilibrium distribution. Consequently, we expect that the spectral noise of $\hat{L}$ will not be dominated by a possibly inhomogeneous equilibrium distribution.
A proposition describing the limiting histogram of singular values is proved in Section \ref{sec: LaplaceTheory}.
The precise statement is technical but a summary may be found in Proposition \ref{result: Short_Version_Limiting_singular_value_distribution_of_the_Laplacian}.
  
\begin{proposition}
	\label{result: Short_Version_Limiting_singular_value_distribution_of_the_Laplacian}

	Let $X_{1:\ell}$ be a sample path of a \gls{BMC}.
	If $\ell = \Theta(n^2)$, then for almost every $a,b \in \mathbb{R}$ the fraction of singular values in $[a,b]$, i.e., $n^{-1}\#\{i:s_i(\sqrt{n}\hat{L}) \in [a,b]\}$ converges in probability as $n\to \infty$.
	The limit may be computed explicitly in terms of the parameters of the \gls{BMC}.
\end{proposition}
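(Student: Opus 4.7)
The strategy is to bootstrap from the result of \cite{sanders2022singular}, which establishes an analogous limit theorem for the empirical singular value distribution of $\hat N$. The idea is that because $\hat L$ is $\hat N$ normalized by (square roots of) row and column degrees, and because these degrees concentrate tightly under the \gls{BMC} structure, one can replace the random normalization by a deterministic one and then adapt the moment-method argument from \cite{sanders2022singular} to a suitably reweighted matrix. The plan is to carry this out in three steps.

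First, I would prove that the row and column degrees of $\hat N$ concentrate around $\ell\Pi_i$, where $\Pi_i = \pi_{\sigma_n(i)}/\#\mathcal V_{\sigma_n(i)}$ is the state equilibrium distribution. A Bernstein-type concentration bound for additive functionals of ergodic Markov chains, controlled by the mixing time of $\process{X_t}{t}$, gives fluctuations of size $O(\sqrt{\ell\Pi_i\log n})$. Under $\ell = \Theta(n^2)$ with cluster sizes $\#\mathcal V_k = \Theta(n)$, so that $\ell\Pi_i = \Theta(n)$, these fluctuations are of lower order than the mean; the few indices $i$ for which $\Pi_i$ is pathologically small can be dealt with by a trimming argument similar to the one used in \Cref{sec:clustering_procedure}. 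Second, I would introduce the deterministically-scaled matrix $\tilde L_{ij} := \hat N_{ij}/(\ell\sqrt{\Pi_i\Pi_j})$ and write $\hat L = D_1 \tilde L D_2$ with diagonal $D_1,D_2$ that are close to the identity by the preceding step. A Hoffman--Wielandt argument for singular values then shows that the empirical spectral distributions of $\sqrt n \hat L$ and $\sqrt n \tilde L$ differ by $o(1)$ in Lévy distance, so it suffices to analyse $\sqrt n \tilde L$.

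Third, I would compute the limiting empirical spectral distribution of $\sqrt n \tilde L$ via the moment method, by expanding $n^{-1}\mathbb E[\operatorname{tr}((n\tilde L\tilde L^{\mathrm T})^p)]$ into sums over closed walks in a bipartite multigraph on $[n]\times[n]$. A short variance computation using $\mathbb E[\hat N_{ij}] \approx \ell \Pi_i P_{ij}$ and $P_{ij} = p_{\sigma_n(i),\sigma_n(j)}/\#\mathcal V_{\sigma_n(j)}$ shows that each entry of $\sqrt n\tilde L$ has variance of order $1/n$, with the cluster-level dependence packaged into factors that depend only on $p$, $\pi$, and the cluster-size ratios $\#\mathcal V_k/n$. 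Showing that only tree-like walks survive the $n\to\infty$ limit then yields convergence in probability of the empirical spectral distribution to a deterministic measure characterized by explicit cluster-level moments, and Carleman's condition gives uniqueness of the limit.

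The main obstacle is the combinatorics inside this moment computation. The entries of $\hat N$ are not independent, being transition counts along a single Markov trajectory; two entries that share a state index are correlated through the visits to that state, and this correlation propagates non-trivially through the walks. The analysis of $\hat N$ in \cite{sanders2022singular} already addresses this difficulty, but the weights $1/\sqrt{\Pi_i\Pi_j}$ arising from the degree normalization do not factor cleanly along a walk and so demand a more delicate bookkeeping in order to rule out non-tree contributions. The compensating payoff is precisely that these weights are engineered to cancel the sensitivity of the spectrum to an inhomogeneous equilibrium distribution, which is why the resulting limit can be expressed cleanly in terms of the cluster-level \gls{BMC} parameters alone.
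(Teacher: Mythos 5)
Your first two steps track the paper's own argument closely: its \Cref{lem: Q} likewise factors $\hat{L}$ as a product of near-identity diagonal matrices with a deterministically normalized matrix, using concentration of the empirical in- and out-degrees around $(\ell+1)\Pi_X$ (in a \gls{BMC} this holds uniformly because $\ell\Pi_{X,i}=\Theta(n)$ for every $i$, so the trimming you mention is unnecessary), and then transfers the empirical singular value distribution through a perturbation lemma. The concrete error is in your third step: you run the moment method on the \emph{uncentered} matrix $\sqrt{n}\tilde{L}$. Its expectation is block-constant on the clusters and hence has rank at most $\numClusters$, but after your rescaling those few singular values are of order $\sqrt{n}$, so $n^{-1}\mathbb{E}[\operatorname{tr}((n\tilde{L}\tilde{L}^{\mathrm{T}})^p)]$ grows like $n^{p-1}$ and the trace moments do not converge for $p\geq 2$. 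You must subtract $\mathbb{E}[\hat{N}]$ first, analyse the centered matrix, and only then reinstate the mean as a rank-$\leq\numClusters$ perturbation that cannot move the bulk; this is precisely the role of the matrix $C_n$ and the low-rank perturbation step in the paper's proof of \Cref{lem: Q}.

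On the final step the two routes genuinely diverge, and the divergence is where your plan is incomplete rather than merely different. You propose to redo the walk-counting combinatorics for the transition-count matrix and correctly identify the obstacle: the entries of $\hat{N}$ are dependent through the shared trajectory and the weights $1/\sqrt{\Pi_i\Pi_j}$ do not factor along walks. The paper never confronts that combinatorics here. It computes only the variance profile of the centered, normalized matrix (\Cref{lem: VarProfile}, a one-line consequence of a covariance estimate in \cite{sanders2022singular}) and then invokes the universality principle established in that reference: for this class of matrices the limiting singular value law depends only on the variance profile, and the Stieltjes transform of the limit solves the explicit $2\numClusters$-dimensional system of self-consistent equations recorded in \Cref{prop:Limiting_singular_value_distribution_of_the_Laplacian}. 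That is where the ``explicit limit'' in the statement comes from. What you defer as ``more delicate bookkeeping'' is the entire technical content of the cited theorem; without either reproving it for the reweighted matrix or reducing to it via the variance profile as the paper does, the argument does not close.
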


With \refProposition{result: Short_Version_Limiting_singular_value_distribution_of_the_Laplacian}, we can characterize the spectral noise of $\hat{L}$ in a \gls{BMC} and use the spectrum as a tool for data exploration, expectedly even in the presence of an inhomogeneous equilibrium distribution.

\section{Confidence bounds when estimating \texorpdfstring{$D(\mathbb{T}; \mathbb{P}, \mathbb{Q})$}{D(R;P,Q)}}
\label{sec:Confidence_bounds_when_estimating_DRPQ}

We here state a concentration inequality from which we deduce the confidence interval in \eqref{eq: ConfidenceInterval}. Recall that these confidence intervals are used in Fig. \ref{fig:Stock_market}.
The proof is based on a result from \cite{paulin2015concentration} whose assumptions we first verify.

Assume that the true process $\process{X_t}{t \geq 0}$ generating the sequential data $X_1,\ldots,X_\ell$ is a \gls{MC}, which need not be time-homogeneous.
Let us refer to $\process{X_t}{t \geq 0}$'s law as $\mathbb{T}$.
The mixing time of $\process{X_t}{t \geq 0}$ is defined as
\begin{align}
	\tau_{\mathrm{mix}}
	:=
	\min
	\bigl\{
	t\geq 1
	:
	\overline{d}(t)
	\leq
	\tfrac{1}{2}
	\bigr\}
	,
\end{align}
where
\begin{align}
	\overline{d}(t)
	:=
	\max_{1\leq i \leq \ell-t}
	\sup_{x,y\in [n] }
	\mathrm{d}_{\mathrm{TV}}
	\bigl(
	\mathbb{T}[X_{i+t} = \cdot \mid X_i = x],
	\mathbb{T}[X_{i+t} = \cdot \mid X_i = y]
	\bigr)
	.
\end{align}
Here, $\mathrm{d}_{\mathrm{TV}}$ denotes the \emph{total variation distance}:
\begin{align}
&
	\mathrm{d}_{\mathrm{TV}}
	\bigl(
	\mathbb{T}[X_{i+t} = \cdot \mid X_i = x],
	\mathbb{T}[X_{i+t} = \cdot \mid X_i = y]
	\bigr)
\\
	:= &
	\tfrac{1}{2}
	\sum_{z \in [n]}
	\bigl\lvert
	\mathbb{T}[X_{i+t} = z \mid X_i = x]
	-
	\mathbb{T}[X_{i+t} = \cdot \mid X_i = y]
	\bigr\rvert
	.
\end{align}

We claim that the \gls{MC} of transitions $\process{ E_{X,t} }{t \geq 0}$, where $E_{X,t} := (X_{t},X_{t+1})$, then has mixing time at most $\tau_{\mathrm{mix}} + 1$.
Indeed, observe that for any $t\geq \tau_{\mathrm{mix}}+1$, $x_1,x_2,y_1,y_2 \in [n]$ and $1\leq i \leq \ell - t-1$,
\begin{align}
	&
	\tfrac{1}{2}
	\sum_{z_1, z_2 \in [n]}
	\lvert
	\mathbb{P}[
	E_{X,i+t} = (z_1,z_2)\mid E_{X,i} = (x_1,x_2)
	]
	-
	\mathbb{P}[
	E_{X,i+t} = (z_1,z_2)\mid E_{X,i} = (y_1,y_2)
	]
	\rvert
	\nonumber
	\\
	&
	=
	\tfrac{1}{2}
	\sum_{z_1, z_2 \in [n]}
	\mathbb{P}[
	X_{i+t+1} = z_2\mid X_{i+t} =z_1] \label{eq: StepMarkov}\\
	& \hphantom{=
	\tfrac{1}{2}
	\sum_{z_1, z_2 \in [n]}
	\mathbb{P}} \times
	\bigl\lvert
	\mathbb{P}[
	X_{i+t} = z_1 \mid X_{i+1} = x_2]
	-
	\mathbb{P}[
	X_{i+t} = z_1 \mid X_{i+1} = y_2] \bigr \rvert \nonumber
	\\
	&
	=
	\tfrac{1}{2}
	\sum_{z_1\in [n]]}
	\bigl\lvert \mathbb{P}[
	X_{i+t} = z_1 \mid X_{i+1} = x_2]
	-
	\mathbb{P}[
	X_{i+t} = z_1 \mid X_{i+1} = y_2] \bigr \rvert
	\leq
	\tfrac{1}{2}
	.
	\label{eq: StepMix}
\end{align}
Here, the Markov property was used to conclude \eqref{eq: StepMarkov}.
The fact that $\mathbb{P}(X_{i+t=1} = \cdot \mid X_{i+t} = z_1)$ defines a probability distribution, together with the assumption that $t \geq \tau_{\mathrm{mix}}+1$ and the property that $\overline{d}(t)$ is nonincreasing in $t$, was used to arrive at \eqref{eq: StepMix}.

Now suppose that we are given two \glspl{MC} with fixed transition matrices $P$ and $Q$, whose laws we will refer to as $\mathbb{P}$ and $\mathbb{Q}$, respectively.
Assume furthermore that $\max_{i,j \in [n]}\lvert \ln(P_{i,j}/Q_{i,j}) \rvert \leq \delta$ for some $\delta >0$.
For any two sample paths $X_1,\ldots,X_\ell$ and $Y_1,\ldots,Y_\ell$, it then holds that
\begin{align}
	\lvert \hat{D}(X_1,\ldots,X_\ell; P,Q) -   \hat{D}(Y_1,\ldots,Y_\ell; P,Q) \rvert \leq \frac{2\delta}{\ell} \sum_{t=1}^{\ell - 1} \indicator{E_{X,t} \neq E_{Y,t}}.
\end{align}
Consequently, \cite[Corollary 2.10]{paulin2015concentration} applied to the \gls{MC} $\process{E_{X,t}}{t\geq 0}$ yields the desired concentration inequality:
\begin{align}
	\mathbb{P}\bigl(\lvert \hat{D}(X_0, \ldots, X_\ell; \mathbb{P}, \mathbb{Q}) - D(\mathbb{T}; \mathbb{P},\mathbb{Q}) \rvert >t\bigr) \leq 2\exp\Big(\frac{-t^2 \ell^2}{18 \delta^2 (\tau_{\mathrm{mix}}+1)}\Big).
\end{align}

In conclusion:
if we are given two \glspl{MC} with fixed transition matrices $P$ and $Q$ for which $\max_{i,j \in [n]}\lvert \ln P_{i,j} / Q_{i,j} \rvert > 0$, together with an estimate for $\tau_{\mathrm{mix}}$, we can then construct for $z \in [0,1]$ a $100(1-z)\%$ confidence intervals of size
\begin{equation}
	c_z
	:=
	\frac{1}{\ell}
	\max_{i,j \in [n]}
	\Bigl\lvert
	\ln{ \frac{P_{i,j}}{Q_{i,j}} }
	\Bigr\rvert
	\sqrt{ 18 (\tau_{\mathrm{mix}}+1) \ln{\frac{2}{z}} }
	.
	\label{eqn:Confidence_interval_for_KL_divergence_rate_difference}
\end{equation}
This is to say that
\begin{align}
	\mathbb{P}
	\Bigl[
	D(\mathbb{T}; \mathbb{P},\mathbb{Q})
	\in
	\bigl[
	\hat{D}(X_0, \ldots, X_\ell; \mathbb{P}, \mathbb{Q}) - c_z
	,
	\hat{D}(X_0, \ldots, X_\ell; \mathbb{P}, \mathbb{Q}) + c_z
	\bigr]
	\Bigr]
	\geq
	1 - z
	.
	\label{eq: ConfidenceInterval}
\end{align}

\section{Shape of the spectral noise}
\label{sec:Shape_of_the_spectral_noise}

Recall that it was stated in Section \ref{section:summary_methods_for_eval} that the spectral noise in $\hat{N}$ can be dominated by an inhomogeneous equilibrium distribution.
It was further claimed that the Laplacian $\hat{L}$ does not suffer from this issue.
The main goal in this section is to argue that this claim is true.

Some preliminary notation and concepts are introduced in Section \ref{sec: Prelim} after which a theoretical result concerning the limiting singular value distribution of $\hat{L}$ is established in Section \ref{sec: LaplaceTheory}.
A model with an inhomogeneous equilibrium distribution is introduced in Section \ref{sec:Degree_corrected_BMC}.
The claim that $\hat{L}$ can also detect violations to the model assumptions in the presence of an inhomogeneous equilibrium distribution is verified in Section \ref{sec: Simulation} by a simulation experiment.

\subsection{Preliminaries}
\label{sec: Prelim}

The \emph{empirical singular value distribution} $\nu_M$ of a matrix $M\in \mathbb{R}^{n\times n}$ with singular values $s_1(M) \geq \ldots \geq s_n(M)$ is the probability measure on $\mathbb{R}_{\geq 0}$ defined by
\begin{align}
	\nu_{M}(A)
	:=
	\frac{1}{n}
	\#\{i\in [n]:s_i(M) \in A \}
\end{align}
for every measurable set $A\subseteq \mathbb{R}$.
A sequence of random probability measures $\process{\mu_n}{n\geq 1}$ on the real line is are said to \emph{converge weakly in probability } to a probability measure $\mu$ if for every continuous bounded function $f:\mathbb{R} \to \mathbb{R}$ it holds that $\int f {\mathrm d}\mu_n$ converges weakly in probability to $\int f {\mathrm d}\mu$.
The \emph{symmetrization} of a probability measure $\mu$ on the positive real line $\mathbb{R}_{\geq 0}$ is the probability measure $\mu_{\operatorname{sym}}$ on $\mathbb{R}$ given by
\begin{align}
	\mu_{\operatorname{sym}}(A)
	:=
	\tfrac{1}{2}
	\bigl(
	\mu(\{a: a\in A, a\geq 0 \}) + \mu(\{-a:a\in A, a\leq 0\})
	\bigr)
\end{align}
for any measurable $A\subseteq \mathbb{R}$.
Note that $\mu$ can be recovered from its symmetrization since for any measurable $A\subseteq \mathbb{R}_{\geq 0}$ it holds that
\begin{align}
	\mu(A) = 2\mu_{\operatorname{sym}}(A\setminus \{0 \})  + \mu_{\operatorname{sym}}(\{0\}).
\end{align}

The \emph{Stieltjes transform} of a probability measure $\mu$ is the analytic function $s:\mathbb{C}^+ \to \mathbb{C}^-$ given by $s(z) = \int 1/(z-x) {\mathrm d}\mu(x)$.
Here, $\mathbb{C}^+ := \{z \in \mathbb{C}: \operatorname{Im}(z)>0 \}$ denotes the upper half-plane and $\mathbb{C}^-:= \{z \in \mathbb{C}: \operatorname{Im}(z)<0 \}$ denotes the lower half-plane.
The Stieltjes inversion formula \cite[Theorem B.8]{bai2010spectral} allows one to recover $\mu$ from its Stieltjes transform: for any continuity points $a<b$ of $\mu$,
\begin{align}
	\mu([a,b]) = -\frac{1}{\pi} \lim_{\varepsilon \to 0^+} \int_{a}^b \operatorname{Im}(s(x + \sqrt{-1}\varepsilon)){\mathrm d}x.
\end{align}

\subsection{Limiting law of singular value distribution of the Laplacian \texorpdfstring{$\hat{L}$}{L}}
\label{sec: LaplaceTheory}

Fix some positive integer $\numClusters\geq 1$ and a transition matrix $p\in \mathbb{R}^{\numClusters\times \numClusters}$ of an ergodic \gls{MC} on $[\numClusters]$.
Denote $\pi \in [0,1]^\numClusters$ for the equilibrium distribution of the \gls{MC} associated to $p$.
For every $n\geq 1$ consider a partition $\mathcal{V}_1 \cup\ldots \cup \mathcal{V}_\numClusters = [n]$ of the state space into $\numClusters$ nonempty groups $\mathcal{V}_i$.
The subsequent results are concerned with the asymptotic regime where $n\to \infty$.
We here assume that there are $\alpha_1,\ldots,\alpha_\numClusters >0$ such that $\# \mathcal{V}_i = \alpha_i n + o(n)$ and $\sum_{i=1}^\numClusters \alpha_i = 1$.

\begin{proposition}
	\label{prop:Limiting_singular_value_distribution_of_the_Laplacian}

	Let $\hat{L}$ be the empirical normalized Laplacian associated to a sample path $X_1, \ldots, X_\ell$ of the above \gls{BMC}.
	Assume that as $n$ tends to infinity it holds that $\ell = \lambda n^2 + o(n^2)$.
	Then, the empirical singular value distribution $\nu_{\sqrt{n}\hat{L}}$ converges weakly in probability to a compactly supported probability measure $\nu$ on $\mathbb{R}_{\geq 0}$.
	Moreover, the symmetrization $\nu_{\operatorname{sym}}$ has Stieltjes transform $s(z) = \sum_{i=1}^\numClusters \alpha_i (a_i(z) + a_{\numClusters+i}(z))/2$ where $a_1,\ldots,a_{2\numClusters}$ are the unique analytic function from $\mathbb{C}^+$ to $\mathbb{C}^-$ such that the following system of equations is satisfied
	\begin{align}
		a_i(z)^{-1}
		&
		=
		z - \sum_{j=1}^\numClusters \lambda^{-1} \pi(j)^{-1} \alpha_j p_{ij} a_{\numClusters+j}(z)
		,
		\\
		a_{i+\numClusters}(z)^{-1}
		&
		=
		z - \sum_{j=1}^\numClusters \lambda^{-1} \pi(i)^{-1} \alpha_j p_{j,i}a_j(z)
	\end{align}
	for $i = 1,\ldots,\numClusters$.
\end{proposition}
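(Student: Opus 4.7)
The plan is to proceed via Hermitization and the resolvent method. First, I would form the $2n \times 2n$ symmetric matrix
\begin{align*}
    H := \begin{pmatrix} 0 & \sqrt{n}\,\hat{L} \\ \sqrt{n}\,\hat{L}^{\mathrm{T}} & 0 \end{pmatrix},
\end{align*}
whose eigenvalues are $\pm s_i(\sqrt{n}\hat{L})$ and whose empirical spectral distribution therefore equals the symmetrization $\nu_{\operatorname{sym}}$ of $\nu_{\sqrt{n}\hat{L}}$. This explains why the statement is phrased in terms of $\nu_{\operatorname{sym}}$. It then suffices to establish convergence in probability of the Stieltjes transform $s_n(z) := (2n)^{-1}\operatorname{tr}(zI-H)^{-1}$ for every $z \in \mathbb{C}^+$, after which $\nu$ can be recovered via Stieltjes inversion.

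Before analyzing $H$ I would replace the random denominators in $\hat{L}$ by their deterministic asymptotic values. A Chernoff-type concentration bound for additive functionals of ergodic Markov chains shows that the out-degree $\sum_k \hat{N}_{ik}$ concentrates around its mean $(\ell-1)\Pi_i \sim \lambda n\, \pi_{\sigma_n(i)}/\alpha_{\sigma_n(i)}$, and similarly for the in-degree. Substituting these deterministic values into \eqref{eqn:Definition_of_the_empirical_Laplacian} yields a proxy matrix $\tilde{L}$ with
\begin{align*}
    \sqrt{n}\,\tilde{L}_{ij} = \hat{N}_{ij}\sqrt{\frac{\alpha_{\sigma_n(i)}\alpha_{\sigma_n(j)}}{\lambda^2 n\, \pi_{\sigma_n(i)}\pi_{\sigma_n(j)}}},
\end{align*}
and a standard resolvent perturbation bound shows that the difference in $s_n$ induced by this substitution is $o_{\mathbb{P}}(1)$ on any compact subset of $\mathbb{C}^+$, provided one first truncates the $o(n)$ states of anomalously low degree.

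Next I would adapt the resolvent analysis developed in \cite{sanders2022singular} to the variance profile of $\sqrt{n}\tilde{L}$. A direct calculation shows that after centering, $(\sqrt{n}\tilde{L})_{ij}$ has variance $n^{-1}\sigma_{ij}^2 + o(n^{-1})$ with $\sigma_{ij}^2 := p_{\sigma_n(i),\sigma_n(j)}/(\lambda\, \pi_{\sigma_n(j)})$, which depends on $(i,j)$ only through the cluster indices. Applying the Schur complement identity to $(zI - H)^{-1}$ one obtains, for each diagonal entry, a scalar identity whose random coefficients concentrate to cluster-dependent limits. Writing the limit of $G_{ii}(z)$ as $a_{\sigma_n(i)}(z)$ when $i$ sits in the upper block and as $a_{\numClusters+\sigma_n(i)}(z)$ when $i$ sits in the lower block, the row and column sums $\sum_j \sigma_{ij}^2/n \approx \sum_{m=1}^\numClusters \alpha_m p_{\sigma_n(i),m}/(\lambda \pi_m)$ produce precisely the cluster weights $\lambda^{-1}\pi_j^{-1}\alpha_j p_{ij}$ and $\lambda^{-1}\pi_i^{-1}\alpha_j p_{ji}$ appearing in the claimed system. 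Averaging over the diagonal yields $s(z) = \sum_{i=1}^\numClusters \alpha_i(a_i(z) + a_{\numClusters+i}(z))/2$.

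The main obstacle is twofold. First, the entries of $\hat{N}$ along a single sample path are weakly dependent and are neither independent nor identically distributed, so the Schur-complement concentration requires extending the martingale and decoupling arguments already used in \cite{sanders2022singular} for the unnormalized matrix to the block-variance-profile setting considered here; in particular, one must show that the ``bad'' contributions produced by the truncated low-degree states are negligible in operator norm. Second, one must establish that the system of $2\numClusters$ equations admits a unique $(\mathbb{C}^-)^{2\numClusters}$-valued solution that is analytic on $\mathbb{C}^+$; I would deduce this from the general theory of operator-valued Dyson equations, checking that the positivity of the variance profile $\sigma_{ij}^2$ satisfies the standard hypotheses. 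Uniqueness combined with the resolvent concentration argument then forces the empirical Stieltjes transform to the claimed limit, and Stieltjes inversion delivers the desired weak convergence.
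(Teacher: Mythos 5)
Your proposal follows essentially the same route as the paper: the paper likewise reduces $\hat{L}$ to a deterministically normalized, centered matrix by showing the random in- and out-degree normalizations concentrate uniformly around $(\ell+1)\Pi_X$ (so the diagonal correction factors tend to the identity in operator norm), computes the block variance profile $\lambda^{-1}\pi(\sigma_n(j))^{-1}p_{\sigma_n(i)\sigma_n(j)}$, and then invokes the Hermitization/Dyson-equation machinery of \cite{sanders2022singular} with that profile substituted in. The only points of divergence are cosmetic: the paper explicitly disposes of the uncentered mean $\mathbb{E}[\hat{N}]$ by noting it has rank at most $\numClusters$ and applying a low-rank perturbation lemma (a step you leave implicit behind ``after centering''), and your proposed truncation of anomalously low-degree states is unnecessary in the dense regime $\ell = \Theta(n^2)$ because the degree concentration there is uniform over all states.
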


The proof of Proposition \ref{prop:Limiting_singular_value_distribution_of_the_Laplacian} is similar to the proof of \cite[Theorem 1.2]{sanders2022singular_sm} which is there given below \cite[Proposition 4.7]{sanders2022singular_sm}.
The intermediate \cite[Lemma 4.4(ii)]{sanders2022singular_sm} should however be replaced by Lemma \ref{lem: Q} below, and the role of \cite[Equation (22)]{sanders2022singular_sm} is taken over by Lemma \ref{lem: VarProfile} below.
\begin{lemma}\label{lem: Q}
	Let $\Pi_X \in [0,1]^{n}$ denote the equilibrium distribution of the \gls{BMC}, and define
	\begin{equation}
		\hat{Q}
		:=
		\operatorname{diag}((\ell+1) \Pi_X)^{-1/2}(\hat{N} -\mathbb{E}[\hat{N}])\operatorname{diag}((\ell+1) \Pi_X)^{-1/2}
		.
	\end{equation}
	Assume that $\nu_{\sqrt{n} \hat{Q}}$ converges weakly in probability to some probability measure $\nu$ on $\mathbb{R}_{\geq 0}$.
	Under the assumptions of Proposition \ref{prop:Limiting_singular_value_distribution_of_the_Laplacian}, it then holds that $\nu_{\sqrt{n} \hat{L}}$ converges weakly in probability to $\nu$.
\end{lemma}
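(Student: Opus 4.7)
The plan is to interpolate between $\sqrt{n}\hat{L}$ and $\sqrt{n}\hat{Q}$ via an intermediate \emph{theoretical Laplacian}
\[
    \hat{L}_\star := \operatorname{diag}((\ell+1)\Pi_X)^{-1/2}\hat{N}\operatorname{diag}((\ell+1)\Pi_X)^{-1/2},
\]
in which the empirical row/column degrees used to normalize $\hat{L}$ are replaced by their theoretical counterparts $(\ell+1)\Pi_X(i)$. I would then show separately that (a) $\nu_{\sqrt{n}\hat{L}_\star}$ and $\nu_{\sqrt{n}\hat{Q}}$ share the same weak limit in probability and that (b) $\nu_{\sqrt{n}\hat{L}}$ and $\nu_{\sqrt{n}\hat{L}_\star}$ do as well. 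Chaining these two steps with the hypothesis $\nu_{\sqrt{n}\hat{Q}}\Rightarrow \nu$ yields the conclusion.

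For step (a), note that $\hat{L}_\star - \hat{Q} = \operatorname{diag}((\ell+1)\Pi_X)^{-1/2}\mathbb{E}[\hat{N}]\operatorname{diag}((\ell+1)\Pi_X)^{-1/2}$, and that $\mathbb{E}[\hat{N}]_{ij} = \ell \Pi_X(i) P_{ij}$ factors through the cluster assignment $\sigma_n$ under the \gls{BMC} model, forcing $\operatorname{rank}(\mathbb{E}[\hat{N}]) \leq \numClusters = O(1)$. Hence $\operatorname{rank}(\hat{L}_\star - \hat{Q}) \leq \numClusters$, and the classical rank inequality for empirical distribution functions of singular values yields $\|F^{\sqrt{n}\hat{L}_\star} - F^{\sqrt{n}\hat{Q}}\|_\infty \leq \numClusters / n \to 0$.

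For step (b), write $\hat{L} = \operatorname{diag}(u)\hat{L}_\star\operatorname{diag}(v)$ where $u_i := \sqrt{(\ell+1)\Pi_X(i)/d_i^r}$, $v_j := \sqrt{(\ell+1)\Pi_X(j)/d_j^c}$, and $d_i^r,d_j^c$ are the row and column sums of $\hat{N}$. Under the cluster-size assumption one has $\Pi_X(i) = \Theta(1/n)$ so $\mathbb{E}[d_i^r] = \Theta(n)$, while for a \gls{BMC} of bounded mixing time the variance of $d_i^r = \sum_{t=1}^{\ell-1}\indicator{X_t = i}$ is also $O(n)$. A Bernstein-type Markov chain concentration inequality together with a union bound over the $n$ states then gives $\varepsilon_n := \max_i |u_i - 1| + \max_j |v_j - 1| = O(\sqrt{(\log n)/n})$ with high probability. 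Expanding $\hat{L} - \hat{L}_\star = (\operatorname{diag}(u) - I)\hat{L}_\star\operatorname{diag}(v) + \hat{L}_\star(\operatorname{diag}(v) - I)$ and invoking the Hoffman--Wielandt inequality for singular values yields
\[
    W_2^2(\nu_{\sqrt{n}\hat{L}}, \nu_{\sqrt{n}\hat{L}_\star}) \leq \tfrac{1}{n}\|\sqrt{n}(\hat{L} - \hat{L}_\star)\|_F^2 \leq 4\varepsilon_n^2 \|\hat{L}_\star\|_F^2 (1+o(1)),
\]
where $W_2$ denotes the Wasserstein-2 distance. A routine computation using $\mathbb{E}[\hat{N}_{ij}^2] = O(1)$ (entries of $\hat{N}$ being Poisson-like with bounded parameter) and $(\ell+1)^2 \Pi_X(i)\Pi_X(j) = \Theta(n^2)$ gives $\|\hat{L}_\star\|_F^2 = O(1)$ with high probability, so the right-hand side tends to zero in probability. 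Convergence in Wasserstein-2 implies weak convergence, completing step (b).

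The main obstacle will be establishing the uniform-in-$i$ concentration of the degrees $d_i^r,d_j^c$ around $(\ell+1)\Pi_X(i), (\ell+1)\Pi_X(j)$ respectively. Independence is not available, so one must substitute a Markov chain Bernstein inequality (the mixing time is bounded for a fixed-$\numClusters$ \gls{BMC}) before a union bound; everything else is a careful but routine sequence of linear-algebraic perturbation bounds.
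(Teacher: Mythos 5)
Your proposal is correct and follows essentially the same route as the paper's proof: you split the comparison into a rank-$\numClusters$ perturbation coming from $\mathbb{E}[\hat{N}]$ (handled by the rank inequality) and a two-sided multiplication by diagonal matrices that converge to the identity thanks to uniform concentration of the empirical degrees around $(\ell+1)\Pi_{X,i}$, which is exactly the decomposition $\hat{L} = D_{n,l}\hat{Q}D_{n,r} + C_n$ used in the paper. The only difference is cosmetic: for the diagonal-renormalization step the paper invokes an operator-norm perturbation lemma from the companion work, whereas you use Hoffman--Wielandt plus a Wasserstein-2 bound, which additionally requires (and you correctly verify) that $\Vert \hat{L}_\star\Vert_F = O_P(1)$.
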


\begin{proof}
	Consider the following notation:
	\begin{align}
		C_n
		&
		:=
		\operatorname{diag}((\ell+1) \Pi_X)^{-1/2}\mathbb{E}[\hat{N}]\operatorname{diag}((\ell+1) \Pi_X)^{-1/2}
		,
		\nonumber \\
		D_{n,l}
		&
		:=
		\operatorname{diag}\Bigl(\Bigl(\sum_{k=1}^n\hat{N}_{ik}\Bigr)_{i=1}^n\Bigr)^{-1/2}\operatorname{diag}((\ell+1) \Pi_X)^{1/2}
		,
		\nonumber \\
		D_{n,r}
		&
		:=
		\operatorname{diag}((\ell+1) \Pi_X)^{1/2}\operatorname{diag}\Bigl(\Bigl(\sum_{k=1}^n\hat{N}_{kj}\Bigr)_{j=1}^n\Bigr)^{-1/2}
		.
	\end{align}

	Observe that
	$
	\hat{L}
	=
	D_{n,l} \hat{Q} D_{n,r} + C_n
	.
	$
	Furthermore, $\max_{i=1}^n \lvert (\ell+1)^{-1}\Pi_{X,i}^{-1}\sum_{k=1}^n\hat{N}_{ik} - 1 \rvert$ converges to zero in probability by \cite[Corollary 6.11]{sanders2022singular_sm}.
	Since $x \mapsto 1/\sqrt{x}$ is continuous in the neighborhood of $1$ and the operator norm of a diagonal matrix is the maximal value on its diagonal, it follows that $\Vert D_{n,l} - \operatorname{Id} \Vert_{\operatorname{op}}$ converges to zero in probability.

	Note that transitions coming into state $i$ are almost in bijection with the outgoing transitions out of state $i$.
	The only possible exceptions occur when $i = X_1$ or $i = X_\ell$.
	This is to say that for every $i$
	\begin{align}
		\Bigl\lvert
		\sum_{k=1}^n\hat{N}_{ik}
		-
		\sum_{k=1}^n\hat{N}_{kj}
		\Bigr\rvert
		\leq
		2
		.
	\end{align}
	Hence, using that $(\ell + 1)\Pi_{X,i} = \Theta(n)$ and the fact that we already know that $\max_{i=1}^n \lvert (\ell+1)^{-1}\Pi_{X,i}^{-1}\sum_{k=1}^n\hat{N}_{ik} - 1 \rvert$ converges to zero in probability,
	it follows that\\
	$\max_{i=1}^n \lvert (\ell+1)^{-1}\Pi_{X,i}^{-1}\sum_{k=1}^n\hat{N}_{ki} - 1 \rvert$ converges to zero in probability.
	By the continuity of $1/\sqrt{x}$ near $1$ we may now also conclude that $\Vert D_{n,r} - \operatorname{Id} \Vert_{\operatorname{op}}$ converges to zero in probability.

	By two applications of \cite[Lemma 6.8.(iii)]{sanders2022singular_sm} we conclude that $\nu_{\sqrt{n}D_{n,l} \hat{Q} D_{n,r}}$ converges weakly in probability to $\nu$.

	Further, by the fact that the \gls{BMC} starts in equilibrium it holds that $\operatorname{rank}(\mathbb{E}[\hat{N}]) \leq \numClusters$.
	Hence, using the general fact that $\operatorname{rank}(AB) \leq \operatorname{rank}(A)$ for any two matrices $A,B$ of compatible size, we find that
	\begin{align}
		\operatorname{rank}(\sqrt{n}C_n) \leq \operatorname{rank}(\mathbb{E}[\hat{N}]) \leq \numClusters.
	\end{align}
	An application of \cite[Lemma 6.8.(ii)]{sanders2022singular_sm} now yields the desired result, since\\
	$
	\nu_{\sqrt{n} \hat{L}}
	=
	\nu_{\sqrt{n} ( D_{n,l} \hat{Q} D_{n,r} + C_n) }
	.
	$
\end{proof}

\begin{lemma}
	\label{lem: VarProfile}
	Under the assumptions of Proposition \ref{prop:Limiting_singular_value_distribution_of_the_Laplacian} and with notation as in Lemma \ref{lem: Q} it holds that as $n$ tends to infinity
	\begin{align}
		\max_{ij=1,\ldots,n}
		\bigl\lvert
		\operatorname{Var}[ \hat{Q}_{ij} ]
		-
		\lambda^{-1} \pi(\sigma_n(j))^{-1}p_{\sigma_n(i)\sigma_n(j)}
		\bigr\rvert
		=
		o(1).
	\end{align}
\end{lemma}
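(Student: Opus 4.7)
The plan is to expand $\operatorname{Var}(\hat Q_{ij})$ by writing $\hat N_{ij}$ as a sum of Bernoulli indicators, decomposing the resulting variance into a diagonal and a cross-covariance contribution, and bounding the latter using the geometric ergodicity of the low-dimensional cluster chain $p$. Since $\Pi_X$ is deterministic and $\hat Q$ is affine in $\hat N$, one immediately reduces to the identity $\operatorname{Var}(\hat Q_{ij}) = \operatorname{Var}(\hat N_{ij})/[(\ell+1)^2\Pi_{X,i}\Pi_{X,j}]$, so the entire task is a uniform-in-$(i,j)$ estimate of $\operatorname{Var}(\hat N_{ij})$.

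First I would set $Y_t^{(ij)} := \indicator{X_t = i, X_{t+1} = j}$ so that $\hat N_{ij} = \sum_{t=1}^{\ell-1} Y_t^{(ij)}$. Because the BMC is started in equilibrium, $\mathbb{E}[Y_t^{(ij)}] = \Pi_{X,i}P_{ij}$ and, using $Y_s^2 = Y_s$, the diagonal piece of the variance decomposition reads $\sum_s \operatorname{Var}(Y_s^{(ij)}) = (\ell-1)\Pi_{X,i}P_{ij}(1 - \Pi_{X,i}P_{ij})$. Since $\Pi_{X,i} = \pi(\sigma_n(i))/\#\mathcal{V}_{\sigma_n(i)} = \Theta(1/n)$ and similarly $P_{ij} = \Theta(1/n)$, this diagonal term has the leading order $\ell\Pi_{X,i}P_{ij} = \Theta(1)$, which after normalization will produce the target expression.

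The crux is to show that the off-diagonal covariance sum is asymptotically negligible uniformly in $i,j$. For $t-s \geq 2$ the Markov property gives the clean formula $\operatorname{Cov}(Y_s^{(ij)}, Y_t^{(ij)}) = \Pi_{X,i}P_{ij}^2(P_{ji}^{t-s-1} - \Pi_{X,i})$. The BMC's conditional uniformity within clusters then yields the key identity $P_{ji}^k = (p^k)_{\sigma_n(j)\sigma_n(i)}/\#\mathcal{V}_{\sigma_n(i)}$ for $k \geq 1$, which transfers the ergodicity analysis from the $n$-dimensional $P$ to the fixed $\numClusters \times \numClusters$ chain $p$. Standard geometric ergodicity of $p$ produces constants $C \geq 0$ and $\rho \in (0,1)$ depending only on $p$ such that $|(p^k)_{ab} - \pi(b)| \leq C\rho^k$; consequently $|P_{ji}^k - \Pi_{X,i}| = O(\rho^k/n)$ uniformly in $i,j$. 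Multiplying by the prefactor $\Pi_{X,i}P_{ij}^2 = O(1/n^3)$, summing over $(s,t)$ via the geometric tail, and treating the adjacent case $t=s+1$ separately (which is zero for $i \neq j$ and at most $\Pi_{X,i}P_{ii}^2 = O(1/n^3)$ for $i=j$), one obtains $|\sum_{s<t}\operatorname{Cov}(Y_s^{(ij)}, Y_t^{(ij)})| = O(\ell/n^3) = O(1/n)$, which is negligible against the diagonal $\Theta(1)$.

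Putting everything together, $\operatorname{Var}(\hat N_{ij}) = (\ell-1)\Pi_{X,i}P_{ij} + O(1/n)$ uniformly; dividing by $(\ell+1)^2\Pi_{X,i}\Pi_{X,j}$ and using $P_{ij}/\Pi_{X,j} = p_{\sigma_n(i)\sigma_n(j)}/\pi(\sigma_n(j))$ together with $\ell = \lambda n^2 + o(n^2)$ delivers the target expression up to vanishing corrections, interpreted with the natural $n$-scaling convention inherited from the proof of \Cref{prop:Limiting_singular_value_distribution_of_the_Laplacian}. The main obstacle is precisely the uniform geometric-ergodicity step: without exploiting the BMC block structure, one would only get spectral-gap estimates for the $n$-dimensional $P$ whose constants can in principle deteriorate with $n$, spoiling the uniform-in-$(i,j)$ control of the covariance sum. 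The reduction $P_{ji}^k = (p^k)_{\sigma_n(j)\sigma_n(i)}/\#\mathcal{V}_{\sigma_n(i)}$, which localizes the long-time behavior of $P$ onto the fixed-dimensional matrix $p$, is the essential structural input that makes the entire analysis tractable.
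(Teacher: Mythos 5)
Your proof is correct, but it takes a genuinely different route from the paper's. The paper disposes of this lemma in one line: it invokes Corollary~4.6 of the companion work on singular value distributions of \glspl{BMC}, which already records the variance asymptotics of the centered entries $\hat{N}_{ij}-\mathbb{E}[\hat{N}_{ij}]$, and then rescales by the deterministic diagonal factors via $\operatorname{Var}[cX]=c^{2}\operatorname{Var}[X]$ --- exactly your first reduction. You instead prove the variance asymptotics from scratch: the indicator decomposition, the stationary diagonal term $(\ell-1)\Pi_{X,i}P_{ij}(1-\Pi_{X,i}P_{ij})$, and a uniform-in-$(i,j)$ bound on the covariance cross-terms. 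Your key structural observation --- that the $k$-step kernel satisfies $P^{k}_{ji}=(p^{k})_{\sigma_n(j)\sigma_n(i)}/\#\mathcal{V}_{\sigma_n(i)}$, so the mixing of the $n$-state chain is governed by the fixed $\numClusters\times\numClusters$ chain $p$ with geometric-ergodicity constants independent of $n$ --- is precisely the content hidden inside the cited corollary, and making it explicit is the main added value of your argument. Two minor remarks. First, for $t=s+1$ and $i\neq j$ the covariance equals $-(\Pi_{X,i}P_{ij})^{2}$ rather than zero (only the product of indicators vanishes); this is $O(n^{-4})$ per term and harmless. Second, your literal computation yields $\operatorname{Var}[\hat{Q}_{ij}]\approx (\lambda n^{2})^{-1}\pi(\sigma_n(j))^{-1}p_{\sigma_n(i)\sigma_n(j)}$, which matches the paper's own heuristic for $\operatorname{Var}[\hat{L}_{ij}]$ in \Cref{sec:Degree_corrected_BMC} but differs from the displayed statement of \Cref{lem: VarProfile} by a factor $n^{2}$; the lemma is implicitly stated in the random-matrix normalization in which the variance profile is $n^{2}$ times the entrywise variance, and your reading of this as a scaling convention is the correct one.
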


\begin{proof}
	This is immediate from \cite[Corollary 4.6]{sanders2022singular_sm} using the fact that $\operatorname{Var}[cX] = c^2 \operatorname{Var}[X]$ for any real random variable $X$ and scalar $c\in \mathbb{R}$.
\end{proof}

\subsection{Inhomogeneous equilibrium distribution: \texorpdfstring{\gls{DC-BMC}}{DC-BMC}}
\label{sec:Degree_corrected_BMC}

In order to allow for an inhomogeneous equilibrium distribution we consider the following model which is inspired by the analogous degree-corrected stochastic block model for communities in graphs with inhomogeneous degrees.
Let $\numClusters\geq 1$ be a positive integer, consider a transition matrix $p\in \mathbb{R}^{\numClusters\times \numClusters}$ for an ergodic \gls{MC} on $[\numClusters]$ and equip the state-space with a group-assignment map $\sigma_n: [n] \to [\numClusters]$.
As was the case for \glspl{BMC} we define the groups $\mathcal{V}_{1},\ldots,\mathcal{V}_\numClusters$ by $\mathcal{V}_i = \{v \in [n]: \sigma_n(v) = i \}$.
Assume moreover that every group $\mathcal{V}_i$ is equipped with a probability distribution $\mu_i:\mathcal{V}_i \to [0,1]$.
Then, a \gls{MC} $X_t$ on $[n]$ is called a \gls{DC-BMC} if
\begin{align}
	\mathbb{P}(X_{t+1}=j\mid X_t = i) = p_{\sigma_n(i)\sigma_n(j)} \mu_{\sigma_n(j)}(j).
\end{align}
Recall that in a \gls{BMC} it holds that conditional on $\sigma_n(X_t) = k$ for $t>1$ the observation $X_t$ is chosen uniformly at random in the cluster $\mathcal{V}_k$.
In a \gls{DC-BMC} it instead holds that conditional on $\sigma_n(X_t) = k$ the observation $X_t$ is chosen from the cluster $\mathcal{V}_k$ according to the probability measure $\mu_i$.

Note that the usual \gls{BMC} is recovered when all $\mu_{i}$ are taken to be the uniform measures on their respective groups $\mathcal{V}_i$.
Furthermore, by taking a larger number of groups $\widetilde{\numClusters} =  M\numClusters$ one can still approximate a \gls{DC-BMC} model by a BMC-model.
This is to say that one can use the additional clusters to separate each true group $\mathcal{V}_i$ of the \gls{DC-BMC} model into $M$ subgroups $\widetilde{\mathcal{V}}_{i,1}, \cdots , \widetilde{\mathcal{V}}_{i,M}$ such that $\mathcal{\mu}_i$ is approximately constant on every $\widetilde{\mathcal{V}}_{i,j}$.

We expect that the limiting measure for $\nu_{\sqrt{n} \hat{L}}$ in a \gls{DC-BMC} is equal to the limiting measure of a \gls{BMC} with the same cluster transition matrix $p$ and the same cluster ratios $\alpha_i$ provided that $\max_{i = 1,\ldots,n} \mu_{\sigma_n(i)}(i) = \Theta(1/n)$ and $\min_{i=1,\ldots,n} \mu_{\sigma_n(i)}(i) = \Theta(1/n)$.
If this conjecture is true then the limiting measure does not depend at all on the $\mu_i$ since these do not occur in Proposition \ref{prop:Limiting_singular_value_distribution_of_the_Laplacian}.
The insensitivity to the $\mu_i$ allows to ensure that the spectral noise in $\hat{L}$ is not dominated by an inhomogeneous equilibrium distribution.
The main reason for this conjecture is that the proof of Proposition \ref{prop:Limiting_singular_value_distribution_of_the_Laplacian} implicitly relies on a universality principle of \cite{sanders2022singular_sm} which states that the limiting singular value distribution in a (sufficiently well-behaved) random matrix only depends on the variance of its entries.
We will namely subsequently argue that the variance profile of $\hat{L}$ is approximately independent of distributions $\mu_{k}$; see \eqref{eq: Step_VarLDegreeFree}.

Denote $\pi$ for the cluster equilibrium distribution of the Markov chain associated to $p$ and note that the state equilibrium distribution of a \gls{DC-BMC} is then given by $\Pi_{X,i} = \pi(\sigma_n(i)) \mu_{\sigma_n(i)}$.
Correspondingly, up to approximation errors on the order of $\sqrt{\ell}$,
\begin{align}
	\sum_{k=1}^n\hat{N}_{i,k} \approx \# \{t=1,\ldots,\ell : X_t = i\} \approx \ell \Pi_{X,i} = \ell \pi(\sigma_n(i)) \mu_{\sigma_n(i)}.
\end{align}
Therefore, by the continuity of $x\mapsto \sqrt{x}$ it may be expected that
\begin{align}
	\sqrt{\sum_{k=1}^n \hat{N}_{ik}} \approx \sqrt{\ell \pi(\sigma_n(i)) \mu_{\sigma_n(i)}(i)}\qquad\text{  and } \qquad \sqrt{\sum_{k=1}^n \hat{N}_{kj}}\approx \sqrt{\ell \pi(\sigma_n(j))\mu_{\sigma_n(j)}(j)}. \label{eq: Step_ApproxSqrt}
\end{align}
The variance of a sum of independent random variables is equal to the sum of the variances.
If we write $\hat{N}_{i,j} = \sum_{t=1}^{\ell - 1} \indicator{X_t = i, X_{t+1} = j}$ then these summands are not independent but nonetheless we do expect the variance to approximately distribute over the sum.
Therefore, it is expected that
\begin{align}
	\operatorname{Var}[\hat{N}_{i,j}] \approx (\ell-1) \operatorname{Var}[\indicator{X_t = i, X_{t+1} = j}] = (\ell -1) \pi(\sigma_n(i)) \mu_{\sigma_n(i)}(i) p_{\sigma_n(i)\sigma_n(j)} \mu_{\sigma_n(j)}(j).  \label{eq: Step_VarSplit}
\end{align}
By combining \eqref{eq: Step_ApproxSqrt} and \eqref{eq: Step_VarSplit} it follows that
\begin{align}
	\operatorname{Var}[\hat{L}_{ij}] & \approx \operatorname{Var}\biggl[\frac{\hat{N}_{ij}}{\sqrt{\ell \pi(\sigma_n(i)) \mu_{\sigma_n(i)}(i)}\sqrt{\ell \pi(\sigma_n(j)) \mu_{\sigma_n(j)}(j)}}\biggr]                                   \\
	& \approx \frac{\ell \pi(\sigma_n(i)) p_{\sigma_n(i)\sigma_n(j)}\mu_{\sigma_n(i)}(i) \mu_{\sigma_n(j)}(j)}{(\ell \pi(\sigma_n(i)) \mu_{\sigma_n(i)}(i))(\ell \pi(\sigma_n(j)) \mu_{\sigma_n(j)}(j))} \\
	& = \ell^{-1} \pi(\sigma_n(j))^{-1} p_{\sigma_n(i) \sigma_n(j)}                                                                                                                 \\
	& \approx (\lambda n^2)^{-1} \pi(\sigma_n(j))^{-1} p_{\sigma_n(i)\sigma_n(j)}\label{eq: Step_VarLDegreeFree}
\end{align}
Observe that this agrees with the variance profile which was used in Lemma \ref{lem: VarProfile}.

\subsection{Simulation experiment}
\label{sec: Simulation}

We here measure the sensitivity of the spectral noise in $\hat{L}$ and $\hat{N}$ to violations of the model assumptions in the presence of an inhomogeneous equilibrium distribution by means of a perturbation to a \gls{DC-BMC} model,  defined in Section \ref{sec:Degree_corrected_BMC}.
The experiment is done by means of a simulation.

For the \gls{DC-BMC} model we take $\numClusters = 2$ and we consider clusters of size $\#\mathcal{V}_1 = \#\mathcal{V}_2 = 1000$.
The cluster transition matrix $p$ is defined by $p_{11} = p_{22} = 0.8$ and $p_{12} = p_{21} = 0.2$.
The probability measures $\mu_i$ are found for $i=1,2$ by sampling a vector of i.i.d. exponentially distributed random variables of rate $1$ and normalizing this vector to have $L^1$-norm equal to 1.

We may further consider a perturbation of this \gls{DC-BMC}.
Let $\Delta$ be a heavy-tailed transition matrix as defined in Section \ref{sec:Other_models_for_experimentation} and denote $P_{\operatorname{perturbed}} := 0.95 P_{\operatorname{DC-BMC}} + 0.05 \Delta$.
Recall that the \gls{DC-BMC} component $P_{\operatorname{DC-BMC}}$ can be approximated with a \gls{BMC} with more groups but note that such an approximation is not possible for $\Delta$.
Consequently, we may think of the decomposition for $P_{\operatorname{perturbed}}$ as splitting the ground truth model into a main part which can be approximated with a \gls{BMC} and a second part which requires a different explanation.

In the subsequent experiment we consider observation sequences $\process{X_t}{t=1,\ldots,\ell}$ and $\process{Y_t}{t= 0,\ldots,\ell}$ with length $\ell = 2000^2$ from the \gls{DC-BMC}-model and the perturbed model respectively.
The singular value densities of the $\hat{N}$-matrix constructed from $X$ and $Y$ are displayed in Fig. \ref{fig:simulation_bulks} (a).
Also displayed in Fig. \ref{fig:simulation_bulks} is the theoretical prediction corresponding to a \gls{BMC} found by executing the clustering algorithm with $\widetilde{\numClusters} = 4$ clusters.
Recall that taking $\widetilde{\numClusters} > \numClusters$ allows for the algorithm to split the groups to ensure that $\mu_i$ is roughly constant.
We observe that the empirical densities associated to the \gls{DC-BMC}-model and the perturbed model look quite similar apart from the fact that the perturbed model has a longer tail.
The theoretical prediction associated to the \gls{BMC} further provides an acceptable match for the DC-BMC model but there is also some small part of the tail of the \gls{DC-BMC} model which escapes the theoretical prediction.
Here the issue regarding the sensitivity of $\hat{N}$ becomes apparent:
there are at least two plausible explanations why in empirical data some part of the tail may escape the support of the theoretical density.
A first explanation is the presence of a perturbation $\Delta$ which we view as a violation of the model assumptions.
A second explanation is that the ground truth is a \gls{DC-BMC} and one should take $\widetilde{\numClusters}$ to be larger.
These two explanations are difficult to distinguish from the spectral noise in $\hat{N}$.
In the current example one may argue that the amount of the tail which escapes the theoretical density is larger in the perturbed model.
Such a judgement regarding the size of the tail is however undesirable since it is vague and subjective.

The singular value density of $\hat{L}$ for the two sample paths $X_{1:\ell}$ and $Y_{1:\ell}$ is displayed in Fig. \ref{fig:simulation_bulks} (b).
Here we observe that the empirical densities of the \gls{DC-BMC} model and the perturbed model are severely different.
The theoretical prediction associated to the \gls{BMC} moreover provides a good match to the \gls{DC-BMC} model as was expected by the conjecture of Section \ref{sec:Degree_corrected_BMC}.
We conclude that the spectral noise in the Laplacian $\hat{L}$ is more sensitive to violations of the model assumptions than the spectral noise in $\hat{N}$, particularly in the presence of an inhomogeneous equilibrium distribution.

\begin{figure}[t]
	\centering
	\subfloat[]{\includegraphics[width = 0.45\textwidth]{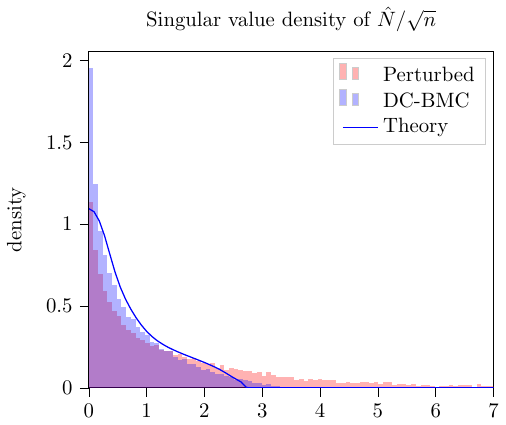}}
	\subfloat[]{\includegraphics[width = 0.45\textwidth]{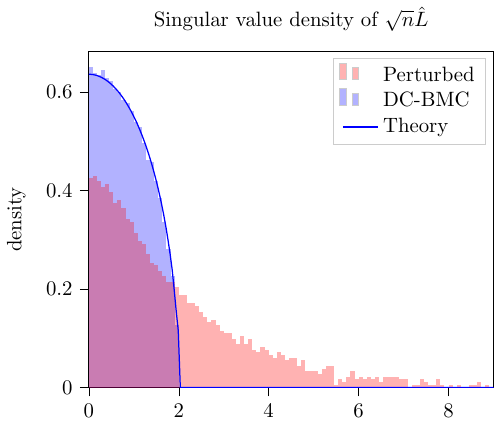}}
	\caption{
		(left)
		The singular value density of $\hat{N}$ for a simulated \gls{DC-BMC} (blue bars) as compared to the theory (blue line) and a perturbed model (red bars).
		(right)
		The singular value density of $\hat{L}$.
	}
	\label{fig:simulation_bulks}
\end{figure}

\section{Evaluation of the \texorpdfstring{\gls{CAIC}}{CAIC} criterion}\label{sec:CaicEval}

The \gls{CAIC} was used to yield estimators for the order of the data-generating process for the DNA, Animal movement and Stock market datasets, respectively. 
In order to probe how accurate these estimators are, we consider a generative model for data with similar empirical transition laws as those of the datasets. Under several perturbation levels, we study both the over- and underfit probabilities of the order, and inspect how robust the estimator is under model perturbation.

We consider the empirical transition law $\hat{\mathbb{P}}^{r, \mathrm{MLE}}$ from the original data $X_{1:\ell}$ on the full state space $[n]$.
With $\hat{\mathbb{P}}^{r, \mathrm{MLE}}$  for $r \in \{0,1\}$ we consider two perturbed data-generating models and investigate the clustered process $Y^{r}_{\varepsilon}$.
The models are:
\begin{itemize}[noitemsep]
	\item[$\mathbb{W}_{\varepsilon}^1$:] A perturbed $1$st-order \gls{BMC} with probability distribution $\hat{\mathbb{P}}^{1, \mathrm{MLE}}$ and a heavy-tailed $0$th-order perturbation. 
    In contrast to the general perturbed models described in Section \ref{sec:Other_models_for_experimentation}, the perturbation here is a $0$th-order \gls{MC}.
	\item[$\mathbb{W}_{\varepsilon}^0$:] A perturbed $0$th-order \gls{BMC} with probability distribution $\hat{\mathbb{P}}^{0, \mathrm{MLE}}$ and a heavy-tailed 1st-order perturbation.
\end{itemize}

Denote $Y^{r,\varepsilon}_{1:\ell} = \sigma_n(X^{\varepsilon}_{1:\ell})$ for the cluster process with $X^{\varepsilon}_{1:\ell} \sim \mathbb{W}_{\varepsilon}^r$ for $r\in \{0,1 \}$.
We will study the robustness of the \gls{CAIC} criterion by examining how often it over-and underfits when selecting $s \in \{0,1 \}$ for the models $\hat{\mathbb{Q}}^{s, \mathrm{MLE}}$ with the clustered sequence $Y^{r,\varepsilon}_{1:\ell}$.
The overfit error probability is the probability that the criterion selects a $1$st-order process when the underlying generating process is $\mathbb{W}_{\varepsilon}^0$: 
\begin{equation}
	e_{\mathrm{over}}(\varepsilon) := \mathbb{P}_{X^{\varepsilon}_{1:\ell} \sim \mathbb{W}_{\varepsilon}^0}(\textrm{argmin}_{r \in \{0,1\}} \mathrm{CAIC}(Y^{r,\varepsilon}_{1:\ell}) = 1),	\label{eqn:CAIC_Overfit_Selection_Error_Probability}
\end{equation}
One reason why this can occur is that the perturbation may not respect the cluster structure, which can cause the clustered process $Y_{1:\ell}^{r,\varepsilon}$ to have higher-order dependencies even if $X_{1:\ell}^\varepsilon$ is $0$th order. 
The underfit error probability is defined as
\begin{equation}
	e_{\mathrm{under}}(\varepsilon) := \mathbb{P}_{X^{\varepsilon}_{1:\ell} \sim \mathbb{W}_{\varepsilon}^1}(\textrm{argmin}_{r \in \{0,1\}} \mathrm{CAIC}(Y^{\varepsilon}_{1:\ell}) = 0),	\label{eqn:CAIC_Underfit_Selection_Error_Probability}
\end{equation}
that is, the probability we select a $0$th-order process while the actual underlying data-generating process $\mathbb{W}_{\varepsilon}^1$ is $1$st-order. 

We focus on the \gls{DNA} and \gls{SP500} datasets. 
Because the \gls{SP500} dataset is the least clear dataset, we also consider a synthetic observation sequence.
This synthetic observation sequence is generated using the same model $\mathbb{W}_{\varepsilon}^r$ as is obtained for the stock market, but will be five times as long: $5\ell$ with $\ell$ the length of original path of the \gls{SP500} dataset.
This ``extended stock market model'' gives a synthetic proxy to study the effect of sparsity on the criterion robustness \emph{as if} we have access to more data.

\begin{figure*}[t]
	\centering
	\subfloat[]{\includegraphics[height=0.26\linewidth]{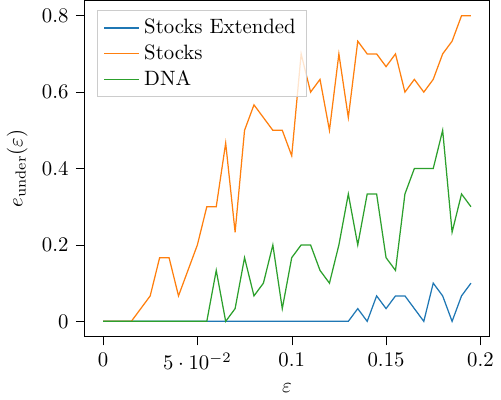}}
	\subfloat[]{\includegraphics[height=0.26\linewidth]{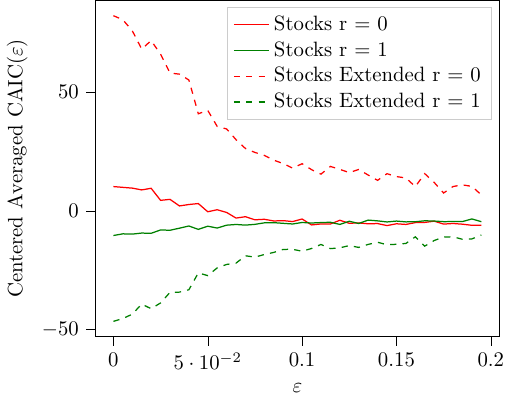}}
	\subfloat[]{\includegraphics[height=0.26\linewidth]{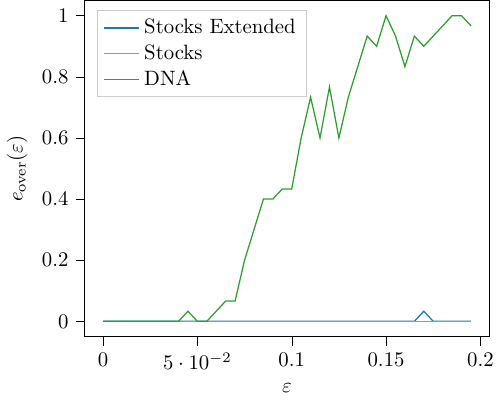}}
	\caption{
	(a) Underfit probability $e_{\mathrm{under}}(\varepsilon)$  as function of $\varepsilon$ for the \gls{DNA}, \gls{SP500}, and extended \gls{SP500} datasets assuming that the data-generating process is $\mathbb{W}^1_{\varepsilon}$. 
	(b) Centered average of \gls{CAIC} for datasets assuming the data-generating process is $\mathbb{W}^1_{\varepsilon}$.
	We remark that the empirical variance is an order of magnitude too large to be represented in the plot ($\mathrm{Var}(\mathrm{CAIC}(Y_{1:\ell}(\varepsilon)) \simeq O(10^2)$).
	Despite this very large variance, the selection process is robust for small error $\varepsilon$.
	(c)
	Overfit error probability $e_{\mathrm{over}}(\varepsilon)$ as function of $\varepsilon$ assuming the data-generating process is $\mathbb{W}^0_{\varepsilon}$.
    In all tests the number of repetitions was $R=30$.
	}
	\label{fig:CAIC_error_probability_and_value_comparison}
\end{figure*}

Fig. \ref{fig:CAIC_error_probability_and_value_comparison} displays the error probabilities and centered \gls{CAIC} values.
We see that both the underfit $e_1(\varepsilon)$ and overfit error $e_2(\varepsilon)$ are small for small $\varepsilon$.
The overfit error is, however, considerable larger for the \gls{DNA} dataset than for the \gls{SP500} dataset.
This supports the claim that the \gls{CAIC} chooses the model with fewest parameters for the same amount of information and is hence less prone to overfit when the data is sparse.
The underfit error is on the contrary small for the \gls{DNA} dataset, also for $\varepsilon \in [0.1,0.2]$.
Taken together, this all suggests that order selection via information criteria is robust to small perturbations.

The case of the \gls{SP500} dataset is especially interesting.
In the results in Table \ref{tab:CAIC_all}, the criterion chooses $r=0$ whereas in the $\mathbb{W}_{\varepsilon}^{1}$ model in Fig. \ref{fig:CAIC_error_probability_and_value_comparison}(a)--(b), the criterion selects $r=1$ up to $\varepsilon \sim 0.1$.
Afterwards, deviating from the \gls{BMC} model by just $1$ out of $10$ jumps in the \gls{SP500} dataset will make the criterion behave similarly as in Table \ref{tab:CAIC_all}.
This is also supported by Fig. \ref{fig:CAIC_error_probability_and_value_comparison}(b), where the difference between the criterion for $r=0$ and $r=1$ in the \gls{SP500} dataset takes values in $[0,10]$, which we coincidentally also see in Table \ref{tab:CAIC_all}.
This suggests that there may be a $1$st-order Markovian structure in the \gls{SP500} dataset but also a strong $0$th-order component.
Alternatively, the data may simply be too sparse for order selection.
The latter hypothesis is also supported by the synthetically extended dataset, where model selection has fewer problems.

\section{Extra tools for some of the different data sets}\label{sec:Tools}

\subsection{The cf-idf vectorization method}
\label{sec:Appendix__CDIDF}
Let $\mathcal{V}$ denote the vocabulary, which is a set of words, and fix a clustering $\sigma_n:\mathcal{V} \to [\numClusters]$.
In order to turn documents into vectors we make use of a straightforward modification of the common \emph{term frequency-inverse document frequency} document vectorization method, we refer to the modification as \emph{cluster frequency-inverse document frequency} (\gls{cf-idf}).
Let $\mathcal{D}$ denote a collection of documents.
Every document $d\in \mathcal{D}$ is here viewed as a sequence of words meaning that $d\in \prod_{t=1}^{\ell_d} \mathcal{V}$ for some $\ell_d >0$.
For every cluster $k\in [\numClusters]$ and document $d\in \mathcal{D}$ we define
\begin{align*}
	\operatorname{cf}(k,d)
	&
	:= \ln\big(1 + \#\{t =1,\ldots,\ell_d: \sigma_n(d_i) = k\} \big),                                                                                                   \\
	\operatorname{idf}(k,\mathcal{D})
	&
	= \ln\Big(\frac{\sum_{k=1}^\numClusters( 1+\sum_{d\in \mathcal{D}}\#\{t =1,\ldots,\ell_d: \sigma_n(d_i) = k\} }{1 + \sum_{d\in \mathcal{D}}\#\{t =1,\ldots,\ell_d: \sigma_n(d_i) = k\}  }\Big), \\
	\operatorname{cf-idf}(k,d)
	&
	:= \operatorname{cf}(k,d)\cdot \operatorname{idf}(k,\mathcal{D}).
\end{align*}
Observe that $\operatorname{cf-idf}(\cdot, d)$ assigns a $\numClusters$-dimensional vector to any document $d\in \mathcal{D}$.
A word which contributes multiple times in a document also yields a higher contribution of the corresponding cluster to $\operatorname{cf}(k,d)$. Finally, words which are not in the vocabulary of the clustering are not counted at all in our processing.

\subsection{Algorithm for creating a grid for animal movement data}
\label{sec:appendix_GPS_to_states}

Given a desired grid side length $x$ (in kilometers), we can calculate the (regional) latitudinal and longitudinal degree corresponding to that distance $x$ (assuming the earth to be a near perfect sphere).
Latitudinal differences amount to the same distance in kilometers.
One degree of latitude is 1/36$0$th of the earths circumference ($\SI{40075}{\kilo\meter}$) so one degree of latitude is equivalent to $\SI{110.574}{\kilo\meter}$. Accordingly, $x$ km are represented by ${x}/{110.574}$ degrees of latitude.
One degree of longitude however represents a different amount of kilometers, depending on the latitude: one degree of longitude is $|111.320 \cdot \cos_\text{dd}(\text{latitude})|$ km and $x$ km are represented by $|{x}/{111.320 \cdot \cos_\text{dd}(\text{latitude})}|$ degrees of longitude, at a specific latitude.
Here $\cos_\text{dd}$ is the cosine acting on decimal degrees. We are making use of a small angle approximation, which breaks down near the poles.
The process of creating squares and assigning the units $y_i$ to them is done in the following Algorithm \ref{alg:gps_to_gridstate}.

\begin{algorithm}
	\caption{\gls{GPS} data to transitions between squares of side length $x$ km}
	\label{alg:gps_to_gridstate}
	\hspace*{\algorithmicindent} \textbf{Input:} Grid size $x$, \gls{GPS} data $(y_{i1},y_{i2},y_{i3})_{i=1,\dots, n}$.\\
	\hspace*{\algorithmicindent} \textbf{Output:} Sequence of transitions between states $s=(s_1,\dots,s_n)$
	\begin{algorithmic}[1]
		\State $j_\mathrm{lat} = \lfloor y_{11}\cdot 110.574 / x \rfloor$
		\State $j_\mathrm{long} = \lfloor y_{12}\cdot 111.320 \cdot | \cos_\mathrm{dd}( j_\mathrm{lat} \cdot 110.574 / x ) | / x \rfloor$
		\State $S_{1}:= \big[ j_\text{lat} \cdot \frac{x}{110.574}, (j_\text{lat}+1) \cdot \frac{x}{110.574} \big)\times \big[ j_\text{long}  \cdot |\frac{x}{111.320 \cdot \cos_\text{dd}(j_\mathrm{lat}\cdot 110.574 / x)}| , (j_\text{long}+1)  \cdot |\frac{x}{111.320 \cdot \cos_\text{dd}(j_\mathrm{lat}\cdot 110.574 / x)}| \big) $
		\State $\mathcal{S} \gets \{S_1\}$
		\State $s\gets (1)$
		\For{$\mathrm{index}_\mathrm{coord} \gets 1$ to $n$}
		\For{$\mathrm{index}_\mathrm{square}\in \{1,2,\dots\#\mathcal{S}\}$ }
			\State $\mathrm{added}\gets \mathrm{False}$
 			\If{$(y_{\mathrm{index}_\mathrm{coord}1},y_{\mathrm{index}_\mathrm{coord}2}) \in S_{\mathrm{index}_\mathrm{square}}$}
				\State $s\gets (s_1,\dots , s_{\mathrm{index}_\mathrm{coord}-1},\mathrm{index}_\mathrm{square})$
				\State $\mathrm{added}\gets \mathrm{True}$
			\EndIf
			\If{$\mathrm{added}=\mathrm{False}$}
				\State $j_\mathrm{lat} = \lfloor y_{\mathrm{index}_\mathrm{coord}1}\cdot 110.574 / x \rfloor$
				\State $j_\mathrm{long} = \lfloor y_{\mathrm{index}_\mathrm{coord}2}\cdot 111.320 \cdot | \cos_\mathrm{dd}( j_\mathrm{lat} \cdot 110.574 / x) | / x \rfloor$
				\State $S_{\#\mathcal{S}}:= \big[ j_\text{lat} \cdot \frac{x}{110.574}, (j_\text{lat}+1) \cdot \frac{x}{110.574} \big)
				\times \big[ j_\text{long}  \cdot |\frac{x}{111.320 \cdot \cos_\text{dd}(j_\mathrm{lat}\cdot 110.574 / x)}| , (j_\text{long}+1)  \cdot |\frac{x}{111.320 \cdot \cos_\text{dd}(j_\mathrm{lat}\cdot 110.574 / x)}| \big) $
				\State $\mathcal{S} \gets \{S_1,\dots, S_{\#\mathcal{S}}\}$
				\State $s\gets (s_1,\dots , s_{\mathrm{index}_\mathrm{coord}-1}, \#\mathcal{S})$
			\EndIf
		\EndFor
		\EndFor
\end{algorithmic}
\end{algorithm}

The algorithm assigns all the \gls{GPS} points to squares. This procedure may not work well if the animal is moving close to the poles, because the small angle approximation is not justified anymore. This procedure is also not particularly well suited for regions where the earth is not behaving like a sphere, for example if the animal in moving on mountains.

\section{Raw data of, and extra material on, some of the datasets}
\label{sec:Appendix__Raw_data}

\subsection{Transition matrix for bison clusters}
\label{sec:appendix_bison_cluster_transition}

Below is the cluster transition matrix for the improvement clustering depicted in Fig. \ref{fig:bison_clustering}, the numbers are rounded to the second decimal place.
\setcounter{MaxMatrixCols}{15}
\[
\small\begin{pmatrix}
	0.82 & 0.05 & 0 & 0.01 & 0.02 & 0.01 & 0.01 & 0.01 & 0.02 & 0.02 & 0 & 0.01 & 0.01 & 0 & 0 \\
	0.07 & 0.76 & 0.02 & 0.03 & 0.05 & 0 & 01 & 0 & 0.01 & 0 & 0 & 0.02 & 0.02 & 0 & 0.01 \\
	0.01 & 0.05 & 0.87 & 0.03 & 0 & 0 & 0 & 0 & 0 & 0 & 0.03 & 0 & 0 & 0 & 0 \\
	0.02 & 0.06 & 0.03 & 0.85 & 0.03 & 0 & 0 & 0 & 0 & 0 & 0 & 0 & 0 & 0 & 0.01 \\
	0.04 & 0.07 & 0 & 0.02 & 0.77 & 0 & 0 & 0 & 0 & 0 & 0 & 0 & 0 & 0 & 0.1 \\
	0.04 & 0 & 0 & 0 & 0 & 0.88 & 0.02 & 0 & 0 & 0 & 0 & 0 & 0.03 & 0.02 & 0 \\
	0.03 & 0.01 & 0 & 0 & 0 & 0.01 & 0.86 & 0 & 0.01 & 0 & 0 & 0 & 0.04 & 0.03 & 0 \\
	0.06 & 0 & 0 & 0 & 0 & 0 & 0 & 0.84 & 0.07 & 0 & 0 & 0.01 & 0 & 0 & 0 \\
	0.09 & 0.01 & 0 & 0 & 0 & 0 & 0.01 & 0.06 & 0.72 & 0.01 & 0 & 0.1 & 0 & 0 & 0 \\
	0.06 & 0 & 0 & 0 & 0 & 0 & 0 & 0 & 0.01 & 0.79 & 0.1 & 0.03 & 0 & 0 & 0 \\
	0.01 & 0 & 0.04 & 0 & 0 & 0 & 0 & 0 & 0 & 0.09 & 0.83 & 0.02 & 0 & 0 & 0 \\
	0.03 & 0.06 & 0 & 0 & 0 & 0 & 0 & 0.01 & 0.1 & 0.04 & 0.02 & 0.73 & 0 & 0 & 0 \\
	0.03 & 0.06 & 0 & 0 & 0 & 0.04 & 0.05 & 0 & 0 & 0 & 0 & 0 & 0.77 & 0.05 & 0 \\
	0 & 0 & 0 & 0 & 0 & 0.05 & 0.07 & 0 & 0 & 0 & 0 & 0 & 0.14 & 0.72 & 0 \\
	0.02 & 0.03 & 0 & 0.01 & 0.31 & 0 & 0 & 0 & 0 & 0 & 0 & 0 & 0 & 0 & 0.63
\end{pmatrix}
\]

\subsection{Groups of words for improvement with 200 groups}
\label{sec:Appendix_raw_data__Words__Improvement200}

\subsubsection{Document classification datasets}
\label{sec: DocClass}

We here describe the datasets which are used to construct Table \ref{tab: perf} and report on some other datasets where our findings are inconclusive in Table \ref{tab: perf2}.

\paragraph{AG News.}
This dataset provided by \cite{zhang2015character} consists of tuples $(x,y,z)$ where $x$ is the title of a news article, $y$ is a description of the news article and $z$ is an assigned class.
There are here four possible classes which $z$ can take as values namely \emph{World}, \emph{Sports}, \emph{Business} and \emph{Sci/Tech}.
For each such class the dataset contains precisely 30 000 training samples and 1 900 testing samples.
In our processing we concatenated $x$ and $y$ into a single string and the task is to predict the class label $z$ based on this string.

\paragraph{Yahoo!.}
This dataset provided by \cite{zhang2015character} contains questions and answers from \emph{Yahoo! answers}.
The dataset consists of tuples $(x,y_1,y_2,z)$ where $x$ is a question, $y_1, y_2$ are answers to this question and $z$ is category to which the question belongs.
It can here also occur that the question has fewer than two answers in which case $y_1$ or $y_2$ is the empty string.
There are ten possible classes which $z$ can take as values namely \emph{Society \& Culture}, \emph{Science \& Mathematics}, \emph{Health}, \emph{Education \& Reference}, \emph{Computers \& Internet}, \emph{Sports}, \emph{Business \& Finance}, \emph{Entertainment \& Music}, \emph{Family \& Relationships} and \emph{Politics \& Government}.
For each such class the dataset contains precisely 140 000 training samples and 5 000 testing samples.
In our processing we concatenated $x$, $y_1$ and $y_2$ into a single string and the task is to predict the class label $z$ based on this string.

\paragraph{Wiki.}
This dataset comes from the DBPedia ontology project \cite{lehmann2015dbpedia} and the precise version used here is constructed by \cite{zhang2015character}.
The dataset consists of tuples $(x,y,z)$ where $x$ is a title of a Wikipedia page, $y$ is the abstract of the page and $z$ is the category to which the page belongs.
There are 14 possible classes which $z$ can take as values.
For each such class the dataset contains precisely 40 000 training samples and 5 000 testing samples.
In our processing we did not use the title $x$ so the task is to predict the class label $z$ based on the abstract $y$.

\paragraph{Book.}
This dataset is constructed based on books from Project Gutenberg and their genres are assigned on GoodReads, the dataset was obtained from \cite{books}.
The dataset contains tuples $(x,y,z)$ where $x$ is the tile of a book, $y$ is the full text of this book and $z$ contains a set of genres.
We only retained those data points for which $z$ is a set with a single element from one of the following six categories: \emph{cookbooks}, \emph{fantasy}, \emph{horror}, \emph{politics}, \emph{religion} or \emph{science-fiction}.
We further randomly selected 2 000 training samples which left 387 samples for testing.
In our processing we did not use the title $x$ so the task is to predict the genre $z$ based on the text $y$.

\paragraph{CMU.}
The CMU Book Summary Dataset contains plot summaries for books which are extracted from Wikipedia by \cite{bamman2013new}.
The dataset contains tuples $(x,z)$ with $x$ a plot summary and $z$ the category to which the book belongs.
We retain all datapoints whose category $z$ occurs at least 50 times which leaves us with two genres namely \emph{Fantasy} and \emph{Science-Fiction}.
We randomly select 1 138 datapoints for training which leaves us with 380 testing samples.
The task is to predict the genre $z$ based on the summary $x$.

\paragraph{20news.}
This dataset contains newsgroup postings for 20 different newsgroups which are collected by \cite{Lang95}.
The dataset is accessed using the function \texttt{fetch\_20newsgroups} from\\
\texttt{sklearn.datasets}.
The dataset contains tuples $(x,z)$ where $x$ is a message sent to the newsgroup and $z$ is the label of the newsgroup.
There are 20 possible classes which $z$ can take as values.
There are 11 314 training samples and 7 532 testing samples.
The task is to predict the newsgroup $z$ given the message $x$.

\paragraph{Spam.}
This dataset contains text messages which are either legitimate or spam, collected by \cite{gomez2006content,cormack2007feature,cormack2007spam}.
The dataset is accessed from \cite{UCI} and contains tuples $(x,z)$ where $x$ is a text message and $z$ is a label indicating if the message is spam.
The possible values for $z$ are spam or ham.
The task is to predict $z$ given the message $x$.
Unfortunately, due to a mistake, the experiment is executed without splitting the dataset in training and testing samples.
This means that the 4 179 available samples are used both during training and testing.
Splitting in testing and training would however not change the inconclusive conclusion from Table \ref{tab: perf2} and another experiment with a test-train split is not executed.

\paragraph{Reuters.}
The Reuters RCV1 corpus \cite{lewis2004rcv1} consists of a collection of news stories and was accessed using \texttt{nltk.download(\'reuters\')}.
The dataset contains tuples $(x,z)$ with $x$ a news article and $z$ the category to which it belongs.
There are 58 possible values for $z$.
There are 6 577 training samples and 2 570 testing samples.
The task is to predict the category $z$ based on the text in $x$.

\begin{table}[hbtp]
	\caption{Results for performance on document classification where neither method significantly outperformed a random clustering. }
	\label{tab: perf2}
	\centering
	\begin{tabular}{c c c c }
		\toprule
		{Algorithm} & {20news} & {Spam} & {Reuters} \\
		\midrule
		Random $K = 50$& 23.2\% &86.4\%& 65.4\%  \\
		Spectral $K = 50$& 23.0\% &86.1\%& 63.0\% \\
		Improved $K = 50$& 25.2\%& 86.1\%& 63.3\%
		\\[0.5em]
		Random $K = 100$&  31.0 \%& 86.7\%& 67.7\%  \\
		Spectral $K = 100$& 31.1\%& 86.9\%& 66.2\%  \\
		Improved $K = 100$& 33.6\%& 87.2\%& 68.9\%
		\\[0.5em]
		Random $K=200$& 38.0\%& 87.2\%& 68.4\%  \\
		Spectral $K=200$&  36.2\%& 87.0\%& 69.0\% \\
		Improved $K=200$& 40.2\%& 87.2\%& 70.8\%
		\\[0.5em]
		Random $K=400$& 44.7\%& 87.6\%& 87.8\%  \\
		Spectral $K=400$& 41.4\%& 87.7\%& 88.0\% \\
		Improved $K=400$&43.9\%& 87.7\%& 89.0\% \\
		\bottomrule
		\vspace{0pt}
	\end{tabular}
\end{table}

\subsubsection{Detected groups}\label{sec: DetectedClusters}

Here are the detected groups when using the cluster improvement algorithm:

\begin{itemize}[noitemsep]

	\item[$\mathcal{V}_{1} = $]
	{
		\tiny
		mauser, blackout, wari, yak, sprite, puff, nightlif, capitalis, vhf, shroud, athena, featurelength, workflow, fright, grasshopp, misunderstood, aeroplan, farreach, prequel, ascii, veterinarian, heyday, metalwork, timeout, nod, cavern, nf, utilitarian, chevi, ting, aphrodit, unsatisfactori, pieti, inund, heist, cl, fullfledg, autopsi, intang, deregul, hyphen, hdtv, gild, majest, nasti, discretionari, computer, ambival, invinc, ide, rt, radiohead, brainwash, slur, teaser, pl, indepth, outag, pak, rebirth, pun, barksdal, crypt, outtak, crosscultur, blaster, spit, simplist, amt, bn, bd, bikini, dea, miscarriag, reenact, makeshift, synergi, uninterrupt, surrealist, toolkit, pervas, shini, dismal, dizzi, spire, dm, lancia, hadrian, viacom, foal, hippi, bonnet, subplot, cfa, poseidon, inhuman, ecstasi, drawer, subaru, diminut, til, amiga, beggar, yoke, twinengin, redefin, stomp, giraff, elisa, preambl, servitud, ridership, thoroughbr, miser, lingua, medusa, unreal, gl, delinqu, garuda, equit, earmark, tesco, {\textsection}, nb, bogi, dod, sx, impromptu, balconi, fastbal, ingam, coerciv, adjunct, carib, embezzl, disrespect, smuggler, bitch, freestand, slipper, netscap, textual, vp, glam, highestr, falsifi, facetofac, shadi, yamaha, cradl, sceptic, londonbas, weari, utopian, sigmund, contenti, counterfeit, medley, vigilant, weakest, superimpos, fg, retel, solstic, vibrant, tapestri, martian, illustri, lander, reevalu, kneel, involuntari, jug, hl, dingo, eel, mn, wealthier, earner, affection, learnt, mermaid, tempest, nz, rhapsodi, astrophys, wiki, vaudevil, wager, leaflet, dazzl, approx, bloodsh, ode, rung, shovel, lorri, indiscrimin, proudli, xs, ata, pastim, bane, dar, unidentifi, usda, climber, idiom, sabl, gogo, purs, klan, threeday, withhold, remington, cling, shouldnt, agoni, delin, applaus, plagiar, toughest, meta, mingl, profan, tc, interdict, requiem, shutdown, participl, synth, jab, meme, misl, gaia, hightech, buoy, contriv, despis, sceneri, mimick, labyrinth, larval, fsa, panelist, tangibl, ns, geophys, simplif, geforc, selfhelp, mv, amulet, hurdl, midsiz, delimit, underag, animos, hegel, goofi, afi, preoccupi, sinner, cheetah, ict, waiver, panda, timet, ate, wand, kar, levit, foreshadow, voucher, batsmen, strabo, interscop, styliz, charisma, exploratori, hyundai, kitten, reaper, redress, tabul, terrif, vindic, warranti, hitch, manx, blink, hiro, boomerang, fantasia, hieroglyph, gloss, galley, jumper, remit, industrialis, idiot, safari, crunch, linger, wrc, macaqu, emplac, biker, xxx, retort, uh, fao, carp, endus, playground, michelin, viewership, meteor, fw, sf, toon, greenpeac, flamingo, sleepi, ordinarili, overton, fd, gamer, pluto, shill, primal, rattl, tg, kawasaki, northrop, ser, unseen, cola, backfir, tricki, confound, slick, cfr, purportedli, knuckl, erp, amish, shrew, cad, deterr, sco, ju, spaceship, godzilla, bitten, supervillain, intrud, thunderbird, boogi, mammoth, hg, tt, annot, sway, quip, splinter, videotap, vj, namepl, scarf, mf, teleport, panorama, treacher, downplay, aircrew, pedigre, brahman, loki, gmt, cordial, gal, souvenir, asiat, limp, virtuou, rebuk, barrag, unfavor, extort, exhort, kde, vr, manhunt, hiss, memorabilia, rsa, conglomer, twodoor, woodwork, expend, gauntlet, leech, acp, stallion, elv, appendix, artefact, dime, prophesi, rambler, anteced, expropri, overtur, curli, bm, richer, wellreceiv, devour, oc, expressli, glyph, gull, reemerg, swimsuit, puppi, nu, stumbl, overrid, ub, narcissist, selfdestruct, ontolog, geni, tekken, nec, mya, payabl, threequart, thug, utterli, latenight, fallout, worthwhil, cpm, superpow, swiftli, eco, jellyfish, usn, joystick, spoon, grim, gimmick, irc, clarif, figurin, caregiv, hesh, nearer, dsm, laughter, hum, slider, hannabarbera, easiest, sg, caricatur, honesti, humankind, calypso, constru, irrespect, utmost, deceit, dislodg, antic, finalis, subconsci, surfer, alterc, maze, azur, ska, packard, sari, conduc, sar, reconnect, stagnat, undead, starscream, sculpt,  , poach, psp, geopolit, seaplan, fallaci, untitl, unbalanc, sticker, fiasco, dentistri, intimaci, stormi, wc, ui, outcrop, friez, triplet, bald, vertigo, bot, courtship, superb, rl, ansi, sash, footwear, interrel, gemston, doorway, cynic, evoc, kodak, insecur, alchemi, disobey, aac, devalu, neo, cohort, longestrun, hairstyl, jtwc, stasi, ipo, bh, descriptor, oratori, atf, porcelain, foolish, briberi, vase, fireplac, nourish, electronica, institution, widest, cisco, allnew, cute, seduct, dummi, msdo, sweater, rss, witti, mallet, mazda, unfit, snare, oem, anoint, raspberri, scorn, highaltitud, debtor, habitu, peril, rockstar, reckless, sturgeon, combo, theyv, unravel, townspeopl, bf, wildli, offlin, bleak, chameleon, dogg, lsd, videogam, pon, fret, spaciou, spaniel, allround, garnet, livelihood, info, savior, kelvin, wither, arson, twa, lte, lowbudget, taker, ova, attende, lamborghini, stalem, sikorski, funerari, harem, pillow, forfeit, middleearth, embroid, cv, interdepend, underestim, lawless, seamless, sag, shrapnel, thee, sci, plummet, rollsroyc, pythagorean, olympu, expound, incens, centralis, ipcc, airtoair, rake, pantomim, queer, atv, vaniti, nsa, repercuss, asp, transpos, internship, domesday, hypnosi, encycl, lifeboat, hibern, cloak, hyena, lager, sic, unsolv, worldli, chomski, vulcan, stealth, pep, blasphemi, entrepreneuri, msc, paw, burlesqu, nuisanc, eta, backyard, dumb, freemason, clearer, dreamcast, icbm, heartbreak, lego, sewn, shun, decrypt, miscellan, feral, sarcast, toru, gunmen, omiss, lad, forefront, timbr, cdrom, dread, authorit, brink, feign, plutarch, oct, squat, gunfir, grecoroman, naiv, stagger, toast, derid, antler, apa, loophol, misdemeanor, tunic, tnt, cpr, issuanc, magnolia, fearless, preposit, nitro, hid, ua, rioter, nr, chiropract, dagger, notoc, mist, pixar, kosher, enron, encor, carnat, thirdperson, gunman, clich{\'e}, minion, unreason, palladium, ticker, overtli, republish, notif, vacanc, punit, kraft, bosch, salsa, tracker, diy, geek, adjourn, supercomput, harp, goblin, hatchback, tub, whichev, eyebrow, octagon, unspecifi, folli, bombardi, questionnair, pf, undevelop, bop, hardtop, regal, haplogroup, salesman, harrier, quak, willingli, okay, crumbl, salient, infrequ, bailout, britannica, choreographi, shit, npc, phobia, reborn, groundwork, octopu, rum, fric, airship, gr, hone, luger, humil, homo, onesid, echelon, viet, wardrob, gazel, nonn, opel, crossbord, unlicens, locust, catchi, revolution, cherish, rh, fa{\c{c}}ad, inconclus, exalt, hex, towel, mummi, hostess, prowess, monograph, centaur, mozilla, bidder, deepwat, moog, impass, lifes, partak, aristotelian, unheard, fad, gw, ornat, seren, rhino, rove, awak, motorist, chilli, underdog, downgrad, illiter, apprenticeship, gadget, californian, sportsman, makeov, dissatisfact, eman, chopper, isa, wikileak, coalesc, deepest, fullscal, tiein, wig, pathologist, melodrama, empower, centerpiec, telepath, lite, protract, deu, minigam, robber, twentytwo, jeopardi, buff, sodomi, unsur, valiant, mute, powerless, amg, stump, pygmi, volley, lavish, naa, energ, penc, afterlif, ax, dogmat, weasel, playlist, apparel, moos, ascertain, bi, oss, deliver, epitaph, , newslett, klingon, snatch, raccoon, unbroken, iliad, trivia, disillus, ol, flick, dude, twoweek, solitud, montag, uncut, howl, everlast, tl, phonograph, beforehand, rp, aol, proverb, crucifixion, audiovisu, hb, topless, sutra, overweight, retro, toad, spar, distinctli, minimalist, jerk, laps, sear, license, synthesis, goon, ntsc, arbitrarili, drape, wrought, borderlin, selector, necklac, mileag, reap, lick, wrapper, nymph, orc, peerreview, pip, destabil, hurri, courtesi, biospher, fax, reassur, surreal, therein, bra, cock, statur, handi, sentri, upsid, intermediari, sensual, iucn, publicis, acdc, tranquil, glanc, biplan, eloqu, backlash, focuss, dismount, coloss, extracurricular, widescreen, topdown, roar, technicolor, pictori, quiz, hyster, neapolitan, oneman, rebroadcast, polygami, underscor, vedanta, funki, shorthand, interdisciplinari, tamper, spelt, pdp, perch, reexamin, jingl, voc, subpoena, exhum, kettl, elektra, culprit, hallway, ital, silo, lovecraft, divest, flamenco, budgetari, fuzzi, almighti, assail, decoy, aptitud, septuagint, turntabl, impal, underwrit, prepaid, intro, leve, dice, glare, nurtur, nirvana, gunfight, readership, indigo, maniac, concoct, wow, minaret, pelt, miracul, excurs, transnat, behold, witchcraft, sleeper, mirag, mercedesbenz, fest, personifi, lastminut, daimler, stud, dada, limousin, baffl, platon, confuciu, symposium, objectori, cx, whiski, parrot, troublesom, swarm, biometr, seabird, preclud, lunat, speedi, sunglass, attic, merlin, closeup, sober, sha, resuppli, categoris, blight, maximis, rg, inr, unfamiliar, payoff, devoid, bonus, acrobat, mash, rook, nyse, utensil, coercion, soyuz, twig, champagn, enigmat, imparti, colossu, mindset, allig, detractor, preproduct, hump, deadliest, immor, skinni, tuner, irrespons, odysseu, vo, hitter, cybertron, geo, ara, backstori, sh, righteous, tester, disorgan, kingship, ipa, explanatori, thrash, selfcontain, oo, nk, annuiti, af, handson, resel, themat, electrif, twentyon, scam, superhuman, compuls, dissatisfi, clinician, onslaught, indign, disdain, sublim, shove, handler, kippur, payout, panoram, mana, habea, hedgehog, joker, pg, totem, myriad, straighten, allegor, urgenc, calf, outgo, quilt, acacia, apprais, subgenr, curtail, pol, pup, unicorn, decri, misti, leibniz, mardi, autist, galact, vibe, collag, junip, symbolis, breakout, embellish, propens, dire, shortfal, rudimentari, paralyz, meticul, riddl, machinegun, snp, fi, lupin, quorum, disparag, fetish, masquerad, monologu, oar, copul, dl, theyd, spec, highdefinit, unnot, bernoulli, lowercas, dx, slump, countercultur, barbecu, impregn, inaccess, intellect, hoop, ppp, windmil, junk, anu, catchphras, wilt, isoiec, hideout, {\textyen}, lash, curtiss, terra, tighter, someday, trebl, brawl, pandora, behindthescen, stag, pejor, existenti, debit, kc, cyclop, firefox, nvidia, grudg, epithet, fuell, mela, ancillari, wrongli, {\textdagger}, lastli, guillotin, firstgener, sadli, gtr, solemn, roadsid, ur, shopper, oneshot, finder, slew, tramp, sl, mutil, ri, linnaeu, hoist, gorgeou, esteem, boar, cider, sled, whistleblow, lowpow, handmad, surrog, electro, whereupon, peugeot, abstain, lavend, endear, instil, verizon, priestli, embroil, horsesho, tenyear, monolith, obvers, outset, limbo, feroci, capcom, cowl, shaker, wreckag, implicitli, bracelet, homeown, optimu, twentythre, tripod, inocul, goodi, uplift, rejoic, sideway, overdub, jackpot, aclu, ee, chore, permeat, sevenyear, psychiatri, tranc, immacul, codex, bsa, cbi, mono, newfound, sparrow, nightli, cessna, coca, hospic, priestess, advert, nil, monstrou, ridden, lute, anthrax, yam, unhealthi, luthor, cosmopolitan, directv, superstit, mink, catfish, captor, imax, dwindl, sprung, veda, grotto, indispens, tauru, uhf, dt, grail, parabl, gopher, bayonet, longev, enix, pepsi, honorif, barter, centauri, inquest, dy, maharishi, aton, pud, pulsar, mic, outcast, poincar{\'e}, reinvent, bk, voiceless, voiceov, pumpkin, britney, underlin, nike, zodiac, nostalg, fokker, rejuven, magneto, tabloid, standoff, fang, authorship, toddler, acl, rapist, clumsi, glossi, goliath, avi, deconstruct, rhinocero, subterranean, gpl, dlc, {\textperiodcentered}, stride, cgi, entangl, firefli, tor, humanoid, postproduct, vane, blew, bootleg, bluegrass, hawker, kickstart, flea, tortois, kr, helper, prerequisit, entrylevel, thinli, sorcer, philanthrop, pegasu, rein, hacker, jive, binocular, ovid, interlac, kin, enigma, dat, transpir, suv, punctuat, parlor, pinch, preset, alchemist, wouldb, pharmacist, piti, sentient, gangsta, spectacl, mattress, plough, melancholi, sender, adida, subchannel, stigma, watcher, valor, trespass, accru, windi, infanc, konami, passer, whiskey, billiard, interraci, hord, yeah, applaud, hera, medallion, antelop, pax, paranorm, starbuck, elf, tlc, kepler, hysteria, dusk, landbas, passov, nostalgia, snoop, tr, issuer, bl, anxiou, kite, fairchild, scifi, raisin, entrepreneurship, protestor, mahogani, womb, perl, hk, vegan, frantic, bastard, covet, turboprop, urn, corvett, stela, sentinel, agnost, winfrey, youngster, pelican, requisit, coop, odyssey, tacitu, unharm, hypocrisi, wellestablish, upanishad, bodywork, parenthes, rj, puberti, misspel, reckon, auspici, sapien, curfew, cymbal, twopart, loom, blacksmith, pancak, multilingu, egalitarian, dp, unambigu, viper, assemblag, turnaround, rn, cull, weep, peach, autobot, impati, participatori, volvo, awesom, ericsson, derail, mayhem, atrium, recuper, hen, oprah, leftov, omnibu, saab, frenzi, oedipu, circumv, marconi, xmm, unfairli, summaris, puck, reassembl, paranoid, whatsoev, intertwin, department, stargat, mx, sip, namco, unbeliev, sig, facelift, hype, relentless, chakra, jewelleri, supplementari, chime, markup, torrent, generos, shortcom, deciph, chaser, tartan, guevara, carriageway, extraordinarili, spaceflight, disengag, impetu, farc, comprehend, shipwreck, galileo, poorer, bw, execution, unmark, compliant, winemak, din, horseback, clockwis, stupa, creas, tulip, strangl, conscienti, outperform, lexu, glamor, tougher, stricter, sandal, inexperienc, com, hourli, payrol, jag, secondhand, suffic, reconfigur, diaper, groundbreak, neat, secondgener, rko, effigi, tweak, tonnag, cryptic, fetch, hsv, supergroup, affidavit, bg, cun, grumman, uncanni, unleash, disallow, jg, braveri, pervers, folio, ratchet, impedi, um, orat, mnemon, holist, disgrac, earnest, gall, ati, newt, tata, webbas, rune, hog, brandi, bandag, baccalaur, gallup, fowl, priorit, plank, ku, misfit, eo, csa, rooftop, bonfir, draught, quirki, hydra, vm, paranoia, loco, quan, reliant, wd, pitchfork, sprinkl, lakh, coda, heap, hasten, harshli, xl, materialist, miseri, orderli, mundan, sprang, dreamwork, emu, iaea, patienc, heron, twoday, lax, undoubtedli, unison, stutter, barcod, hooligan, longlast, hitherto, ass, sadist, staircas, exoner, dupont, saucer, nou, dolbi, psychopath, cellar, mele, volta, preorder, bodhisattva, pap, voltair, splash, techno, avert, gallop, fieri, char, amazoncom, smear, aura, wildfir, suitcas, discord, worshipp, gnome, dissimilar, inconveni, substant, scrambl, detour, overcrowd, mismanag, gin, treacheri, quadrupl, cambrian, extravag, cookbook, splendid, derelict, masteri, breastfeed, sixmonth, scari, stubborn, triton, domino, tempt, resuscit, standardis, mca, fn, consumm, thunderbolt, apron, cctv, diphthong, cajun, nuditi, esperanto, monochrom, decidedli, reboot, flop, commonplac, overs, nra, masturb, shank, josephu, barren, kinship, acquitt, rosari, mediocr, assort, unprotect, lien, apparit, cg, liar, multitud, gpa, factual, proprietor, typifi, rx, pimp, emphat, transgress, outdat, cauldron, esquir, astound, tug, biscuit, handicraft, rescuer, novic, ls, sacrifici, broom, extraterrestri, cinemat, primu, contemporan, lightheart, nontradit, ddt, heroism, slant, handtohand, deepen, fictiti, yamato, bale, mug, metaanalysi, papyru, incapacit, concis, rewritten, subvert, blueprint, oneoff, taglin, engulf, atc, attrit, pretens, loath, recordbreak, typewrit, bun, interceptor, selfish, shack, indec, bharat, peta, upstair, mace, louder, fy, coupon, genitalia, malic, gambit, salamand, untouch, hottest, retitl, fisherman, tyrannosauru, flamboy, handwrit, appal, matador, rendezv, solari, coproduct, sapphir, astra, reclassifi, hindustani, aoc, uri, progenitor, threemonth, apocalypt, untru, astral, rv, twoseat, stricken, scotch, bullion, improperli, heracl, outburst, tp, onehalf, hasnt, antagon, ruse, icao, neptun, rad, sparkl, nypd, mart, asa, ero, discriminatori, grotesqu, javelin, beginn, macroeconom, housew, etho, clover, ramadan, iss, invalu, disprov, afloat, typefac, rebat, worldview, freighter, gees, hermit, iaf, lame, haze, dynamit, eclect, lingeri, interlud, tout, careless, precari, discover, reinterpret, peacetim, equinox, unplug, inflight, monik, racket, oldfashion, reputedli, tack, orb, unnatur, troll, nam, interspers, burglari, digger, wallet, tame, uneth, galactica, shard, ingeni, misinterpret, apocryph, cannonbal, pn, precept, shaken, thale, compassion, iec, prologu, epistemolog, shuffl, buyout, stare, drinker, pinnacl, booklet, mta, reintroduct, dilig, selfconsci, gs, extrapol, slack, ebook, citro{\"e}n, refund, stardom, turban, gorilla, upbeat, spinner, cautiou, bellow, decca, viennes, hardcov, stave, scooter, stretcher, euthanasia, anti, recast, operat, pli, buick, remad, eb, tesla, acorn, conjur, forgeri, onetim, pleistocen, needi, resurfac, onehour, cinematographi, undetect, ama, solidifi, fabul, stat, backpack, epitom, motorola, mouthpiec, diversif, swastika, wheelchair, brightest, yom, phalanx, multidisciplinari, lore, transpond, profoundli, pd, anarchi, internation, chinook, albatross, mu, westinghous, tombston, timeless, scoop, exquisit, gunneri, funniest, contextu, midday, nighttim, pluck, {\texttimes}mm, tuck, lampoon, concur, orchid, sire, worthless, kia, tore, foreclosur, mainstay, corset, calligraphi, antidot, cryptographi, disarma, xerox, hastili, resumpt, castor, fugu, cuckoo, retribut, glorifi, apc, peg, foray, birch, lieu, introspect, mau, chequ, wreath, hitchhik, pew, spreadsheet, dropout, bulldoz, iu, abrupt, loft, lucif, oa, caption, pe, beret, uneasi, penthous, l{\"u}, supplant, dh, entic, watchdog, neanderth, americana, taint, ot, jargon, pu, reclus, pinki, eater, silhouett, nov, f{\'e}in, rampag, snapshot, gass, pri, flatter, carousel, msa, forprofit, deadlock, seclud, fiddl, brokerag, skit, dualiti, mahal, ture, derogatori, stout, odin, chai, evas, fleetwood, adulter, cyber, rehab, muppet, lex, elus, nonverb, raptur, martyrdom, aug, unjust, falun, keyston, nuanc, ordeal, headphon, chaotic, brillianc, penanc, saffron, hh, gentli, edific, blaze, plung, slipperi, lexicon, standpoint, dubiou, unintent, gunshot, forgot, ro, keynot, underdevelop, preemptiv, futil, succe, aqua, burgeon, fourwheel, reprimand, rye, eboni, asham, cadenc, append, multi, dionysu, glimps, blizzard, stripper, stylu, misconcept, {\'e}ireann, rarer, roost, unsign, hasbro, gc, shook, lust, priu, orion, megatron, enchant, rem, lg, fanfar, dike, keynesian, polem, sciencefict, eyesight, mag, irrat, var, opengl, unorthodox, ito, rampant, downturn, coerc, popularis, bohr, selfproclaim, ohm, eid, artemi, sonnet, glitter, cocktail, disassembl, outcri, jeep, trash, promiscu, hypnot, gown, layoff, reconsid, recollect, netflix, misfortun, ubuntu, melon, dinar, fukushima, tabernacl, hertz, brute, codec, d{\'a}il, kingfish, psych, renegad, infidel, parisian, elucid, picnic, paraphras, freemasonri, booti, firstperson, ak, paperwork, foo, subvers, vc, wrongdo, steak, insolv, rocker, interplay, scholast, looney, amic, rancher, cracker, tn, thaw, yearlong, colli, barbar, obelisk, scoreboard, overt, plow, loudli, vf, corona, illfat, gita, moonlight, twoway, firebal, ichigo, couch, abod, gundam, symphon, chute, lush, crate, kd, acm, sed, psa, magellan, tyrant, sow, pedest, gambler, craze, embroideri, vt, karaok, apprehens, adept, lr, botani, hindustan, cu, unexplain, transsexu, grit, yahweh, youll, orangutan, slander, valkyri, diner, alt, calv, meander, preempt, tutori, rustic, mover, polari, sloth, ponder, introductori, reshap, kierkegaard, tm, gigant, archeri, crisp, zedong, stairway, uav, skid, rariti, zeta, interestingli, checker, eschew, htc, ramayana, kiwi, standbi, glaciat, reindeer, notforprofit, causat, ecommerc, repaint, pilat, juror, anvil, abridg, sauron, trident, quad, magpi, catapult, franca, indetermin, viz, footnot, shortcut, horrif, meaningless, pinpoint, neoliber, werewolf, audiobook, accordion, tith, disclaim, wiener, backer, triumphant, biomed, thistl, showroom, curricula, manslaught, crossbow, roadster, postcard, cockroach, blackandwhit, glitch, drm, psychotherapi, cactu, klux, kanji, kiosk, npr, moratorium, breez, overdr, dentist, bystand, workout, underpin, psychoanalysi, dreadnought, taj, delphi, woe, woodpeck, syncop, ib, dialog, counterpoint, expressionist, adulteri, cohabit, overpow, lancer, dunlop, elud, abyss, adverb, retroact, guis, whenc, conflat, dab, fragranc, childbirth, handbook, firsthand, unrestrict, smoothli, disrepair, barefoot, impressionist, ici, poke, contradictori, glad, quarantin, castrat, lest, br, decepticon, phish, overtak, omen, crucifi, tangl, shapeshift, surmount, qa, ig, zenith, refit, tango, cyborg, widerang, talon, subprim, bland, bugl, chernobyl, alias, phenomen, gmc, stylish, unrealist, silli, unearth, fanbas, sling, healer, accentu, remors, ww, mania, gratitud, vignett, tantra, maraud, atheism, bac, signag, sunken, pti, corrobor, epilogu, mil, mbc, sidebysid, downsiz, kangaroo, dormant, midi, slid, hoc, kt, backstag, inton, stringent, delici, scribe, nikon, comma, pr, bhakti, spectr, rediscov, promo, pixi, helpless, hive, sociopolit, stray, apt, fanat, trapper, boxoffic, vultur, infal, mantra, subcultur, mime, desol, proactiv, fullsiz, skype, autograph, keel, overcam, idealist, fourdoor, heartbeat, mattel, pasteur, emerald, csi, hopeless, elaps, envisag, rabi, mahabharata, hug, vulgar, delus, outsourc, satisfactori, scuttl, noncommerci, boomer, maru, remiss, lm, nexu, mane, esp, psychoanalyt, ale, suffoc, pois, almanac, etiquett, transcendent, euclid, restat, bondag, tintin, morph, inaccuraci, unauthor, noncombat, falter, earthli, repaid, pda, henceforth, obliter, cadillac, swirl, dd, fledg, epoch, manoeuvr, displeas, vehicular, forese, allegori, oat, widget, maximu, miocen, aegi, fend, kabbalah, manic, skunk, gaze, piraci, chevron, stifl, shabbat, improb, clap, unconvent, np, bmg, nasdaq, sucker, apprehend, sae, molotov, withheld, poss, fsb, alevel, zebra, vip, horu, tusk, hs, messerschmitt, elk, goos, harlequin, reced, hourlong, drank, usbas, invoc, ebay, dichotomi, potion, subsum, electra, teas, cardboard, ironi, gcse, awe, wutang, envi, insofar, jackal, grappl, incest, hobbyist, sampler, vet, starship, soc, cliqu, typolog, casket, isp, bmi, cumbersom, schemat, insignific, delicaci, beech, grung, misrepres, paramed, overshadow, closet, torment, asu, laplac, stork, panason, rag, gamecub, phenomenolog, expon, policymak, mojo, esa, che, cybernet, afp, nestl, walmart, exemplari, twohour, xx, handwritten, nerd, polka, ethnograph, antisubmarin, portmanteau, notwithstand, freshli, toc, courier, referr, stewardship, kei, worldclass, foreground, disqualif, loneli, torchwood, enquiri, havoc, unpublish, unsettl, hobbit, swung, selfsuffici, beagl, gra, fab, accustom, druid, vodka, foundri, crave, abound, rubl, preschool, malici, righteou, obligatori, reimburs, condon, millennia, yesterday, hoard, centurion, sinist, junker, personif, sli, decapit, adc, voodoo, herm, waterg, kindl, attir, abelian, dowri, greed, yearbook, endgam, throwback, msn, camper, nag, cappella, ia, hardest, adjud, glamour, undo, hearth, semicircular, monopol, dew, ks, rubbl, utopia, greedi, dsp, ccc, futurist, courtroom, rr, medicaid, blacklist, telegram, lazi, kink, sinn, symbiot, hade, php, outing, nazareth, scissor, minstrel, unresolv, subsect, belliger, naughti, siren, uranu, dissuad, veer, prometheu, restitut, stateoftheart, sprinter, prerecord, cleanli, maimonid, exceedingli, overtaken, coup{\'e}, ddr, barricad, anthropomorph, ope, empathi, neuter, notebook, cessat, nullifi, ox, shortwav, suitor, bandai, scorpion, startl, richli, underwear, bae, daredevil, horsemen, tumbl, doomsday, cong, arisen, pinbal, visionari, kayak, thirst, peertop, graveyard, diacrit, oldsmobil, spaghetti, bitterli, poppi, displeasur, manli, tardi, ei,
		\par
	}

	\item[$\mathcal{V}_{2} = $]
	{
		\tiny

		cher, bj, constanc, dani, bartlett, rene, melvil, rowl, barth, ryder, stephenson, hitchcock, kendal, brahma, elisabeth, burt, polli, foss, craven, manni, kerr, berni, klau, alexandr, benoit, petri, bernstein, upton, rei, parri, maci, russ, townshend, fei, elop, baptis, gabe, melvin, osbourn, osman, sita, meng, stepmoth, dusti, ein, wheeler, beckett, elain, becki, jai, descart, slade, khrushchev, bigg, weir, kaplan, bingham, elena, pahlavi, kissing, adolph, guthri, dramatist, griev, mayfield, thornton, eliot, kang, sisterinlaw, voldemort, susi, mcgrath, cynthia, rahul, ame, xiang, chong, lazaru, mahmoud, priscilla, berg, paterson, fatima, silverman, cale, smokey, jakob, eno, sampson, cassidi, baum, elmer, partridg, tong, mildr, guan, harlan, mcbride, cabaret, mccoy, karim, serg, eastwood, reilli, gee, jacobi, croft, neumann, sgt, joann, meg, reggi, zane, bongo, cullen, vito, sax, trotski, bhatt, gilmour, nan, pam, lamont, braxton, ryu, vaughan, kobe, wesson, tyson, mcintyr, lea, hillman, capt, roth, vicki, boyl, emeri, brandt, marquess, earnhardt, zappa, olaf, sutton, kaiser, astor, nikolai, ringo, ashok, minogu, bard, stacey, mcclellan, calvert, kramer, hagen, cartwright, modi, farrel, walden, bai, ella, anand, ulyss, exchequ, sargent, kamal, fianc{\'e}, jacquelin, godfrey, custer, jacobson, brezhnev, childless, louie, madden, jawaharl, busch, humphri, jonni, anwar, donaldson, filmographi, draper, goddard, kirbi, huey, grimm, sherri, cbss, flair, nath, aerosmith, begum, lowel, milo, zhu, emmanuel, abbott, jamess, fatherinlaw, patterson, linu, seaman, deacon, woodward, shortlist, heartbroken, abram, rori, vera, horowitz, lorrain, joli, indra, mehm, reluctantli, kabir, rosenth, ste, mckenna, coleridg, boo, scarlett, aguilera, jinnah, bellami, dent, remarri, pei, lam, carla, meredith, wilbur, dodd, courtney, rishi, sanford, newsweek, ell, henrik, annul, gregg, mcgovern, lott, poe, bridget, hale, joplin, giuliani, gerhard, ramon, heidegg, biopic, serena, ulrich, herod, illegitim, tiffani, braun, dorian, concubin, ridley, dre, robbin, trombon, bundi, hanson, tun, epstein, sylvia, hooker, josef, titu, obituari, nme, dariu, vanc, miln, ty, rudolph, exclaim, sadi, thatcher, dion, hodg, prima, rajah, henrietta, davenport, dowag, dorsey, rousseau, manu, taunt, kubrick, mahmud, radha, herzog, cecilia, sigismund, tobago, hayn, maha, djokov, reza, daryl, kathleen, johan, beal, obo, valentino, pandava, armand, whitlam, arjun, orton, hui, ewe, lulu, heinz, sardar, shakira, rani, tinker, trier, pia, kern, jp, hai, reev, unita, lal, sumner, ke, bint, coltran, cabot, weston, hine, fullback, anastasia, jefferi, anita, aloud, britten, gordi, whereabout, woolf, quintet, halen, robson, picker, tilli, stafford, olivi, vikram, ek, yadav, aurangzeb, eisner, maher, puri, honeymoon, cartman, tal, woodrow, lindsey, leann, eliza, flynn, durant, netanyahu, asimov, shamrock, keenan, shaun, happili, konstantin, hendrix, namesak, hume, courtier, diva, orson, himmler, welch, becker, yate, cinderella, valeri, mae, zack, naomi, cromwel, andretti, corneliu, sidekick, sanjay, cutler, burnett, teuton, barnard, davey, vick, clapton, brigham, winthrop, khalid, gradi, handel, shapiro, norma, brodi, ritter, connor, bequeath, schneider, hammerstein, collier, samantha, marlon, whitehead, barkley, eyr, yeat, ritchi, mastermind, waugh, rosi, olson, madelein, desmond, alicia, tolstoy, bei, gail, adler, buster, daw, josiah, chandra, abigail, dmitri, shelbi, ramsay, michelangelo, harmonica, schmidt, mcgraw, liszt, wolff, lev, gideon, holliday, finch, gomez, sheldon, englishman, tung, evelyn, magdalen, cowrit, jun, howel, galen, infatu, mckenzi, priya, muller, corsair, lyricist, halfsist, lindbergh, edna, shea, goldberg, germain, streisand, theodosiu, christen, raphael, rutherford, austen, prescott, senna, maclean, alban, uncredit, hain, buckley, bianca, organist, kung, reginald, ramsey, nightingal, macfarlan, boyz, entourag, guo, ripper, sj, peyton, favr, slept, horton, landi, chun, blanch, zachari, timur, cello, offbroadway, barrymor, trey, bain, lu, gough, ping, menzi, gladston, menon, muir, barlow, nguyen, ganesha, murad, adi, cedric, bentley, ing, richter, grayson, pearc, nana, bree, cassandra, wilk, brigg, mullen, varma, helmut, aj, doherti, olli, ruskin, hubbard, moran, dicken, stonewal, nemesi, hershey, stoog, snyder, hendrick, bate, hari, napier, yin, ingram, duff, staffer, prot{\'e}g{\'e}, peck, mcdonnel, palin, sergei, nakamura, ja, jenna, hansen, lau, romano, papa, ric, slater, leonid, winger, rockwel, hollyoak, nevil, duan, albrecht, cinematograph, spector, cantor, irwin, gaiu, sweeney, hutchinson, harley, kellogg, choi, dow, nikola, stein, maureen, narayana, sylvest, spielberg, hartman, ander, arya, leah, lucil, siegfri, clemen, geffen, blackwel, tanner, jing, ayer, igor, melani, bartend, jolli, saxophonist, howe, taft, claudia, nat, picard, dobson, carmichael, monti, mulder, carver, duran, grover, flo, moodi, natalia, nathaniel, gabl, brando, kimbal, wainwright, maynard, pj, dunham, alfa, gilli, parton, tendulkar, coowner, baird, blanchard, jang, springsteen, sati, markham, miriam, berat, thierri, rous, hernandez, sharif, patsi, carolyn, anjou, ang, dyer, houghton, pauli, oppenheim, underwood, novella, nader, clarinet, jb, damian, waltz, tennant, cohn, og, mustafa, kemal, saul, beyonc{\'e}, omalley, freder, dutt, beaumont, mckinley, minh, greatgrandfath, ayr, gan, malon, oti, tao, mcpherson, rabin, donovan, huffington, agatha, gueststar, cobain, dun, rollin, pir, rae, benton, clau, kyli, karan, gaga, relent, linden, fulton, jj, marcel, cato, tutelag, salvator, orr, compton, canning, ruben, nolan, sila, mcgregor, bernhard, sinatra, chaplin, ao, hector, engel, priestley, gibbon, forsyth, mugab, mcdowel, melinda, pamela, burr, merl, ashton, lawler, virtuoso, ripley, yamamoto, chu, erni, prasad, dalton, paisley, narayan, brutu, cara, housem, arden, dil, vasili, barrow, ala, nicki, bandlead, luciu, hick, cicero, ellison, steiner, hayek, cbe, hubert, reuter, odonnel, compatriot, molest, violinist, andersen, tomlinson, footstep, famer, phoeb, obe, foreword, kuhn, pollard, eusebiu, akira, teller, iqbal, duffi, leela, katharin, kaufman, thorp, iyer, dhabi, bea, shu, ozzi, pickett, gottfri, bender, orl{\'e}an, carlyl, bono, alexi, g{\"o}ring, fisk, kean, dustin, schumann, lister, cass, oconnor, snl, donni, keegan, ail, benefactor, letterman, kamen, unmarri, darryl, bonham, syke, stefani, sham, madoff, kala, layton, konrad, dixi, yusuf, aur, erich, popper, garrett, merri, philanthropist, mansfield, acharya, rowan, brennan, luka, loretta, jeremiah, tj, cush, darrel, babu, skipper, lacey, hester, kimberli, kazan, bryce, hepburn, mercer, sinha, jovi, graf, asher, burgess, om, fielder, tudor, zelda, lori, zimmerman, greenwood, xu, ballard, terrel, addam, ballerina, putnam, rai, kobayashi, martini, fowler, wiley, brock, alec, massey, kitt, cunningham, julien, loeb, bourn, villeneuv, rubin, slain, squir, gorbachev, bhai, schwarzenegg, harald, mara, persh, romney, simeon, connolli, alf, frazier, rolf, ich, guido, bertrand, doesn, juda, metallica, waitress, foley, spade, mather, oconnel, playbyplay, mohan, abd, cena, hallow, blyth, atkin, tanya, louisa, dolli, surya, zu, yao, gertrud, mandir, rigg, yan, seinfeld, georgi, gareth, chow, inferno, ava, merton, forster, bede, brenda, annett, shakur, larsen, huang, mai, mahler, butch, stefan, skye, smiley, gale, metcalf, ezekiel, bradman, claudiu, hobb, tex, denis, plini, mcguir, dickinson, baxter, vern, mandi, edda, pavel, maximilian, keaton, rhi, chloe, coppola, lillian, liddel, khanna, amar, bachchan, flanagan, jedi, payn, mcqueen, sasha, damon, goldsmith, marian, mccormick, alain, bess, garland, accomplic, {\'e}mile, caldwel, cosbi, sheen, skinner, eduard, shakti, nair, supper, mosley, raoul, cowork, jamal, burrough, bran, soninlaw, leung, gillard, irvin, megadeth, garth, kendrick, pryor, dandi, majorgener, moffat, booker, derrida, hadley, sheppard, marlow, behest, mariann, nawab, ajay, tweed, beatric, laurent, yd, bhutto, mcgee, phylli, guggenheim, ravi, dirk, curt, wittgenstein, mage, gogh, bliss, allman, prem, derrick, unbeknownst, keller, ching, antoinett, keat, kart, epistl, crockett, ellington, taco, luciano, siva, trudeau, mabel, reuben, boyer, socialit, axel, terenc, hayley, hanna, saviour, eastend, exhusband, holt, barr, baro, babe, dong, larkin, mehta, walton, playmat, scroog, dem, beckham, stravinski, conway, hussain, minni, foreman, peng, matti, godfath, chaucer, rashid, warden, shin, crowley, brewster, messi, fitch, andrei, olsen, knox, simm, frankenstein, barbi, montagu, maa, clayton, lilli, godwin, rees, rosemari, galloway, sweetheart, royc, mori, editorinchief, darci, salim, payton, deborah, kingsley, zia, sharma, brabham, hooper, infuri, ida, flirt, oneal, gillett, unborn, gonzalez, dietrich, nur, {\'o}, mors, finley, sal, waller, higgin, bandmat, foucault, archduk, chand, osullivan, genghi, reddi, roxi, coe, hark, emil, sadler, duma, paddi, edith, slay, addison, jessi, mandela, angi, didn, theresa, leno, duli, devin, clarkson, gerard, perez, jericho, brent, quinci, strauss, davidson, nilsson, rawl, sheila, nico, ignatiu, nelli, cochran, yuri, musa, yoko, clifford, mackenzi, lola, friedman, johansson, mogul, bose, peacock, gemma, tiberiu, hewitt, mariu, housekeep, haig, jona, hess, mirza, moe, eminem, christensen, tobi, gong, larson, prost, lyle, ike, mahesh, vaughn, luc, drummond, lawson, bloch, yong, eastman, hearst, mansel, kwan, oreilli, brisco, gerrard, frenchman, everett, ariel, kathryn, bauer, sexiest, hemingway, octavian, hilari, zheng, publicist, yun, stewi, manson, ghulam, hoffman, padma, mbe, demetriu, qc, erasmu, graem, wordsworth, viscount, ursula, shackleton, selena, berger, vettel, maud, bogart, boa, barnaba, garfield, abbi, grandmast, murdoch, swann, mcconnel, costello, rascal, tyron, middleton, dina, policeman, kat, berman, tak, sandman, ying, gaston, puja, gina, tess, mobster, hoffmann, caleb, schwartz, cassi, oakley, opin, ji, sculli, manfr, beatti, cheng, dyke, lana, liang, eaton, greer, faraday, spitzer, bowman, mei, timberlak, clifton, arti, hoyt, jensen, schulz, chappel, spock, audrey, bullock, titular, trajan, roberta, buffett, dewey, rana, paig, salman, hathaway, lai, erwin, whitak, tristan, salomon, damien, shepard, crouch, arun, merril, virgil, marcia, ezra, gifford, bismarck, shearer, rosen, wynn, arjuna, alumnu, aubrey, beau, colbi, goeth, patton, mackay, aziz, cleopatra, angu, sutcliff, stringer, stevenson, mukherje, oswald, dai, debra, vijay, eug{\`e}n, edi, zoe, ohara, bey, mayer, gama, goodwin, heali, ramakrishna, wren, ness, schultz, faust, spenc, betsi, dahl, olympian, aldin, abe, waldo, jare, gilmor, rowland, hopper, morley, wonderland, dane, albright, stoner, camil, wendel, bene, traver, vinci, fabian, parvati, wilkin, bett, hu, penelop, ahmadinejad, russo, mendelssohn, goebbel, burnham, loren, coolidg, lehman, fleme, distraught, brecht, mein, xiao, maguir, abi, osama, zach, tweet, milli, hindenburg, williamson, nero, lear, josephin, alvin, newel, munro, dominiqu, jock, joachim, bloomberg, warhol, joanna, tagor, kidd, sabrina, saunder, watkin, lowri, tammi, baudelair, nietzsch, hayden, seward, ada, rohan, paleontologist, sherwood, cobb, kai, walther, kara, meek, erica, rudi, holloway, yogi, f{\"u}r, atkinson, henley, yvonn, lara, fiona, deng, haydn, springer, js, merritt, veronica, adel, lerner, joshi, hawthorn, sima, theo, lew, mia, kri, charley, rajiv, samson, juliet, archangel, hayward, meteorologist, weiss, sloan, eileen, alistair, randal, shan, farley, dimitri, hasan, pott, twain, wyatt, mariah, matthia, conqueror, roe, weinstein, hilda, ginsberg, colbert, bakr, heidi, gage, nicholson, overhear, heller, grandpar, marguerit, nugent, jonah, woo, cori, wee, cassel, poirot, esther, alam, willard, durga, hurley, putin, cheney, pollock, leroy, singleton, clint, altman, behead, provost, eulog, mira, norri, raju, dreamer, lakshmi, mir, daley, evangelist, libretto, mitt, betroth, hanuman, barton, maharaj, regina, webber, jasmin, roach, jokingli, katz, nikita, erin, bahadur, mathew, xfile, faber, hippo, lennox, rommel, steinberg, suleiman, jarvi, martel, shen, cavendish, kristen, kaishek, polk, b{\'e}la, dev, angrili, cho, pooja, viola, bert, marjori, mott, richi, cj, nora, gladi, henson, jayz, simmon, tanaka, dali, anil, wr, gunn, firth, blackston, rep, jasper, capo, cathi, nate, emanuel, gamespot, faulkner, chopin, hodgson, speer, darl, hazrat, alison, grossman, carlton, tchaikovski, mosh, cornerback, shanti, zhao, tarzan, leland, pandit, iain, cheung, hazel, jagger, patel, staci, wharton, groen, viktor, yve, monet, mack, wanda, ogden, schubert, elijah, harriet, chung, garri, keyn, cari, libbi, macbeth, aga, cheryl, lieberman, phelp, vivian, fairbank, talbot, abel, apologis, lucia, peirc, alexei, mclean, kimmel, smyth, copeland, coward, lar, tha, bradshaw, roommat, savil, lena, cowel, dempsey, earp, wilkinson, def, chopra, marr, averi, carmin, ethel, hon, jude, ledger, hotspur, kristin, hua, daphn, tobia, tian, stow, suzi, jiang, templar, mortim, sutherland, bariton, bergman, sai, hatfield, hartley, janic, gao, denton, bradburi, morrow, shivaji, jameson, carli, wilcox, amelia, ricci, mcnamara, pell, conn, ono, arlen, ismail, himach, ej, tam, morrissey, feng, housewif, pundit, exwif, heme, sahib, matilda, gallagh, scorses, cha, jani, harrington, brig, decker, yeltsin, lauri, percussionist, haa, townsend, olga, soloist, ist, sach, gershwin, grime, gardin, sawyer, malik, lim, bret, jodi, hamid, gotti, dori, thom, dreyfu, francesca, amir, waiter, barbarossa, bunt, cindi, keyboardist, lizzi, tracey, siegel, cy, marta, sabin, botanist, amadeu, macleod, orwel, stanton, grandchildren, xv, kahn, kelley, easton, felic, frasier, macmillan, gile, nikki, wasn, nanni, conni, umar, m{\"u}ller, hahn, dickson, barron, shankar, gopal, jimi, estrang, sonia, rankin, elliot, jarrett, headmast, amo, fonda, lamar, megan, middleag, novak, ambros, vinni, marley, nehru, jeanbaptist, deathb, prodigi, maxi, getti, dyson, macpherson, cocreat, druri, stepfath, swanson, harman, psycho, shelton, pasha, greenberg, dawkin, ganesh, tycoon, mona, henchmen, douglass, rasmussen, herbi, judd, countess, diaz, whistler, ling, comedydrama, yue, msnbc, dillon, gillian, huxley, ren, ree, haley, duval, mccall, bunni, jahan, fallon, bowen, rusti, qi, iren, horrifi, jung, rudd, codi, goldstein, minaj, natasha, bartholomew, kemp, dora, whitman, biden, swore, asha, bragg, knowl, radcliff, xian, lydia, granni, mimi, wen, dunbar, jermain, rosenberg, hem, darbi, archibald, prakash, calhoun, fay, uthman, elia, blain, regi, dumont, nadia, gretzki, mckay, enoch, petra, edmond, isaiah, congratul, winslow, aquitain, mister, burrel, clanci, rooney, carlson, clare, jax, sammi, gillespi, chr{\'e}tien, rufu, gwen, deva, erickson, faisal, vader, kipl, jc, gleason, banjo, amr, billionair, suzann, yew, jeann, bower, hutton, graci, dole, weinberg, aquina, barnett, corbett, mandolin, scotti, glover, saraswati, lenni, feldman, earldom, falk, corey, heiress, gavin, mclaughlin, pai, marlen, xviii, huston, rihanna, werner, mitch, fran,
		\par
	}

	\item[$\mathcal{V}_{3} = $]
	{
		\tiny

		predic, ellipt, ganglion, gpu, stockpil, taller, asynchron, edema, unpleas, circumfer, ultrasound, placenta, interlock, nanotechnolog, highqual, planar, gestat, blocker, pentium, glide, myocardi, disinfect, modulu, polygon, gnu, fingerprint, plutonium, lubric, conspicu, overdos, uptak, gaussian, vastli, seismic, sedimentari, decompress, truncat, conif, agonist, plum, rotari, xy, cleaner, blister, sq, silt, deterg, alkaloid, broth, jelli, boson, minimis, lettuc, circuitri, fasten, norepinephrin, leach, conduit, barley, subspac, aft, herbivor, testicl, sinu, mismatch, carbohydr, phosphat, lymphocyt, sewag, paralysi, coolant, modular, dementia, hivaid, actin, monom, perpendicular, tangent, longitudin, asphalt, trough, dung, lid, superconduct, fractal, swollen, phosphoru, blackberri, cantilev, disconnect, enamel, discomfort, foam, hydrid, lifecycl, parametr, cretac, smelt, mustard, benzodiazepin, valuat, heater, nicotin, sync, entropi, seam, igneou, avian, airspe, retrofit, coli, shrink, slit, pdf, refract, electromechan, hydroxyl, unintend, starch, longrang, uneven, pineappl, phosphor, tray, quantifi, permeabl, incis, basalt, facet, pancrea, {\fontsize{6pt}{6pt}\selectfont \ensuremath{\leq}}, phenol, thruster, graft, lisp, permut, biomass, silic, unman, neurosci, carburetor, incendiari, nonhuman, mussel, inward, recharg, lowcost, venou, pediatr, trajectori, ct, inlet, micro, taxabl, dilat, compressor, cytokin, pvc, masonri, shoal, optimum, grower, highend, lagrangian, granul, predetermin, cranial, inerti, neurotransmitt, inhibitori, alkyl, bowel, gum, pesticid, nitrat, pickl, stamina, headlight, eukaryot, fascia, cytoplasm, verif, porou, pendulum, lupu, retent, hubbl, lumbar, {\ensuremath{\delta}}, creep, halogen, torsion, recombin, evergreen, adhes, nt, knit, meteorit, methamphetamin, selfesteem, steroid, stellar, harden, coke, subsist, unicod, {\ensuremath{\varphi}}, voip, cellulos, congenit, sweeten, covent, taxonomi, abdomen, manur, fructos, measl, depreci, rodent, firmwar, gsm, pancreat, burner, herbal, yaw, aftermarket, improp, pacemak, hue, handheld, hallucin, intracellular, cmo, alga, slug, mucu, crank, cramp, inflow, megawatt, submachin, hemp, cereal, prognosi, refractori, zoom, subsurfac, insomnia, adob, reflector, cholera, snout, vortex, xenon, platelet, woven, penicillin, fermi, fungi, rust, muddi, graze, polymeras, duplex, evenli, garbag, fungu, tau, horsepow, forehead, tradeoff, sine, debug, flammabl, amnesia, buckl, decidu, nonzero, convect, clog, sulphur, prefront, contraind, finer, syring, peat, atrophi, incub, reload, baggag, auditori, spectromet, lactat, mucosa, sulfid, inorgan, calori, parabol, aircondit, hepat, bruis, decomposit, metamorph, spong, khz, pollin, notch, lichen, mtdna, scalp, null, peni, bleach, lifespan, uv, scarciti, wastewat, filesystem, latenc, thicker, countermeasur, phenotyp, foliag, router, rippl, trivial, soak, psychot, braid, itch, kinas, neonat, ipv, canin, morphin, constrict, amp, bloodstream, airbag, phonolog, inductor, deflat, primordi, macroscop, pastur, matric, proportion, lumber, yeast, pv, spleen, rot, thorac, undul, reddish, gait, microprocessor, dorsal, acet, infus, locu, ballast, tuna, benign, analges, k{\"o}ppen, multiplex, analogu, hotter, gravi, nozzl, tomographi, loosen, morbid, coval, malwar, anomal, quotient, ultrason, hilbert, regimen, pollen, resin, insensit, wifi, workstat, appetit, onboard, beryllium, disson, precaut, infest, inert, builtin, logarithm, grind, eeg, leukemia, reusabl, millet, camshaft, inhal, lighten, pellet, perfor, phylogenet, encapsul, manmad, withstand, cdma, nausea, flex, sap, smallscal, snowfal, covari, tentacl, asymmetri, pandem, crosssect, coagul, vagin, cinnamon, mole, ach, chill, bottleneck, firewal, boolean, pcr, tread, antidepress, firstord, tecton, formaldehyd, lithium, hn, maiz, refil, acetylcholin, taper, subtyp, perenni, chimpanze, inexpens, psychosi, siphon, refresh, highperform, ditch, anesthesia, particul, breadth, sharpen, microbi, ribosom, litter, strawberri, hf, scalar, url, lifethreaten, ounc, acryl, bait, centrifug, quench, perfum, chimney, pci, baselin, powerpc, dilut, stabilis, epsilon, inertia, boni, kombat, fetal, unequ, shutter, plumb, nucleotid, rangefind, ammonia, magnesium, neutrino, cough, unsaf, razor, filtrat, protrud, pulley, rectangl, dissect, boron, cylindr, milder, retina, thunderstorm, endogen, welldefin, waterproof, mening, addon, overload, eyelid, salin, pore, quadrat, reentri, halflif, traction, undesir, drawback, latex, vertebra, socket, scuba, diod, opaqu, herbicid, graphit, cervic, starvat, bulki, sedat, isom, ef, cn, aspirin, carcinogen, reclam, apoptosi, carnivor, stainless, crankshaft, cholesterol, beak, euclidean, fuze, ovul, twostrok, wedg, geotherm, fluoresc, polymer, waist, mediums, sutur, testosteron, proteas, spp, stool, helix, tremor, oven, backbon, aldehyd, glutam, gearbox, gaseou, gastrointestin, apertur, amphibian, metabolit, audibl, flake, calculu, fiberglass, tint, tether, unsuit, bodili, adren, prune, mould, prenat, crosslink, brine, serotonin, assay, accret, deactiv, pharmacolog, chalk, turbul, elicit, wafer, xml, plankton, euler, termit, antimicrobi, prokaryot, theta, otter, diarrhea, tonic, heterogen, ligament, allel, {\ensuremath{\sigma}}, chromatographi, valenc, headlamp, oyster, ether, mgkg, ioniz, bedrock, taxonom, fibrosi, mammalian, prosthet, crt, noisi, shortest, farmland, parallax, electrochem, gel, ejacul, buoyanc, bmp, seawe, doppler, pelvic, axial, tensil, welldevelop, allergi, arabl, lumen, telephoni, flap, dredg, intox, amd, scaffold, plume, atm, deforest, lc, dipol, tungsten, excis, monoton, hydroelectr, scanner, tightli, rash, bladder, nebula, applianc, seafood, gut, css, plumag, viscos, clariti, {\ensuremath{\lambda}}, bulg, anemia, deceler, antioxid, citru, dehydr, purifi, scrub, amplitud, kbit, chipset, homolog, gunpowd, octav, lessen, costeffect, asymptot, millisecond, footprint, hypothet, cumul, amphetamin, pariet, extinguish, resistor, supercharg, spectral, binomi, opioid, dashboard, viabil, vesicl, benzen, lcd, landfil, buildup, aerosol, barb, allerg, pneumat, spectra, urinari, grassland, tannin, numb, inflect, macrophag, capacitor, cleav, emuls, tcp, mitochondri, monoxid, photosynthesi, placebo, topographi, counteract, intermitt, fission, petal, etiolog, tyrosin, beet, cathet, lug, ghz, inciner, muscular, excret, toxin, ampl, subunit, syphili, tick, damper, safer, duct, endocrin, sweat, steril, epidemiolog, glue, embryon, leakag, mesh, transluc, dosag, methanol, pear, attenu, palett, convolut, latent, semiautomat, shunt, cleavag, fluorid, reactant, continuum, obliqu, hamiltonian, wearer, ht, ellips, errat, ev, hydrolog, dough, centimetr, {\textmu}m, asthma, biopsi, inflammatori, electrolyt, amplif, spoiler, sausag, regen, interperson, cobalt, timer, lath, diffract, lowlevel, fece, highpressur, subduct, washer, impart, munit, handset, autoimmun, powertrain, plywood, vagina, salti, malnutrit, halid, asbesto, microorgan, submerg, relativist, necrosi, debilit, hygien, throughput, manganes, unload, magnifi, smallpox, perturb, lymphoma, iq, avion, {\ensuremath{\pi}}, saliva, pasta, probabilist, schr{\"o}dinger, recurs, methan, hairi, sac, virul, slab, outpati, mangrov, glaze, retard, tactil, tonal, ventral, smoother, pastri, waveform, thermomet, ultraviolet, hotspot, transmembran, cuff, unix, fungal, kwh, triangular, scrape, fixat, anaerob, walnut, isomorph, charcoal, seawat, legum, movabl, exacerb, toplevel, cocoa, contour, appendag, breech, vomit, qualit, pavement, stew, splice, realworld, agil, clad, drier, dataset, cortic, pars, epithelium, seab, impur, immatur, serum, dim, shred, atyp, hydrat, hydropow, gust, thyroid, stochast, urea, ovarian, uteru, medial, endotheli, bacterium, damp, decompos, prism, synaps, primer, thirdparti, mri, scalabl, selfpropel, ganglia, javascript, lorentz, brood, usabl, airflow, coars, sticki, orthogon, germ, polyethylen, opportunist, ester, gastric, yearround, hydrolysi, calibr, reset, zip, ionic, convex, slender, chew, migrain, pebbl, curvatur, groundwat, snack, peptid, rainwat, outweigh, potent, contigu, nmr, harmless, cadmium, brittl, modem, postag, nectar, clade, anal, follicl, generalpurpos, stove, vend, cathod, yogurt, contracept, capacit, ozon, kappa, indistinguish, milki, neon, cation, thicken, theropod, solidst, uterin, howitz, arthriti, electrostat, nylon, {\ensuremath{\varepsilon}}, dissoci, localis, anatom, celsiu, softer, thinner, warhead, hash, dendrit, rectifi, tubular, dimer, ulcer, anesthet, viscou, linearli, highenergi, smoker, hemoglobin, lobster, warmth, carbid, biodivers, occlus, vanilla, diaphragm, {\ensuremath{\alpha}}, microscopi, tar, anod, hover, sideeffect, seedl, brighter, thereof, aircool, cdc, heaviest, topograph, vitro, folder, innerv, ripen, reagent, ailment, eucalyptu, cornea, supernova, illicit, immunolog, quark, nuclei, queue, edibl, mildli, quantiz, concav, innat, bipolar, determinist, cigar, onsit, aromat, smartphon, polymorph, titanium, slr, actuat, interconnect, mixer, outward, centimet, pylon, dohc, zx, resili, riemann, hydro, loader, asexu, lamin, windshield, floral, weakli, latch, emitt, garlic, peroxid, tricycl, ovari, lng, antipsychot, intric, subfamili, compost, bile, bulb, abras, mitochondria, redirect, synapt, tandem, histolog, ventricular, gelatin, olfactori, hippocampu, histon, mpa, transvers, cabbag, alkalin, flang, filament, shrimp, polio, torso, ammonium, condition, {\ensuremath{\beta}}, raft, unaffect, nematod, asymmetr, laundri, swine, denser, planck, cartesian, infinitesim, mach, endpoint, intraven, necessit, ccd, cushion, helium, mimic, exce, maneuver, arthropod, readabl, cyanid, booster, chloroplast, aneurysm, saltwat, chemotherapi, unpredict, petrochem, cutoff, mandibl, carcass, alveolar, fixedw, ppm, microb, fore, fig, kv, hyperbol, increment, {\ensuremath{\theta}}, transistor, suction, situ, perceptu, tumour, impract, carboxyl, truss, {\ensuremath{\mu}}m, fume, fertilis, spacetim, semiarid, ineffici, etch, incandesc, squeez, alluvi, interoper, buttock, motil, runoff, aquif, brightli, pelvi, sew, tighten, lag, greas, lymph, nucleophil, binder, projector, odor, conic, vinegar, aroma, dirac, enzymat, colorless, projectil, bhp, darken, poultri, drip, embryo, bromid, ligand, shrub, ieee, symptomat, scent, pcb, aortic, germin, biodiesel, eigenvalu, manpow, chromat, lh, fern, millimet, collater, funnel, homemad, toxicolog, malign, crystallin, mango, {\ensuremath{\pm}}, brows, irrevers, biochem, migratori, spindl, coaxial, disproportion, semen, distal, moist, constrain, predatori, fibrou, frontal, loaf, syntact, respir, luggag, strata, mosquito, hierarch, warmer, immobil, elast, sturdi, camouflag, kiln, forearm, invertebr, offroad, iodin, lexic, hydrophob, spectroscopi, rom, chlorin, spheric, dimorph, feeder, byproduct, thigh, schema, psi, mbit, faulti, cytochrom, orgasm, isoform, autosom, biotechnolog, highpow, chromium, sprout, dopamin, infarct, basal, flatten, scsi, vat, mpeg, cucumb, rub, lumin, irradi, markov, aberr, insolubl, aquacultur, granular, plasmid, coronari, nomenclatur, ripe, fissur, wheelbas, foodstuff, biofuel, cleanup, pathophysiolog, handgun, apic, relaps, planetari, corneal, sulfat, sheath, hexagon, quadrant, halv, cyclic, menstrual, cataract, sonar, influenza, yarn, conveyor, shingl, malfunct, yellowish, barium, inlin, jpeg, methyl, genotyp, luminos, nonstandard, bayesian, carbonyl, outflow, androgen, glu, overh, loudspeak, lipid, antisoci, poisson, suppressor, {\^a}, shellfish, condom, replenish, fetu, ipad, clot, postur, fourier, superfici, bsd, savanna, lengthen, rudder, lump, carrot, recoil, magnif, encas, catalyt, iodid, taxa, fluorin, preferenti, suck, hamstr, avers, massproduc, hydroxid, weed, hz, capillari, pallet, sanitari, lattic, epitheli, soybean, markedli, anion, ic, imbal, carcinoma, ecm, overlay, carbin, cryptograph, helic, caffein, microwav, alkali, sore, tab, higg, heurist, purif, clamp, plaster, lightli, transloc, http, anomali, twodimension, extracellular, estrogen, malt, linen, geodes, workload, mainfram, indent, firepow, threedimension, scaveng, touchscreen, collagen, pariti, radiant, antiinflammatori, cryogen, hose, kerosen, subtract, nucleic, squid, clam, subclass, nitric, opensourc, pituitari, insecticid, flu, motherboard, macintosh, metadata, ubiquit, {\ensuremath{\gamma}}, glycol, palsi, abstin, canopi, retrograd, colder, affix, airfram, vivo, sickl, dehydrogenas, chunk, potenc, html, predictor, adhd, epilepsi, rgb, canist, phylogeni, linkag, cassava, radial, hardwood, nostril, hemorrhag, ruptur, biosynthesi, aqueou, nodul, flare, pounder, dn, metallurgi, volt, cipher, biochemistri, mrna, complementari, {\ensuremath{\omega}}, crustacean, nippl, solder, wingspan, strut, microbiolog, almond, intrus, vapour, extratrop, magma, mite, spore, fahrenheit, diurnal, soy, forag, uncontrol, cleft, glacial, vertex, disloc, lemur, marrow, bluetooth, photovolta, unwant, transfus, droplet, sludg, retin, fp, sewer, molar, cartilag, sql, galvan, delic, agar, instantan, powerpl, molten, sunflow, subsystem, palat, thorium, untreat, quicker, punctur, transient, uniformli, spici, soften, soda, gui, ethylen, polyest, vascular, {\ensuremath{\mu}}, axon, petrol, somat, superstructur, functor, precess, turbocharg, hydrocarbon, syrup, oneway, sugarcan, tendon, knob, dryer, furnac, hyperact, kb, arid, benchmark, streamlin, pouch, skew, minu, runtim, stationari, macro, fourcylind, apex, cystic, phosphoryl, overflow, infertil, dope, sclerosi, chiral, interstellar, sequenti, tuber, hing, filler, repositori, undersid, ethyl, lambda, transduc, sn, nocturn, nanoparticl, skelet, rearrang, amorph, spam, keyword, inactiv, ethernet, caterpillar, bog, longitud, massag, bio, snail, throttl, ventricl, radiolog, elong, quartz, malform, aerob, weaponri, hypertens, groin, ipod, concuss, redistribut, summat, rivet, cultivar, pheromon, takeoff, catalyz, dielectr, silica, floppi, paddl, amput, rf, mollusc, keton, cyst,
		\par
	}

	\item[$\mathcal{V}_{4} = $]
	{
		\tiny

		anchorag, uci, bremen, metropoli, holstein, terrier, sagar, airstrip, decommiss, rink, pisa, burgundi, showdown, wichita, raleigh, honolulu, playhous, hillsborough, essen, openair, yucat{\'a}n, sooner, careerhigh, cheltenham, augusta, bazaar, suntim, avon, internazional, regatta, awa, luzon, taekwondo, sw, tahiti, hereford, galatasaray, wat, punic, wyom, swindon, stirl, samoa, surrey, boardwalk, goaltend, lynx, zurich, midwest, cypress, hackney, fruition, lineman, pendleton, hampstead, pike, sinai, warwick, paralymp, britannia, lowli, tripoli, eskimo, qs, hom, vicechancellor, durham, chengdu, triest, lsu, barangay, somerset, hermitag, dakar, payperview, baja, metz, silesian, williamsburg, antigua, galway, fillmor, kochi, heathrow, patna, lauderdal, grizzli, jamestown, swat, chattanooga, equestrian, chesapeak, hilton, farmhous, headtohead, arcadia, heidelberg, genoa, sofia, suffolk, dorset, borneo, berkshir, racetrack, fk, tallinn, ghent, auditorium, northbound, utc, calai, canuck, centenari, goali, shortstop, limerick, ut, yearend, volga, granada, atol, thenc, blazer, nugget, chinatown, nxt, ipswich, geelong, southward, cologn, auckland, glastonburi, seoul, xm, galveston, thoroughfar, schleswig, ural, reenter, brunel, usl, salford, iaaf, raptor, deco, palermo, tavern, haifa, turf, infield, dresden, georgetown, reschedul, gamewin, northumbria, unbeaten, outskirt, threepoint, cairn, biennial, wolverhampton, aggi, huski, inver, bodybuild, rotunda, fordham, squash, dormitori, encamp, colspan, somm, qanta, redshirt, mcgill, bahama, nagar, caf, humber, monmouth, stoni, uruguayan, kathmandu, ff, wellesley, bathurst, cove, carlisl, tucson, antwerp, tf, upstat, centenni, everglad, disembark, northerli, myrtl, taluk, durban, kolkata, turin, midget, penang, doneg, langley, schoolboy, sacramento, straddl, yokohama, thessaloniki, indu, viaduct, niagara, backtoback, newport, prom, vermont, greenwich, postcod, aleagu, guangdong, subregion, savoy, eal, seasid, olympiad, twotim, mangalor, z{\"u}rich, nagpur, allamerica, mrt, barbari, northernmost, wta, burbank, argyl, jurass, nj, ventura, freiburg, tmobil, woke, fewest, semi, roh, kennel, madeira, staterun, asean, vicker, siberian, shawne, argonaut, hsbc, precinct, lill, nippon, verona, fresno, monterey, belgrad, jaya, guangzhou, sumatra, redskin, bhopal, redoubt, flint, varanasi, interc, tryout, showtim, norwich, heisman, bogot{\'a}, intercollegi, occident, renumb, anfield, undraft, abscbn, saratoga, snowboard, wiltshir, bois, boulder, cheyenn, dealership, starboard, ute, boutiqu, vale, claremont, tonga, canberra, shropshir, arbor, hinterland, tehran, gladiat, salem, seminol, olympia, sept, cavit, nrl, steamship, coldest, minneapoli, edmonton, turnpik, midwestern, anaheim, taipei, sesam, derri, belfast, mersey, entrant, beacon, stuttgart, salzburg, maui, fujian, ohl, ctv, jaipur, grang, savannah, irb, picket, motel, northumberland, callup, upland, rochest, pyongyang, cnbc, fedex, kiel, roadblock, nagoya, luton, colombo, bungalow, gettysburg, horsedrawn, eastwest, loyola, staten, adriat, mariana, vauxhal, powerhous, potsdam, humboldt, raceway, clemson, coimbator, embank, waterfront, caen, camden, fremantl, hillsid, bere, caledonia, ashram, sarajevo, upn, goalscor, indycar, burnley, bergen, bronx, astro, wildcat, repertori, shipyard, doncast, barnsley, hawkey, wakefield, defenceman, aaa, swansea, strikeout, wb, andalusia, hq, d{\'e}but, fulham, annapoli, storey, mma, sichuan, fjord, firstround, knoxvil, kickbox, pagoda, heineken, baden, basel, longdist, promenad, dunde, panhandl, azor, captainci, dortmund, panchayat, exet, siriu, waterford, perth, condor, croydon, pontiac, nanj, gothenburg, regularseason, mekong, sumo, byu, nit, krak{\'o}w, auburn, islet, ajax, eureka, newark, lisbon, peke, watford, simulcast, zion, himalaya, oiler, barnet, steamboat, nassau, bash, sunda, mainz, godavari, armori, hampton, fai, caledonian, nyc, rt{\'e}, dor, wilmington, soho, hobart, calgari, aspen, sorti, westchest, coeduc, doha, shutout, galile, alsac, windsor, albani, olympiaco, bordeaux, bsc, oncampu, highestgross, lucknow, airbas, westfield, rotterdam, trolley, transvaal, vanderbilt, regroup, nyu, spruce, jakarta, landfal, superson, oricon, allegheni, disus, saracen, skylin, nautic, westbound, wharf, nl, casablanca, mesa, fiesta, antarctica, hammersmith, louisvil, pacer, wadi, fuji, twentysix, pba, judo, quezon, freshmen, amphitheatr, sevilla, albuquerqu, halfhour, stepp, gloucestershir, avalanch, southampton, raffl, panathinaiko, badminton, dhaka, novi, aegean, li{\`e}g, collieri, cfl, dinamo, barclay, cheshir, bali, colchest, wessex, akron, crosscountri, toledo, blackpool, parkland, juventu, augsburg, massif, calcutta, allaround, macau, parma, midatlant, caspian, postworld, eindhoven, hanov, pune, leyt, cayman, wba, marriott, fl, airlift, peripheri, merseysid, meridian, papua, halifax, blitz, cbd, seaport, glendal, kota, cyclist, stalingrad, kingston, malibu, ballpark, outfield, purposebuilt, leipzig, worcest, expo, lufthansa, eri, guantanamo, euphrat, fir, oriol, golfer, copenhagen, barg, amherst, uptown, bison, carpathian, aba, dartmouth, gloucest, peterborough, eastbound, aero, vodafon, voivodeship, leinster, sprawl, smoki, loch, steamer, peninsular, sinojapanes, nave, dunk, caldera, palisad, concordia, secondlargest, diamondback, prep, collingwood, ghat, skier, gala, fiba, loir, wellington, kyoto, sixday, bermuda, golan, valencia, cumberland, southernmost, blackhawk, aerodrom, walkway, fremont, danzig, streetcar, tacoma, midseason, highris, spree, scarborough, oclock, omaha, highschool, timor, allireland, albion, zee, lafayett, matricul, bayer, elgin, wrexham, piccadilli, aqueduct, caraca, homecom, tasmania, caravan, bournemouth, sill, stoppag, dweller, yeshiva, transcontinent, yangtz, timeslot, unc, distilleri, sellout, weeknight, midsumm, avro, poli, whitehal, catchment, sk, cavali, saskatchewan, worcestershir, electrifi, suez, warwickshir, oslo, piedmont, middlesbrough, hilli, belmont, watersh, manitoba, gotham, shetland, antil, selangor, rodeo, hilltop, sabr, bucharest, bethlehem, stockholm, midtown, fb, tundra, aberdeen, severn, potomac, churchyard, pomerania, kany, concours, oceania, norwood, comcast, wildcard, alta, ballroom, pennant, stamford, grandstand, subdistrict, realign, hove, sevil, dockyard, clipper, overtook, firstyear, norfolk, hostel, chatham, pdc, lancast, sioux, dorm, ovat, daytona, semifinalist, baylor, ave, odessa, bastion, bandar, nowdefunct, fairfield, condominium, uninhabit, avalon, johannesburg, faro, euroleagu, benfica, beirut, guam, buckinghamshir, lazio, singleseason, shandong, alaskan, cod, portsmouth, phra, millwal, concord, nairobi, buccan, napoli, utrecht, boca, threetim, moat, jaffa, wbc, bookstor, pretoria, fairfax, eton, monorail, riversid, aurora, uppsala, jutland, spokan, bukit, grenadi, shenzhen, kimberley, lockout, greenfield, platt, greyhound, ipl, himalayan, upscal, skyscrap, shrewsburi, wineri, shorelin, slough, monza, bluff, saigon, helsinki, grassi, warmup, pyrene, unincorpor, arlington, feb, d{\"u}sseldorf, charleston, overland, waterloo, corinthian, stockton, twentyfour, mannheim, nohitt, penultim, madurai, algier, handbal, borussia, soldout, bobcat, canari, cork, yosemit, mare, coliseum, equatori, snooker, corinth, lexington, bridgehead, yukon, anzac, nw, addi, stratford, zagreb, dover, danub, isthmu, ravin, tiebreak, guernsey, chesterfield, quay, pregam, okinawa, inducte, cumbria, tuft, kandahar, winnipeg, portico, brandenburg, afb, verd, palo, chestnut, guildford, cska, northampton, wesleyan, lookout, argo, downhil, causeway, lans, natal, undisput, stockport, elm, porto, bonn, naia, concacaf, parramatta, coyot, bologna, strasbourg, knick, fargo, italia, tyne, breaker, spitfir, gma, az, badger, falkland, tvb, budapest, harlem, enfield, beaufort, threeway, anglia, devon, dynamo, canadien, oasi, triplea, christchurch, rada, tehsil, raaf, aachen, racecours, carmel, tbilisi, comanch, dayton, nottingham, sunris, rampart, bilbao, yellowston, agra, chichest, gaa, rochdal, marlborough, cougar, snowi, asiapacif, arrondiss, champ, mcc, brighton, psv, purdu, oxfordshir, inelig, constructor, disneyland, maroon, sein, fenway, rutger, pembrok, fia, bromwich, carleton, tko, roundabout, aisl, bedford, eurasia, tx, foothil, usaaf, staffordshir, redwood, riyadh, salisburi, transatlant, citadel, calder, derbyshir, darlington, huntington, deccan, busiest, rerun, tianjin, ahl, kindergarten, mexicanamerican, cheerlead, essex, gmbh, outli, harrow, trafalgar, allstat, amtrak, charlton, woodstock, elb, pasadena, pullman, amman, yunnan, professorship, scoreless, courthous, marquett, burlington, everest, hertfordshir, toulous, munster, ecoregion, kensington, ymca, sv, jfk, nant, bye, marlin, andean, cdp, bloomfield, malm{\"o}, havana, allahabad, confluenc, m{\'a}laga, tulsa, nsw, escarp, buena, sussex, circumnavig, reno, tramway, ganga, bakeri, topten, stamped, chittagong, montevideo, agglomer, standout, hartford, buckey, coventri, crossroad, clubhous, bangkok, westwood, idaho, surat, appalachian, lima, multipurpos, firstev, leiden, southbound, middlesex, nouveau, bari, glee, picturesqu, siemen, phnom, waiv, tuni, gymnasium, oldham, shire, kremlin, australasia, lago, westernmost, syracus, hanoi, cebu, lincolnshir, tyrol, paddington, tee, mercia, whaler, welterweight, northsouth, rooster, hc, trenton, penitentiari, lausann, longhorn, huddersfield, ahmedabad, tuscani, guadalcan, slum, zoolog, marseil, cafeteria, orkney, wnba, mainlin, lighthous, kilkenni, roanok, osaka, sidewalk, lancashir, macquari, ashor, rerout, mindanao, feyenoord, hurl, homestead, brunswick, trafford, sahara, alto, w{\"u}rttemberg, raze, equalis, puma, fife, ecw, plymouth, seneca, mohawk, huron, porch, oneday, stagecoach, govt, matchup, scrimmag,
		\par
	}

	\item[$\mathcal{V}_{5} = $]
	{
		\tiny

		vain, misconduct, glider, practis, solitari, traitor, vengeanc, vanish, pertain, safeguard, remembr, subdu, axiom, salut, nobodi, bump, judgement, gentl, deaf, restart, runaway, eighti, smuggl, heroic, sneak, trooper, timelin, smile, bulli, exemplifi, allud, sheer, bb, thiev, breeder, immers, bp, veto, noon, sympathi, realism, lifelong, phantom, thor, cosmolog, karat, sermon, tailor, hesit, anecdot, ch, wive, rational, liken, defi, straightforward, deadli, reincarn, perpetr, risen, creditor, curb, achil, imper, multimedia, token, vandal, penni, sb, tremend, dilemma, rhyme, terribl, spoof, bail, loot, forgiv, redempt, arbitr, lan, banish, spartan, repuls, conjectur, genealog, pill, blur, semin, donkey, ci, awaken, regret, veil, taxpay, landslid, ark, bizarr, newest, ge, exagger, pentagon, pal, hr, groom, inhous, masculin, atroc, wholesal, midst, entireti, distract, unexpectedli, purg, embarrass, graffiti, isnt, homag, undertook, revert, onscreen, rand, prop, acquit, stab, mess, summon, slayer, pre, nonexist, jade, contempt, dissemin, twothird, reallif, cannib, racist, bred, clown, probat, caution, wick, brilliant, ll, nude, evok, firework, roam, inact, messeng, gotten, weird, rm, revisit, theyr, somebodi, incap, hallmark, unwil, tori, ineffect, homeless, calm, pretend, bias, diversifi, extant, problemat, unpreced, ts, wouldnt, motto, allot, transcend, pragmat, sincer, remodel, spectacular, guilt, funni, cheat, obey, undercov, banknot, flock, notifi, quotat, somehow, vener, repent, colloqui, mayb, auster, mega, anymor, grate, dalek, nowaday, sharpli, trump, annihil, epa, amin, ware, unto, novelti, melod, atari, wield, invis, courag, saloon, atheist, onethird, decim, surprisingli, disastr, starv, swear, eager, countdown, inadvert, outrag, belov, cosmo, disclosur, ki, dividend, highprofil, agit, disapprov, promptli, talmud, await, halo, proxi, justif, inmat, onair, astonish, deduc, drunk, cp, parcel, thrill, reassign, dull, satan, feloni, plausibl, mural, ec, eccentr, espionag, sensibl, transgend, yell, twentyf, yahoo, rever, unfair, unus, wherein, connot, eras, deserv, erad, storytel, coven, couldnt, sabotag, optimist, taboo, restless, ve, ni, elderli, vow, shame, supposedli, intuit, countless, jealou, sunshin, inappropri, nurseri, sorri, rage, hamper, slaughter, pharaoh, samurai, psychic, comed, hadith, plaintiff, katrina, clue, fcc, scarc, firstli, paus, thou, excerpt, incompet, unaccept, demolit, thoroughli, hijack, tomorrow, annoy, bisexu, marketplac, faint, misunderstand, burger, {\textellipsis}, profici, erot, spotlight, gunner, indulg, cr, transcrib, secreci, stir, avatar, dear, humbl, signifi, psychedel, salon, feat, lure, secretli, insult, curios, newborn, infin, peculiar, worthi, fx, suspici, rhythmic, sexi, wherev, ascent, ka, startup, endeavour, excus, flashback, viciou, foe, propheci, chronolog, mistaken, gag, sacrif, fundrais, hello, injustic, popularli, treason, blunt, revamp, immort, stakehold, bloodi, clever, awkward, liabl, puriti, whisper, compliment, {\textsterling}m, reconcili, imperson, heighten, unpopular, pascal, zeu, oversaw, supersed, ridicul, drone, plc, skateboard, abid, mock, werent, denial, yoga, famous, seldom, socrat, slash, est, rc, disposit, em, gi, motown, ugli, stylist, disqualifi, weaker, charm, ari, metaphys, makeup, messiah, dell, es, wont, allah, commend, societ, triad, endless, metacrit, thwart, rpg, au, thief, ra, serpent, decisionmak, disintegr, mitsubishi, whoever, viewpoint, receipt, seduc, credenti, merci, cruel, subsid, indict, afflict, ambit, altogeth, occult, frighten, roleplay, ah, scrutini, spars, va, mistakenli, highlevel, ai, adulthood, workforc, bust, fanci, contractu, hadnt, arguabl, overturn, skater, exposit, broaden, foremost, ambiti, arent, dec, incorrectli, gentlemen, interf, banquet, scarlet, porn, abduct, amid, imperfect, batch, slim, acronym, obscen, bunch, soror, perish, plato, notori, everybodi, blackmail, fist, brothel, satisfact, torah, perfectli, ae, meantim, si, chariot, nc, prehistor, buffi, landlord, forthcom, amc, persona, insan, bounti, deceiv, fiat, verdict, various, amidst, evad, tran, pleasant, reappear, lucki, reintroduc, prelud, archetyp, discern, enumer, marg, hercul, rework, afraid, lefthand, loser, quietli, menac, fresco, lgbt, crippl, unnam, criterion, coma, stupid, oneself, bestow, interrog, youv, strive, gestur, weve, beneficiari, proud, holocaust, paradox, sporad, amass, neglig, postmodern, arrog, ransom, hinder, hack, devote, loud, marit, persuas, nice, inflict, bedroom, noteworthi, daddi, terrifi, compass, hungri, decept, femin, impend, adversari, suspicion, litig, destini, unreli, scare, harder, medicar, evidenc, chiefli, freak, lunch, repel, robberi, stipul, fragil, entail, amnesti, fraudul, ruthless, alik, hatr, mod, reclaim, freud, mourn, vigil, asleep, shout, reconcil, babylon, gangster, kant, importantli, rumour, therapist, demis, orphan, riski, despair, bt, shi, drunken, sad, jealousi, bowler, python, geniu, obvious, daylight, password, aforement, sub, indefinit, standalon, resent, aristotl, undergon, casual, handicap, implicit, injunct, urgent, goodby, simplic, keeper, attest, intimid, nonsens, pend, unconstitut, wipe, quarrel, rfc, guess, swimmer, jehovah, outlook, millionair, useless, tolkien, righthand, vintag, negat, hisher, disgust, tsunami, umpir, skeptic, provoc, bargain, mythic, rewrit, pivot, relic, unconsci, eyewit, rid, backdrop, atlanti, intensifi, dislik, clarifi, apocalyps, humili, dispens, interpol, noir, sacrament, erron, turnov, sp, masterpiec, aggrav, seventi, prematur, halloween, cremat, bearer, incit, parol, versatil, akin, retrospect, opium, climax, discrep, quota, willing, broker, vest, enrag, fond, haunt, samsung, vagu, magnific, psalm, grasp, digniti, ng, dracula, wane, sympathet, overli, nowher, ostens, allus, slang, ufo, homicid, formid, boycott, instantli, intim, xmen, karma, comrad, unawar, conscienc, slap, detriment, offspr, energet, astrolog, anyway, havent, asylum, repay, lucr, purana, obsolet, hardship, setback, incident, certainti, voluntarili, bold, unlaw, ar, plead, reminisc, resurg, memorandum, hereditari, deceas, hare, customari, myself, nineteen, confisc, mom, ok, cocacola, mileston, vivid, overnight, isi, fabl, tragic, obedi, autobiograph, magician, rude, dad, circumcis, narcot, min, leverag, unfold, foul, hoax, decent, preexist, ought, honest, obsess, wise, dissent, invok, drown, inscrib, postul, ex, vigor, irrelev, infiltr, ninja, nonstop, discredit, famin, brutal, imaginari, sting, lender, characteris, fascin, memo, adolesc, temptat, plea, ultra, luck, cs, underway, bodyguard, ourselv, onstag, bankrupt, ir, muse, defer, breakup, salvag, shooter, mao, relianc, refut, incarcer, salvat, lengthi, firmli, envis, rogu, redeem, slogan, shatter, ea, reluct, everywher, eleg, underworld, theoriz, succumb, misus, immin, celesti, blockbust, ancestr, notorieti, damn, bandit, re, utilis, learner, covert, nineti, costli, sage, undisclos, outright, coincident, cruelti, hedg, usher, hed, dharma, mighti, cohes, mankind, allegi, unfinish, mutant, gon, anybodi, rode, spous, yearli, pawn, gentleman, rubi, retali, furnish, caller, gypsi, sorrow, panic, absurd, finest, nonviol, inaccur, secondli, unavail, seeker, unlock, confidenti, stranger, doom, hound, forbid, wan, farewel, reel, instinct, mislead, outspoken, invalid, purport, ape, hardli, summar, yourself, shaman, stanza, oracl, sake, dare, heterosexu, lament, tempo, forcibl, tu, vegetarian, phonet, buzz, uncov, incred, outlaw, furiou, maya, musket, ego, idl, iri, pornographi, dismay, sd, delight, curiou, unlimit, deed, baptism, crook, ti, indiffer, censor, archaic, gossip, relaunch, heavenli, empow, corps, ptolemi, poetic, wellb, ascrib, abruptli, codenam, restrain, inquir, randomli, cobra, hint, intoler, shelv, cop, batsman, keen, handsom, stuff, constel, restraint, bribe, fool, mob, ador, marijuana, commodor, prejudic, endeavor, wrath, reinstat, longstand, emphasis, prestig, tb, enthusiasm, miracl, horribl, grief, bother, stun, lent, merced, coexist, feminin, stole, corpu, refrain, illus, oh, contempl, skip, umbrella, fuck, hunger, authoris, pardon, incompat, intrigu, libel, gym, forgotten, soar, audi, verg, crise, supernatur, porsch, par, aint, pray, humour, suzuki, utter, paragraph, deter, censorship, firefight, disregard, prefac, nightmar, exodu, unexpect, ko, tempera, uncomfort, pointer, forget, conspir, amen, furi, unhappi, evict, beg, ya, swap, coffin, thorough, recogniz, solicit, overhaul, hobbi, aw, spoil, reiter, infam, stunt, ta, prank, disguis,
		\par
	}

	\item[$\mathcal{V}_{6} = $]
	{
		\tiny

		cartel, guerilla, huntergather, nazism, sparta, stronghold, baptiz, cleans, sloven, planner, detaine, quo, bah{\'a}{\'\i}, jat, kurdish, unpaid, parthian, swede, orphanag, authoritarian, colonis, gestapo, extermin, moravian, militar, abolit, sicilian, unicef, annal, assyria, nepali, signatori, hama, safavid, royalist, academia, reorganis, cornish, islamabad, sabah, acced, seleucid, chaplain, xiongnu, alqaeda, yoruba, anatolia, demarc, grassroot, vehement, nagasaki, expeditionari, abolitionist, cree, epiru, aleppo, nkvd, indoaryan, plunder, kazakh, hispania, nationalis, d{\'e}tat, vicar, upbring, breakaway, ghanaian, mesopotamian, angloamerican, suffrag, anarch, bicamer, bourgeoi, goodwil, aristocraci, gentri, rajasthan, iroquoi, priori, orthodoxi, auditor, paratroop, kgb, malacca, amerindian, galician, extremist, charismat, milo{\v{s}}evi{\'c}, joseon, automak, rwandan, paraguay, curia, huguenot, xinjiang, oriya, uzbekistan, wehrmacht, chechnya, bantu, chairperson, celt, offshoot, montenegrin, dissid, zulu, hezbollah, mesoamerican, unrest, samaritan, maori, lakota, fianna, entrench, druze, eucharist, scandinavia, privatis, parliamentarian, frisian, prerog, gupta, mayan, uncondit, johor, hannib, austriahungari, gibraltar, freedmen, downfal, extradit, insular, codifi, arafat, protestant, burgundian, tasmanian, rhineland, carthag, transylvania, silla, caretak, ankara, franciscan, brethren, herodotu, palatin, guiana, truce, yiddish, confucian, mestizo, francophon, intergovernment, hegemoni, enshrin, senatori, somali, naacp, hutu, nizam, eurasian, liberia, peerag, oman, imf, antioch, politburo, staunch, kashmiri, iit, espous, peso, akkadian, arian, uzbek, bedouin, oust, algerian, banu, aeronaut, estonian, gnostic, junta, outnumb, jiujitsu, seced, auschwitz, frankish, germanspeak, guyana, namibia, conven, esoter, puritan, pinyin, unitarian, briton, overthrew, bureaucrat, dacia, basra, crackdown, precolumbian, overrul, azerbaijani, hittit, heraldri, meiji, modernis, priesthood, mutini, kuomintang, launder, outreach, seljuk, aryan, romani, transliter, meteorolog, caucu, assad, heret, tantric, jurisprud, rescind, sumerian, ascens, bohemian, claimant, barbarian, athenian, pretext, apartheid, secess, turmoil, sinhales, multiparti, privi, hakka, strife, achaemenid, weimar, lahor, bolshevik, gallic, benin, appeas, elizabethan, acadian, canaan, gentil, sizeabl, habsburg, seneg, creol, defam, carthaginian, moldavia, burgh, unilater, ombudsman, sukarno, maltes, recaptur, bohemia, burmes, masjid, ngo, regenc, capitul, edo, peacekeep, cadr, singaporean, pali, demot, tunisian, preemin, edict, monast, manchu, czechoslovak, catalonia, diplomaci, englishspeak, rhodesia, ensign, spaniard, knesset, defianc, intermarriag, schism, unoppos, subjug, hagu, andalusian, policemen, nordic, subcommitte, goth, barbado, belarusian, fledgl, consular, majesti, seafar, croat, coinag, noncommiss, iberia, malawi, khmer, sharia, haryana, servicemen, ceylon, archeolog, eunuch, selfgovern, antislaveri, subcontin, counterterror, bihar, artisan, riviera, plight, ceasefir, catalan, roug, malaya, hardlin, devout, zen, ultimatum, mistreat, odisha, dragoon, airmen, nobleman, turkmenistan, solidar, bylaw, slovenia, kabul, dalit, paraguayan, warlord, inuit, darfur, tort, imposit, zoroastrian, rector, bosnian, serf, rupe, seventhday, instig, bavaria, gop, expatri, rumbl, entrust, iconographi, niger, dynast, jacobit, excommun, taoist, unitari, babylonian, benevol, ru, lawmak, latvian, gaddafi, notari, paleolith, censur, khyber, historiographi, austrohungarian, marxism, hokkien, tyranni, postsecondari, populist, fishermen, bolster, bhutan, nuremberg, sarawak, constitution, liturg, reformist, subsaharan, cornerston, individualist, sui, semit, travancor, mamluk, somalia, fiji, taiwanes, byzantium, baroni, separatist, zambia, disarm, ravag, zanzibar, kazakhstan, hillari, multin, civilis, devolv, chalukya, leningrad, roc, seamen, mongolian, nonchristian, heresi, kali, sindh, lowincom, hajj, gubernatori, nicaragua, vilniu, quaker, grievanc, uyghur, swahili, assent, mujahideen, postcoloni, quell, prefect, wealthiest, patrician, mon, belaru, igbo, suharto, tripartit, grenada, incurs, spearhead, governorgener, saxoni, hellen, nco, hussar, silesia, overrun, charlemagn, sindhi, botswana, lombard, revit, baton, gurkha, bipartisan, nonmuslim, sabha, romantic, ssr, viceroy, gregorian, kiev, overseen, mahayana, vassal, pacifist, slav, traditionalist, shogun, manchuria, adventist, togo, ministeri, byelect, thebe, oecd, hun, uboat, leftist, overthrown, highrank, envoy, augustinian, macedon, primaci, taoism, javanes, breton, ioc, forerunn, thrace, loyalist, sworn, jihad, orthographi, planter, prehistori, dictatorship, ugandan, mali, damascu, frenchspeak, selfdefens, dogma, emissari, caucasian, wwii, m{\={a}}ori, calvinist, gloriou, reichstag, angola, bourbon, cypriot, ismaili, guatemalan, basqu, distrust, syncret, tutsi, manageri, pashtun, bangladeshi, imperialist, admiralti, judah, bjp, anglo, pacifi, indoeuropean, sami, fief, justinian, mauritiu, bishopr, prc, kurd, moro, geologist, frontlin, legat, relinquish, prewar, aceh, bosniak, tajikistan, retak, bonapart, oblast, sufi, supervisori, lobbyist, rhodesian, jain, paramilitari, outpost, impoverish, craftsmen, aztec, malayan, anc, rightw, fascism, sikhism, janata, slovak, afrikaan, navajo, choctaw, upperclass, mecca, landown, vichi, populac, gaul, saladin, bern, commanderinchief, mesoamerica, reactionari, despot, environmentalist, carolingian, jordanian, antitrust, depos, sassanid, fugit, pontif, archdioces, resettl, novgorod, spd, boycot, polynesia, flander, romanesqu, faa, appointe, fide, forbad, federalist, armada, ashkenazi, galicia, agrarian, uphold, siames, antiwar, vernacular, cloister, vanguard, enclav, industrialist, cognat, uae, hasid, bourgeoisi, bahrain, guatemala, baath, venezuelan, contra, nonpartisan, zionist, sixyear, scientolog, decentr, mep, sudanes, promulg, constabulari, disobedi, pompey, gunboat, moorish, businessmen, gujarati, encroach, fundamentalist, madagascar, shang, khanat, expuls, friar, eritrea, traine, mesopotamia, shinto, selfdetermin, mysor, papaci, balochistan, emancip, corsica, benedictin, crimea, tatar, karachi, anglosaxon, wto, insurrect, haitian, goguryeo, uttarakhand, skirmish, nicaraguan, brunei, undocu, yemeni, syriac, chechen, latino, sizabl, wight, kyrgyzstan, legitimaci, mandarin, cosponsor, solicitor, muster, piou, eurozon, secretarygener, tokugawa, nara, sephard, dravidian, rebelli, mozambiqu, inquisit, armistic, hmong, hellenist, mauritania, horticultur, florentin, cambodian, diocesan, dday, turnout, nascent, tipu, discont, boer, militarili, enslav, anatolian, naga, bolivia, maoist, bureaucraci, ascet, azad, stateown, aden, turkic, judea, venetian, flemish, surveyor, impeach, rajput, illyrian, moroccan, cham, vedic, repatri, barrist, moldova, judiciari, consecr, conservat, islamist, diaspora, latvia, jharkhand, renounc, medina, baku, sectarian, plebiscit, dal, politi, magnat, maldiv, flotilla, reunif, nepales, phoenician, abdic, pentecost, adjut, etruscan, mubarak, bolivian, candidaci, cochair, embargo, reaffirm, daytoday, anticommunist, poorest, jammu, pla, upheav, libyan, chairmanship, indentur, kmt, repar, checkpoint, siam, ghetto, cochin, goa, israelit, fatah, ratif, workingclass, bnp, maratha, overwhelmingli, counterinsurg, airspac, moot, ecuadorian, castilian, liechtenstein, crimean, repudi, mobilis, supremaci, mercantil, disciplinari, revok, unionist, presumpt, oversight, overlord, bavarian, pillag, jainism, congoles, influx, assyrian, demographi, autocrat, plo, coptic, whig, strategist, mla, affluent, peshawar, surinam, inca, shiit, catech, brahmin, sardinia, osteopath, orissa, ndp, tanzania, writ, staf, magyar, tenet, cantones, chola, polynesian, riga, moravia, hondura, philanthropi, ecumen, nasser, multicultur, kurdistan, gdr, scythian, sikkim, clandestin, leftw, rwanda, tunisia, kenyan, heartland, musharraf, multilater, peasantri, fluent, beliz, idf, pantheon, jamaican, ltte, messian, totalitarian, cameroon, middleclass, genoes, labrador, hiroshima, nongovernment, statewid, thracian, statesman, levant, visigoth, samoan, usurp, qin, desecr, crete, ionian, foothold, ardent, steward, umayyad, pogrom, zionism, reconstitut, cityst, ashoka, jurist, cossack, mausoleum, diocletian, assam, mennonit, auspic, cpc, richest, loanword, scandinavian, dalmatia, dacian, berber, battlecruis, peruvian, governorship, chieftain, iberian, tagalog, secretariat, shipbuild, gazett, abbasid, arama, malabar, myanmar, neolith, liaison, erstwhil, slovenian, pursuant, siberia, statehood, dignitari, indochina, circa, nguyen, registrar, toppl, synod, reassert, interwar, unarm, lordship, sympath,
		\par
	}

	\item[$\mathcal{V}_{7} = $]
	{
		\tiny

		tyre, pigeon, flora, mous, upright, chicken, groov, ant, tap, coconut, hind, knive, rig, curl, hat, corn, tooth, marbl, underwat, cooki, foil, lit, bud, wrap, beneath, snap, feather, cosmic, potato, pie, explod, timber, wolv, burrow, fuse, bread, eject, stuf, rack, plenti, gray, pepper, helmet, blown, deton, shade, calib, grill, owl, worm, straw, granit, badg, collid, deer, chrome, lace, pile, weld, outfit, tile, ribbon, chocol, miniatur, axe, tilt, beard, whip, monkey, haul, candi, sabbath, basement, bamboo, spike, pod, crest, cam, candl, nut, envelop, juvenil, bark, mint, brass, limeston, fog, shirt, powder, chees, cherri, skeleton, saddl, dot, garment, bolt, enclosur, dungeon, sandston, wrist, furnitur, salad, blond, axl, adorn, whistl, lotu, bake, duck, dune, pork, velvet, collar, canva, comb, peanut, lamb, weav, rip, fri, enclos, squirrel, locker, stuck, bracket, butter, click, blast, triangl, exot, camel, cart, robe, clutch, engrav, spray, brick, pyramid, pale, brace, strand, cream, pink, torch, carpet, mast, grip, scratch, chick, knife, aluminum, cab, hollow, underneath, darker, thrown, wasp, vent, jupit, rib, spider, bitter, bite, toss, stripe, dash, slice, flesh, jewel, larva, dwarf, omega, diver, pickup, leather, crab, lean, liveri, toe, rabbit, fauna, pierc, twist, lip, cement, flip, bucket, pencil, mat, shed, honey, dip, tongu, turret, halfway, burst, punch, deflect, coral, forg, brew, bead, stalk, dairi, swan, crocodil, motif, wash, comet, pizza, rainbow, wore, aquat, backward, nail, mosaic, jar, emblem, roast, orchard, piston, pedal, chi, crane, magnum, belli, violet, ceram, wax, arrow, strap, blossom, cow, bumper, beef, shake, flavour, maiden, shotgun, lever, sleev, goat, reef, herd, preciou, debri, alley, screw, bug, pant, tattoo, bent, mortar, shine, vault, pan, ginger, bee, beetl, mule, fuselag, hung, lemon, onion, kitchen, pad, lizard, liquor, rotten, coaster, jewelri, dial, crescent, trim, claw, recip, banana, pot, curtain, smell, crawl, hood, replica, fade, scrap, plaqu, blank, bloom, pulp, flour, fed, mud, butterfli, dust, wheat, purpl, swallow, peel, wool, willow, mapl, poni, chin, spear, tan, noodl, gravel, crust, soup, cockpit, sandwich, scroll, pig, dye, sheep, eclips, botan, leap, fin, lantern, tini, textil, thread, butt, dinosaur, dig, basket, oval, trouser, header, mushroom, lightweight, lightn, illumin, fender, nickel, void, sock, ore, roller, chop, discard, ink, sauc, turtl, swing, stain, atp, frozen, rectangular, shave, diagon, grab, ivori, glove, pour, herb, ornament, shower, insignia, sox, slip, trout, elbow, blanket, exterior, ceil, cigarett, asteroid, brush, gem, venom, pigment, shark, ladder, drift, juic, stair, lime, dump, shaft, horizon, mold, dug, glow, spiral, sour, torn, skate, cheek, pea, silk, leaf, spun, sink, capsul, rim, flush, floyd, dessert, thumb, jacket, heel, accessori, cane, bounc, potteri, dirt, leopard, grape, rope, fur, launcher, microphon, stitch, choke, liner, mantl, bean, moth, tomato, nake, cage, cone, tin, frog, eleph, hammer, grenad, crimson, reptil, thunder, amber, bathroom, chili, tshirt, tight, hut, eaten, needl, batter, cube, har, warp, cutter, throat, hatch, spice, vine, toilet, burnt, plug, hook, salmon, retract, cake, bubbl, skirt, balloon,
		\par
	}

	\item[$\mathcal{V}_{8} = $]
	{
		\tiny

		devi, lester, wrestlemania, lili, calvin, stephani, freddi, akbar, sandra, agn, kathi, humphrey, philipp, mickey, doc, liu, vishnu, ronald, gil, stevi, patern, dant, jami, ned, rudolf, anton, piu, melissa, rao, col, winston, louis, clive, bonni, saddam, andr, gu, seth, wang, aaron, liam, valentin, edmund, patti, isaac, paulin, wu, brotherinlaw, maggi, judi, sue, lionel, brad, doug, shannon, darren, clarenc, randi, tina, jeremi, kyle, ronni, niec, marvin, hermann, joel, cum, stan, trevor, geoffrey, hassan, betti, nina, madhya, jeffrey, gustav, mahatma, pierr, horac, nicol, wong, trent, sharon, lynn, zhou, holli, ludwig, hulk, wolfgang, eleanor, dee, swami, benni, emma, rachel, miranda, tel, leigh, eugen, elton, bo, mo, basil, mohammad, xavier, yu, yi, rama, hannah, sherlock, clement, timothi, halfbroth, bryan, bori, marilyn, erik, edwin, wei, shirley, grandson, nephew, kirk, indira, allison, anni, friedrich, shane, hal, shri, rupert, sidney, kati, chiang, mauric, archi, luci, isabel, travi, paula, helen, omar, alexandra, ernest, tai, karen, buddi, maid, herbert, johann, isabella, qb, sy, gregori, marc, bernard, marion, barack, jennif, andrea, glenn, bon, barney, butcher, katherin, leonard, jenni, vanessa, laurel, dalai, prof, ahmad, kitti, jess, cao, vladimir, jacki, ahm, lesli, marcu, daisi, sophia, lyndon, patricia, lil, dana, jessica, boyfriend, amanda, marti, felix, alfonso, jo, christin, constantin, pratt, laurenc, sonni, wilhelm, debbi, shawn, chen, joan, emili, sheikh, jerom, perci, ethan, sen, conan, mama, edgar, reverend, kurt, befriend, dorothi, diana, fu, dale, vic, lauren, ashley, kicker, bruno, mose, fritz, mick, dwight, sara, alia, noel, augustin, uttar, dexter, ernst, cowritten, heather, byron, franki, robbi, josh, dudley, guru, mikhail, theodor, julia, matern, kumar, heinrich, brett, malcolm, teresa, abdul, clair, vernon, christina, bing, judith, brendan, granddaught, randolph, ellen, jin, glen, benedict, te, petersburg, loi, janet, sebastian, laura, baba, claud, raj, donna, clyde, raymond, liz, wendi, florenc, nichola, lindsay, rev, gerald, woodi, ho, lok, magnu, juliu, ivan, jule, leopold, mel, ren{\'e}, tottenham, romeo, madam, teddi, abdullah, salli, grandmoth, angela, trinidad, cal, carol, cyru, fr, allan, vinc, moham, fran{\c{c}}oi, empress, jacqu, brittani, notr, stella, noah, jake, kenni, congressman, maj, nathan, martha, milton, consort, herman, vincent, joey, seymour, walli, und, nigel, molli, eva, kapoor, nicola, abba, peggi, gerri, zhang, adrian, lt, ibrahim, cecil, mozart, jill, kenneth, colin, rodney, julian, hey, sid, conrad, olivia, krishna, nanci, ricki, brandon, imam, neal, raja, adolf, derek, goldman, joshua, lou, alma, linda, antoni, eli, otto, augustu, gloria, lanc, leonardo, sophi, rita, franz, beth, roland, kuala, ruth, dian, chad, fanni, rex, carolin, andhra, hank, mistress, rebecca, lin, natali, bart, traci, frontman, tara, catherin, geoff, n{\'e}e, ferdinand, helena, maharaja, elvi, yang, aunt,
		\par
	}

	\item[$\mathcal{V}_{9} = $]
	{
		\tiny

		dealer, mortgag, agenda, forbidden, embodi, lab, propon, whenev, medit, compli, behav, credibl, deficit, organiz, gambl, durat, bilater, discrimin, heroin, advocaci, penal, portfolio, scholarli, cheap, rhetor, overview, abort, uncertain, biblic, comprehens, uncommon, pursuit, sociolog, depriv, abstract, strictli, sentiment, perpetu, rehabilit, inclus, proven, inspect, anonym, monetari, self, bigger, identif, pronunci, prevail, strict, pleasur, ordin, escal, sudden, incorrect, formul, implic, firearm, tender, justifi, articul, dictat, judgment, abbrevi, relax, conjunct, liabil, terminolog, proposit, retriev, augment, shorten, overlap, weaken, traffick, lifestyl, statutori, imit, legitim, contradict, reliev, curriculum, bia, monopoli, proce, deem, antisemit, specialti, ideolog, contrari, placement, habit, stanc, conceptu, restructur, albeit, hierarchi, voluntari, specialis, loyalti, humanitarian, theft, copyright, etymolog, ambigu, discours, setup, immens, privaci, inconsist, classroom, metaphor, endur, methodolog, synonym, remedi, authent, silenc, simplifi, taxat, intact, alarm, procur, conspiraci, disclos, feasibl, steadili, vital, incomplet, wholli, verifi, workplac, plagu, maxim, appreci, norm, reward, infring, constraint, dealt, concurr, tough, compel, criteria, assumpt, homosexu, imageri, fratern, critiqu, manifest, omit, endang, racism, inabl, predomin, lineag, adher, anticip, humor, complianc, vocabulari, quran, complement, expenditur, fulfil, correctli, diminish, strongest, harsh, broadli, feminist, peer, profound, mediat, nonetheless, puzzl, eas, accordingli, modest, explicit, openli, flaw, partit, sophist, artifact, cope, practition, unrel, adequ, donor, claus, seemingli, forecast, spite, disagre, irregular, deepli, inher, hypothesi, largescal, chemistri, crucial, confin, fiscal, guidanc, aspir, obscur, realist, convey, frustrat, absent, breach, outlin, buyer, offenc, disagr, wisdom, postal, submiss, conform, royalti, compromis, extraordinari, obviou, merit, broader, healthi, properli, overcom, stereotyp, prioriti, systemat, affirm, quiet, chao, encompass, undertaken, capitalist, logist, aesthet, analyz, rigor, charit, poorli, scenario, healthcar, adject, neglect, provok, repress, astronom, segreg, oppress, verb, essenc, racial, guidelin, explicitli, deterior, fraud, enlarg, distant, collector, deduct, pace, buddha, steadi, autonomi, government, disadvantag, burden, alert, fare, offend, exempt, compulsori, wors, tendenc, trait, enorm, enlighten, noun, discourag, wartim, advent, singular, fault, accent, astronomi, everyday, mandatori, freeli, visa, insight, genuin, harass, assur, harmoni, overwhelm, primit, scope, obstacl, heal, premis, regardless, underw, categor, aros, unclear, verbal, boost, lend, percept, non, plural, wherebi, conscious, likewis, expertis, geolog, tenant, inevit, uniti, sphere, anthropolog, trademark, necess, inventori, incent, undertak, regulatori, assimil, virtu, conceal, moreov, prescrib, profess, consciou, exam, forens, registri, iso, pharmaceut, clone, embrac, devis, consensu, undermin,
		\par
	}

	\item[$\mathcal{V}_{10} = $]
	{
		\tiny

		sol, roo, {\`a}, libertador, rivera, barrio, dauphin, carmen, flore, qu{\'e}bec, revu, javier, alessandro, roi, iglesia, lope, f{\'e}lix, rodr{\'\i}guez, alfredo, gran, avant, je, le{\'o}n, p{\'e}rez, banda, fran{\c{c}}ais, provenc, ain, rancho, willem, pont, argentinian, sarkozi, pe{\~n}a, oro, {\'a}ngel, khomeini, marqu, sul, allend, salazar, davao, silvio, chico, mort, delgado, claudio, blanc, antoin, maestro, ni{\~n}o, salina, cid, ole, international, brasil, universidad, c{\'o}rdoba, enrico, navarro, navarr, varga, val, tito, guadalup, banco, mariano, jaim, vila, paolo, c{\^o}te, benito, guadalajara, nord, garibaldi, bam, vittorio, sergio, castillo, qaeda, {\'e}cole, bravo, jardin, witt, hustl, moreno, molina, catalina, rey, comt, batista, serra, rochel, parc, libr, julio, gael, ferrer, bernardo, mend, dio, ortiz, sant, veracruz, am{\'e}rica, estadio, historia, luna, ernesto, vill, eduardo, campo, angelo, espa{\~n}a, pietro, cerro, teatro, oaxaca, laguna, carrera, emilio, vasco, ignacio, opu, per{\'o}n, haut, toro, toma, lombardi, hern{\'a}ndez, terr, marcello, ricardo, laci, siena, gonz{\'a}lez, ruiz, deportivo, una, casa, coco, puebla, pico, jong, rossi, estrada, chavez, ju{\'a}rez, tarantino, aux, santana, bella, dei, capon, g{\'o}mez, fern{\'a}ndez, loma, grupo, padr, ra{\'u}l, nacion, dal{\'\i}, vita, vizier, gonzaga, lobo, quentin, ramo, roch, m{\'e}xico, temp, ramirez, della, guerrero, paso, ivanov, blanco, alvarez, asturia, vin, mata, s{\'a}nchez, mina, stefano, pueblo, khalifa, boi, laval, mal, r{\"a}ikk{\"o}nen, ciudad, gard, garc{\'\i}a, alonso, yankov, c{\'e}sar, zaragoza, ch{\^a}teau, sur, guillermo, domingo, nuevo, ram{\'o}n, ronaldo, francesco, herrera, ou, sera, dia, martinez, mendoza, joaquin, cort, tijuana, arroyo, ayatollah, giorgio, isla, montoya, leyland, aragon, yo, chevali, saba, fontain, sanchez, ferrara, mart{\'\i}nez, bel, novo, castil, alejandro, piero, para, canto, aquino, arturo, luigi, messina, pinto, marqui, ligu, gore, federico, romero, dino, mateo, gambino, stade, lair, scala, centro, quito, divoir, museo, guillaum, rodrigo, vida, telenovela, salvador, rizal, nueva, mussolini, palazzo, alamo, por, mond, duchess, national, dom, divisi{\'o}n, maccabi, trujillo, santand, dolor, ateneo, borg, vicent, verdi, diablo, fray, amor, rue, sonora, vie, fernand, palai, alba, bol{\'\i}var, samba, aguinaldo, bahia, mayo, primera, femm, felip, hidalgo, cali, cabrera, cort{\'e}, torino, jazeera, soto, coutur, jo{\~a}o, nort, que, viva, tre, gallo, nadal, louvr, como, r{\'\i}o, d{\'\i}az, mart{\'\i}n, monterrey, fernandez, paz, su{\'a}rez, lac, greco, mus{\'e}, massa, cesar, enriqu, rosario, soci{\'e}t{\'e}, renn, vall, ponc, giusepp, l{\'o}pez, fontana, chanel, conquistador, piazza, ch{\'a}vez, cristina, picasso, porta, croix, lux, saud, gonzal, acad{\'e}mi, mora,
		\par
	}

	\item[$\mathcal{V}_{11} = $]
	{
		\tiny

		roadway, dine, luxuri, unveil, excav, travers, grove, fring, countrysid, sedan, subspeci, harbour, convoy, bend, ridg, trench, thrive, closur, builder, ambush, fortress, frigat, java, voyag, meadow, renault, pipelin, tanker, att, pave, escort, coastlin, ski, leisur, strait, steep, highland, fountain, perimet, beaver, sm, aerospac, downstream, shelter, scenic, junction, gorg, trunk, bunker, usaf, rebuilt, cedar, ferri, inland, portal, toll, pedestrian, northward, alpin, marsh, subdivid, loung, torpedo, tent, intersect, uss, detach, expressway, pt, plaza, greenland, bangalor, sunk, hemispher, aboard, tractor, freestyl, terrain, raf, boom, cafe, ambul, antitank, lagoon, swept, sniper, caf{\'e}, wreck, ca, il, boe, breweri, wilder, antiaircraft, interst, refineri, toyota, wetland, canyon, cascad, hm, refurbish, fork, armament, observatori, dwell, smallest, bs, demolish, pier, chennai, cruiser, surg, motorway, taxi, waterway, racer, tram, nokia, zoo, aa, pavilion, volcano, lawn, tributari, paradis, palm, ramp, bypass, hike, vineyard, flew, inn, rebrand, harbor, baltic, mk, thame, anywher, warehous, honda, shelf, nightclub, mig, hamlet, pub, fortif, winchest, oak, upstream, hangar, barrack, telecom, quarri, vista, nissan, refug, beverli, pearl, groceri, somewher, crater, slope, mall, ny, arctic, consortium, ranch, distributor, ab, atla, widen, reconnaiss, rug, forestri, pillar, vicin, supermarket, sandi, redevelop, fisheri, parkway, flank, pine, divert, outlet, overlook, cano, facad, mt, aquarium, eastward, monsoon, marina, corridor, cliff, hudson, atop, flown, rocki, ordnanc, depot, erod, ballist, ag, offshor, auxiliari, capitol, encircl, tornado, parachut, swamp, buse, erect, chevrolet, rift, bike, waterfal, mansion, volkswagen, suburban, pa, sank, airbu, antarct, dock, llc, nearest, glacier, runway, refuel, aerial, apach, po, airfield, neighbourhood, fortifi, maneuv, amalgam, gm, sunset, gateway, cf, panama, woodland, chrysler, lodg, erupt, fenc, airplan, surf, plantat, estuari, boulevard, alp, carriag, warship, interchang, hub, amazon, casino, remnant, amphibi, lowland, mi, endem, nile, redesign, stapl, jungl, rhine, prairi, stall, boast, dismantl, battleship, terminu, hawaiian, sanctuari, luftwaff, terrac, nh, altar, haven, courtyard, cottag, en, subdivis, rental, volcan, subway, plateau, battlefield, fs, cater, adjoin, farther, sweep, freeway, reopen, platoon, typhoon, westward, tow, tallest, bombard, delawar, pond, manor, hamburg, wagon, shipment, garag, cruis, flagship, wildlif, cabin, mound, spa, township,
		\par
	}

	\item[$\mathcal{V}_{12} = $]
	{
		\tiny

		physiolog, byte, degener, dental, insulin, dispar, radioact, nervou, enzym, varianc, aerodynam, lung, recurr, diagnosi, antibiot, virus, obstruct, collis, diagnos, patholog, textur, fractur, infecti, surgic, implant, facial, mice, decay, inadequ, regener, vertebr, cognit, transplant, evolutionari, viabl, lesion, passiv, limb, thermodynam, socioeconom, arteri, pathogen, volatil, abdomin, irrit, insuffici, neural, gamma, neurolog, sensat, reflex, exponenti, tract, mood, reproduc, sensori, viral, feedback, nonlinear, cardiac, chromosom, uncertainti, momentum, neutron, primat, diagnost, duplic, bacteria, arous, viru, inflamm, impuls, renal, liver, schizophrenia, synthet, deviat, oscil, slight, synthesi, psychiatr, simpler, cerebr, analyt, sigma, chronic, substrat, reactiv, subgroup, vein, phonem, defici, quantit, benefici, morpholog, incur, vibrat, polynomi, nucleu, cardiovascular, coher, encod, seizur, dysfunct, focal, acut, cure, reciproc, fusion, accuraci, allevi, prone, react, ankl, instabl, phi, robust, angular, bleed, headach, genom, drastic, phenomena, gravit, genit, microscop, replic, obes, gland, intercours, malaria, lobe, urin, antagonist, semant, impair, templat, pronoun, shortterm, affin, diverg, trauma, congest, stimuli, nutrit, indirect, unstabl, proton, harmon, autism, cellular, apparatu, distress, anatomi, invert, nerv, scar, fatigu, hormon, tumor, lethal, deform, stiff, prolong, epidem, syndrom, likelihood, catalyst, mild, propag, tempor, mitig, unchang, ecosystem, genera, tens, underli, toxic, meaning, development, posterior, unnecessari, infer, traumat, inequ, advers, pulmonari, exert, queri, conson, stimul, recess, conjug, invers, stomach, nich, tensor, inclin, dietari, caviti, pregnanc, beta, indirectli, bandwidth, drought, parasit, hazard, transpar, spectrum, multipli, symmetr, molecular, grammat, simplest, neuron, causal, occurr, reson, imped, onset, catastroph, regress, breast, anxieti, digest, inferior, invari, alzheim, correl, cord, suffix, hypothes, subtl, kidney, intestin, spatial, suscept, transcript, corros, marker, parkinson, therapeut, degrad, reus, iter, rigid, dimension, syntax, cortex, homogen, paradigm, fever, decod, cannabi, spinal, syllabl, bacteri, subset, coeffici, arithmet, efficaci, cue, repetit, worsen, ingest, pneumonia, semiconductor, nasal, stimulu, antibodi, spine, reproduct, swell, durabl, inhibitor, modal, prostat, polym, peripher, tuberculosi, breakdown, redund, antigen, induc, symmetri, geometr, tertiari, cocain, mutat, addict, metabol, electrod, respiratori, intrins, magnitud, spontan, xray, hiv, rna, precursor, prolifer, equilibrium, prescript, inhibit, arbitrari, pest, pathway, vaccin, anterior, pi, analys, cosmet, isotop, distort, diabet, abnorm,
		\par
	}

	\item[$\mathcal{V}_{13} = $]
	{
		\tiny

		gibb, fischer, rodriguez, stalin, goodman, sherman, macdonald, gill, troy, levi, lenin, solomon, pearson, porter, rodger, dunn, casey, cameron, thomson, thompson, stern, lama, berri, boon, bradford, fletcher, gandhi, dame, mater, ferguson, carpent, hawkin, reynold, caesar, wallac, perkin, weaver, barrett, harper, bowi, gould, curri, myer, drake, chapman, byrn, owen, mclaren, hussein, wright, canterburi, cole, forrest, benson, reid, hoover, morrison, sheridan, newton, spencer, bailey, gilbert, fraser, freeman, walsh, emerson, fuller, griffith, carey, jen, starr, morri, scotia, ix, blake, helm, sinclair, livingston, phillip, carrol, levin, quinn, mccarthi, watson, wagner, rahman, xvi, curti, fitzgerald, crosbi, harrison, bro, montana, armstrong, lynch, hammond, elli, xi, webster, walker, lennon, xii, chan, parker, maxwel, archer, tate, potter, edison, dixon, bradi, nichol, osborn, kent, reed, logan, nash, bennett, finn, lang, thorn, allmus, stuart, eisenhow, holden, fisher, whitney, clara, presley, booth, montgomeri, dylan, beck, luther, kay, murray, irv, hogan, cohen, arnold, webb, lyon, lambert, christi, obama, blair, heath, newman, gibson, burk, churchil, shelley, ebert, powel, crow, shaw, eden, carson, truman, watt, wade, jenkin, henderson, butler, harvey, vii, lumpur, mustang, cain, roosevelt, murphi, riley, penn, reagan, sander, mccain, viii, baldwin, monica, boyd, barker, hyde, kane, swift, mann, tyler, doyl, bach, palmer, crawford, coleman, barber, carr, jefferson, schumach, hast, luca, barn, nixon, griffin, md, aka, pitt, mead, sim, koch, macarthur, wesley, oneil, campbel, preston, holm, gardner, dawson, bradley, tucker, sr, vi, meyer, den, fe, hay, hardi, chamberlain, collin, frost, savag, mccartney, mason, morton, robertson, burton, klein, baker, chester, laden, byrd, piper, der, robinson, sterl, norton, einstein, hopkin, sullivan, buchanan, stark, johnston, elliott, duncan, stewart, laud, bacon, hart, moss, hancock, peterson, mitchel, parson, marx, kerri, buck, darwin, mcdonald, turner, hawk, lopez, rockefel, monro, mcmahon, obrien, xiv, richardson, holland, hamilton, xiii, chandler, bryant, cox, weber, joyc, madison,
		\par
	}

	\item[$\mathcal{V}_{14} = $]
	{
		\tiny

		anthem, carniv, spinoff, wii, followup, reissu, sequel, conductor, apollo, cameo, slate, vh, eve, gig, unreleas, smart, directori, contributor, trio, embark, sung, ds, horror, rendit, poster, puppet, lp, pb, sang, arcad, facebook, manga, rca, crazi, hd, recount, broadway, genesi, rap, editori, simpson, seller, vol, herald, myspac, rbhiphop, mc, percuss, jam, smash, diari, jockey, funk, bet, preview, cassett, amaz, latest, hail, bollywood, triumph, protagonist, favourit, airplay, forev, regga, guin, rapper, fantast, mtv, artwork, batman, xbox, hardcor, liveact, duo, cheer, choir, recur, commentari, headlin, quartet, rehears, selftitl, con, choru, vinyl, doll, breakthrough, fulllength, oz, dinner, hymn, dancer, maker, trilog, pen, remix, enthusiast, podcast, referenc, telegraph, remast, saga, sitcom, ace, summari, sketch, ballad, cohost, duet, wizard, remak, villain, marvel, hiphop, reunion, soni, melodi, spawn, idol, theatric, lone, backup, epic, aria, teen, zombi, joy, madonna, nintendo, mini, superhero, cbc, bestknown, itv, spiderman, repris, coproduc, emi, recit, monthli, legendari, glori, flute, chat, bang, breakfast, unoffici, showcas, bonu, midnight, shortliv, upload, crossov, banner, greet, bluray, reprint, superman, screenplay, lesson, memor, chant, improvis, merchandis, rereleas, stereo, dirti, catalogu, satir, platinum, tonight, ne, tribut, ensembl, pok{\'e}mon, thriller, delux, cnn, echo, garner, playabl, instant, numberon, cowrot, parodi, discographi, demo, dj, sonic, imprint, catalog, incarn, scream, hiatu, antholog, disco, upcom, memoir, mgm, trek, itun, circu, dawn, nickelodeon, twitter, espn, photographi, hbo, playboy, footag, autobiographi, clip, riaa, longrun, orchestr, paramount, tragedi, trailer, youtub, mixtap, eurovis, beatl, sega, playstat, miniseri, ap, bside, eponym, blog, repertoir, bestsel, rerecord,
		\par
	}

	\item[$\mathcal{V}_{15} = $]
	{
		\tiny

		sc, wwf, tenni, streak, sec, cub, chess, podium, lap, talli, spectat, seventeen, sheffield, texan, rooki, boxer, allstar, wolverin, afl, fixtur, everton, sat, laker, gt, comeback, springfield, sixteen, vacat, softbal, overtim, bulldog, surpass, blackburn, brewer, pageant, tna, cowboy, brisban, tackl, leed, rivalri, fierc, odi, speedway, trainer, conced, roster, postseason, refere, freshman, sack, barcelona, leicest, av, wicket, striker, raven, poker, dodg, wimbledon, penguin, knockout, bruin, eighteen, {\textendash}present, millennium, acc, sixti, ucla, richmond, scorer, stint, packer, thanksgiv, quarterback, raider, {\^a}, runnersup, dodger, golf, bundesliga, division, hometown, aggreg, stoke, trophi, pac, thirti, semest, thirteen, trojan, flyer, bout, fumbl, {\textonehalf}, sunderland, gymnast, ufc, bristol, nebraska, gator, raini, usc, replay, falcon, nashvil, td, oakland, volleybal, easter, shootout, lacross, maverick, icc, milwauke, brave, hattrick, mlb, marathon, wander, vacant, midfield, nwa, narrowli, vike, pitcher, wcw, bench, ferrari, forti, charger, indianapoli, {\textendash}{\textendash}, wwe, rbi, jacksonvil, indoor, postpon, adelaid, lotteri, premiership, fourteen, memphi, undef, chelsea, colt, lineback, semifin, sophomor, preseason, fifteen, incept, derbi, smackdown, dart, clinch, starter, cincinnati, runner, hornet, fastest, inning, rebound, bronco, panther, kickoff, cardiff, celtic, offseason, rover, ensu, rejoin, punt, philli, quarterfin, goalkeep, remaind, nhl, threw, arsen, varsiti, ivi, steeler, collegi, tier, qualif, beaten, wigan, ml, orlando, dolphin, firstclass, feud, duel, mascot, columbu, berth, er, panzer, deadlin, inter, motorsport, seahawk, europa, halftim, preliminari, newcastl, sidelin, titan, wrestler, bolton, yacht, alltim, midway, runnerup, releg, autumn, nascar, wembley, yanke, spur, intercept, catcher, rematch, jaguar,
		\par
	}

	\item[$\mathcal{V}_{16} = $]
	{
		\tiny

		lao, sailor, ukrainian, kosovo, serbian, loyal, bloc, modernday, slavic, morocco, arabian, ulster, citizenship, guinea, uruguay, dissolut, persia, aristocrat, provision, midland, armenia, colombian, croatian, azerbaijan, arabia, cornwal, taliban, filipino, yugoslav, confederaci, hyderabad, ethiopia, feudal, cuisin, jamaica, refuge, treasuri, ghana, lebanes, cuban, madra, bengal, legion, punjab, ham, istanbul, negro, blockad, balkan, ethiopian, frontier, colonist, malta, concess, ottawa, tang, yuan, nomad, czechoslovakia, ming, gaelic, counterattack, alexandria, vietnames, roma, mercenari, serb, newfoundland, kenya, uganda, gaza, expel, dominion, peasant, gujarat, iraqi, westminst, patriarch, insurg, consul, trader, dominican, settler, migrant, ancestri, mongol, herzegovina, colon, palestin, montenegro, archipelago, syrian, malay, malaysian, mafia, nors, qing, haiti, libya, bombay, pilgrim, papal, hispan, homeland, lithuania, freed, pragu, upris, cede, zimbabw, qatar, yemen, ira, chilean, kerala, detain, isl, commando, macedonian, dakota, besieg, conscript, karnataka, warsaw, pact, canton, kuwait, nepal, hostag, detent, bulgarian, folklor, aborigin, predominantli, protector, normandi, maharashtra, prussian, afghan, victorian, pilgrimag, monarchi, incumb, macedonia, dubai, overthrow, denounc, bosnia, algeria, genocid, militia, cypru, indonesian, sudan, prefectur, albanian, ussr, pakistani, brussel, mongolia, romanian, embassi, congo, napl, catholic, pow, baghdad, vatican, tibet, garrison, cambodia, nobil, saxon, caliph, unif, conting, thai, slovakia, nationalist, lithuanian, constantinopl, luxembourg, passport, yugoslavia, georgian, airway, turk, caucasu, albania, reestablish, takeov, guerrilla, estonia, burma, scot, nigerian, proclam, argentin, emir, partisan, yorkshir, finnish, sicili, fascist, synagogu, deport, flourish, cairo, ecuador, sovereignti, cheroke, napoleon, kashmir, milit, prussia,
		\par
	}

	\item[$\mathcal{V}_{17} = $]
	{
		\tiny

		demon, cri, knew, creatur, asid, creator, devil, blind, constantli, wolf, infant, fate, hitler, wed, revel, stolen, competitor, doubt, crush, pretti, pleas, worri, surpris, exactli, lover, wonder, persuad, childhood, accident, trick, els, quick, badli, thank, lesbian, fun, sword, samesex, sure, ye, na, friendship, rider, prompt, tortur, teenag, testimoni, pride, ma, worst, bare, everyon, pronounc, doesnt, insist, aliv, mad, lifetim, grave, certainli, till, killer, disappoint, alien, desper, hate, realis, troubl, ghost, repeatedli, shadow, anyon, fallen, narrat, bride, homer, su, none, spark, captiv, dialogu, temporarili, kidnap, guilti, scandal, joke, wake, ive, monster, versu, recal, quest, angri, resurrect, bless, welcom, toy, funer, confess, robot, wasnt, prostitut, pregnant, specul, anger, imagin, costum, testifi, sin, comfort, forth, conceiv, unfortun, guardian, companion, rape, hang, hide, sacrific, foster, whatev, fake, reportedli, ms, heaven, shall, devast, id, hunter, grace, sick, storylin, suddenli, presum, audit, witch, fortun, mistak, passion, spare, steal, rumor, survivor, interrupt, wrong, truli, boss, acknowledg, suppos, blame, vampir, kiss, inherit, curs, silent, apolog, evil, coincid, betray, contend, etern, guy, mar, upset, deliber, mate, odd, remind, allegedli, cant, jail, portrait, predecessor, complain, laugh, hell, spoke, serious, confid, kid, custodi, hurt, strang, suicid, repli, imprison, beast, aveng, closest, dozen, innoc, confront, reveng,
		\par
	}

	\item[$\mathcal{V}_{18} = $]
	{
		\tiny

		sugar, mask, habitat, explos, narrow, flavor, cotton, nois, concret, accumul, tea, sight, bag, carv, arc, insert, tail, thin, cloud, tear, cylind, insect, valv, sand, flash, meat, raw, mammal, tall, arch, clay, drill, pocket, galaxi, shoe, rod, ft, beam, tast, suspens, patch, wire, dive, bore, ammunit, dish, outer, meal, rough, teeth, mirror, tip, barrel, shoulder, hull, stroke, fossil, cattl, fold, shorter, cm, beer, discharg, bow, disk, loop, copper, swim, inner, sheet, plastic, bath, thick, locomot, finger, pistol, float, crystal, diamet, bicycl, neck, blade, steam, pack, penetr, empti, exhaust, ash, bed, chest, steer, gaug, snake, blend, sweet, predat, rat, clock, bullet, pipe, flat, bottl, flame, spin, gear, rocket, motorcycl, wet, tone, axi, gap, fat, tobacco, slide, ear, milk, trap, laser, mercuri, inch, smoke, poison, genu, ingredi, roof, curv, boot, horn, fabric, nest, brake, rubber, skull, deck, polar, cluster, circular, barrier, grain, grass, pet, tire, breath, scatter, pit, knee, mouth, stamp, blow, lamp, stick, soft, chip, hidden, specimen, fragment, outdoor, cartridg, shell, propel, log, harvest, lift, sharp, prey, kit, button, batteri, drag, slot, smooth, delta, whale, crack, medium, pin, bright, coffe, pool, vertic, cannon, artifici, faster, cultiv, horizont, dispos, appl, nose, wooden, egg, metr, alpha, dome,
		\par
	}

	\item[$\mathcal{V}_{19} = $]
	{
		\tiny

		mathematician, trumpet, superstar, preacher, sonata, patronag, psychologist, isbn, chancellor, sculptor, encyclopedia, physicist, endow, avantgard, elementari, berkeley, conservatori, rave, beethoven, curat, soprano, modernist, tsar, avid, philharmon, sergeant, archaeologist, reich, comedian, mit, jd, tenor, cyril, brigadi, prolif, unesco, birthplac, manifesto, paperback, apprentic, telugu, malayalam, humanist, deutsch, chef, spokesperson, punjabi, wikipedia, councillor, magna, diploma, bibliographi, magistr, singersongwrit, uc, counselor, truste, biograph, ballet, creed, alumni, newcom, mentor, synopsi, ign, economist, yale, pamphlet, postgradu, englishlanguag, banker, choral, businessman, princeton, smithsonian, math, clerk, coauthor, librarian, sheriff, cornel, thesi, dictionari, superintend, tuition, freelanc, violin, entrepreneur, culinari, seminar, astronaut, urdu, sociologist, forb, screenwrit, petti, emin, troup, vocat, vogu, polytechn, {\textbullet}, pianist, veterinari, discipl, tutor, regent, inspector, yorker, nonfict, biologist, shepherd, concerto, ba, neoclass, rabbi, textbook, abbot, op, preparatori, mba, jointli, filmmak, standup, spokesman, parttim, vicepresid, surgeon, pupil, supervisor, choreograph, pornograph, citat, marathi, bilingu, psychiatrist, playwright, treatis, renown, pseudonym, quarterli, naturalist, bulletin, fellowship, classmat, theorist, kannada, hindi, acquaint, anthropologist, constabl, columnist, baroqu, appel, pharmaci, dissert, shakespear, defunct, saxophon, cartoonist, playback, inventor, grammar, blogger, chemist, instructor, upheld, campus, prose, subtitl, africanamerican, fairi, riff, pp, bengali, thinker, emeritu, technician, stanford, novelist,
		\par
	}

	\item[$\mathcal{V}_{20} = $]
	{
		\tiny

		render, sensit, proper, fatal, perceiv, toler, sole, rapidli, earthquak, necessarili, solv, bound, owe, repair, satisfi, emphas, craft, strengthen, explan, wider, ecolog, fals, excit, somewhat, poverti, framework, wealth, tension, simultan, landscap, imposs, manipul, immun, temporari, massiv, exact, matur, grown, behaviour, elabor, outcom, disrupt, closer, emphasi, flexibl, pose, ordinari, defect, minim, notion, complic, understood, circumst, ration, strain, valuabl, superior, similarli, furthermor, longterm, vowel, reli, wage, stronger, exploit, fairli, passag, unless, comparison, dissolv, rapid, counter, extinct, heavili, shock, routin, oral, ongo, easi, accomplish, ignor, afford, modif, absenc, socal, compens, suppress, expos, preval, perspect, partli, phenomenon, vulner, widespread, slowli, moder, therebi, dramat, attain, mainten, confus, character, circul, displac, ideal, absolut, familiar, destruct, reinforc, theoret, violent, margin, deliveri, recoveri, stabl, suffici, resolv, substanti, strongli, aris, unusu, disabl, accur, sort, neutral, huge, facilit, fewer, accommod, attitud, valid, reconstruct, relev, greatli, harm, suitabl, safe, radic, alloc, disturb, impli, optim, isol, persist, weak, easier, visibl, mere, supplement, broad, proof, diet, reliabl, aggress, trend, interfer, borrow, trigger, align, preced, undergo, gradual, gender, loos, clearli, exclud,
		\par
	}

	\item[$\mathcal{V}_{21} = $]
	{
		\tiny

		forum, hostil, evacu, counsel, guarante, stake, seiz, constitu, voter, ceas, inquiri, referendum, terrorist, servant, friendli, ratifi, halt, interven, ralli, administ, reorgan, prospect, parliamentari, auction, unifi, advisor, prosecut, equiti, liberti, demograph, eu, rebel, submit, leas, permiss, telecommun, pledg, salari, autonom, civic, lawsuit, tribal, faction, tribun, withdraw, fulltim, withdrawn, enact, judici, tenur, commenc, physician, consolid, recipi, mandat, contractor, resum, bureau, begun, consent, nonprofit, accredit, reelect, complaint, bid, clinton, sa, diplomat, elector, charter, withdrew, rent, nasa, unsuccess, volunt, behalf, unanim, nationwid, merger, decre, admiss, nato, intervent, propaganda, cia, petit, landmark, repeal, elit, endors, sovereign, elig, deleg, subordin, abolish, spi, commerc, holder, dispatch, licenc, ss, legislatur, analyst, cadet, terror, auto, aftermath, democraci, culmin, registr, oblig, maritim, enlist, archiv, specialist, criticis, clash, warrant, bankruptci, coalit, pension, opt, welfar, interim, advic, condemn, privileg, recognis, prosecutor, workshop, mutual, regain, rebuild, expir, disband, sharehold, slaveri, casualti, sanction, riot, advisori, publicli, fda, prosper, ballot, lobbi, statut, activist, coup, congression, un, supervis, surrend, urg, fbi, renov, overse, postwar, discontinu,
		\par
	}

	\item[$\mathcal{V}_{22} = $]
	{
		\tiny

		infrar, vapor, fibr, inflat, nitrogen, residu, ambient, knot, lowest, surplu, absorpt, {\ensuremath{-}}, mw, silicon, commod, detector, torqu, subsidi, heavier, temper, offset, distil, solvent, dens, gase, ventil, boil, uranium, crude, ph, bulk, melt, humid, deeper, mph, greenhous, gradient, lava, payload, spill, {\textdegree}c, lighter, deplet, muzzl, mb, overhead, abund, sanit, rpm, fatti, upward, plasma, threshold, kmh, hydraul, eros, shortag, alloy, glucos, evapor, cheaper, compart, refriger, sulfur, tide, wavelength, cooler, shear, median, photon, thrust, enrich, moistur, petroleum, gasolin, buffer, freez, width, zinc, cyclon, radiu, boiler, puls, subtrop, veloc, satur, mg, machineri, potassium, sodium, amino, pollut, thermal, combust, shale, dioxid, kinet, diffus, nm, drain, calcium, dispers, emit, tidal, friction, nutrient, propuls, unemploy, ignit, flux, {\textdegree}f, shallow, contamin, freshwat, recycl, turbin, precipit, tariff, clearanc, ethanol, rainfal, sunlight, slower, rainforest, discount, beverag, reservoir, insul, sperm, lesser, intak, aluminium, chlorid, irrig, {\textdegree}, fluctuat, vacuum, latitud, dissip, livestock, solubl, fertil, jaw, manifold, lunar, coil, drainag, literaci, condens, altitud, mortal, sediment, proxim, dose, electromagnet, ferment, rotor, downward, fraction,
		\par
	}

	\item[$\mathcal{V}_{23} = $]
	{
		\tiny

		realtim, telescop, cpu, protocol, codic, node, interv, compat, diagram, grid, terrestri, finit, server, lens, io, freight, ps, email, cach, googl, stack, navig, automot, proprietari, sensor, compact, plugin, notat, transmit, static, xp, autom, commut, connector, api, subscrib, portabl, pc, broadband, desktop, q, matrix, gb, browser, layout, ibm, linear, consol, array, android, {\texttimes}, turbo, probe, graviti, modul, mac, kernel, relay, hp, hybrid, denot, {\textrightarrow}, prefix, tablet, radar, iphon, ac, quantum, integ, pixel, graph, cc, kw, shuttl, mhz, automobil, embed, supplier, db, vitamin, delet, z, intermedi, vendor, emul, random, simul, gameplay, diesel, chassi, ip, default, gp, laptop, remot, wireless, subscript, newer, bmw, converg, download, topolog, os, prototyp, vector, algebra, scan, encrypt, usb, antenna, cargo, font, transmitt, theorem, leak, surveil, chord, intel, synchron, refin, bundl, amplifi, len, app, readili, menu, interfac, premium, printer, analog, multiplay, reactor, linux, synthes, hardwar, conveni, paramet, dual, infinit, processor, spacecraft, databas, packet, configur, highspe, geometri, discret, binari,
		\par
	}

	\item[$\mathcal{V}_{24} = $]
	{
		\tiny

		danni, leo, justin, neil, maria, ibn, nelson, colleagu, kevin, warren, ted, dean, russel, lisa, nova, eric, billi, tim, pat, dick, steven, princess, longtim, teammat, matthew, kim, fred, benjamin, jean, willi, singh, carter, max, cousin, jon, jan, pete, hugh, carl, bassist, kelli, kate, larri, widow, da, craig, ralph, eldest, harold, ron, susan, abu, eddi, santa, lloyd, terri, nick, charlott, franklin, ross, yearold, jay, costar, grandfath, greg, anna, jane, gari, bruce, jeff, alan, charli, shah, elder, wayn, jacob, li, albert, phil, sibl, michel, bin, christoph, drummer, alic, karl, ed, archbishop, ian, ryan, victor, margaret, leon, bobbi, johnni, tommi, denni, rick, ken, robin, perri, luke, todd, ben, sarah, norman, morgan, anthoni, girlfriend, gordon, matt, sean, brook, andi, gen, jerri, donald, evan, graham, dougla, jason, jonathan, barri, oliv, abraham, uncl, reunit, chuck, alfr, brian, roy, walter, cofound, youngest, baron, ami, mario, muhammad, keith, alex, frederick, jimmi, dave, rob, dan, barbara, samuel,
		\par
	}

	\item[$\mathcal{V}_{25} = $]
	{
		\tiny

		sampl, expens, classif, index, upgrad, innov, strategi, algorithm, otherwis, enhanc, topic, difficulti, wherea, stabil, variabl, equival, input, usag, experiment, automat, evalu, client, visual, context, motion, coordin, fundament, shift, discoveri, graphic, dynam, mode, intens, accid, represent, classifi, segment, util, variat, revers, differenti, variant, modifi, evolut, laboratori, fast, monitor, revis, core, virtual, assess, error, logic, henc, dimens, map, zero, enabl, mathemat, pure, transmiss, delay, sustain, procedur, calcul, essenti, alter, evolv, handl, extern, correct, weather, appropri, composit, bit, packag, orient, check, add, specifi, extra, predict, descript, equat, statist, precis, scheme, manual, balanc, updat, fix, andor, divers, partial, strength, manner,
		\par
	}

	\item[$\mathcal{V}_{26} = $]
	{
		\tiny

		hungari, contin, exil, mumbai, norway, turkey, ukrain, patriot, tokyo, beij, frankfurt, caribbean, ontario, bulgaria, athen, delhi, romania, nigeria, afghanistan, peninsula, lebanon, cuba, taiwan, belgium, cemeteri, austria, iran, iceland, malaysia, munich, finland, hawaii, switzerland, northeastern, greec, vancouv, jerusalem, neighbor, fled, thailand, mainland, alaska, sieg, amsterdam, queensland, geneva, croatia, southwestern, hampshir, venic, glasgow, villa, serbia, peru, netherland, nevada, manila, brazil, emigr, annex, pirat, dublin, indonesia, syria, ambassador, metro, chile, summit, madrid, invad, singapor, orlean, northwestern, southeastern, shanghai, portug, colombia, edinburgh, poland, montreal, alberta, moscow, sweden, presentday, venezuela, bangladesh, denmark, milan, elsewher, vienna, argentina, quebec, abroad, neighbour,
		\par
	}

	\item[$\mathcal{V}_{27} = $]
	{
		\tiny

		parad, injur, slow, trace, nicknam, touch, oppon, caught, pull, ahead, wound, penalti, crowd, chase, broke, induct, vs, journey, fought, straight, bat, sail, besid, row, climb, ram, longest, shut, cap, twin, knock, twelv, bought, disappear, struck, jump, stood, departur, twice, trip, broken, span, driven, substitut, laid, pitch, suspend, ward, throw, twenti, tiger, retreat, lane, hurrican, kick, rush, goe, u, ran, drove, whilst, drawn, eleven, giant, gone, buri, tripl, gang, wait, drew, plu, sit, strip, fell, catch, exit, warrior, lay, push, readi, collaps,
		\par
	}

	\item[$\mathcal{V}_{28} = $]
	{
		\tiny

		coron, worship, mytholog, sultan, abbey, shrine, monk, monasteri, mosqu, wealthi, persian, realm, ce, missionari, han, thcenturi, ruler, shiva, feast, antiqu, bce, ruin, heir, rebellion, myth, renaiss, priest, conquest, chapel, cathedr, revolt, ancestor, mystic, descent, commemor, surnam, goddess, burial, buddhist, gothic, tibetan, byzantin, mediev, onward, throne, rite, nobl, sikh, clan, denomin, alphabet, proclaim, mughal, conquer, prophet, hindu, crusad, buddhism, dynasti, patron, monarch, ascend, sanskrit, deiti, flee, calendar, inscript, massacr, treasur, armenian, sacr, counterpart, archaeolog, monument, saudi, tomb,
		\par
	}

	\item[$\mathcal{V}_{29} = $]
	{
		\tiny

		shape, sequenc, abil, signal, target, integr, qualiti, uniqu, interact, symbol, etc, presenc, detail, directli, devic, focu, equal, eg, principl, fit, resourc, factor, categori, pattern, knowledg, messag, user, definit, advantag, mass, contrast, capabl, characterist, environ, correspond, detect, consum, interpret, matter, distinct, phase, demonstr, sens, reflect, transform, item, kind, impact, techniqu, root, option, deriv, simpl, analysi, solut, consider, content, tool, skill, compon, display, mechan, multipl, basic, restrict, safeti, altern, address, consequ, ie, implement, aspect, electron, instanc, ident,
		\par
	}

	\item[$\mathcal{V}_{30} = $]
	{
		\tiny

		wast, bind, dri, membran, storag, depth, fluid, clean, consumpt, oxid, orbit, substanc, radiat, atmospher, fuel, compress, skin, solar, adjust, fruit, heat, veget, deposit, soil, hydrogen, protein, crop, layer, acceler, decreas, feed, emiss, particl, oxygen, tissu, inject, load, maximum, ion, carbon, atom, filter, pump, dna, fiber, exposur, acid, reduct, solid, bone, muscl, mixtur, angl, tropic, molecul, warm, coal, stem, tube, salt, absorb, cool, miner, receptor, liquid, rotat, fresh, drink, optic, excess, extract, alcohol, constant, minimum,
		\par
	}

	\item[$\mathcal{V}_{31} = $]
	{
		\tiny

		photograph, occasion, possess, cite, ban, enjoy, pilot, request, depict, suit, confirm, unlik, guid, meant, pursu, abandon, rescu, repeat, encount, descend, favor, obtain, watch, chosen, distinguish, incorpor, dedic, paid, respond, choos, fashion, sought, search, warn, explor, invent, convert, preserv, experienc, perman, permit, regist, introduct, convers, encourag, gather, assign, engag, count, ensur, creation, seek, grew, restor, kept, threaten, attribut, buy, recruit, accus, deni, send, deliv, recommend, recov, belong, princip, split, accompani, conclud,
		\par
	}

	\item[$\mathcal{V}_{32} = $]
	{
		\tiny

		currenc, enterpris, membership, certif, household, expans, tourist, sector, loan, interior, subsidiari, insur, net, ltd, payment, export, consult, farmer, rural, disast, visitor, inc, renew, worldwid, destin, partnership, profit, fair, relief, asset, merg, fee, budget, geograph, ticket, viewer, revenu, residenti, exclus, survey, entiti, sponsor, cash, transact, compris, recreat, crisi, estat, censu, investor, trust, employe, vast, ownership, chariti, ministri, illeg, patent, acquisit, infrastructur, ventur, debt, co, donat, tourism, domest, retail, newli, telephon, financ,
		\par
	}

	\item[$\mathcal{V}_{33} = $]
	{
		\tiny

		austin, virginia, maryland, kentucki, connecticut, atlanta, jersey, portland, seattl, boston, oregon, kansa, chicago, manchest, iowa, wisconsin, melbourn, avenu, reloc, pittsburgh, illinoi, baltimor, michigan, ranger, arizona, downtown, miami, liverpool, brooklyn, houston, phoenix, detroit, arena, toronto, dalla, colorado, birmingham, louisiana, denver, philadelphia, pennsylvania, texa, berlin, suburb, tech, minnesota, cardin, manhattan, buffalo, indiana, usa, fc, utah, massachusett, metropolitan, cleveland, georgia, missouri, florida, alabama, sydney, borough, ohio, arkansa, oklahoma, tennesse, mississippi,
		\par
	}

	\item[$\mathcal{V}_{34} = $]
	{
		\tiny

		storm, plate, cycl, boat, chamber, rear, winter, bomb, apart, meter, floor, leg, bottom, steel, frame, burn, flood, ring, door, insid, wave, fli, bar, switch, panel, tabl, block, attach, height, column, parallel, spring, glass, tank, onto, edg, gate, mill, stone, forward, bond, lock, wheel, circl, vessel, crash, deep, chain, mm, stream, seed, shop, path, circuit, pair, wood, garden, tower, feet, fill, mount, foot, factori, plane, truck,
		\par
	}

	\item[$\mathcal{V}_{35} = $]
	{
		\tiny

		medicin, cooper, attract, statu, librari, mainli, declin, formal, prepar, focus, hospit, agent, demand, programm, architectur, whole, organis, recogn, themselv, file, divid, attent, foundat, agenc, purpos, primarili, discuss, mostli, document, legal, teach, benefit, domin, conduct, controversi, employ, regul, basi, effort, aid, exhibit, mission, economi, intellig, custom, job, cours, situat, convent, money, emerg, oppos, aim, branch, progress, expand, opportun, secret, worker, contact, conflict,
		\par
	}

	\item[$\mathcal{V}_{36} = $]
	{
		\tiny

		convict, {\textquoteleft}, someon, answer, hear, divorc, herself, neither, didnt, truth, nor, im, alon, mind, heard, impress, babi, admit, sentenc, promis, fear, dead, punish, perfect, noth, older, anyth, bad, sleep, remark, gift, wish, moment, birth, wit, victim, convinc, spirit, everyth, intent, talent, commit, jesu, notic, inde, suspect, soul, unknown, realiz, mysteri, whi, occas, beauti, coupl, happi, chanc, rememb, gay, holi, dream, listen,
		\par
	}

	\item[$\mathcal{V}_{37} = $]
	{
		\tiny

		sourc, concept, method, problem, protect, appli, rather, object, experi, rel, subject, particular, measur, individu, occur, reason, condit, combin, certain, specif, improv, normal, theori, express, concern, complex, approach, evid, sound, physic, typic, imag, structur, properti, applic, materi, function, formula, itself, defin, signific, element, observ, speci, remov, code, indic, compar, valu, therefor, either, data,
		\par
	}

	\item[$\mathcal{V}_{38} = $]
	{
		\tiny

		account, futur, learn, shown, suggest, seen, recent, face, particularli, achiev, relationship, past, adopt, introduc, hold, propos, full, memori, charg, reveal, surviv, maintain, separ, numer, carri, contribut, today, share, respect, promot, subsequ, regard, select, key, particip, gave, advanc, earn, accept, despit, saw, especi, rais, gain, whose, identifi, except, least, argu, toward, extend,
		\par
	}

	\item[$\mathcal{V}_{39} = $]
	{
		\tiny

		toni, clark, o, miller, tom, bell, jordan, scott, adam, harri, jone, marshal, frank, ford, brown, kennedi, jr, chri, allen, johnson, mike, moor, simon, howard, anderson, knight, don, bush, ray, jack, van, daniel, jim, von, roger, lee, iv, taylor, jackson, lewi, joe, davi, sam, biographi, wilson, ali, steve, smith, khan, bob,
		\par
	}

	\item[$\mathcal{V}_{40} = $]
	{
		\tiny

		phrase, bibl, prayer, legaci, illustr, ritual, poet, tale, liter, gospel, narr, poetri, scholar, influenti, cinema, dub, amongst, reader, hebrew, essay, linguist, devot, romanc, painter, legend, dialect, chapter, icon, philosoph, spoken, poem, manuscript, speaker, vers, heritag, spell, canon, romant, cult, quot, chronicl, literari, earliest, wellknown, sculptur, reviv, pioneer, script, literatur,
		\par
	}

	\item[$\mathcal{V}_{41} = $]
	{
		\tiny

		bird, yellow, sun, wine, resembl, lion, dress, grey, flower, spot, dragon, eat, hair, dog, planet, moon, breed, belt, coin, wild, cloth, colour, magic, orang, hunt, iron, eagl, wear, worn, rain, snow, coat, cat, decor, ride, rose, hors, rich, diamond, bull, rice, hole, ice, dark, uniform, seal, cook, bear, shield,
		\par
	}

	\item[$\mathcal{V}_{42} = $]
	{
		\tiny

		anglican, liturgi, congreg, sunni, clergi, pagan, ld, apostol, oath, pradesh, lutheran, baptist, methodist, pastor, shia, judaism, theologian, scriptur, brotherhood, rabbin, sect, nun, episcop, cleric, apostl, theolog, basilica, dioces, persecut, secular, hinduism, evangel, ecclesiast, communion, parish, triniti, seminari, jesuit, marxist, christ, mormon, presbyterian, orthodox, anarchist, preach, libertarian, ordain, martyr,
		\par
	}

	\item[$\mathcal{V}_{43} = $]
	{
		\tiny

		manuel, juan, rosa, carlo, pedro, s{\~a}o, lorenzo, sierra, di, rafael, jos{\'e}, giovanni, lui, roberto, mont, pablo, andr{\'e}, fernando, marco, jorg, gabriel, alberto, silva, aviv, miguel, hugo, ana, cruz, copa, fidel, mar{\'\i}a, torr, garcia, monaco, paulo, du, polo, marino, castro, antonio, santo, jose, franco, bernardino, santiago,
		\par
	}

	\item[$\mathcal{V}_{44} = $]
	{
		\tiny

		northeast, basin, migrat, cave, corner, creek, resort, southwest, tunnel, railroad, nearbi, southeast, pacif, mediterranean, inhabit, geographi, pole, atlant, coastal, boundari, restaur, municip, canal, dam, desert, km, highway, headquart, ocean, adjac, trail, cape, northwest, hotel, fort, stretch, castl, plain, entranc, beach, shore, mile, underground, neighborhood,
		\par
	}

	\item[$\mathcal{V}_{45} = $]
	{
		\tiny

		sever, manag, chang, base, found, area, provid, although, produc, product, creat, power, intern, complet, report, each, open, line, within, local, act, point, anoth, remain, lead, own, compani, oper, major, addit, accord, continu, receiv, design, set, under, present, build, current, form, hous, same, support,
		\par
	}

	\item[$\mathcal{V}_{46} = $]
	{
		\tiny

		violenc, threat, leadership, belief, opinion, faith, recognit, motiv, scientist, resolut, argument, freedom, divin, speech, intellectu, spiritu, duti, philosophi, alleg, advoc, conclus, ethic, disput, moral, corrupt, instruct, exercis, excel, choic, expert, examin, favour, vision, nevertheless, debat, reput, reject, doctrin, disciplin, statement, creativ, dismiss, assert,
		\par
	}

	\item[$\mathcal{V}_{47} = $]
	{
		\tiny

		natur, those, special, consist, activ, though, limit, repres, engin, bodi, possibl, market, further, involv, test, project, exampl, model, standard, respons, industri, contain, effect, issu, type, land, event, exist, human, period, class, control, case, term, way, great, process, ad, anim, offer, requir,
		\par
	}

	\item[$\mathcal{V}_{48} = $]
	{
		\tiny

		figur, mention, credit, print, pictur, piec, collabor, plot, earlier, label, novel, text, theme, screen, celebr, websit, journal, newspap, adapt, arrang, instrument, press, comment, magazin, scene, audienc, interview, page, letter, volum, articl, voic, paper, background, doctor, inspir, card, edit, fan, mix, paint,
		\par
	}

	\item[$\mathcal{V}_{49} = $]
	{
		\tiny

		peabodi, jubile, daytim, pulitz, bafta, primetim, filmfar, prizewin, awardwin, allamerican, nobel, posthum, telecast, desk, firstteam, accolad, emmi, globe, saturn, finalist, cann, sundanc, oscar, nomine, brit, gemini, prestigi, baseman, medalist, carnegi, nielsen, laureat, guild, juno, dove, mvp, honorari, cw, mellon, grammi,
		\par
	}

	\item[$\mathcal{V}_{50} = $]
	{
		\tiny

		founder, deputi, formerli, mayor, bishop, meanwhil, successor, chose, chairman, admir, lincoln, advis, secretari, fellow, hire, cabinet, crown, invit, inaugur, ceo, scout, editor, politician, resign, lawyer, colonel, journalist, assassin, mp, veteran, lieuten, architect, chair, rival, renam, presidenti, briefli, victoria, commission, attorney,
		\par
	}

	\item[$\mathcal{V}_{51} = $]
	{
		\tiny

		tonn, usd, capita, lb, ton, ago, litr, cubic, cent, trillion, exceed, metric, kilomet, {\texteuro}, pound, gram, annum, dollar, revolv, yen, fifti, euro, kg, kilogram, crore, gallon, kilometr, se, gdp, weigh, acr, gross, hectar, rs,
		\par
	}

	\item[$\mathcal{V}_{52} = $]
	{
		\tiny

		cavalri, jet, helicopt, rifl, warfar, battalion, raid, assault, naval, airborn, artilleri, fleet, bomber, missil, guard, strateg, squadron, submarin, regiment, patrol, expedit, fighter, corp, deploy, infantri, brigad, armour, carrier, combat, tactic, aviat, personnel, armor,
		\par
	}

	\item[$\mathcal{V}_{53} = $]
	{
		\tiny

		egyptian, portugues, polish, welsh, palestinian, vice, austrian, commonwealth, oversea, nazi, puerto, dutch, czech, scottish, confeder, swiss, hungarian, tamil, continent, merchant, sri, turkish, mexican, irish, iranian, danish, isra, belgian, imperi, swedish, norwegian, provinci, brazilian,
		\par
	}

	\item[$\mathcal{V}_{54} = $]
	{
		\tiny

		behavior, mental, cancer, brain, emot, neg, therapi, symptom, psycholog, compound, diseas, abus, clinic, seriou, ill, reaction, depress, gene, treatment, surgeri, biolog, pain, stress, treat, patient, sexual, infect, chemic, disord, failur, genet, risk,
		\par
	}

	\item[$\mathcal{V}_{55} = $]
	{
		\tiny

		via, varieti, link, extens, construct, flight, ground, manufactur, store, transport, equip, ship, access, distribut, suppli, rang, facil, space, avail, format, free, commerci, car, weapon, fire, aircraft, food, comput, plant, vehicl, connect,
		\par
	}

	\item[$\mathcal{V}_{56} = $]
	{
		\tiny

		coverag, regularli, hollywood, comedi, disney, logo, weekli, anchor, amateur, serial, daili, theater, fox, fm, poll, documentari, realiti, bbc, cancel, holiday, venu, affili, franchis, drama, nbc, mail, syndic, sky, cb, abc, cartoon,
		\par
	}

	\item[$\mathcal{V}_{57} = $]
	{
		\tiny

		brief, genr, adventur, tape, disc, photo, compil, biggest, cd, soundtrack, mainstream, dvd, hero, entitl, highlight, signatur, lyric, favorit, ep, christma, audio, certifi, lineup, fantasi, tune, greatest, session, sing, solo, string,
		\par
	}

	\item[$\mathcal{V}_{58} = $]
	{
		\tiny

		civilian, jew, citizen, occup, immigr, egypt, era, affair, regim, pakistan, occupi, tribe, settl, settlement, soldier, republ, coloni, israel, airlin, allianc, invas, slave, flag, revolut, alli, troop, camp, philippin, rome, navi,
		\par
	}

	\item[$\mathcal{V}_{59} = $]
	{
		\tiny

		previous, soon, simpli, refus, heart, frequent, better, yet, longer, might, hope, agre, actual, intend, commonli, {\textemdash}, here, hard, immedi, unabl, expect, ultim, quickli, alway, alreadi, fail, ever, probabl,
		\par
	}

	\item[$\mathcal{V}_{60} = $]
	{
		\tiny

		miss, crew, schedul, youth, challeng, elimin, prior, entri, incid, beat, lose, struggl, driver, strike, compet, owner, crime, offens, enemi, shoot, partner, defend, draw, contest, tie, latter,
		\par
	}

	\item[$\mathcal{V}_{61} = $]
	{
		\tiny

		abl, claim, doe, must, hand, upon, attempt, still, need, action, order, initi, without, onc, instead, find, should, decid, help, never, right, plan, eventu, tri,
		\par
	}

	\item[$\mathcal{V}_{62} = $]
	{
		\tiny

		softwar, advertis, joint, licens, rail, microsoft, transit, onlin, web, instal, digit, mobil, camera, bu, traffic, satellit, platform, window, cabl, phone, passeng, termin, internet,
		\par
	}

	\item[$\mathcal{V}_{63} = $]
	{
		\tiny

		appar, desir, fulli, lot, increasingli, highli, beyond, quit, understand, difficult, prove, perhap, prefer, easili, clear, rare, danger, sex, awar, necessari, extrem, tend,
		\par
	}

	\item[$\mathcal{V}_{64} = $]
	{
		\tiny

		stop, stand, keep, behind, taken, brought, escap, travel, drop, destroy, fall, rest, discov, captur, sent, save, arriv, bring, visit, drive, stay, put,
		\par
	}

	\item[$\mathcal{V}_{65} = $]
	{
		\tiny

		orchestra, hop, acoust, symphoni, hip, guitarist, vocalist, bass, folk, drum, guitar, vocal, keyboard, warner, rb, punk, jazz, pop, rhythm, piano, songwrit,
		\par
	}

	\item[$\mathcal{V}_{66} = $]
	{
		\tiny

		japan, canada, territori, zealand, germani, europ, spain, mexico, england, ireland, kingdom, franc, australia, border, china, capit, russia, provinc, itali, scotland, pari,
		\par
	}

	\item[$\mathcal{V}_{67} = $]
	{
		\tiny

		dont, think, seem, ask, want, told, feel, felt, happen, realli, done, someth, thought, am, let, know, talk, explain, tell, got,
		\par
	}

	\item[$\mathcal{V}_{68} = $]
	{
		\tiny

		concentr, profil, elev, capac, extent, ratio, yield, significantli, domain, incom, output, quantiti, densiti, sum, percentag, slightli, frequenc, effici, voltag, proport,
		\par
	}

	\item[$\mathcal{V}_{69} = $]
	{
		\tiny

		bridg, villag, road, front, hill, site, templ, section, street, middl, outsid, rout, valley, mountain, centr, cross, resid, port, nativ,
		\par
	}

	\item[$\mathcal{V}_{70} = $]
	{
		\tiny

		again, led, return, die, sign, join, leav, replac, meet, attack, togeth, begin, enter, left, lost, reach, kill, held, mark,
		\par
	}

	\item[$\mathcal{V}_{71} = $]
	{
		\tiny

		younger, husband, historian, succeed, alongsid, saint, reign, ladi, lord, queen, emperor, pope, portray, sir, mr, dr, le, captain, princ,
		\par
	}

	\item[$\mathcal{V}_{72} = $]
	{
		\tiny

		boy, king, wife, parent, friend, son, murder, marri, father, met, brother, woman, girl, mother, daughter, marriag, sister, whom, child,
		\par
	}

	\item[$\mathcal{V}_{73} = $]
	{
		\tiny

		soap, bueno, thth, fifteenth, buckingham, sixteenth, tampa, nineteenth, fourteenth, thirteenth, eleventh, midth, twentieth, twentyfirst, twilight, eighteenth, twelfth, pga, seventeenth,
		\par
	}

	\item[$\mathcal{V}_{74} = $]
	{
		\tiny

		sinc, origin, member, success, appear, live, life, them, group, against, histori, peopl, home, famili, seri, show, countri, perform,
		\par
	}

	\item[$\mathcal{V}_{75} = $]
	{
		\tiny

		park, side, along, region, across, river, western, central, london, built, northern, town, eastern, island, near, locat, throughout, southern,
		\par
	}

	\item[$\mathcal{V}_{76} = $]
	{
		\tiny

		undergradu, cambridg, oldest, oxford, lectur, bachelor, faculti, enrol, graduat, phd, professor, taught, teacher, scholarship, nurs, campu, harvard, galleri,
		\par
	}

	\item[$\mathcal{V}_{77} = $]
	{
		\tiny

		legisl, administr, seat, committe, congress, vote, parliament, campaign, opposit, commiss, council, leader, governor, senat, bill, candid, assembl,
		\par
	}

	\item[$\mathcal{V}_{78} = $]
	{
		\tiny

		june, born, februari, novemb, august, januari, career, juli, septemb, forc, york, decemb, march, april, octob, announc, began,
		\par
	}

	\item[$\mathcal{V}_{79} = $]
	{
		\tiny

		given, person, allow, result, becaus, consid, veri, even, refer, among, describ, mean, give, see, make, caus, due,
		\par
	}

	\item[$\mathcal{V}_{80} = $]
	{
		\tiny

		oil, spread, upper, urban, climat, lie, portion, agricultur, ga, zone, farm, forest, surround, bay, wind, flow, mine,
		\par
	}

	\item[$\mathcal{V}_{81} = $]
	{
		\tiny

		sunday, monday, saturday, pm, walt, tuesday, afterward, wednesday, weekend, shortli, friday, thereaft, weekday, morn, thursday, afternoon, newscast,
		\par
	}

	\item[$\mathcal{V}_{82} = $]
	{
		\tiny

		medici, bayern, aston, jure, plata, liga, facto, sall, moin, atl{\'e}tico, rothschild, havilland, janeiro, palma, vega, versail, gaull,
		\par
	}

	\item[$\mathcal{V}_{83} = $]
	{
		\tiny

		b, c, x, v, iii, g, e, r, f, j, d, k, p, l, w, h,
		\par
	}

	\item[$\mathcal{V}_{84} = $]
	{
		\tiny

		next, summer, previou, ten, hour, five, everi, nine, seven, six, eight, big, hall, night, entir, few,
		\par
	}

	\item[$\mathcal{V}_{85} = $]
	{
		\tiny

		research, commun, servic, inform, studi, scienc, educ, organ, econom, center, institut, polit, busi, program, train, social,
		\par
	}

	\item[$\mathcal{V}_{86} = $]
	{
		\tiny

		elizabeth, loui, arthur, alexand, stephen, patrick, martin, mari, philip, joseph, lawrenc, andrew, franci, edward, ann, duke,
		\par
	}

	\item[$\mathcal{V}_{87} = $]
	{
		\tiny

		blood, gun, light, tree, wing, room, eye, sea, machin, surfac, heavi, color, earth, metal, wall, fish,
		\par
	}

	\item[$\mathcal{V}_{88} = $]
	{
		\tiny

		draft, nba, pick, rugbi, nfl, squad, junior, winner, tournament, playoff, qualifi, confer, ncaa, retir, senior,
		\par
	}

	\item[$\mathcal{V}_{89} = $]
	{
		\tiny

		christian, greek, thousand, latin, arab, protest, islam, muslim, hundr, speak, minor, translat, ancient, religion, jewish,
		\par
	}

	\item[$\mathcal{V}_{90} = $]
	{
		\tiny

		exchang, sold, acquir, invest, sale, brand, purchas, sell, transfer, corpor, global, fund, pay, stock, privat,
		\par
	}

	\item[$\mathcal{V}_{91} = $]
	{
		\tiny

		true, thing, littl, idea, your, question, my, whether, how, our, look, fact, god, too, me,
		\par
	}

	\item[$\mathcal{V}_{92} = $]
	{
		\tiny

		famou, note, write, list, short, wrote, danc, classic, compos, collect, notabl, read, written, cover,
		\par
	}

	\item[$\mathcal{V}_{93} = $]
	{
		\tiny

		justic, investig, appeal, judg, crimin, approv, declar, constitut, peac, decis, arrest, prison, trial, grant,
		\par
	}

	\item[$\mathcal{V}_{94} = $]
	{
		\tiny

		avoid, damag, prevent, suffer, affect, grow, potenti, poor, resist, loss, drug, injuri, strong, lack,
		\par
	}

	\item[$\mathcal{V}_{95} = $]
	{
		\tiny

		languag, cultur, practic, word, relat, view, interest, histor, movement, influenc, associ, modern, tradit, societi,
		\par
	}

	\item[$\mathcal{V}_{96} = $]
	{
		\tiny

		russian, indian, french, canadian, german, australian, spanish, royal, english, foreign, italian, japanes, chines,
		\par
	}

	\item[$\mathcal{V}_{97} = $]
	{
		\tiny

		differ, small, larg, main, ani, close, similar, import, popular, common, good, variou, increas,
		\par
	}

	\item[$\mathcal{V}_{98} = $]
	{
		\tiny

		charl, john, georg, robert, michael, jame, paul, david, william, henri, thoma, peter, richard,
		\par
	}

	\item[$\mathcal{V}_{99} = $]
	{
		\tiny

		twoyear, lockhe, rhode, oneyear, rio, virgin, zeppelin, fouryear, threeyear, amus, sponsorship, fiveyear,
		\par
	}

	\item[$\mathcal{V}_{100} = $]
	{
		\tiny

		athlet, pro, cricket, basebal, profession, wrestl, soccer, bowl, basketbal, hockey, super, stadium,
		\par
	}

	\item[$\mathcal{V}_{101} = $]
	{
		\tiny

		final, start, late, second, end, four, earli, until, last, befor, three,
		\par
	}

	\item[$\mathcal{V}_{102} = $]
	{
		\tiny

		price, speed, level, energi, pressur, temperatur, degre, rate, water, cell, cost,
		\par
	}

	\item[$\mathcal{V}_{103} = $]
	{
		\tiny

		stage, fight, defeat, race, club, battl, victori, player, competit, match, win,
		\par
	}

	\item[$\mathcal{V}_{104} = $]
	{
		\tiny

		pass, come, turn, go, came, get, run, date, just, move, went,
		\par
	}

	\item[$\mathcal{V}_{105} = $]
	{
		\tiny

		fa, afc, stanley, middleweight, fifa, intercontinent, sprint, nfc, uefa, heavyweight, costa,
		\par
	}

	\item[$\mathcal{V}_{106} = $]
	{
		\tiny

		charact, book, music, role, featur, stori, star, titl, direct, version,
		\par
	}

	\item[$\mathcal{V}_{107} = $]
	{
		\tiny

		larger, less, below, greater, higher, smaller, abov, reduc, lower, low,
		\par
	}

	\item[$\mathcal{V}_{108} = $]
	{
		\tiny

		so, i, we, could, what, do, did, you, if, like,
		\par
	}

	\item[$\mathcal{V}_{109} = $]
	{
		\tiny

		negoti, firm, provis, proceed, jurisdict, impos, amend, violat, prohibit, enforc,
		\par
	}

	\item[$\mathcal{V}_{110} = $]
	{
		\tiny

		prize, outstand, honor, silver, honour, bronz, ceremoni, medal, golden,
		\par
	}

	\item[$\mathcal{V}_{111} = $]
	{
		\tiny

		channel, movi, launch, host, entertain, news, media, broadcast, sport,
		\par
	}

	\item[$\mathcal{V}_{112} = $]
	{
		\tiny

		consecut, eighth, tenth, ninth, seventh, rd, sixth, fifth, nd,
		\par
	}

	\item[$\mathcal{V}_{113} = $]
	{
		\tiny

		children, himself, young, man, women, age, men, death, old,
		\par
	}

	\item[$\mathcal{V}_{114} = $]
	{
		\tiny

		annual, total, tax, percent, highest, rise, estim, averag, growth,
		\par
	}

	\item[$\mathcal{V}_{115} = $]
	{
		\tiny

		socialist, reform, labour, communist, liber, democrat, republican, labor, conserv,
		\par
	}

	\item[$\mathcal{V}_{116} = $]
	{
		\tiny

		former, attend, chief, assist, elect, board, serv, head, appoint,
		\par
	}

	\item[$\mathcal{V}_{117} = $]
	{
		\tiny

		spent, roughli, almost, squar, spend, approxim, worth, nearli, {\textsterling},
		\par
	}

	\item[$\mathcal{V}_{118} = $]
	{
		\tiny

		academ, environment, scientif, technic, medic, primari, financi, health, secondari,
		\par
	}

	\item[$\mathcal{V}_{119} = $]
	{
		\tiny

		outbreak, gulf, vietnam, cold, iraq, korean, tag, revolutionari,
		\par
	}

	\item[$\mathcal{V}_{120} = $]
	{
		\tiny

		establish, offici, church, rule, offic, author, independ, parti,
		\par
	}

	\item[$\mathcal{V}_{121} = $]
	{
		\tiny

		ball, roll, walk, cut, step, break, shot,
		\par
	}

	\item[$\mathcal{V}_{122} = $]
	{
		\tiny

		weight, size, length, distanc, enough, scale, amount,
		\par
	}

	\item[$\mathcal{V}_{123} = $]
	{
		\tiny

		indi, korea, asian, coast, wale, asia, carolina,
		\par
	}

	\item[$\mathcal{V}_{124} = $]
	{
		\tiny

		doubl, femal, male, promin, adult, guest, cast,
		\par
	}

	\item[$\mathcal{V}_{125} = $]
	{
		\tiny

		champion, cup, premier, coach, footbal, championship,
		\par
	}

	\item[$\mathcal{V}_{126} = $]
	{
		\tiny

		song, top, singl, album, track, band,
		\par
	}

	\item[$\mathcal{V}_{127} = $]
	{
		\tiny

		command, polic, staff, execut, post, box,
		\par
	}

	\item[$\mathcal{V}_{128} = $]
	{
		\tiny

		green, white, red, blue, black, gold,
		\par
	}

	\item[$\mathcal{V}_{129} = $]
	{
		\tiny

		hit, uk, studio, concert, debut, tour,
		\par
	}

	\item[$\mathcal{V}_{130} = $]
	{
		\tiny

		versa, latterday, rico, nadu, rica, rican,
		\par
	}

	\item[$\mathcal{V}_{131} = $]
	{
		\tiny

		y, grand, et, del, el, al,
		\par
	}

	\item[$\mathcal{V}_{132} = $]
	{
		\tiny

		say, said, {\textquoteright}, believ, love,
		\par
	}

	\item[$\mathcal{V}_{133} = $]
	{
		\tiny

		museum, contemporari, fine, master, martial,
		\par
	}

	\item[$\mathcal{V}_{134} = $]
	{
		\tiny

		american, west, east, south, north,
		\par
	}

	\item[$\mathcal{V}_{135} = $]
	{
		\tiny

		america, africa, india, african, bank,
		\par
	}

	\item[$\mathcal{V}_{136} = $]
	{
		\tiny

		singer, musician, actress, writer, actor,
		\par
	}

	\item[$\mathcal{V}_{137} = $]
	{
		\tiny

		secur, task, defens, reserv, defenc,
		\par
	}

	\item[$\mathcal{V}_{138} = $]
	{
		\tiny

		through, back, away, down, off,
		\par
	}

	\item[$\mathcal{V}_{139} = $]
	{
		\tiny

		third, regular, finish, rank, fourth,
		\par
	}

	\item[$\mathcal{V}_{140} = $]
	{
		\tiny

		half, largest, round, decad, quarter,
		\par
	}

	\item[$\mathcal{V}_{141} = $]
	{
		\tiny

		now, becom, best, becam, well,
		\par
	}

	\item[$\mathcal{V}_{142} = $]
	{
		\tiny

		artist, rock, style, director, video,
		\par
	}

	\item[$\mathcal{V}_{143} = $]
	{
		\tiny

		olymp, theatr, festiv, airport, trade,
		\par
	}

	\item[$\mathcal{V}_{144} = $]
	{
		\tiny

		yard, field, score, touchdown, goal,
		\par
	}

	\item[$\mathcal{V}_{145} = $]
	{
		\tiny

		prix, juri, slam, duchi, testament,
		\par
	}

	\item[$\mathcal{V}_{146} = $]
	{
		\tiny

		motor, magnet, real, nuclear, electr,
		\par
	}

	\item[$\mathcal{V}_{147} = $]
	{
		\tiny

		million, around, popul, us,
		\par
	}

	\item[$\mathcal{V}_{148} = $]
	{
		\tiny

		british, union, presid, govern,
		\par
	}

	\item[$\mathcal{V}_{149} = $]
	{
		\tiny

		earl, birthday, grade, anniversari,
		\par
	}

	\item[$\mathcal{V}_{150} = $]
	{
		\tiny

		univers, art, law, student,
		\par
	}

	\item[$\mathcal{V}_{151} = $]
	{
		\tiny

		billboard, hot, peak, chart,
		\par
	}

	\item[$\mathcal{V}_{152} = $]
	{
		\tiny

		usual, often, thu, sometim,
		\par
	}

	\item[$\mathcal{V}_{153} = $]
	{
		\tiny

		season, leagu, war, team,
		\par
	}

	\item[$\mathcal{V}_{154} = $]
	{
		\tiny

		fiction, technolog, polici, care,
		\par
	}

	\item[$\mathcal{V}_{155} = $]
	{
		\tiny

		recept, review, prais, acclaim,
		\par
	}

	\item[$\mathcal{V}_{156} = $]
	{
		\tiny

		francisco, angel, diego, kong,
		\par
	}

	\item[$\mathcal{V}_{157} = $]
	{
		\tiny

		divis, minut, overal,
		\par
	}

	\item[$\mathcal{V}_{158} = $]
	{
		\tiny

		militari, armi, depart,
		\par
	}

	\item[$\mathcal{V}_{159} = $]
	{
		\tiny

		opera, marin, palac,
		\par
	}

	\item[$\mathcal{V}_{160} = $]
	{
		\tiny

		nomin, academi, won,
		\par
	}

	\item[$\mathcal{V}_{161} = $]
	{
		\tiny

		n, m, t,
		\par
	}

	\item[$\mathcal{V}_{162} = $]
	{
		\tiny

		lake, britain, deal,
		\par
	}

	\item[$\mathcal{V}_{163} = $]
	{
		\tiny

		long, much, far,
		\par
	}

	\item[$\mathcal{V}_{164} = $]
	{
		\tiny

		determin, vari, depend,
		\par
	}

	\item[$\mathcal{V}_{165} = $]
	{
		\tiny

		known, took, take,
		\par
	}

	\item[$\mathcal{V}_{166} = $]
	{
		\tiny

		agreement, contract, treati,
		\par
	}

	\item[$\mathcal{V}_{167} = $]
	{
		\tiny

		copi, billion, per,
		\par
	}

	\item[$\mathcal{V}_{168} = $]
	{
		\tiny

		televis, episod, tv,
		\par
	}

	\item[$\mathcal{V}_{169} = $]
	{
		\tiny

		day, month, week,
		\par
	}

	\item[$\mathcal{V}_{170} = $]
	{
		\tiny

		washington, district, suprem,
		\par
	}

	\item[$\mathcal{V}_{171} = $]
	{
		\tiny

		soviet, feder, european,
		\par
	}

	\item[$\mathcal{V}_{172} = $]
	{
		\tiny

		religi, indigen, ethnic,
		\par
	}

	\item[$\mathcal{V}_{173} = $]
	{
		\tiny

		counti, colleg, california,
		\par
	}

	\item[$\mathcal{V}_{174} = $]
	{
		\tiny

		mid, bc,
		\par
	}

	\item[$\mathcal{V}_{175} = $]
	{
		\tiny

		critic, posit,
		\par
	}

	\item[$\mathcal{V}_{176} = $]
	{
		\tiny

		retain, assum,
		\par
	}

	\item[$\mathcal{V}_{177} = $]
	{
		\tiny

		ottoman, roman,
		\par
	}

	\item[$\mathcal{V}_{178} = $]
	{
		\tiny

		minist, prime,
		\par
	}

	\item[$\mathcal{V}_{179} = $]
	{
		\tiny

		ii, civil,
		\par
	}

	\item[$\mathcal{V}_{180} = $]
	{
		\tiny

		radio, railway,
		\par
	}

	\item[$\mathcal{V}_{181} = $]
	{
		\tiny

		de, la,
		\par
	}

	\item[$\mathcal{V}_{182} = $]
	{
		\tiny

		air, arm,
		\par
	}

	\item[$\mathcal{V}_{183} = $]
	{
		\tiny

		comic, publish,
		\par
	}

	\item[$\mathcal{V}_{184} = $]
	{
		\tiny

		columbia, dc,
		\par
	}

	\item[$\mathcal{V}_{185} = $]
	{
		\tiny

		station, network,
		\par
	}

	\item[$\mathcal{V}_{186} = $]
	{
		\tiny

		lanka, lankan,
		\par
	}

	\item[$\mathcal{V}_{187} = $]
	{
		\tiny

		cathol, empir,
		\par
	}

	\item[$\mathcal{V}_{188} = $]
	{
		\tiny

		court, school,
		\par
	}

	\item[$\mathcal{V}_{189} = $]
	{
		\tiny

		high, public,
		\par
	}

	\item[$\mathcal{V}_{190} = $]
	{
		\tiny

		lo, hong,
		\par
	}

	\item[$\mathcal{V}_{191} = $]
	{
		\tiny

		{\textquotedblleft}, {\textquotedblright},
		\par
	}

	\item[$\mathcal{V}_{192} = $]
	{
		\tiny

		st,
		\par
	}

	\item[$\mathcal{V}_{193} = $]
	{
		\tiny

		san,
		\par
	}

	\item[$\mathcal{V}_{194} = $]
	{
		\tiny

		fame,
		\par
	}

	\item[$\mathcal{V}_{195} = $]
	{
		\tiny

		award,
		\par
	}

	\item[$\mathcal{V}_{196} = $]
	{
		\tiny

		place,
		\par
	}

	\item[$\mathcal{V}_{197} = $]
	{
		\tiny

		wide,
		\par
	}

	\item[$\mathcal{V}_{198} = $]
	{
		\tiny

		centuri,
		\par
	}

	\item[$\mathcal{V}_{199} = $]
	{
		\tiny

		th,
		\par
	}

	\item[$\mathcal{V}_{200} = $]
	{
		\tiny

		$\emptyset$
	}

\end{itemize}

\subsection{Companies with the highest daily returns}
\label{sec:Appendix__raw_data__Companies_with_the_highest_daily_returns}

\subsubsection{Sector breakdown within the \texorpdfstring{\gls{SP500}}{SP500} and the dataset}
\label{sec:Appendix_raw_data__Stock_market__Sector_breakdown_within_the_SP500_and_the_dataset}

Table \ref{tab:Sector_breakdown_of_the_SP500} contains the sector breakdown within the dataset as well as the \gls{SP500}.
These numbers are based on \gls{SP500}'s factsheet from 2022.

\begin{table}[hbtp]
	\centering
	\begin{tabular}{p{5cm}p{1.5cm}p{4cm}}
		\toprule
		Sector & Weight & Percentage in dataset \\
		\midrule
		Industrials & 7.8\% & 16.3\% \\
		Health Care & 12.7\% & 11.0\% \\
		Information Technology & 29.3\% & 10.0\% \\
		Consumer Discretionary & 13.2\% & 10.7\% \\
		Communication Services & 10.4 & 3.0 \\
		Consumer Staples & 5.6\% & 10.7\% \\
		Utilities & 2.4\% & 7.7\% \\
		Financials & 10.8\% & 14.0\% \\
		Materials & 2.5\% & 6.3\% \\
		Real Estate & 2.6\% & 7.3\% \\
		Energy & 2.7\% & 5.3\% \\
		\bottomrule
		\vspace{0pt}
	\end{tabular}
	\caption{Sector breakdown of the \gls{SP500} by index weight, together with the relative percentages of each sector within the $300$ constituents considered in the dataset.}
	\label{tab:Sector_breakdown_of_the_SP500}
\end{table}

\subsubsection{Ticker symbols of the \texorpdfstring{$300$}{300} constituents}
\label{sec:Appendix_raw_data__Stock_market__Ticker_symbols_of_the_300_constituents}

Here are the ticker symbols of the $300$ companies that we considered, in order:

\begin{itemize}[noitemsep]
	\item[$\mathcal{V} = $]
	{
		\tiny

		IP, CB, ZBH, AAPL, GS, IBM, AMGN, MMM, CVX, FDX, COST, CMI, UNP, AVB, BLK, SPG, HD, LMT, JNJ, KMB, JPM, GD, MCK, ESS, CI, UNH, CSCO, PXD, MCD, NVDA, INTC, PSA, MTB, HON, BXP, GWW, NOC, TMO, BA, INTU, APD, TRV, RTX, PEP, CAT, AMAT, TXN, ORCL, WHR, BDX, PPG, QCOM, SHW, UPS, PH, LRCX, PFE, NSC, HUM, ECL, DE, ADP, GE, SRE, ROK, WMT, EOG, MLM, PG, RE, DIS, NEE, T, ITW, KLAC, XOM, PNC, RL, AON, EA, LOW, BAC, AXP, VZ, CMCSA, SYK, EBAY, STZ, WFC, HPQ, ROP, AMT, ABT, CLX, BEN, C, LLY, SNA, SWK, MS, CTXS, KSU, MCO, MRK, EL, FRT, KO, HAL, APA, WM, SJM, ADI, DHR, FCX, JCI, VMC, MSI, IFF, SBUX, GILD, CTAS, CVS, ALL, UHS, COO, SLB, MMC, TT, MCHP, NLOK, MDT, HSY, TGT, BMY, TROW, NKE, USB, COP, EFX, XLNX, MO, DRI, ROST, DTE, JNPR, CCI, BBY, NTAP, DUK, OXY, TFX, VLO, LHX, PAYX, FITB, SBAC, ETR, GPS, COF, NEM, MRO, KR, YUM, CL, DD, DGX, WBA, SCHW, MAR, GLW, SO, BK, JBHT, NTRS, PGR, TJX, AIG, ADM, HWM, A, CAG, STT, SYY, HES, ABC, DOV, CSX, EIX, GIS, TFC, WMB, NUE, VFC, ETN, BAX, EMR, EXC, JKHY, CAH, AEP, AFL, XEL, TECH, ATVI, ARE, DVN, HIG, AVY, NWL, WY, OMC, PCAR, FE, D, MAS, POOL, LEN, BBWI, VNO, EQR, NI, TER, CPB, DHI, PEG, K, LUMN, PPL, HAS, MU, MKC, PLD, LNC, ZION, ED, APH, MGM, CNP, PVH, CMA, CTSH, EMN, FAST, TSN, IEX, RSG, AEE, EXPD, TSCO, TXT, ES, CINF, MOS, CHRW, CERN, PBCT, RCL, UDR, CTRA, PEAK, TAP, CCL, SEE, KIM, ALB, KEY, XRAY, RMD, STE, DRE, BWA, WEC, RHI, GPC, FMC, L, J, LEG, OKE, MAA, CMS, PHM, VTR, IRM, PKI, O, ODFL, SWKS, AES, HRL, BLL, AME, AJG, IVZ, RJF, PNR, GL, LUV, IPG, PNW.
		\par
	}
\end{itemize}

\subsubsection{Processed sequence of observations}
\label{sec:Appendix_raw_data__Stock_market__Processed_sequence_of_observations}

Here is the complete, processed sequence of observations:
\begin{itemize}[noitemsep]
	\item[$X_{1:\ell} = $]
	{
		\tiny

		ADI, AES, PVH, HUM, NTAP, AMT, EBAY, NTAP, J, RL, PVH, ROST, ODFL, DVN, EOG, XLNX, ODFL, LOW, A, INTU, CCI, NTAP, ODFL, ATVI, TSCO, EBAY, STE, BLL, MLM, EXPD, CCI, ODFL, LUV, J, AAPL, ZBH, HAS, TGT, ROK, AJG, CTXS, ODFL, NEM, NLOK, ATVI, SWKS, MSI, SWKS, EA, SWKS, MGM, MU, NLOK, JNPR, NEM, JNPR, CTXS, SCHW, SWKS, JNPR, XLNX, CTSH, TAP, USB, FCX, NTAP, NI, JNPR, HPQ, QCOM, SBAC, TSCO, PVH, SBAC, CTSH, SBAC, TSCO, SWKS, MOS, TER, SBAC, MKC, WMB, FCX, RMD, PKI, ODFL, USB, MRO, NTAP, ODFL, NTAP, ODFL, DRI, SWKS, AMT, WMB, EL, TSN, CTSH, ATVI, AES, GLW, ODFL, JNPR, TER, JBHT, HAL, ODFL, HAL, SWKS, HUM, ZBH, NVDA, SWKS, HUM, ABC, STE, HAL, QCOM, RCL, FCX, JCI, LMT, STE, AAPL, HUM, MOS, CCI, GLW, CCI, SBAC, MAR, PVH, PAYX, SJM, STE, TT, ROP, AES, WMB, ATVI, AMT, CCI, JNPR, WMB, SBAC, AES, JNPR, CTXS, ODFL, PNR, ODFL, AES, RSG, FCX, QCOM, PHM, JKHY, FMC, TSCO, JNPR, QCOM, PXD, AES, BMY, ODFL, NVDA, POOL, TSN, MKC, CCI, LRCX, SBAC, AMT, UNH, JKHY, MSI, SBAC, APD, NTAP, STE, NVDA, JCI, CMCSA, JCI, RMD, HPQ, EFX, NTAP, RL, RSG, NTAP, SWKS, CNP, JCI, TECH, CNP, NEM, WMB, SWKS, CNP, NEM, ODFL, NTAP, ROK, SJM, CTSH, ORCL, ODFL, MCHP, JCI, HUM, AES, JCI, AMGN, SBAC, JBHT, AMT, LUV, AMAT, CCI, AES, AMT, JCI, NEM, JNPR, ORCL, LRCX, CNP, CTXS, UPS, JCI, ROP, NTAP, AMT, SBAC, COF, AES, SBAC, JKHY, WMB, CNP, CCI, WMB, NWL, GLW, MO, MGM, IPG, LRCX, XEL, CCI, AON, SWKS, SBAC, CCI, GLW, WMB, AES, TSN, GLW, TSN, AMT, GLW, SBAC, CCI, NLOK, BBY, JKHY, COF, NVDA, AES, MOS, AES, HON, JCI, SWKS, PEP, DUK, WFC, EIX, MCHP, CNP, MCHP, WMB, DD, MU, VZ, GPS, XEL, BK, DGX, SBAC, AES, AMT, TSCO, APH, MU, AES, XEL, IPG, AMT, CCI, SBAC, WM, PKI, SBAC, AES, JNPR, NI, LMT, SWKS, CNP, LLY, WMB, NVDA, SBAC, WMB, SJM, AMT, SBAC, TER, AMT, SBAC, MSI, TER, HAL, AES, DUK, AES, STE, AMT, AES, LRCX, SBAC, AES, WMB, SEE, CCI, DRI, ODFL, NKE, SWKS, AES, SBAC, ATVI, TGT, WMB, SBAC, CERN, SBAC, GLW, TECH, AMT, LHX, WMB, GLW, STE, SBAC, WMB, SBUX, SBAC, AES, SBAC, TER, VLO, WMB, PKI, NUE, SBAC, GLW, AES, A, AES, ADI, WMB, DGX, SBAC, UHS, WMB, CMCSA, MU, CTAS, BBWI, GLW, CCI, WMB, IPG, GILD, TSCO, IPG, AME, ODFL, SBAC, WMB, SBAC, NTAP, SBAC, FCX, CVS, CMS, CTXS, FAST, AES, MCD, SYK, CCI, TGT, SBAC, CMI, CMS, JKHY, WMB, SBAC, ROK, SBAC, BDX, CMS, TAP, AMT, TECH, AES, SBAC, MCD, CCI, ROST, SBAC, HIG, SWKS, MSI, CTSH, AES, SBAC, INTU, MO, NI, SBAC, TXN, GLW, BBWI, SBAC, MU, CTSH, BBY, MCD, SWKS, MU, AMT, ODFL, PGR, BAX, HON, TER, SWKS, GILD, GPS, SBAC, CMS, CTSH, SBAC, CTXS, CCI, SBAC, SWKS, SBAC, TER, HPQ, GILD, LRCX, BLL, CERN, HAL, TT, CTSH, TSCO, POOL, FAST, CCI, SBAC, CI, CMI, XEL, SBAC, EXPD, RL, BBY, NKE, LRCX, TECH, KSU, TSN, TXN, MU, SBAC, NTAP, NLOK, JNPR, FCX, LRCX, AAPL, PKI, COO, J, COO, UHS, SBAC, NVDA, CMS, ROK, FE, RL, AES, CTSH, SCHW, SLB, SBAC, NLOK, MKC, LMT, AES, GLW, AMT, DHI, CCI, FAST, SBAC, SWKS, ODFL, TJX, CTSH, SBAC, IPG, GWW, EFX, SBAC, CERN, RE, POOL, PHM, STE, RCL, TER, CCI, AES, CI, MMC, EXPD, TXN, TJX, FCX, L, ABC, CERN, ABT, PGR, HAL, FCX, UNH, ATVI, RCL, NTAP, MOS, EA, LRCX, MOS, J, MOS, EOG, KSU, SBAC, CCL, MCHP, SWKS, AES, NEM, RSG, IPG, CTAS, CERN, MCD, SBAC, AAPL, NVDA, SBAC, FMC, MCK, SBAC, IRM, GILD, AMT, SBAC, JNPR, SBAC, TMO, TER, JNPR, LHX, TSCO, CAH, TSCO, TSN, AVY, PKI, LEG, RMD, TSN, PBCT, RMD, TER, WMB, EXPD, FCX, TFX, SBUX, MU, FCX, HSY, MCD, ADI, SWKS, PHM, AES, MCHP, CCI, COO, CCI, AAPL, BDX, AAPL, NLOK, TECH, FAST, HUM, ODFL, CTXS, TSN, MCD, SWK, NVDA, TER, PNR, FMC, NVDA, PBCT, BLK, TRV, SBAC, JBHT, PEAK, CERN, VLO, VTR, ABT, GWW, SBAC, ODFL, MCK, MSI, MCHP, BXP, TROW, AES, LUMN, MCK, EOG, JNPR, SWKS, DVN, TSN, FCX, DHI, FITB, NTAP, CMI, NTAP, NUE, ROST, ECL, PVH, PXD, ODFL, GLW, STE, NVDA, EXPD, AAPL, EA, JBHT, SBAC, MGM, EL, CMS, STZ, AME, AAPL, LEG, CNP, MOS, SBAC, JNPR, SCHW, A, COO, TER, HUM, WMB, MOS, SYK, FCX, XLNX, CTSH, TGT, HUM, CAH, UNH, PXD, MCHP, NTAP, AME, LRCX, HWM, PCAR, CCI, PKI, ZBH, SBAC, ADM, OKE, SWKS, ROST, GPS, MOS, TSCO, ADI, TGT, ADI, LUV, GLW, SWKS, INTU, SWKS, PHM, FMC, ATVI, AAPL, BLK, MOS, COO, ROST, COO, PHM, SBAC, NVDA, SBAC, MCHP, CAH, VTR, IEX, CTAS, TER, PXD, FITB, DRI, HAL, MRK, FMC, SBAC, CTSH, CTXS, SWKS, ORCL, DHR, ROP, SBAC, GLW, SBAC, AAPL, SBAC, GPC, AJG, SWK, TER, MCHP, MMC, VFC, OMC, TMO, IRM, PKI, HUM, AON, EXPD, MO, RJF, ROK, PVH, ATVI, PVH, HRL, EA, LHX, MO, AMAT, EOG, AAPL, CTRA, MOS, NEM, SBAC, LRCX, NTAP, SBAC, NTAP, CL, FCX, XLNX, ATVI, MOS, VTR, PHM, MRK, LEG, OXY, KSU, ODFL, EXPD, CAG, CTSH, MU, TER, HRL, EFX, HSY, EXPD, MGM, AAPL, ODFL, ATVI, HAL, ATVI, PXD, ADM, TGT, INTU, SBAC, ETR, J, AME, NVDA, MDT, NUE, VLO, RL, TSCO, RMD, TSCO, VFC, AON, CTSH, NTAP, ATVI, A, WMB, TSN, MRK, NEM, MMC, PHM, SLB, FMC, CTXS, TECH, FCX, JNPR, TAP, PVH, ADI, SBAC, HAL, MGM, VLO, DVN, HAL, NVDA, PAYX, VLO, FMC, AIG, HPQ, TXN, CMS, VLO, MCK, ODFL, VLO, RMD, TSCO, EOG, SEE, LLY, SYK, ABT, VLO, PVH, GLW, JKHY, GILD, ATVI, PKI, GLW, MCHP, HUM, EOG, PVH, EBAY, CCI, ATVI, GLW, SBAC, GLW, TMO, ORCL, LOW, A, YUM, AAPL, JNPR, AES, SWKS, PVH, CCI, STZ, PEAK, GPS, ZBH, ATVI, SWKS, EOG, MGM, MMC, EA, BBY, SBAC, POOL, TECH, IPG, KR, CCI, DVN, SBAC, STZ, PAYX, VTR, MKC, CTRA, ROK, IVZ, SCHW, TER, EA, RHI, CMI, CTAS, SWK, EA, TER, EBAY, TSCO, WHR, PNR, MCHP, IRM, ATVI, TSN, TAP, PBCT, ROST, NVDA, AME, CNP, APA, MCD, AAPL, NTAP, TECH, HPQ, CMS, CAT, SBAC, RMD, ATVI, ABC, HRL, LRCX, VLO, VMC, HES, COF, AAPL, MGM, COO, ATVI, CMI, RE, MU, MCD, RE, AAPL, NSC, AON, PGR, ALL, EOG, RMD, STZ, EBAY, MAR, EIX, CLX, LUV, TJX, ROST, APH, CTRA, TXN, AAPL, SWKS, MO, FITB, MGM, AAPL, SBAC, CTRA, HPQ, IRM, ADM, MSI, EXPD, JBHT, WMB, AON, GPS, SBAC, LRCX, PVH, BMY, TSN, CERN, EOG, INTU, MCHP, SBAC, DE, MMC, TER, COO, FCX, SBAC, TER, ZBH, GLW, NEM, ES, CTRA, ECL, INTU, PFE, JBHT, MO, DRI, MRK, TRV, GILD, HUM, ODFL, STZ, HES, ATVI, HES, ALB, ADM, LHX, ATVI, DHI, FCX, ATVI, NTAP, PEG, MOS, JNPR, APH, SLB, ETN, CHRW, GLW, TXT, CMI, AAPL, SBAC, MLM, JKHY, MOS, MU, RMD, VFC, TSN, AON, BMY, CTRA, ROP, VLO, COO, RHI, PHM, IEX, ROP, PAYX, JNPR, GLW, ADM, SWKS, MAA, CVS, TSN, FCX, ATVI, JNPR, SWKS, EOG, SBAC, RMD, PVH, IPG, PHM, PXD, CHRW, EBAY, WHR, NLOK, JKHY, SWKS, NVDA, PHM, CCI, AMT, MOS, UNH, HON, VFC, PHM, APH, AME, LEG, PEG, ODFL, TSN, SNA, NVDA, MOS, LRCX, EXPD, ATVI, ADM, HUM, TSN, COO, UNH, TFX, HUM, JNPR, KR, TSN, JNPR, WHR, SBAC, ATVI, RMD, OKE, RCL, JNPR, WBA, DRE, TRV, TAP, NSC, CERN, DIS, NTAP, DVN, SBAC, PXD, CCL, AES, SWKS, AAPL, JBHT, SWKS, MRO, NVDA, FCX, SBAC, WMB, POOL, MO, EIX, GPS, KLAC, ESS, CTXS, DHI, MAR, ODFL, RL, GL, CERN, HAS, ALB, XLNX, IRM, NVDA, HUM, VMC, MCO, PHM, LMT, ADM, EBAY, CERN, CHRW, NVDA, KIM, JNPR, NVDA, EBAY, NTAP, FCX, WY, SJM, MDT, SWKS, ODFL, EBAY, LRCX, RCL, SWKS, DVN, PBCT, PHM, EFX, NWL, BBY, BLK, SWKS, MGM, NUE, MMC, SWKS, OXY, BWA, LRCX, HES, EXPD, EBAY, MCO, DHI, JBHT, NVDA, CMI, DD, NVDA, MRO, XLNX, YUM, LRCX, ETN, CAH, TSCO, CHRW, CERN, EBAY, NUE, NSC, MCHP, PKI, NLOK, NVDA, D, ROST, CTRA, JNPR, BA, ADM, NUE, TMO, LRCX, HD, PNR, NTAP, EOG, LRCX, DE, FCX, PPL, CHRW, CSCO, PXD, PHM, VLO, NUE, GLW, MAS, CTRA, PCAR, NLOK, TSN, COO, WY, SWKS, RMD, NUE, HPQ, JNPR, GPS, SWKS, AAPL, A, IVZ, ODFL, GLW, MSI, GPS, AAPL, COO, BWA, MOS, GS, PHM, PEAK, CCI, COO, CSX, EBAY, FCX, KLAC, JNPR, EOG, CHRW, ODFL, POOL, CTSH, GLW, VNO, MLM, MOS, EXPD, CSX, NTAP, MOS, SJM, VMC, HAL, INTU, TAP, MAS, CI, ORCL, TSCO, KLAC, LHX, VTR, PVH, APH, EL, T, JNPR, MCO, CSX, ODFL, MLM, ROP, CSX, EBAY, LEN, ATVI, HAL, AVB, JBHT, CHRW, PXD, NVDA, AES, BBWI, CMCSA, FAST, PNC, WY, MMC, PHM, PPG, POOL, PXD, RMD, WHR, SWKS, MCHP, POOL, XRAY, WHR, IFF, MU, EXC, STE, PVH, GPS, AMGN, AMAT, AMGN, HAL, LEN, SCHW, LEG, CERN, MCO, LRCX, MOS, AVB, MOS, BMY, WMT, CTAS, CMI, COO, DGX, ADM, NEE, CTRA, MOS, PGR, NVDA, HUM, CMI, NVDA, SLB, FMC, TECH, FCX, AME, HES, SCHW, AAPL, DGX, TGT, CMI, HES, SWKS, FRT, TJX, KSU, LRCX, BLK, SHW, CAT, CI, UNP, SNA, ESS, STZ, L, TROW, FRT, HUM, ATVI, TROW, EXPD, VTR, FRT, TFX, LUMN, NVDA, A, CMA, EXC, MOS, NVDA, NUE, BBWI, ROST, LMT, NEM, APH, IVZ, CMI, FCX, RL, WY, MS, ATVI, AMGN, HUM, FCX, DHI, NWL, CMI, TAP, SWKS, EL, AES, IVZ, ALB, MLM, INTU, HAS, LEN, OXY, KSU, MGM, BEN, MOS, TJX, JBHT, NTAP, STT, NVDA, UNP, PVH, PHM, NTAP, PCAR, NEE, MOS, JNPR, MLM, NEM, XRAY, SBAC, BWA, HES, RL, ODFL, PSA, LUV, MOS, LEG, AMAT, SBUX, JCI, OXY, BBWI, TGT, HUM, SPG, ALB, ABC, DHI, IVZ, TECH, ORCL, LEN, PVH, PHM, JNPR, COO, LEN, IVZ, MAR, HES, VFC, MOS, IVZ, PEAK, CMI, PPG, TFX, VTR, NEM, J, NEE, LLY, CAH, OKE, ROST, MOS, LUV, PVH, MMC, XLNX, LOW, PHM, NTAP, ALB, LEN, SBAC, RHI, PHM, NVDA, CMI, IEX, CI, MCO, SBAC, LEN, EQR, ROK, CTRA, HRL, MOS, DHI, POOL, SBAC, MOS, LEN, JNPR, EOG, EA, NTAP, IVZ, TSN, IEX, CMA, PEAK, RJF, HUM, PHM, PEAK, JPM, MS, HUM, MS, BLK, PVH, MOS, PAYX, KIM, MU, LEN, MU, POOL, MRK, COF, CTXS, MU, LOW, POOL, HES, SCHW, KLAC, LEG, SLB, HES, FITB, RJF, TROW, MOS, COF, PVH, CMI, PHM, MRO, PXD, DHI, STE, EOG, ATVI, HWM, STZ, PHM, NVDA, LHX, VMC, HUM, CI, QCOM, HSY, MOS, BBY, CTSH, VLO, MCO, SWKS, CTRA, BAX, APA, GPS, MOS, TER, MS, ZION, ATVI, MS, CCL, KEY, HAL, LUMN, KR, MAS, MOS, STZ, RMD, DTE, ECL, MGM, LEN, PNW, HWM, FITB, PHM, USB, TFC, ZION, KEY, MOS, RCL, DRE, SNA, POOL, RE, BAC, HES, WEC, CTSH, CI, MGM, MLM, ABC, DHI, MGM, ATVI, MOS, PHM, UHS, ES, EOG, VLO, LEN, MGM, POOL, PVH, LEN, PHM, MGM, YUM, PVH, APA, CMA, WHR, AVY, MU, AIG, ZION, JBHT, AIG, NEM, EQR, RJF, NEM, VMC, LEN, TSN, JPM, RHI, STT, FITB, MU, FMC, AIG, AES, STT, GLW, LNC, UNH, KEY, WFC, PPG, SBAC, AES, KEY, TRV, MRO, PNC, ODFL, IPG, MGM, RSG, AIG, HIG, J, MGM, DHI, AES, HRL, GS, AES, ALL, HIG, PLD, STZ, GLW, IPG, KIM, LNC, LEN, LNC, CTAS, PLD, MS, HIG, DRE, NVDA, PEAK, UNH, DRE, NEM, PLD, KSU, MU, RCL, TECH, ARE, MU, PCAR, HES, IEX, IRM, MGM, MU, LNC, ATVI, DHI, GPS, COO, LNC, VFC, DRE, MU, PH, PNC, STT, HIG, VLO, ZION, FITB, ODFL, BA, ROK, DHI, APH, CI, FITB, GE, EXPD, IPG, BWA, DHI, IVZ, COF, MO, KEY, LEN, COF, FITB, USB, FITB, LEN, SWKS, GLW, SWKS, UNH, USB, DRE, MU, COF, AIG, IP, AIG, HWM, NEM, LNC, IP, PLD, NWL, LNC, HUM, HIG, TXT, BWA, MGM, TXT, UNH, ROK, DRE, FITB, ZBH, AXP, DRE, MGM, ORCL, STT, SNA, IVZ, MAS, UHS, TECH, CERN, NWL, PKI, FITB, MGM, ZION, CI, ZION, MGM, MOS, TFC, HIG, ODFL, ZION, MU, UHS, CTXS, BLK, UDR, TER, CI, CSX, PCAR, HWM, SBUX, KEY, IPG, LEN, BLK, IP, KEY, VMC, RL, CI, TJX, HUM, CI, CNP, IP, LEN, JKHY, LNC, MAA, RCL, MTB, KIM, CI, ROST, MU, AIG, IPG, AXP, WHR, AIG, VNO, IPG, KEY, HIG, COF, ZION, TXT, C, FMC, ODFL, KEY, ARE, AIG, BBWI, MAR, WMB, NSC, HIG, FITB, TFC, HUM, DE, AIG, EXPD, RMD, KIM, AIG, KSU, FITB, MOS, AIG, BBY, CMCSA, SPG, AIG, MGM, MSI, AIG, ZION, CMA, VLO, LEG, AIG, RJF, EA, CTAS, MGM, HIG, MCO, YUM, HSY, AIG, BBWI, SCHW, MU, LEN, MU, BWA, LEN, KIM, VLO, LHX, MSI, FITB, DE, LEN, TROW, ROP, UNH, AIG, IVZ, NEM, COF, AIG, PHM, AIG, SWKS, PLD, TXT, ODFL, CTAS, MAS, EL, MOS, DRE, PGR, MOS, CI, COO, BBWI, MGM, EBAY, SWKS, MOS, KLAC, RHI, CMCSA, ODFL, CMA, UNH, LUV, MGM, WHR, NVDA, ODFL, DRI, IVZ, AIG, MGM, FAST, AIG, ODFL, A, WFC, KEY, HIG, ZION, CMI, TFX, MGM, FITB, TXT, VLO, C, ZION, RJF, MGM, HAL, UHS, IVZ, CTXS, PBCT, NVDA, STE, AJG, COO, NEM, LEN, AIG, MGM, MSI, IP, GILD, TXT, JNPR, AIG, TMO, KEY, SRE, IPG, FITB, PCAR, AIG, PVH, AIG, C, PCAR, LNC, MGM, MLM, FDX, ROST, MGM, RCL, KEY, AIG, MGM, AES, MCK, PXD, MGM, ZION, AIG, MGM, J, AIG, VNO, ZION, VMC, MLM, C, ZION, CHRW, TXT, PHM, KSU, UHS, HUM, LHX, RMD, MLM, UDR, GILD, STE, AON, MAS, AIG, TXT, EA, UHS, SEE, BAX, RL, ROST, PXD, BAX, CMCSA, TXT, NTAP, CAH, HSY, HAL, SBAC, PBCT, NEM, ZION, ODFL, LEN, NTAP, AIG, TER, MCO, MTB, KR, NEE, SJM, MCO, MO, ODFL, TXT, BBWI, JNPR, JCI, AES, MCO, AES, MSI, IVZ, PGR, GS, MOS, WY, STT, ABT, FITB, TFX, AIG, ECL, CVS, CTXS, SWK, MOS, UHS, MOS, IFF, PKI, AES, CL, BWA, EL, AEE, CSCO, SWK, TGT, NTAP, INTU, MOS, LEN, LNC, DRI, LNC, ABC, HIG, APH, PVH, GS, NTAP, NLOK, RCL, TER, MU, BLK, MCK, MOS, ORCL, WHR, NVDA, HWM, KLAC, MU, LUV, NEM, CTRA, TSN, BBWI, EA, JNPR, STZ, CI, MGM, MCO, FITB, WMB, CHRW, GPC, WFC, ZION, WFC, FITB, NVDA, TSCO, AME, ODFL, FMC, HAL, MCK, VMC, MGM, DHI, BK, HAL, EOG, HAL, JNPR, CAT, TSN, RCL, HAL, CTRA, HUM, MU, SWKS, HRL, WBA, LOW, MAS, LEN, MGM, TER, JBHT, LNC, AIG, MSI, CTRA, AIG, STZ, MSI, CERN, DRE, RCL, XLNX, TER, AIG, KEY, MOS, GPS, NVDA, LNC, MSI, AIG, NVDA, IPG, NVDA, HES, NVDA, UHS, TER, NOC, LMT, LOW, WY, SEE, SHW, HAL, NWL, HES, AME, SEE, NVDA, IPG, A, HAS, LEN, RL, JNPR, CLX, MRO, IVZ, TER, NVDA, INTU, PEG, CTRA, PKI, VLO, EIX, AES, XLNX, VLO, A, EXC, LUV, NLOK, SBUX, WY, CTRA, F
		\par
	}
\end{itemize}

This sequence of observations constitutes the following sparse frequency matrix in Fig. \ref{fig:Stock_market__Aligned_timespans}.

\begin{figure}[hbtp]
	\centering
	\includegraphics[width=0.618\linewidth]{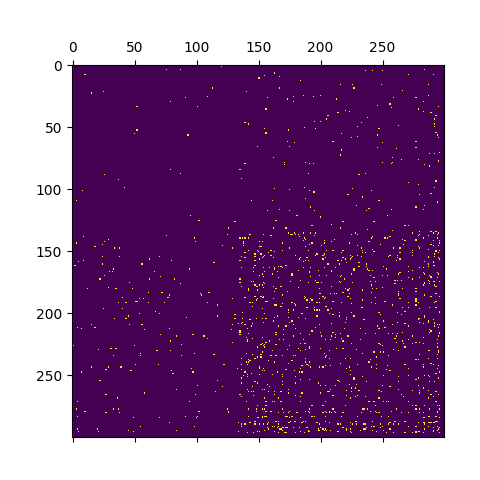}
	\caption{
		A plot of the matrix $\{ \indicator{\hat{F}_{ij} > 0} \}_{i,j}$, where the rows and columns are sorted according to the improved clustering. We plotted the matrix like such because $\hat{F}$ is quite sparse due to the trajectory's length $\ell = 2451$ being quite short: the minimum, median, mean, and maximum of the entries of the matrix $\{ \hat{F}_{i,j} \}_{i,j}$ are $0$, $0$, $\ell / n^2 \approx 0.027$, and $14$, respectively.
	}
	\label{fig:Stock_market__Aligned_timespans}
\end{figure}

\subsubsection{Detected groups}
\label{sec:Appendix_raw_data__Stock_market__Detected_groups}

Here are the detected groups in model $\hat{\mathbb{P}}$:

\begin{itemize}[noitemsep]
	\item[$\mathcal{V}_1 = $]
	{
		\tiny
		CB (Financials), GS (Financials), IBM (Information Technology), AMGN (Health Care), MMM (Industrials), CVX (Energy), FDX (Industrials), COST (Consumer Staples), UNP (Industrials), AVB (Real Estate), SPG (Real Estate), HD (Consumer Discretionary), JNJ (Health Care), KMB (Consumer Staples), JPM (Financials), GD (Industrials), ESS (Real Estate), CSCO (Information Technology), INTC (Information Technology), PSA (Real Estate), MTB (Financials), HON (Industrials), BXP (Real Estate), GWW (Industrials), NOC (Industrials), BA (Industrials), APD (Materials), TRV (Financials), RTX (Industrials), PEP (Consumer Staples), CAT (Industrials), AMAT (Information Technology), BDX (Health Care), PPG (Materials), SHW (Materials), UPS (Industrials), PH (Industrials), PFE (Health Care), NSC (Industrials), ECL (Materials), DE (Industrials), ADP (Information Technology), GE (Industrials), SRE (Utilities), WMT (Consumer Staples), PG (Consumer Staples), DIS (Communication Services), NEE (Utilities), T (Communication Services), ITW (Industrials), XOM (Energy), PNC (Financials), BAC (Financials), AXP (Financials), VZ (Communication Services), SYK (Health Care), CLX (Consumer Staples), BEN (Financials), C (Financials), LLY (Health Care), SNA (Industrials), FRT (Real Estate), KO (Consumer Staples), APA (Energy), WM (Industrials), DHR (Health Care), IFF (Materials), CVS (Health Care), ALL (Financials), TT (Industrials), MDT (Health Care), BMY (Health Care), NKE (Consumer Discretionary), COP (Energy), EFX (Industrials), DTE (Utilities), DUK (Utilities), OXY (Energy), ETR (Utilities), KR (Consumer Staples), YUM (Consumer Discretionary), CL (Consumer Staples), DD (Materials), DGX (Health Care), WBA (Consumer Staples), SO (Utilities), BK (Financials), NTRS (Financials), CAG (Consumer Staples), SYY (Consumer Staples), DOV (Industrials), EIX (Utilities), GIS (Consumer Staples), TFC (Financials), ETN (Industrials), BAX (Health Care), EMR (Industrials), EXC (Utilities), AEP (Utilities), AFL (Financials), ARE (Real Estate), AVY (Materials), OMC (Communication Services), FE (Utilities), D (Utilities), VNO (Real Estate), EQR (Real Estate), NI (Utilities), CPB (Consumer Staples), PEG (Utilities), K (Consumer Staples), LUMN (Communication Services), PPL (Utilities), HAS (Consumer Discretionary), MKC (Consumer Staples), ED (Utilities), EMN (Materials), RSG (Industrials), AEE (Utilities), ES (Utilities), CINF (Financials), UDR (Real Estate), CCL (Consumer Discretionary), XRAY (Health Care), WEC (Utilities), GPC (Consumer Discretionary), L (Financials), OKE (Energy), MAA (Real Estate), O (Real Estate), BLL (Materials), AJG (Financials), PNR (Industrials), GL (Financials), PNW (Utilities)
		\par
	}

	\item[$\mathcal{V}_2 = $]
	{
		\tiny
		IP (Materials), ZBH (Health Care), CMI (Industrials), BLK (Financials), LMT (Industrials), MCK (Health Care), CI (Health Care), UNH (Health Care), PXD (Energy), MCD (Consumer Discretionary), TMO (Health Care), INTU (Information Technology), TXN (Information Technology), ORCL (Information Technology), WHR (Consumer Discretionary), QCOM (Information Technology), ROK (Industrials), EOG (Energy), MLM (Materials), RE (Financials), KLAC (Information Technology), RL (Consumer Discretionary), AON (Financials), EA (Communication Services), LOW (Consumer Discretionary), CMCSA (Communication Services), EBAY (Consumer Discretionary), STZ (Consumer Staples), WFC (Financials), HPQ (Information Technology), ROP (Industrials), ABT (Health Care), SWK (Industrials), MS (Financials), CTXS (Information Technology), KSU (Industrials), MCO (Financials), MRK (Health Care), EL (Consumer Staples), SJM (Consumer Staples), ADI (Information Technology), JCI (Industrials), VMC (Materials), MSI (Information Technology), SBUX (Consumer Discretionary), GILD (Health Care), CTAS (Industrials), UHS (Health Care), SLB (Energy), MMC (Financials), MCHP (Information Technology), NLOK (Information Technology), HSY (Consumer Staples), TGT (Consumer Discretionary), TROW (Financials), USB (Financials), XLNX (Information Technology), MO (Consumer Staples), DRI (Consumer Discretionary), ROST (Consumer Discretionary), BBY (Consumer Discretionary), TFX (Health Care), LHX (Industrials), PAYX (Information Technology), GPS (Consumer Discretionary), COF (Financials), MRO (Energy), SCHW (Financials), MAR (Consumer Discretionary), JBHT (Industrials), PGR (Financials), TJX (Consumer Discretionary), ADM (Consumer Staples), HWM (Industrials), A (Health Care), STT (Financials), HES (Energy), ABC (Health Care), CSX (Industrials), NUE (Materials), VFC (Consumer Discretionary), JKHY (Information Technology), CAH (Health Care), XEL (Utilities), TECH (Health Care), DVN (Energy), HIG (Financials), NWL (Consumer Discretionary), WY (Real Estate), PCAR (Industrials), MAS (Industrials), POOL (Consumer Discretionary), BBWI (Consumer Discretionary), DHI (Consumer Discretionary), PLD (Real Estate), LNC (Financials), ZION (Financials), APH (Information Technology), CNP (Utilities), CMA (Financials), CTSH (Information Technology), FAST (Industrials), IEX (Industrials), EXPD (Industrials), TSCO (Consumer Discretionary), TXT (Industrials), CHRW (Industrials), CERN (Health Care), PBCT (Financials), RCL (Consumer Discretionary), CTRA (Energy), PEAK (Real Estate), TAP (Consumer Staples), SEE (Materials), KIM (Real Estate), ALB (Materials), KEY (Financials), RMD (Health Care), STE (Health Care), DRE (Real Estate), BWA (Consumer Discretionary), RHI (Industrials), FMC (Materials), J (Industrials), LEG (Consumer Discretionary), CMS (Utilities), VTR (Real Estate), IRM (Real Estate), PKI (Health Care), HRL (Consumer Staples), AME (Industrials), IVZ (Financials), RJF (Financials), LUV (Industrials)
		\par
	}

	\item[$\mathcal{V}_3 = $]
	{
		\tiny
		AAPL (Information Technology), NVDA (Information Technology), LRCX (Information Technology), HUM (Health Care), AMT (Real Estate), HAL (Energy), FCX (Materials), COO (Health Care), JNPR (Information Technology), CCI (Real Estate), NTAP (Information Technology), VLO (Energy), FITB (Financials), SBAC (Real Estate), NEM (Materials), GLW (Information Technology), AIG (Financials), WMB (Energy), ATVI (Communication Services), LEN (Consumer Discretionary), TER (Information Technology), MU (Information Technology), MGM (Consumer Discretionary), PVH (Consumer Discretionary), TSN (Consumer Staples), MOS (Materials), PHM (Consumer Discretionary), ODFL (Industrials), SWKS (Information Technology), AES (Utilities), IPG (Communication Services)
		\par
	}

\end{itemize}

\end{document}